\documentclass{article}
\PassOptionsToPackage{numbers, compress}{natbib}
\usepackage[preprint]{neurips_2026}
\usepackage{afterpage}

\usepackage{times}
\usepackage{latexsym}
\usepackage[table]{xcolor}
\definecolor{Green}{RGB}{110,170,255}
\definecolor{BestGreen}{RGB}{200,230,255}
\definecolor{Red}{RGB}{255,120,140}
\definecolor{mycolor_1}{RGB}{220,255,255}  
\usepackage{placeins}
\usepackage{colortbl}
\usepackage{pgfmath}
\usepackage{hyperref}
\newcommand{\heat}[2]{%
\begingroup
\pgfmathparse{#1-#2}%
\edef\diff{\pgfmathresult}%
\pgfmathparse{min(40,max(0,round(3abs(#1-#2))))}%
\edef\strength{\pgfmathresult}%
\ifdim \diff pt > 0pt
\edef\ApplyColor{\noexpand\cellcolor{Green!\strength!white}}%
\ApplyColor #1%
\else
\ifdim \diff pt < 0pt
\pgfmathparse{min(80,max(0,round(3.4abs(#1-#2))))}%
\edef\strength{\pgfmathresult}%
\edef\ApplyColor{\noexpand\cellcolor{Red!\strength!white}}%
\ApplyColor #1%
\else
#1%
\fi
\fi
\endgroup
}
\newcommand{\heatbest}[2]{%
\begingroup
\pgfmathparse{#1-#2}%
\edef\diff{\pgfmathresult}%
\ifdim \diff pt > 0pt
\cellcolor{BestGreen}#1%
\else
\heat{#1}{#2}%
\fi
\endgroup
}
\newcommand{\heatsecond}[2]{%
\begingroup
\pgfmathparse{#1-#2}%
\edef\diff{\pgfmathresult}%
\ifdim \diff pt > 0pt
\cellcolor{Green!65!white}#1%
\else
\heat{#1}{#2}%
\fi
\endgroup
}
\newcommand{\heatlowest}[2]{%
\begingroup
\pgfmathparse{#1-#2}%
\edef\diff{\pgfmathresult}%
\ifdim \diff pt > 0pt
\cellcolor{Red!100!white}#1%
\else
\heat{#1}{#2}%
\fi
\endgroup
}
\usepackage[T1]{fontenc}
\usepackage[utf8]{inputenc}
\usepackage{microtype}
\usepackage{inconsolata}
\usepackage{graphicx}
\usepackage{subcaption}
\usepackage{enumitem}
\usepackage{float}
\usepackage{booktabs} 
\usepackage{multirow}       
\usepackage{algorithm}
\usepackage{algpseudocode}
\usepackage{etoc}
\usepackage{titletoc}
\usepackage{amsmath}
\usepackage{amssymb}
\usepackage{mathtools}
\usepackage{amsthm}

\usepackage{pifont}   

\usepackage[most]{tcolorbox}
\tcbuselibrary{skins, listings, breakable}
\definecolor{DeepBlue}{HTML}{003f5c}
\definecolor{WarmOrange}{HTML}{ffa600}
\definecolor{CreamBack}{HTML}{FFF8F0}
\tcbset{
styleA/.style={
enhanced,
breakable,
fonttitle=\bfseries\large,
fontupper=\ttfamily\footnotesize,
colframe=DeepBlue,
colback=CreamBack,
coltitle=white,
arc=3mm,
boxrule=1pt,
left=4mm, right=4mm, top=4mm, bottom=4mm,
listing only,
extras first={
  skin=enhanced,
  sharp corners=south,
  overlay={
    \draw[white,line width=2.5pt]
      (frame.south west) -- (frame.south east);
  },
},
extras middle={
  skin=enhanced,
  sharp corners,
  overlay={
    \draw[white,line width=2.5pt]
      (frame.north west) -- (frame.north east);
    \draw[white,line width=2.5pt]
      (frame.south west) -- (frame.south east);
  },
},
extras last={
  skin=enhanced,
  sharp corners=north,
  overlay={
    \draw[white,line width=2.5pt]
      (frame.north west) -- (frame.north east);
  },
},
listing options={
language=Python,
basicstyle=\ttfamily\footnotesize,
commentstyle=\ttfamily\footnotesize\upshape\color{gray},
breaklines=true,
keepspaces=true,
breakatwhitespace=false,
xleftmargin=0pt,
xrightmargin=0pt,
aboveskip=0pt,
belowskip=0pt,
tabsize=4,
columns=fullflexible,
breakindent=0pt,
showstringspaces=false,
escapeinside={(*@}{@*)},
}
}
}

\definecolor{AppendixPurple}{RGB}{125,95,170}
\hypersetup{
colorlinks=true,
linkcolor=AppendixPurple,
citecolor=blue,      
urlcolor=black
}
\titlecontents{section}
[0pt]
{\addvspace{0.5em}\bfseries\large\color{AppendixPurple}}
{\contentslabel{2.0em}}
{}
{\hfill\color{black}\contentspage}
\titlecontents{subsection}
[3.2em]
{\normalsize\color{AppendixPurple}}
{\contentslabel{3.0em}}
{}
{\titlerule*[0.7pc]{.}\color{black}\contentspage}

\usepackage[capitalize,noabbrev]{cleveref}

\theoremstyle{plain}
\newtheorem{theorem}{Theorem}[section]
\newtheorem{proposition}[theorem]{Proposition}

\newtheorem{corollary}[theorem]{Corollary}
\theoremstyle{definition}

\theoremstyle{remark}

\usepackage[textsize=tiny]{todonotes}
\tcbuselibrary{listings, breakable, skins}
\usepackage{listings}

\usepackage{fontawesome5}
\usepackage{svg}
\newtcolorbox{theorembox}{
  enhanced,
  breakable,
  colback=blue!3,
  colframe=blue!35!black,
  boxrule=0.8pt,
  arc=2mm,
  left=1.2mm,
  right=1.2mm,
  top=1mm,
  bottom=1mm
}

\definecolor{abstractblue}{RGB}{42, 91, 190}
\definecolor{abstractback}{RGB}{237,244,252}

\newtcolorbox{abstractbox}{
    colback=abstractback,   
    colframe=abstractblue,
    boxrule=0.8pt,
    arc=4mm,
    left=3mm,
    right=3mm,
    top=2mm,
    bottom=2mm,
    width=\textwidth,
    center,
}

\begin{document}

\newgeometry{
  letterpaper,
  left=0.8in,
  right=0.8in,
  top=0.6in,
  bottom=0.7in,
  footskip=0.3in
}
\includegraphics[height=40pt]{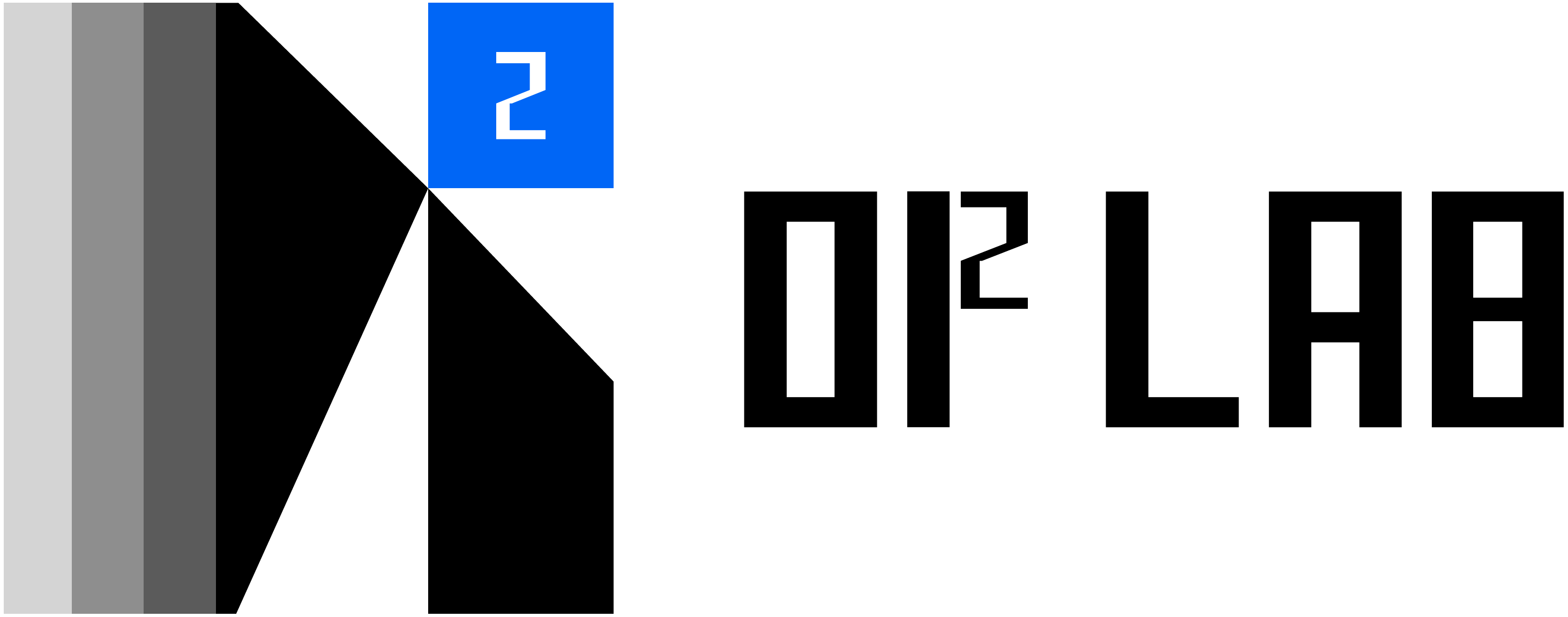} \quad

\vspace{1cm}

\title{CoSPlay: Cooperative Self-Play at Test-Time with Self-Generated \color{abstractblue}{Code} \color{black}{and} \color{orange}{Unit Test} }

%

\vspace{-1.2cm}

\author{
Zhangyi Hu\textsuperscript{1,*} \quad
Chenhui Liu\textsuperscript{1,*} \quad
Tian Huang\textsuperscript{1,*} \quad
\\
Jindong Li\textsuperscript{1} \quad
Yang Yang\textsuperscript{1} \quad
Jiemin Wu\textsuperscript{1} \quad
Zining Zhong\textsuperscript{1} \quad
Menglin Yang\textsuperscript{1} \quad
Yutao Yue\textsuperscript{1,2,\ensuremath{\dagger}}
\\
\textsuperscript{1}The Hong Kong University of Science and Technology (Guangzhou)
\\
\textsuperscript{2}Institute of Deep Perception Technology, JITRI, Wuxi, China
\\
\href{https://github.com/sanae-ai/CoSPlay}{
  \raisebox{-0.02em}{\faGithub}\ Github Page
}
\quad
\href{https://huggingface.co/datasets/yomi017/CoSPlay}{
  \raisebox{-0.12em}{\includegraphics[height=1em]{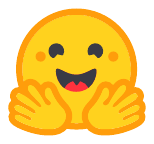}}\ Dataset \& Log
}
\quad
{
  \raisebox{-0.02em}{\faCalendar*}\ 2026-5-13
}
}

\maketitle

\vspace{-0.9cm}

\begin{abstractbox}
\begin{center}
    {\Large \textbf{\textcolor{abstractblue}{Abstract}}}
\end{center}

Recently, Reinforcement Learning with Verifiable Rewards (RLVR) and Test-Time Scaling (TTS) have advanced LLM code generation through executable verification. Yet Ground-Truth Unit Tests (GT UTs) remain a bottleneck: SOTA RLVR methods require them for costly training, while existing TTS methods lose competitiveness without them. This motivates GT-free TTS, where existing methods directly use self-generated UTs to refine and select code candidates. Yet such UTs are often noisy or spuriously coupled with wrong code, and UT quality in turn cannot be validated without reliable code. \textit{\textbf{The key challenge is therefore to jointly improve both.}}
To this end, we present \textbf{CoSPlay}, a \textbf{GT-free}, \textbf{training-free} framework that jointly improves codes and UTs through cooperative self-play. It first explores diverse solution ideas and identifies their potential failure modes to produce discriminative UT ideas. It then uses bidirectional pass-count signals from the Code-UT execution matrix to iteratively prune or fix weak codes and refresh or replace unreliable UTs, letting the two pools co-evolve. Finally, when multiple codes remain tied at the highest pass count, it picks the final code from the largest output-consensus cluster, since correct codes agree on the same inputs while wrong codes diverge.
Experiments on four challenging benchmarks show that CoSPlay on Qwen2.5-7B-Instruct improves average BoN from 22.1\% to \underline{\textbf{33.2\%}} and UT accuracy from 14.6\% to \underline{\textbf{78.3\%}}, matching or surpassing the RLVR model CURE-7B. When applied to CURE-7B, it further improves BoN by \underline{\textbf{5.7\%}}. CoSPlay also generalizes across diverse backbones and outperforms GT-free TTS baselines under comparable token budgets, with continued gains as the budget scales up. These results suggest a scalable inference strategy for competitive code generation without any GT data.



\end{abstractbox}

\begin{figure}[!htbp]
    \centering
    \includegraphics[width=\textwidth]{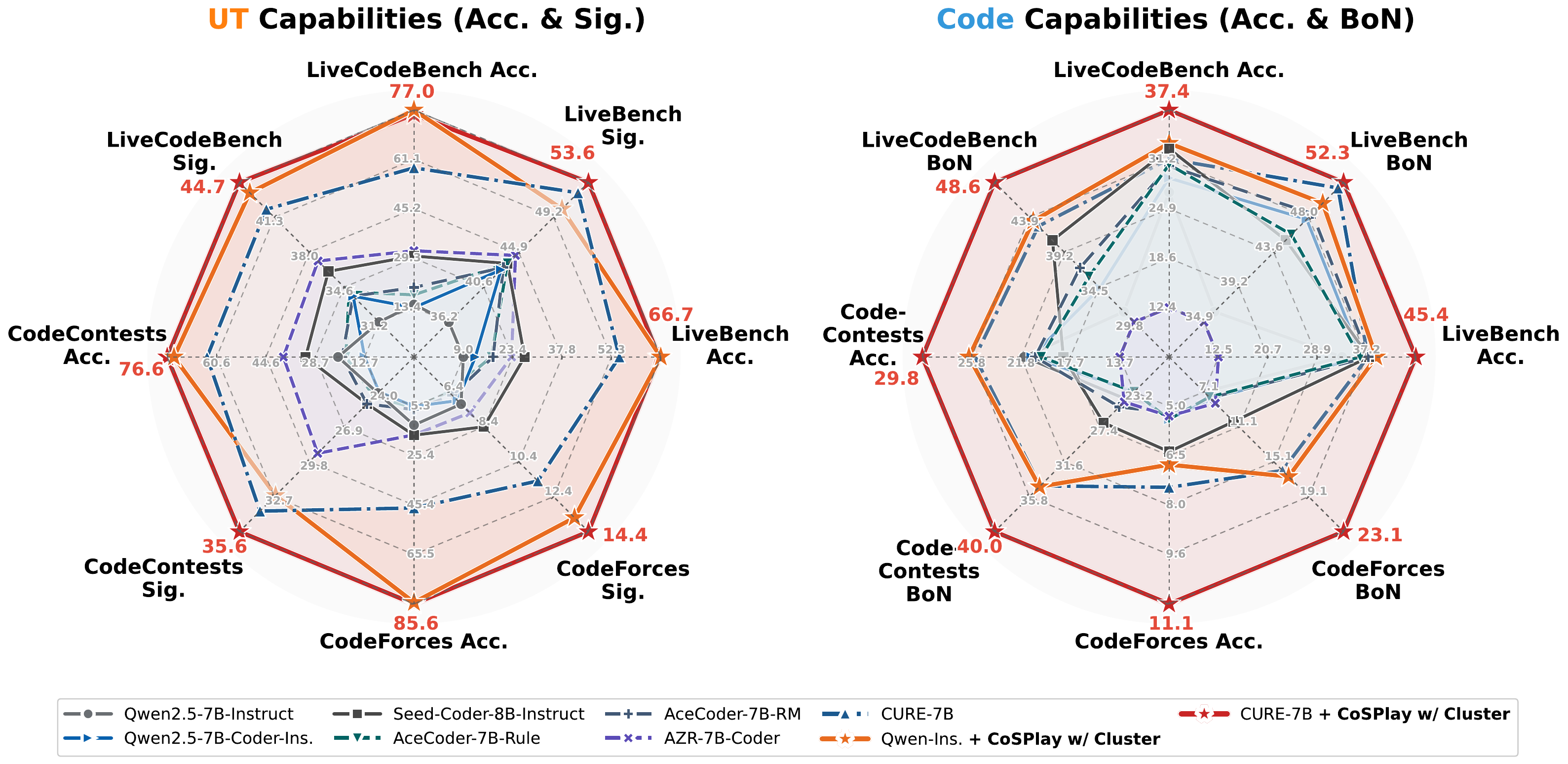}
    \caption{Performance comparison between our \textit{\textbf{Training-free}} and \textit{\textbf{GT-free}} CoSPlay and other RLVR methods that need costly weight updating (AZR-7B-Coder 0k) or massive GT data (AceCoder-7B-Rule 22k, AceCoder-7B-RM 329k, CURE-7B 4.5k).}
    \vspace{-1.5em}
    \label{fig:radar_figure}
\end{figure}

\section{Introduction}

\vspace{-0.3cm}

\begin{figure*}[!t]
    \centering
    \includegraphics[width=\textwidth]{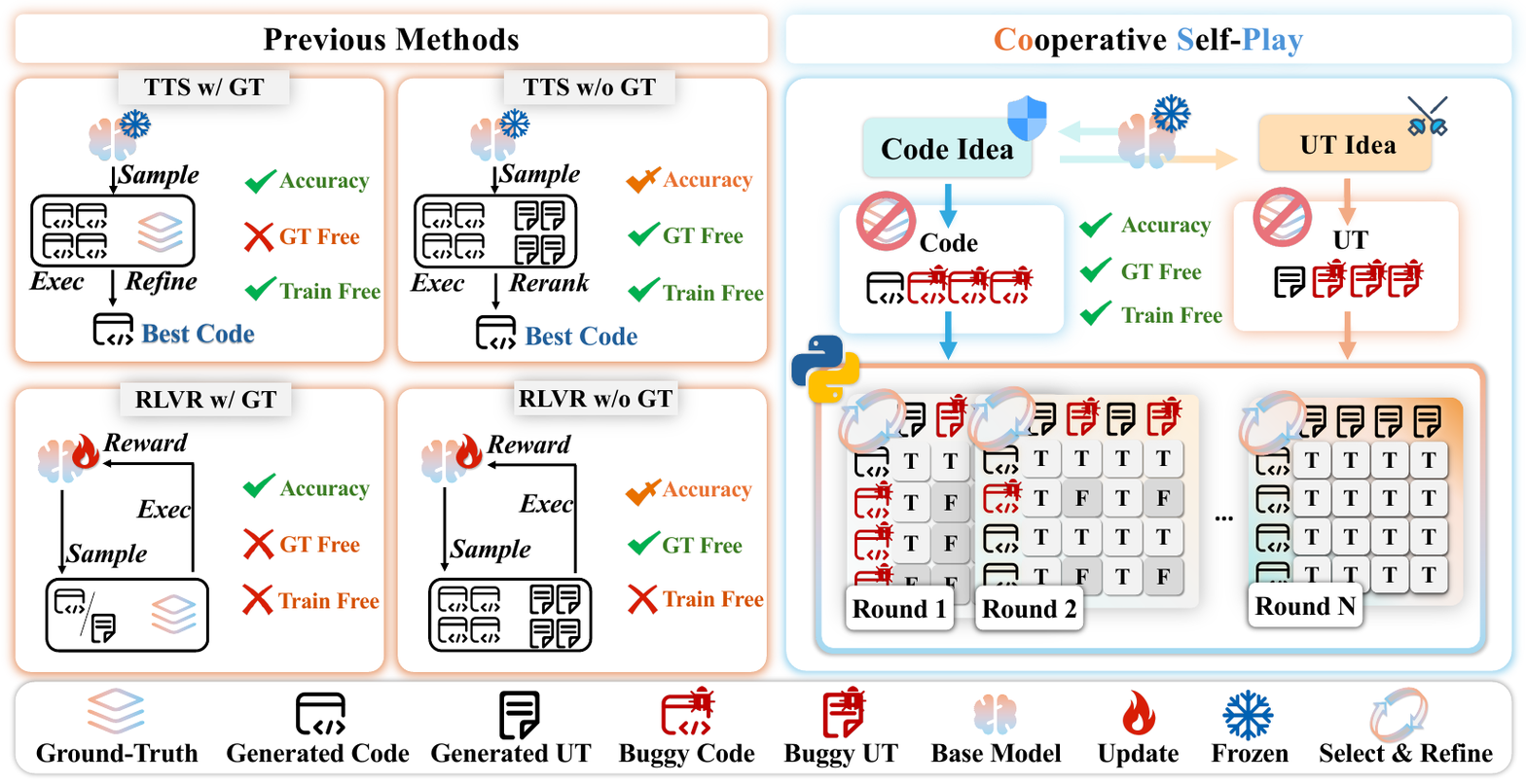}
    \caption{Our motivation: achieving high accuracy without any Ground-Truth and weight updating.}
    \label{fig:motivation}
    \vspace{-2em}
\end{figure*}
Recently, advanced by RLVR~\cite{deepseekr1, yu2025dapo, kimiteam2025kimik15} and TTS~\cite{brown2024largelanguagemonkeysscaling, Reward-guided-TTS:lightman2023letsverifystepstep, Reward-guided-TTS:liu2025inferencetimescalinggeneralistreward}, LLMs have achieved significant progress in coding capability. In code generation, UTs provide executable input-output checks that make functional correctness verifiable~\cite{wang2025cure, yu2025orps}. This has made them widely used for guiding recent code-generation methods. As shown in Figure~\ref{fig:motivation}, SOTA RLVR methods leverage large collections of GT UTs for training, i.e. test cases with guaranteed-correct input-output pairs, typically curated by human experts or official problem setters~\cite{code-r1, zhang2025focuseddpo, wang2025cure}. However, curating such GT UTs is too costly to scale~\cite{chen2022codet, liu2023iscorrect},
and GT-free RLVR methods like Absolute-Zero~\cite{zhao2025absolutezero} still remain less competitive in practice and require costly model-weight updates~\cite{yue2025doesreinforcementlearningreally, zhao2025echo}.

While TTS methods avoid weight updates, they still need to evaluate the code quality at inference time
for refinement, filtering, and selection. 
When GT UTs are available, these decisions can be grounded in trusted input-output checks~\cite{li2025s*, shinn2023reflexion, yu2025orps, ldb}.
In GT-free settings, however, it must be   constructed from self-generated UTs, typically by sampling codes and tests before heuristic or mathematical re-ranking~\cite{chen2022codet, huang2024mpsc}; we provide a broader discussion in Appendix~\ref{sec:related_work}.
Yet scaling this pipeline is fragile: generated UTs may have incorrect expected outputs, fail to distinguish among plausible candidates, or spuriously agree with wrong code~\cite{xie2025ctrl}.
The key question therefore shifts from merely sampling more code or UTs to a co-evolution perspective: \textit{\textbf{Can codes and UTs co-evolve at inference time to achieve competitive code generation without GT?}}

To this end, we present CoSPlay, a GT-free, training-free three-stage pipeline that creates a co-evolving loop between code candidates and UTs.
First, \textit{{Exploration-Attack-Guided Code-UT Idea Generation}} bootstraps the loop by exploring plausible solution strategies and deriving targeted UT ideas from their potential failure modes, so the initial UT pool is grounded in solution-specific assumptions and can better distinguish plausible code candidates.
Given this initialized pool, \textit{{Execution-Matrix-Driven Iterative Self-Play}} lets codes and UTs evaluate each other: high-support UTs provide reliable refinement signals, while high-support codes provide evidence for judging UTs. These pass-count signals guide code-side pruning/refinement and UT-side refreshing/replacement, progressively improving both pools.
Finally, when the evolved Code-UT pool still leaves BoN-tied codes, \textit{{Output-Consensus-Based Cluster Selection}} handles the remaining ambiguity among BoN-tied codes by comparing their outputs on random valid inputs and selecting the most reliable consensus cluster and code, since correct codes should agree while incorrect codes tend to diverge.

Our contributions are summarized as follows:
\begin{enumerate}[topsep=0pt, itemsep=1pt, parsep=1pt, leftmargin=*]
    \item We introduce \textbf{CoSPlay}, a GT-free, training-free framework that enables code candidates and self-generated UT to co-evolve at test-time, providing a scalable inference strategy for competitive code generation without any GT-data.

    \item Extensive experiments across four challenging coding benchmarks demonstrate the \textbf{effectiveness}, \textbf{generalizability}, and \textbf{scalability} of CoSPlay. It matches or surpasses strong RLVR models trained with massive GT UTs and improves various BoN across instruct, RLVR-tuned, and frontier-scale backbones. Under comparable output-token budgets, CoSPlay outperforms existing GT-free TTS methods, and its performance continues to rise as the budget is scaled up.
    
    \item We establish internal quality-evaluation principles for GT-free Code-UT co-evolution.  Without GT labels, UT pass counts estimate test reliability, code pass counts estimate code candidate quality, and output-consensus clusters estimate final-selection confidence.  Theory and before/after self-play studies show that these signals correlate with true quality and reliably support co-evolution, offering practical diagnostics for future GT-free code generation methods.
\end{enumerate}

\begin{figure*}[!t]
    \centering    \includegraphics[width=\textwidth]{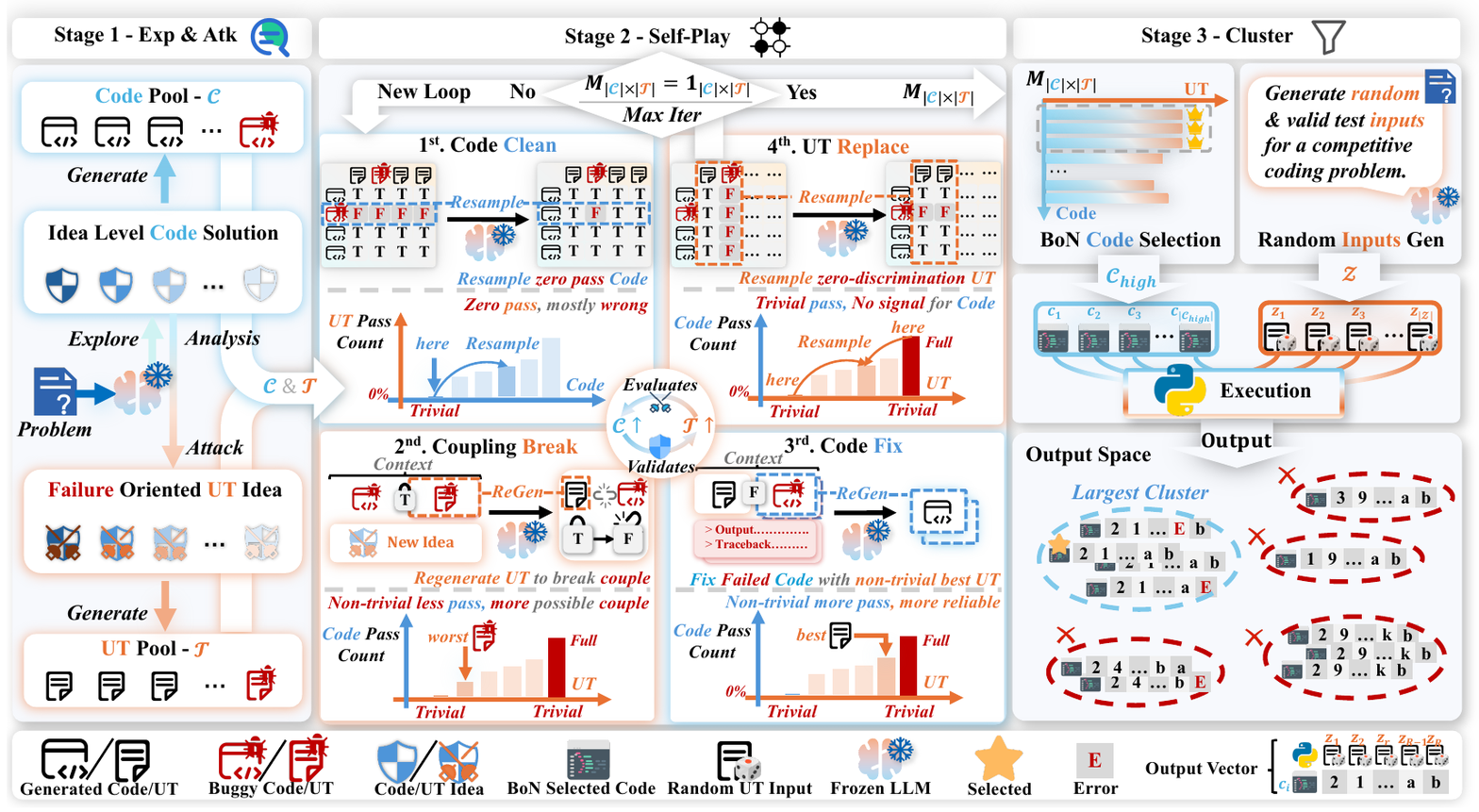}
    \caption{
Method Overview.
Given a coding problem, CoSPlay first explores solution-oriented code ideas and derives failure-oriented UT ideas from them, guiding the resulting UTs to target plausible failure modes and better distinguish among codes.
It then builds a Code-UT execution matrix, whose pass-count statistics measure how many UTs each code passes and how many codes each UT accepts. These bidirectional signals guide code cleaning, spurious Code-UT coupling breaking, code refinement, and UT refreshing, enabling Code-UT co-evolution.
Finally, when UTs cannot distinguish highest-scoring codes, CoSPlay evaluates them on random valid inputs and clusters their output signatures. Since correct codes should agree while incorrect codes tend to diverge, this output-consensus structure provides a reliable signal for final code selection.
}
    \vspace{-1.5em}
    \label{fig:methodology}
\end{figure*}

\section{Methodology}
\vspace{-0.5em}
\subsection{Exploration-Attack-Guided Code and UT Idea Generation}

\label{sec:idea_exploration}

Directly prompting the LLM to generate UTs often yields generic or superficial cases, because the tests are not grounded in the way code candidates may fail~\cite{bekmyradov2026llmstakingshortcutstest}.  Discriminative UTs instead target solution-specific assumptions: for example, duplicate elements can expose solutions that incorrectly assume all values are unique. Motivated by PlanSearch~\cite{wang2025plansearch}, we therefore explore plausible solution strategies in natural language and derive corresponding targeted UT ideas from their potential failure modes.
%
Let $\mathcal{P}$ denote the problem description. The exploration proceeds in two steps:
\textbf{(i) Solution exploration.}
We first prompt the LLM to produce a set of high-level algorithmic hints $\mathcal{H}=\{h_i\}_{i=1}^{n}$, covering likely strategies, data structures, and edge cases implied by $\mathcal{P}$. 
These hints are not intended to be complete solutions, but to span diverse solution directions. 
We then construct a subset space 
$\Omega=\{\omega\subseteq\mathcal{H}\mid |\omega|\in\{1,2\}\}$ and prompt the LLM to expand each hint subset $\omega\in\Omega$ into a set of detailed 
natural-language solution plans $\mathcal{S}_\omega$, yielding the solution-plan pool $\mathcal{S}=\bigcup_{\omega\in\Omega}\mathcal{S}_\omega$.
Limiting the subset size to 1 or 2 balances solution diversity against combinatorial explosion. \textbf{(ii) Failure-oriented UT attack idea exploration.}
For each plan $s\in\mathcal{S}$, we ask the LLM to \textit{identify potential failure modes, hidden edge cases, and implementation pitfalls.} 
This yields a UT attack idea set $\mathcal{A}_s$, and we define the final UT attack idea pool as $\mathcal{A}=\bigcup_{s\in\mathcal{S}}\mathcal{A}_s$.

\subsection{Code and UT Initialization}
\vspace{-0.5em}

\textbf{Pool Construction.}
Starting from the solution-plan pool $\mathcal{S}$ and the UT attack-idea pool $\mathcal{A}$, we build the initial code and UT pools for self-play.
For the code pool $\mathcal{C} = \{c_i\}_{i=1}^{N_c}$, we shuffle $\mathcal{S}$ and use each selected plan once to prompt the LLM for one candidate $c_i$.
For the UT pool $\mathcal{T} = \{t_j\}_{j=1}^{N_t}$, candidate inputs come from two sources.
The random source asks the LLM to produce a valid input directly from the problem statement.
The attack source shuffles $\mathcal{A}$ and asks the LLM to instantiate each selected idea as a concrete valid input.
For each input $x_j$, we ask the LLM to solve it multiple times; if the sampled outputs mostly agree, we take the agreed answer as $y_j$ and add $t_j=(x_j,y_j)$ to $\mathcal{T}$.

\textbf{Intuition of UT Sampling Strategy.}
To start self-play and co-evolve the two pools, we need an advantage signal that can judge the quality of generated samples. Specifically, correct code should be more likely to pass generated UTs than wrong code, and correct UTs should be more likely to be passed by generated code than wrong-output UTs. Appendix~\ref{app:theory:initial-matrix} derives the quality threshold required for each advantage to hold, with a looser code quality threshold for judging UTs and a stricter UT quality threshold for judging code.
To meet this stricter UT quality requirement, we apply output self-consistency: a candidate input is kept only when multiple sampled outputs agree on the same answer, which raises the correctness prior of the initial UT pool.
However, this filter is uneven across sources. Attack-derived inputs from $\mathcal{A}$ are highly discriminative because they target plausible failure modes, but their expected outputs are also harder for the LLM to infer~\cite{prasad2025utgen}, so self-consistency may discard some otherwise useful probes before self-play begins.
We therefore supplement the pool with random valid inputs sampled directly from the problem statement, which provide broader sanity checks under the same self-consistency criterion.

\begin{figure*}[!t] \centering \setlength{\tabcolsep}{0pt} \begin{subfigure}[t]{0.245\textwidth} \centering \includegraphics[width=\linewidth]{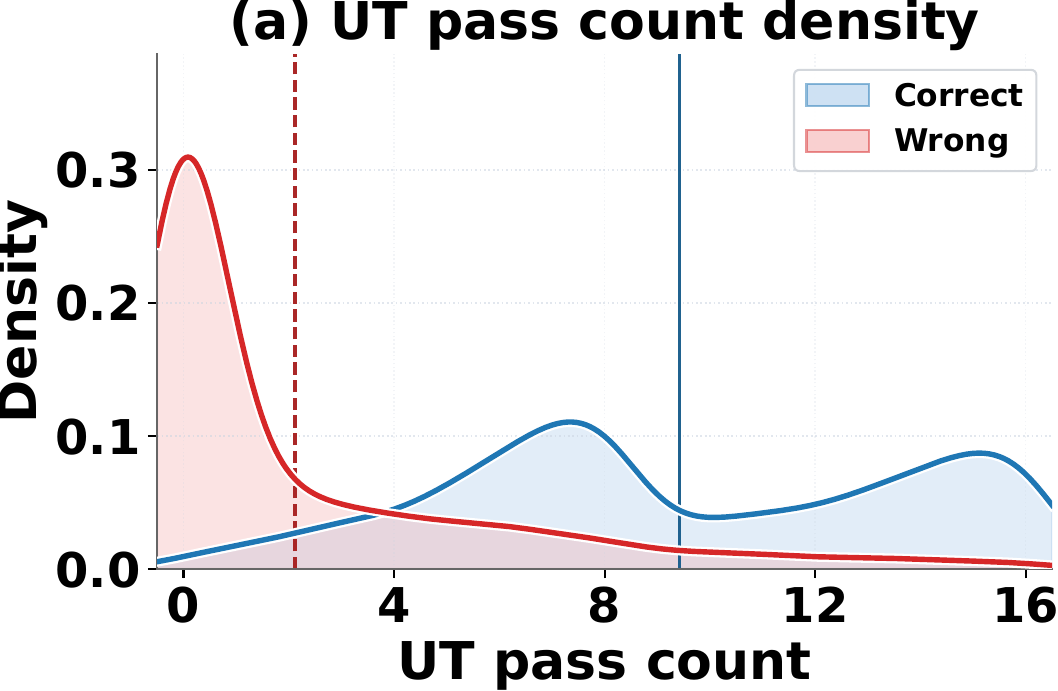} 
\label{fig:round00_panel_ii} 
\end{subfigure} \hfill 
\begin{subfigure}[t]{0.245\textwidth} \centering \includegraphics[width=\linewidth]{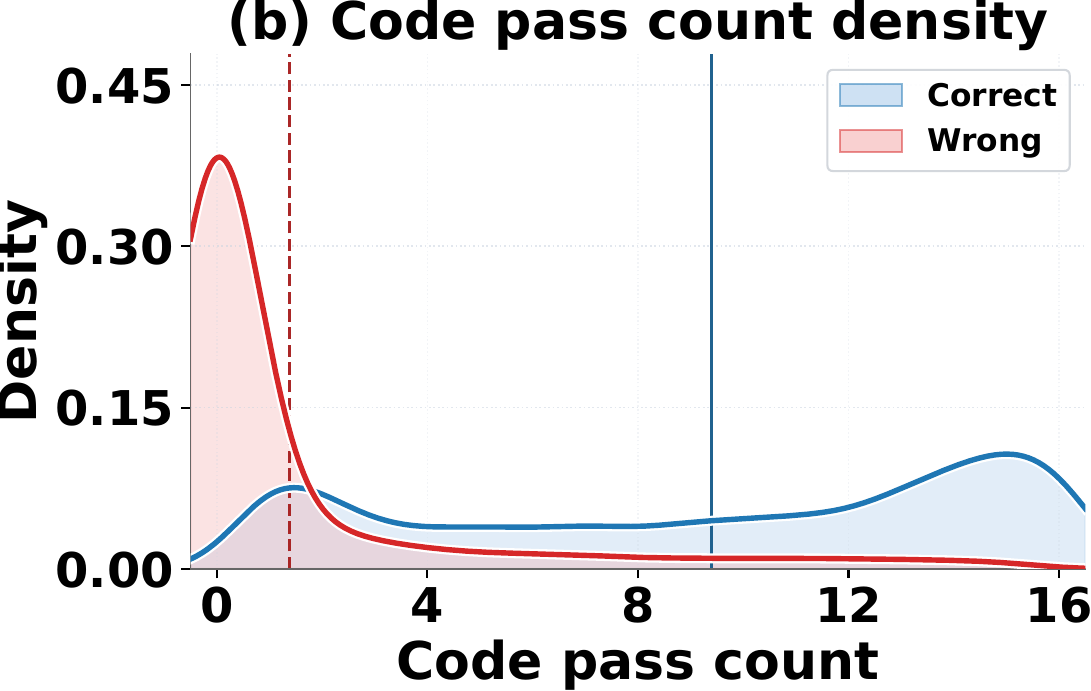} 
\label{fig:round00_panel_iii} \end{subfigure} \hfill \begin{subfigure}[t]{0.245\textwidth} \centering \includegraphics[width=\linewidth]{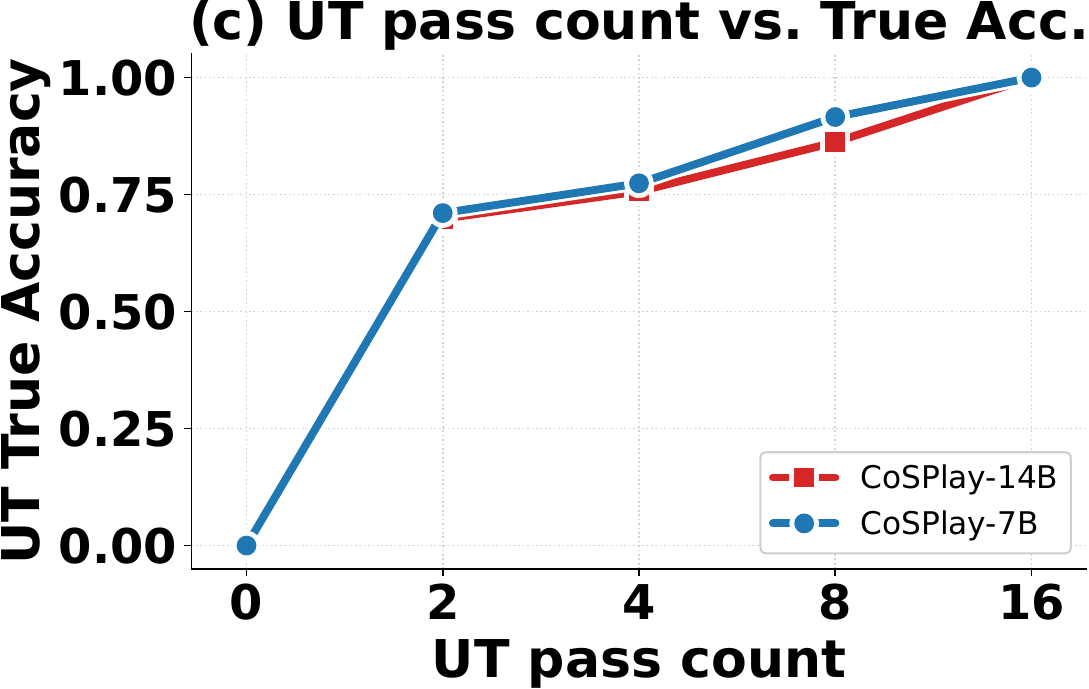} 
\label{fig:round00_panel_v} \end{subfigure} \hfill 
\begin{subfigure}[t]{0.245\textwidth} \centering \includegraphics[width=\linewidth]{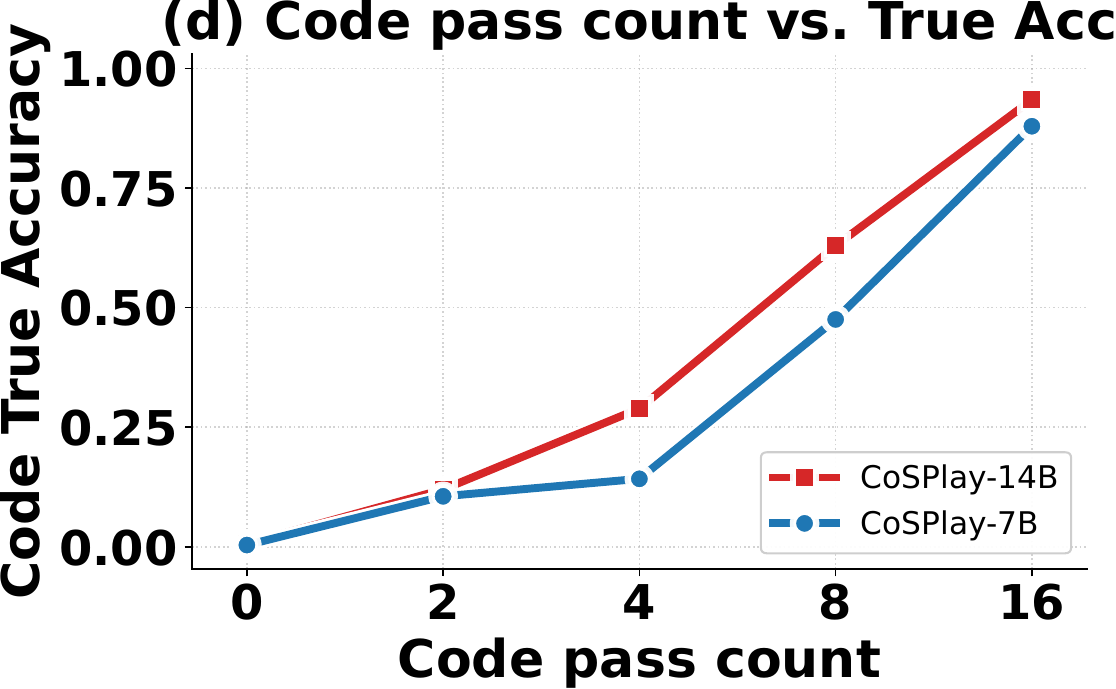} 
\label{fig:round00_panel_vi} \end{subfigure} 
\vspace{-1.5em}
\caption{ \textbf{Round-0 pass-count analysis.} Panels (a-b) show the density distributions of UT and code pass counts for correct and wrong candidates, while panels (c-d) show GT correctness as a function of pass count. 
} 
\label{fig:round00_original_panels_i_to_vi}
\vspace{-1.5em}
\end{figure*}

\vspace{-1.0em}
\subsection{Execution-Matrix-Driven Iterative Self-Play}
\label{sec:self_play_framework}
\vspace{-0.5em}


\textbf{Setup.} Given the execution matrix $\mathbf{M}\in\{0,1\}^{N_c\times N_t}$, where
$\mathbf{M}_{ij}=1$ indicates that code $c_i$ passes UT $t_j$, we define the UT pass
count $p^{\mathrm{UT}}_j$ as the number of code candidates that pass UT $t_j$, and the code pass count $p^{\mathrm{code}}_i$ as the number of UTs passed by code candidate $c_i$. We also define their normalized pass rates
$\alpha^{\mathrm{UT}}_j$ and $\alpha^{\mathrm{code}}_i$:
\begin{equation}
p^{\mathrm{UT}}_j = \sum_{i=1}^{N_c}\mathbf{M}_{ij}, 
\quad p^{\mathrm{code}}_i =\sum_{j=1}^{N_t}\mathbf{M}_{ij}, 
\quad \alpha^{\mathrm{UT}}_j = p^{\mathrm{UT}}_j/{N_c},
\quad \alpha^{\mathrm{code}}_i = p^{\mathrm{code}}_i/N_t,
\end{equation}

\textbf{Pass-count Signal.}
The initial codes/UTs are produced by independent sampling. For ease of analysis, we therefore follow prior work that analyzes pass-count-based signals under idealized i.i.d. assumptions~\cite{chen2022codet,huang2024mpsc,coderm,wang2025cure}. Theorem~\ref{thm:count-signal} shows that, when the advantage thresholds in Eq.~\ref{equ:Thredhold} hold, i.e., correct codes are more likely to pass generated UTs and correct UTs are more likely to be passed by generated codes, both sides have a positive posterior direction: \textit{higher code/UT pass counts imply a larger posterior probability of correctness}.
Our empirical results in Figure~\ref{fig:round00_original_panels_i_to_vi} further support this correlation in realistic Round-0 settings before self-play: correct codes/UTs concentrate more in high-pass-count regions, and their GT correctness increases with pass count. Thus, we use pass counts to guide execution-matrix diagnosis.

\textbf{Self-play Pipeline.} Each play iteration alternates over four steps: pruning all-failing codes, refreshing lowest-support non-trivial UTs to break spurious Code-UT coupling, refining codes with highest-support non-trivial UTs, and replacing zero-discrimination UTs. 
After each step, we re-execute all Code-UT pairs to refresh $\mathbf{M}$, since updating one pool can change the diagnostic status of the other. 
A step is skipped if no target code/UT exists, and the loop terminates when all entries of $\mathbf{M}$ become $1$ or the iteration budget $T_{\max}$ is reached.

\textbf{Step \ding{172}: Code Cleaning.}
All-failing codes and zero-support UTs carry different information.
\textit{\textbf{If each validated UT has probability $\rho_T>0$ of having the correct output, then a code with $\alpha^{\mathrm{code}}_i=0$ fails at least one correct UT with probability $1-(1-\rho_T)^{N_t}$, which approaches one exponentially in $N_t$.}}
Output self-consistency raises this UT prior, making the judgment more reliable.
Such code is therefore likely incorrect and provides little signal for improving the UT pool~\cite{coderepair}.
By contrast, a zero-support UT may still be a valid hard probe that exposes a shared bug in the current code pool, so deleting it may discard useful tests.
We therefore first remove all-failing codes and replace them with newly generated ones:
\begin{equation}
\mathcal{C} \leftarrow (\mathcal{C} \setminus \mathcal{C}_{\text{fail}}) \cup \mathcal{C}_{\text{new}},
\end{equation}
where $\mathcal{C}_{\text{fail}} = \{ c_i \in \mathcal{C} \mathrel{|} \alpha^{\mathrm{code}}_i = 0 \}$, and $\mathcal{C}_{\text{new}}$ denotes  new codes given a solution idea from $\mathcal{S}$, with $|\mathcal{C}_{\text{new}}| = |\mathcal{C}_{\text{fail}}|$.


\textbf{Step \ding{173}: Breaking Spurious Code-UT Coupling.}
After code cleaning, we turn to diagnose the UT pool before using it for refinement. 
A UT with a low but non-zero pass rate (i.e. non-trivial worst) is risky: unlike a zero-pass UT, which rejects all codes universally, \textit{\textbf{it is supported by only a small subset of codes and may therefore reflect spurious coupling with a specific wrong logic}}. 
If such a UT is chosen for code refinement, it can propagate the minority behavior to other codes. 
The goal is not to discard low-support UTs indiscriminately, but to \textbf{\textit{break the suspicious agreement between a minority code behavior and a potentially coupled UT.}}
We therefore refresh this UT by providing the LLM with the original UT and the code it spuriously passed, instructing it to generate a new UT given $\mathcal{A}$ while avoiding the same suspicious agreement pattern.
\begin{equation}
\mathcal{T} \leftarrow 
(\mathcal{T} \setminus \mathcal{T}_{j_{\mathrm{worst}}}) 
\cup 
\mathcal{T}'_{j_{\mathrm{worst}}},
\end{equation}
where $j_{\mathrm{worst}} = \arg\min_j \{ \alpha^{{\mathrm{UT}}}_j \mid 0 < \alpha^{{\mathrm{UT}}}_j < 1 \}$ denotes the selected UT index, $\mathcal{T}_{j_{\mathrm{worst}}}$ is the original UT, and $\mathcal{T}'_{j_{\mathrm{worst}}}$ is the regenerated UT given an attack idea from $\mathcal{A}$ with $\mathcal{T}_{j_{\mathrm{worst}}}$ and its passed codes. We do not assume the selected UT is necessarily wrong; it serves as a practical proxy for the UT most likely to be spuriously coupled.

\textbf{Step \ding{174}: Code Fixing.}
After code cleaning and replacing the most suspicious UT, we turn to \textbf{\textit{refine buggy codes using a trustworthy yet discriminative UT}}. 
A UT with $\alpha^{{\mathrm{UT}}}_j=1$ may be trustworthy enough but provides no refinement signal. 
We therefore \textit{\textbf{select the UT with the highest pass rate but strictly below $1$}}, i.e., the \textit{non-trivial best UT} which is reliable enough and can still distinguish buggy codes.
For each code failing this UT, we provide the LLM with the buggy code, the UT itself, and the actual output of the code to guide refinement:
\begin{equation}
\mathcal{C} 
\leftarrow 
(\mathcal{C} \setminus \mathcal{C}_{\mathrm{target}}) 
\cup \mathcal{C}_{\mathrm{fixed}},
\end{equation}
where
$j_{\mathrm{best}} = \arg\max_j \{ \alpha^{\mathrm{UT}}_j \mid 0 < \alpha^{\mathrm{UT}}_j < 1 \}$
denotes the index of selected high-support yet non-trivial UT.
$\mathcal{C}_{\mathrm{target}} 
= \{ c_i \in \mathcal{C} \mid \mathbf{M}_{i,j_{\mathrm{best}}} = 0 \}$
denotes the code candidates that fail the selected UT $t_{j_{\mathrm{best}}}$ and are therefore targeted for refinement.
$\mathcal{C}_{\mathrm{fixed}}$ denotes the refined versions of $\mathcal{C}_{\mathrm{target}}$, generated using $t_{j_{\mathrm{best}}}$ and the corresponding execution feedback.

\textbf{Step \ding{175}: Replacing Zero-Discrimination UTs.}
Because code refinement improves the code pool, \textit{\textbf{previously discriminative UTs may become trivial under the updated pool}}: they are either passed by all candidates ($\alpha^{{\mathrm{UT}}}_j = 1$) or by none ($\alpha^{{\mathrm{UT}}}_j = 0$). 
All-pass UTs are too easy for the current code pool, whereas a zero-pass-rate UT is more likely to be noisy or mismatched to the current code pool~\cite{ song2024effectivelargelanguagemodel,xie2025ctrl}.
Since such UTs no longer provide useful refinement or selection signals~\cite{repairr1, coderm}, we replace them with newly generated UTs from $\mathcal{A}$:
\begin{equation}
\mathcal{T} \leftarrow (\mathcal{T} \setminus \mathcal{T}_{\text{triv}}) \cup \mathcal{T}_{\text{new}},
\end{equation}
where $\mathcal{T}_{\text{triv}} = \{(x_j, y_j) \in \mathcal{T} \mid \alpha^{{\mathrm{UT}}}_j \in \{0,1\}\}$ denotes the set of zero-discrimination UTs, and $\mathcal{T}_{\text{new}}$ is the set of newly generated UTs. We maintain a constant UT pool size by enforcing $|\mathcal{T}_{\text{new}}| = |\mathcal{T}_{\text{triv}}|$.


\vspace{-1em}

\subsection{Output-Consensus-Based Cluster Selection}
\label{sec:consensus_selection}
\vspace{-0.5em}

After self-play, we score each code by its final pass count 
$p_i^{\mathrm{code}}$ and retain the BoN-tied top subset
$\mathcal{C}_{\mathrm{high}}
=
\left\{
c_i \in \mathcal{C}
\mid
p_i^{\mathrm{code}}=\max_{\ell} p_{\ell}^{\mathrm{code}}
\right\}.
$
Because discrete pass counts can leave multiple top-scoring candidates, BoN alone cannot break ties. We thus apply an \emph{unsupervised execution-consensus clustering}: we prompt the LLM to generate $R$ \textit{random and valid} inputs $\mathcal{Z}=\{z_r\}_{r=1}^{R}$ which do not require outputs, and each $c_i\in\mathcal{C}_{\mathrm{high}}$ is executed on $\mathcal{Z}$. With $o_i(z_r)$ denoting the observed output of $c_i$ on $z_r$, each code induces the output signature
$\sigma(c_i):=(o_i(z_1),\ldots,o_i(z_R))$.

\textbf{Theoretical Motivation.}
Correct code candidates should share the same output signature, while incorrect code candidates typically split across different behaviors. As the number of random inputs increases, matching any fixed wrong signature becomes exponentially harder, whereas all correct candidates keep sharing the true signature. Thus, the true signature becomes the dominant population mode, and the empirical largest-cluster rule converges to it as the independently sampled candidate pool grows. Proposition~\ref{prop:signature-mode-separation} formalizes this statement, with the proof given in Appendix~\ref{app:theory}.

\textbf{Clustering with Execution Errors.}
In practice, some executions may fail to produce a normal output, we mark these cases as $o_i(z_r)=\mathrm{ERR}$.
We first define a pairwise observed-compatibility relation between codes.
Two candidates $c_i,c_j\in\mathcal{C}_{\mathrm{high}}$ are observed-compatible, denoted $c_i\sim_{\mathrm{obs}}c_j$, if they
produce the same valid output on every random input:
\begin{equation}
c_i \sim_{\mathrm{obs}} c_j
\iff
o_i(z_r)=o_j(z_r),\quad
\forall r\in\{1,\ldots,R\}:\ o_i(z_r)\neq\mathrm{ERR} \wedge \ o_j(z_r)\neq\mathrm{ERR}.
\end{equation}
Execution errors are treated as missing evidence rather than output conflicts.
Then we build output-consensus clusters over $\mathcal{C}_{\mathrm{high}}$.
Let $\mathcal{G}=\{G_m\}_{m=1}^{M}$ denote the resulting collection, where $M$ is the number of clusters produced and each $G_m\subseteq\mathcal{C}_{\mathrm{high}}$ contains code candidates with no observed output conflict.
Because this compatibility relation may not be transitive when $\mathrm{ERR}$
are ignored, it does not directly induce a unique partition. We therefore construct conservative pairwise-compatible clusters by processing
candidates in a fixed deterministic order. Specifically, when processing a candidate $c_i$, we add it to an
existing cluster $G_m$ only if it is compatible with every member of that
cluster, i.e. $\forall c_j\in G_m,\ c_i\sim_{\mathrm{obs}}c_j.$
If no such cluster exists, $c_i$ starts a new cluster.

\textbf{Reliability Scoring and Selection.}
In the error-free case, selecting the largest cluster is sufficient. With runtime errors, however, candidates with many failed executions may appear compatible due to few valid comparisons. We therefore score clusters by ordered valid pairwise agreements. For $c_i,c_j\in G_m$ and input $z_k$, define
\setlength{\abovedisplayskip}{4pt}
\setlength{\belowdisplayskip}{4pt}
\begin{equation}
a_{ij}^{(k)}
=
\mathbf{1}\!\left\{
o_i(z_k)\neq\mathrm{ERR},\
o_j(z_k)\neq\mathrm{ERR},\
o_i(z_k)=o_j(z_k)
\right\}.
\end{equation}
We define the individual reliability score $S_{\mathrm{ind}}(c_i)$ and the cluster reliability score $S_{\mathrm{cls}}(G_m)$ as
\setlength{\abovedisplayskip}{4pt}
\setlength{\belowdisplayskip}{4pt}
\begin{equation}
S_{\mathrm{ind}}(c_i) = \sum_{c_j \in G_m \setminus \{c_i\}} \sum_{k=1}^{R} a_{ij}^{(k)}, 
\qquad
S_{\mathrm{cls}}(G_m)=\sum_{c_i\in G_m}S_{\mathrm{ind}}(c_i).
\end{equation}
Based on the cluster-level and individual-level scores, we first select the most reliable cluster
$G^*=\arg\max_{G_m\in\mathcal{G}}S_{\mathrm{cls}}(G_m)$,
and then choose the most reliable code $c^*$ within it:
$c^*=\arg\max_{c_i\in G^*}S_{\mathrm{ind}}(c_i)$.
When all executions are error-free, every within-cluster pair agrees on all
$R$ inputs, yielding
$S_{\mathrm{cls}}(G_m)=|G_m|(|G_m|-1)R$.
Thus, with fixed $R$, maximizing $S_{\mathrm{cls}}$ is equivalent to selecting the largest cluster. With runtime errors, the same score remains more reliable than raw cluster size because it rewards only ordered valid pairwise agreements.

\providecommand{\deltaqwenlabel}{\textcolor{black!65}{$\Delta$ vs.\ Qwen-Ins.}}
\providecommand{\deltacurelabel}{\textcolor{black!65}{$\Delta$ vs.\ CURE}}
\providecommand{\deltaval}[1]{\textcolor{green!45!black}{\textbf{#1}}}
\providecommand{\deltaneg}[1]{\textcolor{red!60!black}{\textbf{#1}}}
\providecommand{\deltadash}{\textcolor{black!35}{--}}

\begingroup
\setlength{\dbltextfloatsep}{4pt plus 1pt minus 1pt}
\setlength{\dblfloatsep}{4pt plus 1pt minus 1pt}

\begin{table*}[!t]
    \centering
    \scriptsize
    \setlength{\tabcolsep}{0.5pt}
    \renewcommand{\arraystretch}{1.15}
    \caption{Performance comparison between \textbf{CoSPlay} and \textbf{RLVR models}. 
\textbf{Base} is the backbone variant. 
\textbf{\#Data} indicates the number of GT data used for training. 
\textbf{Signal} (Sig.): BoN accuracy of generated UTs selecting code from Qwen2.5-7B-Ins., which measures the discrimination ability of UTs. 
\textbf{UT} \& \textbf{Code}: average accuracy of self-generated UTs and codes, respectively. 
\textbf{BoN}: Best-of-16 accuracy using self-generated codes and UTs. 
\textbf{Average} reports the dataset-weighted average across benchmarks. 
Colors (\textcolor{Red}{red}/\textcolor{Green}{blue}) indicate values above/below the specific column mean; shades use a rank-stepped scale within each column, so the top ranks are visually separated even when their values are close. 
$\Delta$: absolute gain of \textbf{CoSPlay w/ Cluster} over the corresponding base model.}
    \label{tab:overall}

    \resizebox{\textwidth}{!}{
    \begin{tabular}{l||c|c||cccc||cccc||cccc||cccc||cccc}
        \toprule
        \multirow{2}{*}{\textbf{Model}} &
        \multirow{2}{*}{\textbf{Base}} &
        \multirow{2}{*}{\textbf{\#Data}} &
        \multicolumn{4}{c||}{\textbf{LiveBench}} &
        \multicolumn{4}{c||}{\textbf{LiveCodeBench}} &
        \multicolumn{4}{c||}{\textbf{CodeContests}} &
        \multicolumn{4}{c||}{\textbf{CodeForces}} &
        \multicolumn{4}{c}{\textbf{Average}} \\

         & & & \textbf{Sig.} & \textbf{UT} & \textbf{Code} & \textbf{BoN}
         & \textbf{Sig.} & \textbf{UT} & \textbf{Code} & \textbf{BoN}
         & \textbf{Sig.} & \textbf{UT} & \textbf{Code} & \textbf{BoN}
         & \textbf{Sig.} & \textbf{UT} & \textbf{Code} & \textbf{BoN}
         & \textbf{Sig.} & \textbf{UT} & \textbf{Code} & \textbf{BoN} \\

        \hline
        \hline
        \rowcolor{gray!15} \multicolumn{23}{c}{\textbf{7B Base Models}} \\ \hline
        Qwen2.5-7B-Instruct & Ins. & - & \cellcolor{Green!92!white}36.2 & \cellcolor{Green!92!white}9.0 & \cellcolor{Green!62!white}33.1 & \cellcolor{Green!62!white}36.2 & \cellcolor{Green!92!white}31.2 & \cellcolor{Green!76!white}14.4 & \cellcolor{Green!62!white}27.0 & \cellcolor{Green!62!white}31.2 & \cellcolor{Green!76!white}24.1 & \cellcolor{Green!52!white}21.3 & \cellcolor{Red!52!white}21.5 & \cellcolor{Green!62!white}24.1 & \cellcolor{Green!44!white}7.1 & \cellcolor{Green!44!white}12.9 & \cellcolor{Green!76!white}5.0 & \cellcolor{Green!76!white}7.1 & \cellcolor{Green!92!white}22.1 & \cellcolor{Green!52!white}14.6 & \cellcolor{Green!62!white}19.0 & \cellcolor{Green!62!white}22.1 \\
        \quad + CodeT & Ins. & - & \cellcolor{Green!76!white}37.5 & \cellcolor{Green!92!white}9.0 & \cellcolor{Green!62!white}33.1 & \cellcolor{Green!52!white}37.5 & \cellcolor{Green!76!white}32.3 & \cellcolor{Green!76!white}14.4 & \cellcolor{Green!62!white}27.0 & \cellcolor{Green!52!white}32.3 & \cellcolor{Green!52!white}25.0 & \cellcolor{Green!52!white}21.3 & \cellcolor{Red!52!white}21.5 & \cellcolor{Green!44!white}25.0 & \cellcolor{Green!37!white}7.6 & \cellcolor{Green!44!white}12.9 & \cellcolor{Green!76!white}5.0 & \cellcolor{Green!62!white}7.6 & \cellcolor{Green!76!white}22.9 & \cellcolor{Green!52!white}14.6 & \cellcolor{Green!62!white}19.0 & \cellcolor{Green!52!white}22.9 \\
        \quad + Cluster & Ins. & - & \cellcolor{Green!52!white}42.7 & \cellcolor{Green!92!white}9.0 & \cellcolor{Green!62!white}33.1 & \cellcolor{Green!44!white}42.7 & \cellcolor{Green!37!white}35.2 & \cellcolor{Green!76!white}14.4 & \cellcolor{Green!62!white}27.0 & \cellcolor{Green!37!white}35.2 & \cellcolor{Green!44!white}25.1 & \cellcolor{Green!52!white}21.3 & \cellcolor{Red!52!white}21.5 & \cellcolor{Green!37!white}25.1 & \cellcolor{Green!44!white}7.1 & \cellcolor{Green!44!white}12.9 & \cellcolor{Green!76!white}5.0 & \cellcolor{Green!76!white}7.1 & \cellcolor{Green!37!white}24.4 & \cellcolor{Green!52!white}14.6 & \cellcolor{Green!62!white}19.0 & \cellcolor{Green!44!white}24.4 \\
        Qwen2.5-7B-Coder & Coder & - & \cellcolor{Green!62!white}40.6 & \cellcolor{Green!76!white}12.1 & \cellcolor{Green!92!white}3.8 & \cellcolor{Green!92!white}19.0 & \cellcolor{Green!62!white}33.4 & \cellcolor{Green!62!white}16.0 & \cellcolor{Green!92!white}4.1 & \cellcolor{Green!92!white}17.7 & \cellcolor{Green!37!white}25.2 & \cellcolor{Green!52!white}21.3 & \cellcolor{Green!92!white}2.6 & \cellcolor{Green!92!white}12.8 & \cellcolor{Green!62!white}6.8 & \cellcolor{Green!52!white}6.7 & \cellcolor{Green!92!white}1.8 & \cellcolor{Green!92!white}3.5 & \cellcolor{Green!62!white}23.4 & \cellcolor{Green!76!white}13.3 & \cellcolor{Green!92!white}3.0 & \cellcolor{Green!92!white}12.0 \\
        Qwen2.5-7B-Coder-Ins. & Coder-Ins. & - & \cellcolor{Green!52!white}42.7 & \cellcolor{Green!62!white}12.5 & \cellcolor{Red!37!white}36.3 & \cellcolor{Red!31!white}47.7 & \cellcolor{Green!52!white}33.7 & \cellcolor{Green!92!white}13.4 & \cellcolor{Red!31!white}28.8 & \cellcolor{Green!44!white}34.4 & \cellcolor{Green!92!white}24.0 & \cellcolor{Green!92!white}12.7 & \cellcolor{Green!37!white}20.9 & \cellcolor{Green!76!white}23.2 & \cellcolor{Green!52!white}6.9 & \cellcolor{Green!92!white}5.3 & \cellcolor{Green!44!white}5.4 & \cellcolor{Green!37!white}7.9 & \cellcolor{Green!52!white}23.5 & \cellcolor{Green!92!white}10.4 & \cellcolor{Red!31!white}20.0 & \cellcolor{Green!37!white}24.5 \\
        Seed-Coder-8B-Instruct & - & - & \cellcolor{Green!37!white}43.5 & \cellcolor{Green!31!white}26.9 & \cellcolor{Red!62!white}37.6 & \cellcolor{Red!21!white}45.1 & \cellcolor{Green!31!white}36.1 & \cellcolor{Green!37!white}30.0 & \cellcolor{Red!62!white}32.5 & \cellcolor{Red!37!white}40.8 & \cellcolor{Green!52!white}25.0 & \cellcolor{Green!44!white}31.8 & \cellcolor{Green!62!white}18.3 & \cellcolor{Green!31!white}26.9 & \cellcolor{Green!31!white}8.4 & \cellcolor{Green!31!white}17.0 & \cellcolor{Red!52!white}6.4 & \cellcolor{Green!26!white}10.5 & \cellcolor{Green!31!white}25.2 & \cellcolor{Green!37!white}25.5 & \cellcolor{Red!52!white}21.4 & \cellcolor{Red!37!white}28.2 \\

        \hline \rowcolor{gray!5} \multicolumn{23}{c}{RL Methods} \\ \hline

        AceCoder-7B-Rule & Coder-Ins. & 22k & \cellcolor{Green!37!white}43.5 & \cellcolor{Green!52!white}17.2 & \cellcolor{Red!31!white}36.1 & \cellcolor{Red!26!white}45.8 & \cellcolor{Green!44!white}34.1 & \cellcolor{Green!52!white}17.5 & \cellcolor{Red!44!white}30.5 & \cellcolor{Green!31!white}35.9 & \cellcolor{Green!62!white}24.4 & \cellcolor{Green!76!white}18.8 & \cellcolor{Green!52!white}20.1 & \cellcolor{Green!76!white}23.2 & \cellcolor{Green!92!white}6.4 & \cellcolor{Green!76!white}5.9 & \cellcolor{Green!44!white}5.4 & \cellcolor{Green!52!white}7.7 & \cellcolor{Green!44!white}23.6 & \cellcolor{Green!62!white}13.7 & \cellcolor{Red!37!white}20.5 & \cellcolor{Green!31!white}24.8 \\
        AceCoder-7B-RM & Coder-Ins. & 329k & \cellcolor{Green!44!white}43.0 & \cellcolor{Green!44!white}17.7 & \cellcolor{Red!52!white}37.5 & \cellcolor{Red!37!white}48.4 & \cellcolor{Green!52!white}33.7 & \cellcolor{Green!44!white}19.9 & \cellcolor{Red!37!white}30.3 & \cellcolor{Green!26!white}37.1 & \cellcolor{Green!52!white}25.0 & \cellcolor{Green!62!white}20.0 & \cellcolor{Green!44!white}20.6 & \cellcolor{Green!44!white}25.0 & \cellcolor{Green!76!white}6.6 & \cellcolor{Green!62!white}6.6 & \cellcolor{Green!62!white}5.2 & \cellcolor{Green!44!white}7.8 & \cellcolor{Green!44!white}23.6 & \cellcolor{Green!44!white}15.1 & \cellcolor{Red!44!white}20.6 & \cellcolor{Green!26!white}25.8 \\
        AZR-7B-Coder & Coder & \cellcolor{mycolor_1}\textbf{0} & \cellcolor{Green!31!white}44.5 & \cellcolor{Green!37!white}23.1 & \cellcolor{Green!76!white}12.5 & \cellcolor{Green!76!white}34.9 & \cellcolor{Green!26!white}37.1 & \cellcolor{Green!31!white}31.7 & \cellcolor{Green!76!white}12.4 & \cellcolor{Green!76!white}29.8 & \cellcolor{Red!37!white}29.1 & \cellcolor{Green!37!white}39.0 & \cellcolor{Green!76!white}13.7 & \cellcolor{Green!52!white}24.4 & \cellcolor{Green!37!white}7.6 & \cellcolor{Green!37!white}16.9 & \cellcolor{Green!52!white}5.3 & \cellcolor{Green!31!white}8.4 & \cellcolor{Green!26!white}26.1 & \cellcolor{Green!31!white}27.0 & \cellcolor{Green!76!white}10.2 & \cellcolor{Green!76!white}21.9 \\
        CURE-7B & Ins. & 4.5k & \cellcolor{Red!76!white}52.3 & \cellcolor{Red!62!white}54.6 & \cellcolor{Red!44!white}37.4 & \cellcolor{Red!62!white}51.6 & \cellcolor{Red!44!white}42.1 & \cellcolor{Red!62!white}58.4 & \cellcolor{Red!52!white}31.2 & \cellcolor{Red!52!white}42.7 & \cellcolor{Red!62!white}33.9 & \cellcolor{Red!62!white}63.7 & \cellcolor{Red!62!white}25.5 & \cellcolor{Red!52!white}34.5 & \cellcolor{Red!44!white}11.5 & \cellcolor{Red!62!white}46.6 & \cellcolor{Red!76!white}7.5 & \cellcolor{Red!62!white}16.1 & \cellcolor{Red!44!white}31.0 & \cellcolor{Red!76!white}54.9 & \cellcolor{Red!62!white}22.5 & \cellcolor{Red!52!white}32.9 \\


\hline \rowcolor{gray!5} \multicolumn{23}{c}{Our Method} \\ \hline
        \textbf{Qwen2.5-7B-Ins. + CoSPlay} & Ins. & \cellcolor{mycolor_1}\textbf{0} & \cellcolor{Red!44!white}47.4 & \cellcolor{Red!95!white}66.7 & \cellcolor{Red!76!white}38.8 & \cellcolor{Red!52!white}50.0 & \cellcolor{Red!52!white}43.2 & \cellcolor{Red!95!white}77.0 & \cellcolor{Red!76!white}33.2 & \cellcolor{Red!44!white}41.9 & \cellcolor{Red!44!white}32.1 & \cellcolor{Red!76!white}74.4 & \cellcolor{Red!76!white}26.0 & \cellcolor{Red!44!white}34.3 & \cellcolor{Red!52!white}13.3 & \cellcolor{Red!76!white}85.0 & \cellcolor{Red!62!white}6.8 & \cellcolor{Red!76!white}16.8 & \cellcolor{Red!52!white}31.3 & \cellcolor{Red!95!white}78.3 & \cellcolor{Red!76!white}23.3 & \cellcolor{Red!44!white}32.6 \\
        \quad w/ Cluster & Ins. & \cellcolor{mycolor_1}\textbf{0} & \cellcolor{Red!52!white}50.3 & \cellcolor{Red!95!white}66.7 & \cellcolor{Red!76!white}38.8 & \cellcolor{Red!44!white}49.7 & \cellcolor{Red!62!white}43.7 & \cellcolor{Red!95!white}77.0 & \cellcolor{Red!76!white}33.2 & \cellcolor{Red!62!white}43.4 & \cellcolor{Red!52!white}32.6 & \cellcolor{Red!76!white}74.4 & \cellcolor{Red!76!white}26.0 & \cellcolor{Red!62!white}34.6 & \cellcolor{Red!62!white}13.6 & \cellcolor{Red!76!white}85.0 & \cellcolor{Red!62!white}6.8 & \cellcolor{Red!76!white}16.8 & \cellcolor{Red!62!white}31.9 & \cellcolor{Red!95!white}78.3 & \cellcolor{Red!76!white}23.3 & \cellcolor{Red!62!white}33.2 \\

        \rowcolor{green!4}
        \textbf{\deltaqwenlabel} & \deltadash & \deltadash
                & \deltaval{+14.1} & {\deltaval{+57.7}} & {\deltaval{+5.7}} & {\deltaval{+13.5}}
                & \deltaval{+12.5} & \deltaval{+62.6} & \deltaval{+6.2} & \deltaval{+12.2}
                & \deltaval{+8.5} & \deltaval{+53.1} & \deltaval{+4.5} & \deltaval{+10.5}
                & \deltaval{+6.5} & \deltaval{+72.1} & \deltaval{+1.8} & \deltaval{+9.7}
                & \deltaval{+9.8} & \deltaval{+63.7} & \deltaval{+4.3} & \deltaval{+11.1} \\
        \addlinespace[1pt] \textbf{CURE-7B + CoSPlay} & Ins. & 4.5k & \cellcolor{Red!62!white}51.3 & \cellcolor{Red!76!white}66.2 & \cellcolor{Red!95!white}45.4 & \cellcolor{Red!76!white}52.1 & \cellcolor{Red!76!white}44.6 & \cellcolor{Red!76!white}75.6 & \cellcolor{Red!95!white}37.4 & \cellcolor{Red!76!white}48.5 & \cellcolor{Red!76!white}35.3 & \cellcolor{Red!95!white}76.6 & \cellcolor{Red!95!white}29.8 & \cellcolor{Red!76!white}39.6 & \cellcolor{Red!76!white}14.2 & \cellcolor{Red!95!white}85.6 & \cellcolor{Red!95!white}11.1 & \cellcolor{Red!95!white}23.1 & \cellcolor{Red!76!white}33.0 & \cellcolor{Red!95!white}78.3 & \cellcolor{Red!95!white}27.7 & \cellcolor{Red!76!white}38.4 \\
        \quad w/ Cluster & Ins. & 4.5k & \cellcolor{Red!95!white}53.6 & \cellcolor{Red!76!white}66.2 & \cellcolor{Red!95!white}45.4 & \cellcolor{Red!95!white}52.3 & \cellcolor{Red!95!white}44.7 & \cellcolor{Red!76!white}75.6 & \cellcolor{Red!95!white}37.4 & \cellcolor{Red!95!white}48.6 & \cellcolor{Red!95!white}35.6 & \cellcolor{Red!95!white}76.6 & \cellcolor{Red!95!white}29.8 & \cellcolor{Red!95!white}40.0 & \cellcolor{Red!95!white}14.4 & \cellcolor{Red!95!white}85.6 & \cellcolor{Red!95!white}11.1 & \cellcolor{Red!95!white}23.1 & \cellcolor{Red!95!white}33.4 & \cellcolor{Red!95!white}78.3 & \cellcolor{Red!95!white}27.7 & \cellcolor{Red!95!white}38.6 \\

                \rowcolor{green!4}
        \textbf{\deltacurelabel} & \deltadash & \deltadash
                & \deltaval{+1.3} & {\deltaval{+11.6}} & {\deltaval{+8.0}} & {\deltaval{+0.7}}
                & \deltaval{+2.6} & \deltaval{+17.2} & \deltaval{+6.2} & \deltaval{+5.9}
                & \deltaval{+1.7} & \deltaval{+12.9} & \deltaval{+4.3} & \deltaval{+5.5}
                & \deltaval{+2.9} & \deltaval{+39.0} & \deltaval{+3.6} & \deltaval{+7.0}
                & \deltaval{+2.4} & \deltaval{+23.4} & \deltaval{+5.2} & \deltaval{+5.7} \\

        \hline
        \hline
        \rowcolor{gray!15} \multicolumn{23}{c}{\textbf{14B Base Models}} \\ \hline
        Qwen2.5-14B-Instruct & Ins. & - & \cellcolor{Green!52!white}47.7 & \cellcolor{Green!62!white}26.4 & \cellcolor{Green!52!white}39.2 & \cellcolor{Green!52!white}51.0 & \cellcolor{Green!44!white}41.4 & \cellcolor{Green!62!white}38.0 & \cellcolor{Green!62!white}34.5 & \cellcolor{Green!52!white}46.2 & \cellcolor{Green!44!white}30.4 & \cellcolor{Green!52!white}43.2 & \cellcolor{Green!62!white}25.1 & \cellcolor{Green!52!white}34.6 & \cellcolor{Green!62!white}9.9 & \cellcolor{Green!52!white}22.6 & \cellcolor{Green!52!white}7.3 & \cellcolor{Green!62!white}13.3 & \cellcolor{Green!52!white}29.1 & \cellcolor{Green!52!white}32.4 & \cellcolor{Green!62!white}23.9 & \cellcolor{Green!44!white}33.2 \\
        \quad + CodeT & Ins. & - & \cellcolor{Green!37!white}49.5 & \cellcolor{Green!62!white}26.4 & \cellcolor{Green!52!white}39.2 & \cellcolor{Red!31!white}52.1 & \cellcolor{Green!76!white}42.2 & \cellcolor{Green!62!white}38.0 & \cellcolor{Green!62!white}34.5 & \cellcolor{Red!31!white}48.4 & \cellcolor{Green!52!white}29.8 & \cellcolor{Green!52!white}43.2 & \cellcolor{Green!62!white}25.1 & \cellcolor{Green!31!white}35.6 & \cellcolor{Green!44!white}10.5 & \cellcolor{Green!52!white}22.6 & \cellcolor{Green!52!white}7.3 & \cellcolor{Green!52!white}13.5 & \cellcolor{Green!44!white}29.7 & \cellcolor{Green!52!white}32.4 & \cellcolor{Green!62!white}23.9 & \cellcolor{Green!31!white}34.3 \\
         \quad + Cluster & Ins. & - & \cellcolor{Green!42!white}48.4 & \cellcolor{Green!62!white}26.4 & \cellcolor{Green!52!white}39.2 & \cellcolor{Red!31!white}52.1 & \cellcolor{Green!76!white}42.2 & \cellcolor{Green!62!white}38.0 & \cellcolor{Green!62!white}34.5 & \cellcolor{Red!26!white}47.4 & \cellcolor{Green!52!white}29.4 & \cellcolor{Green!52!white}43.2 & \cellcolor{Green!62!white}25.1 & \cellcolor{Green!44!white}34.7 & \cellcolor{Green!62!white}9.1 & \cellcolor{Green!52!white}22.6 & \cellcolor{Green!52!white}7.3 & \cellcolor{Green!44!white}13.6 & \cellcolor{Green!52!white}29.0 & \cellcolor{Green!52!white}32.4 & \cellcolor{Green!62!white}23.9 & \cellcolor{Green!37!white}33.8 \\
 
        Qwen2.5-14B-Coder & Coder & - & \cellcolor{Green!76!white}42.2 & \cellcolor{Green!92!white}13.9 & \cellcolor{Green!92!white}10.8 & \cellcolor{Green!92!white}30.2 & \cellcolor{Green!76!white}36.1 & \cellcolor{Green!92!white}20.7 & \cellcolor{Green!92!white}9.7 & \cellcolor{Green!92!white}26.7 & \cellcolor{Green!62!white}28.3 & \cellcolor{Green!92!white}25.6 & \cellcolor{Green!92!white}6.8 & \cellcolor{Green!92!white}21.3 & \cellcolor{Green!52!white}10.2 & \cellcolor{Green!92!white}16.7 & \cellcolor{Green!92!white}2.1 & \cellcolor{Green!92!white}9.3 & \cellcolor{Green!76!white}26.3 & \cellcolor{Green!92!white}19.5 & \cellcolor{Green!92!white}6.7 & \cellcolor{Green!92!white}20.1 \\
        Qwen2.5-14B-Coder-Ins. & Coder-Ins. & - & \cellcolor{Green!44!white}48.2 & \cellcolor{Green!44!white}37.8 & \cellcolor{Red!52!white}47.2 & \cellcolor{Red!37!white}53.9 & \cellcolor{Green!52!white}39.5 & \cellcolor{Green!52!white}40.6 & \cellcolor{Red!52!white}41.0 & \cellcolor{Red!37!white}49.4 & \cellcolor{Green!76!white}27.5 & \cellcolor{Green!76!white}34.8 & \cellcolor{Green!62!white}25.1 & \cellcolor{Green!62!white}31.4 & \cellcolor{Green!92!white}7.9 & \cellcolor{Green!76!white}18.1 & \cellcolor{Green!76!white}5.6 & \cellcolor{Green!76!white}9.6 & \cellcolor{Green!62!white}27.2 & \cellcolor{Green!62!white}31.5 & \cellcolor{Red!52!white}26.5 & \cellcolor{Green!52!white}32.8 \\
        DeepSeek-Coder-V2-16B & - & - & \cellcolor{Green!62!white}45.8 & \cellcolor{Green!52!white}31.0 & \cellcolor{Green!76!white}37.4 & \cellcolor{Green!76!white}38.3 & \cellcolor{Green!62!white}39.3 & \cellcolor{Green!44!white}42.1 & \cellcolor{Green!76!white}25.7 & \cellcolor{Green!76!white}36.7 & \cellcolor{Red!44!white}33.6 & \cellcolor{Green!44!white}51.6 & \cellcolor{Green!76!white}22.1 & \cellcolor{Green!37!white}35.1 & \cellcolor{Red!52!white}13.8 & \cellcolor{Green!44!white}37.2 & \cellcolor{Green!62!white}6.5 & \cellcolor{Green!31!white}16.8 & \cellcolor{Green!37!white}30.1 & \cellcolor{Green!44!white}41.0 & \cellcolor{Green!76!white}19.5 & \cellcolor{Green!76!white}29.6 \\

        \hline \rowcolor{gray!5} \multicolumn{23}{c}{RL Methods} \\ \hline

        AZR-14B-Coder & Coder & \cellcolor{mycolor_1}\textbf{0} & \cellcolor{Green!92!white}39.8 & \cellcolor{Green!76!white}22.2 & \cellcolor{Green!62!white}38.9 & \cellcolor{Green!62!white}43.5 & \cellcolor{Green!92!white}34.1 & \cellcolor{Green!76!white}29.7 & \cellcolor{Green!52!white}35.3 & \cellcolor{Green!62!white}39.8 & \cellcolor{Green!92!white}25.8 & \cellcolor{Green!62!white}35.3 & \cellcolor{Green!52!white}26.4 & \cellcolor{Green!76!white}30.4 & \cellcolor{Green!76!white}8.5 & \cellcolor{Green!62!white}18.9 & \cellcolor{Green!44!white}9.5 & \cellcolor{Green!37!white}16.2 & \cellcolor{Green!92!white}24.3 & \cellcolor{Green!76!white}26.2 & \cellcolor{Green!52!white}25.1 & \cellcolor{Green!62!white}30.3 \\
        CURE-14B & Ins. & 4.5k & \cellcolor{Red!62!white}53.6 & \cellcolor{Red!95!white}77.3 & \cellcolor{Red!62!white}47.7 & \cellcolor{Red!62!white}59.9 & \cellcolor{Red!44!white}45.1 & \cellcolor{Red!95!white}86.1 & \cellcolor{Red!62!white}41.2 & \cellcolor{Red!44!white}50.5 & \cellcolor{Red!95!white}34.9 & \cellcolor{Red!95!white}87.0 & \cellcolor{Red!76!white}32.5 & \cellcolor{Red!62!white}44.1 & \cellcolor{Red!76!white}14.9 & \cellcolor{Red!62!white}77.5 & \cellcolor{Red!76!white}12.0 & \cellcolor{Red!62!white}26.1 & \cellcolor{Red!52!white}33.6 & \cellcolor{Red!95!white}82.4 & \cellcolor{Red!62!white}30.1 & \cellcolor{Red!52!white}41.8 \\
        \hline \rowcolor{gray!5} \multicolumn{23}{c}{Our Method} \\ \hline
        \textbf{Qwen2.5-14B-Ins. + CoSPlay} & Ins. & \cellcolor{mycolor_1}\textbf{0} & \cellcolor{Red!76!white}54.9 & \cellcolor{Red!62!white}70.1 & \cellcolor{Red!76!white}47.9 & \cellcolor{Red!44!white}55.7 & \cellcolor{Red!76!white}46.0 & \cellcolor{Red!62!white}74.4 & \cellcolor{Red!76!white}43.5 & \cellcolor{Red!52!white}53.6 & \cellcolor{Red!62!white}34.4 & \cellcolor{Red!62!white}71.7 & \cellcolor{Red!62!white}31.5 & \cellcolor{Red!44!white}39.1 & \cellcolor{Red!62!white}14.3 & \cellcolor{Red!76!white}86.1 & \cellcolor{Red!62!white}11.8 & \cellcolor{Red!52!white}24.7 & \cellcolor{Red!62!white}33.8 & \cellcolor{Red!62!white}77.6 & \cellcolor{Red!76!white}30.8 & \cellcolor{Red!44!white}41.2 \\
        \quad w/ Cluster & Ins. & \cellcolor{mycolor_1}\textbf{0} & \cellcolor{Red!95!white}56.5 & \cellcolor{Red!62!white}70.1 & \cellcolor{Red!76!white}47.9 & \cellcolor{Red!52!white}57.6 & \cellcolor{Red!95!white}46.2 & \cellcolor{Red!62!white}74.4 & \cellcolor{Red!76!white}43.5 & \cellcolor{Red!62!white}54.8 & \cellcolor{Red!52!white}34.0 & \cellcolor{Red!62!white}71.7 & \cellcolor{Red!62!white}31.5 & \cellcolor{Red!52!white}40.0 & \cellcolor{Red!62!white}14.3 & \cellcolor{Red!76!white}86.1 & \cellcolor{Red!62!white}11.8 & \cellcolor{Red!44!white}24.6 & \cellcolor{Red!76!white}34.0 & \cellcolor{Red!62!white}77.6 & \cellcolor{Red!76!white}30.8 & \cellcolor{Red!62!white}42.0 \\
        
        \rowcolor{green!4}
        \textbf{\deltaqwenlabel} & \deltadash & \deltadash
                & \deltaval{+8.8} & {\deltaval{+43.7}} & {\deltaval{+8.7}} & {\deltaval{+6.6}}
                & \deltaval{+4.8} & \deltaval{+36.4} & \deltaval{+9.0} & \deltaval{+8.6}
                & \deltaval{+3.6} & \deltaval{+28.5} & \deltaval{+6.4} & \deltaval{+5.4}
                & \deltaval{+4.4} & \deltaval{+63.5} & \deltaval{+4.5} & \deltaval{+11.3}
                & \deltaval{+4.9} & \deltaval{+45.2} & \deltaval{+6.9} & \deltaval{+8.8} \\
        \addlinespace[1pt] \textbf{CURE-14B + CoSPlay} & Ins. & 4.5k & \cellcolor{Red!62!white}53.6 & \cellcolor{Red!76!white}71.5 & \cellcolor{Red!95!white}52.4 & \cellcolor{Red!95!white}61.2 & \cellcolor{Red!52!white}45.4 & \cellcolor{Red!76!white}76.2 & \cellcolor{Red!95!white}46.9 & \cellcolor{Red!76!white}54.9 & \cellcolor{Red!76!white}34.6 & \cellcolor{Red!76!white}75.1 & \cellcolor{Red!95!white}36.8 & \cellcolor{Red!76!white}44.9 & \cellcolor{Red!95!white}15.3 & \cellcolor{Red!95!white}89.4 & \cellcolor{Red!95!white}19.1 & \cellcolor{Red!76!white}32.5 & \cellcolor{Red!62!white}33.8 & \cellcolor{Red!76!white}80.1 & \cellcolor{Red!95!white}36.0 & \cellcolor{Red!76!white}45.9 \\
        \quad w/ Cluster & Ins. & 4.5k & \cellcolor{Red!76!white}54.9 & \cellcolor{Red!76!white}71.5 & \cellcolor{Red!95!white}52.4 & \cellcolor{Red!76!white}60.7 & \cellcolor{Red!62!white}45.5 & \cellcolor{Red!76!white}76.2 & \cellcolor{Red!95!white}46.9 & \cellcolor{Red!95!white}55.3 & \cellcolor{Red!95!white}34.9 & \cellcolor{Red!76!white}75.1 & \cellcolor{Red!95!white}36.8 & \cellcolor{Red!95!white}45.3 & \cellcolor{Red!95!white}15.3 & \cellcolor{Red!95!white}89.4 & \cellcolor{Red!95!white}19.1 & \cellcolor{Red!95!white}32.6 & \cellcolor{Red!95!white}34.1 & \cellcolor{Red!76!white}80.1 & \cellcolor{Red!95!white}36.0 & \cellcolor{Red!95!white}46.2 \\

        \rowcolor{green!4}
        \textbf{\deltacurelabel} & \deltadash & \deltadash
                & \deltaval{+1.3} & {\deltaneg{-5.8}} & {\deltaval{+4.7}} & {\deltaval{+0.8}}
                & \deltaval{+0.4} & \deltaneg{-9.9} & \deltaval{+5.7} & \deltaval{+4.8}
                & \deltaval{+0.0} & \deltaneg{-11.9} & \deltaval{+4.3} & \deltaval{+1.2}
                & \deltaval{+0.4} & \deltaval{+11.9} & \deltaval{+7.1} & \deltaval{+6.5}
                & \deltaval{+0.5} & \deltaneg{-2.3} & \deltaval{+5.9} & \deltaval{+4.4} \\

        \bottomrule
    \end{tabular}
    }
    \vspace{-0.6cm}
\end{table*}
\endgroup

\vspace{-0.5em}
\section{Experiments}
\vspace{-0.5em}
\subsection{Settings}
\vspace{-0.5em}
\textbf{Datasets.}
For evaluation, we adopt the coding benchmarks suite curated by CURE~\cite{wang2025cure}, which includes four widely used and challenging datasets: LiveBench~\cite{white2024livebench}, LiveCodeBench (v2)~\cite{jain2024livecodebench}, CodeContests~\cite{li2022codecontests}, and CodeForces~\cite{penedo2025codeforces}. These datasets contain 128, 511, 239, and 467 problems, respectively. Detailed information can be viewed in Appendix~\ref{app:dataset}. To reduce computational cost, all experiments other than the main full-benchmark results in Table~\ref{tab:overall} are conducted on a 200-problem benchmark. 
For each random seed, we sample 50 problems from each dataset, resulting in 200 problems in total. We use three different random seeds to construct three independent 200-benchmarks and report the averaged results across them. Unless otherwise specified, all results beyond Table~\ref{tab:overall} are based on this 200-benchmark setting.

\textbf{Baselines.}
We use Qwen2.5-7B-Instruct and Qwen2.5-14B-Instruct~\cite{bai2023qwentechnicalreport} as the standard base models for CoSPlay and other TTS baselines: S*~\cite{li2025s*}, SFS~\cite{sfs}, MPSC~\cite{huang2024mpsc}, CodeTree~\cite{codertee}, ThinkCoder~\cite{thinkbeforecoding} and PowerSampling~\cite{karan2025reasoningsamplingbasemodel}. We further benchmark against strong instruct and RLVR-trained models at 14B and 7B scales. At the 14B scale, baselines include Qwen2.5-14B-Coder, Qwen2.5-14B-Coder-Instruct~\cite{hui2024qwen25codertechnicalreport}, Absolute-Zero-14B~\cite{zhao2025absolutezero}, and CURE-14B~\cite{wang2025cure}. At the 7B scale, we include Qwen2.5-7B-Coder, Qwen2.5-7B-Coder-Instruct~\cite{hui2024qwen25codertechnicalreport}, Seed-Coder-8B-Instruct~\cite{seed2025seedcoder}, AceCoder-7B-RM, AceCoder-7B-Rule~\cite{zeng2025acecoder}, CURE-7B~\cite{wang2025cure}, and Absolute-Zero-Coder-7B~\cite{zhao2025absolutezero}. More detailed baseline settings are provided in Appendix~\ref{app:detailed_baseline_setting}.  

\textbf{Implementation Details}
All experiments are conducted on two NVIDIA H100 (80GB) GPUs using vLLM~\cite{vllm}, with a temperature of 0.8, top-p of 0.95 and top-k of 40. In all experiments, we initialize half of the UT inputs using random valid inputs and the other half using inputs from $\mathcal{A}$. For each generated UT input, we sample 4 expected outputs and keep the UT only if at least 3 outputs agree. Besides, for CoSPlay, we set the code pool size $N_c$ to 16, the UT pool size $N_t$ to 16, and the maximum iteration budget $T_{\max}$ to 5.  

\vspace{-1.1em}
\subsection{Results}
\vspace{-0.5em}

\textbf{CoSPlay substantially improves both UT and code quality.}
Table~\ref{tab:overall} shows that CoSPlay consistently improves base models across most metrics without GT data or additional training. Starting from Qwen2.5-7B-Instruct, CoSPlay w/ Cluster improves average Signal, UT accuracy, code accuracy, and BoN from 22.1\%, 14.6\%, 19.0\%, and 22.1\% to 31.9\%, 78.3\%, 23.3\%, and 33.2\%, respectively. It slightly surpasses CURE-7B in average BoN (33.2\% vs.\ 32.9\%) while achieving much higher UT accuracy (78.3\% vs.\ 54.9\%). Similar trends hold at the 14B scale. When applied to CURE, CoSPlay further improves BoN from 32.9\% to 38.6\% on CURE-7B and from 41.8\% to 46.2\% on CURE-14B, demonstrating complementarity with RLVR. The consistent gains from the w/ Cluster rows further show that execution consensus provides more useful selection beyond BoN. Compared with CodeT~\cite{chen2022codet}, which also relies on execution-output agreement for selection, our Cluster-only variant achieves comparable performance, while both are  outperformed by the full CoSPlay framework.

\textbf{CoSPlay surpasses scaled TTS baselines.}
Figure~\ref{fig:combined_tts_generalization_density}(a) compares CoSPlay with TTS methods under their largest scaled configurations.
CoSPlay forms the outer envelope of the radar plot on most axes, achieving higher Pass@1 across both 7B and 14B models on most benchmarks.
Moreover, CoSPlay w/ Cluster further expands this envelope, showing that execution consensus can better select BoN-tied codes.One exception is Qwen2.5-14B-Instruct on LiveBench, where scaling BoN to $N=256$ surpasses CoSPlay-14B, suggesting that simple sampling can still work well on easier benchmarks; however, this scaling alone is insufficient on harder benchmarks.
These results indicate that allocating test-time compute to execution-guided verification, code refinement, and selection for Code-UT co-evolution is useful.
Detailed results are reported in Table~\ref{tab:tts_performance}.

\textbf{Ablation Study.} Figure~\ref{fig:combined_tts_generalization_density}(b) shows that the full CoSPlay framework achieves the best BoN accuracy by the final self-play round. The \textit{w/o exp-atk} variant consistently performs worst, confirming the importance of idea-level exploration for constructing a strong initial Code-UT pool. Removing individual self-play components, including Step~2, Step~3, the non-trivial-best heuristic, or self-consistency, also degrades performance, indicating that each module contributes to stable long-term improvement. Figure~\ref{fig:combined_tts_generalization_density}(c) further shows that the Signal of the full model increases consistently across rounds, suggesting that self-play progressively strengthens the discriminative power of the UT pool. In contrast, ablated variants exhibit weaker or unstable trajectories, especially \textit{w/o exp-atk}, whose Signal declines over time. Detailed results are provided in Appendix~\ref{app:Detailed ablation results}.

    

\begin{figure*}[t]
    \centering
    
    \includegraphics[
        width=\textwidth,
        trim=0 0 0 0,
        clip
    ]{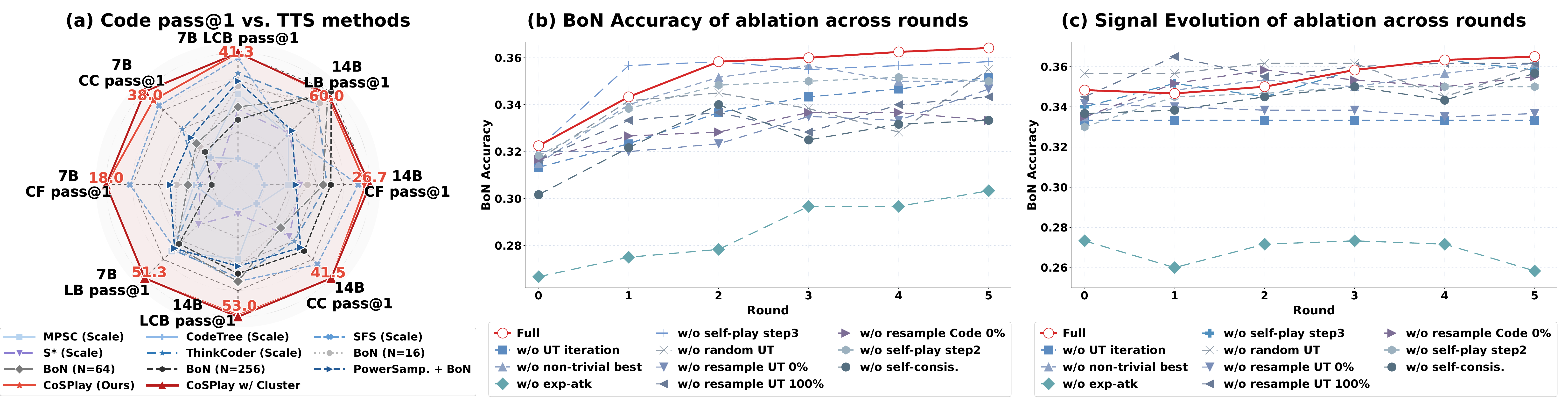}
    
    
    
    \caption{
    (a) The code Pass@1 vs other TTS methods. (b) \textbf{BoN} Accuracy of ablation across rounds. (c) \textbf{Signal} Evolution of ablation across rounds
    }
    \vspace{-1.9em}
    \label{fig:combined_tts_generalization_density}
\end{figure*}

\vspace{-0.8em}
\section{Discussion}
\vspace{-0.8em}
\subsection{Generalizability of CoSPlay}
\label{generalization}
\vspace{-0.3cm}
\textbf{Generalization across model families.}
We evaluate CoSPlay on instruct, RLVR-tuned, and frontier-scale models to test the generalization.
As shown in Figure~\ref{fig:generalization_exp_atk_cluster_score_row}(a), CoSPlay w/ Cluster improves the average BoN over the simple Best-of-16 baseline from 37.8\% to 45.3\%, yielding a +7.5\% absolute gain. Table~\ref{tab:generalization_combined} in Appendix~\ref{app:detailed generalization} further shows that these BoN gains come from jointly improving both code and UT accuracy across backbones.
The gains are large across various instruct and RLVR-tuned models: Seed-Coder-8B-Instruct improves from 32.3\% to 42.3\%, DeepSeek-Coder-V2-Lite-Instruct-16B from 30.8\% to 41.3\%, AceCoder-Rule-7B from 29.0\% to 39.5\%, and AceCoder-RM-7B from 29.3\% to 40.5\%. These results suggest that CoSPlay is not specialized to one model family, scale, or post-training recipe, but can serve as a general inference-time complement to different backbones.
We further evaluate CoSPlay on frontier-scale models.
CoSPlay w/ Cluster raises the average BoN of DeepSeek-V3.2-685B from 65.7\% to 68.2\% and Gemini-2.0-Flash from 53.8\% to 55.3\%.
The gains are more pronounced on CodeForces, the hardest benchmark in our suite, where CoSPlay w/ Cluster improves DeepSeek-V3.2-685B from 39.3\% to 50.0\% and Gemini-2.0-Flash from 36.7\% to 38.7\%.
This suggests that CoSPlay still mines test-time gains from frontier-scale models, especially facing challenging problems.

\textbf{When does self-play help?}
The gains are not uniform across different models and benchmarks, as shown in Table~\ref{tab:generalization_combined}.
This is tied to how CoSPlay obtains verification signal: both code quality and UT reliability are inferred from the backbone's own generated Code-UT executions rather than GT.
This signal become useful for self-play only when their pass-count statistics can distinguish higher-quality objects from lower-quality ones.
This happens when support is aligned with correctness: correct codes tend to pass more generated UTs than incorrect codes, and reliable UTs tend to be passed by more candidate codes than unreliable UTs.
When this alignment holds, row and column pass counts provide GT-free internal signals for ranking, refreshing, and repairing the Code-UT pool; the empirical correlation is shown in Figure~\ref{fig:generalization_density_related_curves}, and the posterior direction is formalized in Appendix~\ref{app:theory:pass-counts}.
This gives a three-regime view of self-play. 
At the lower boundary, if the backbone is too weak for a benchmark, the initial UT correctness prior can fall below the code-side threshold $\rho_T^\star$ analyzed in Appendix~\ref{app:theory:initial-matrix}. In this regime, passing more generated UTs no longer reliably indicates a more correct code candidate, so noisy or wrong-oracle tests and spurious Code-UT coupling can dominate the execution matrix and prevent stable improvement. 
At the upper boundary, if the backbone is already very strong, the Code-UT pool can approach saturation: low-support items selected for refresh may have similar expected quality to newly sampled or repaired items. Refreshing then provides little expected improvement and mainly introduces sampling variance, which can lead to fluctuations rather than monotonic gains. 
Between them, the backbone is strong enough for pass counts to remain correctness-aligned, while the current pool still contains low-support samples whose expected quality can be improved through regeneration or repair. Here, pass-count-guided updates can improve these weak samples and move the pool toward a more reliable state for final selection.

\begin{figure*}[!t]
    \centering

    \begin{minipage}[t]{0.45\textwidth}
        \vspace{0pt}
        \centering
        \includegraphics[width=\linewidth]{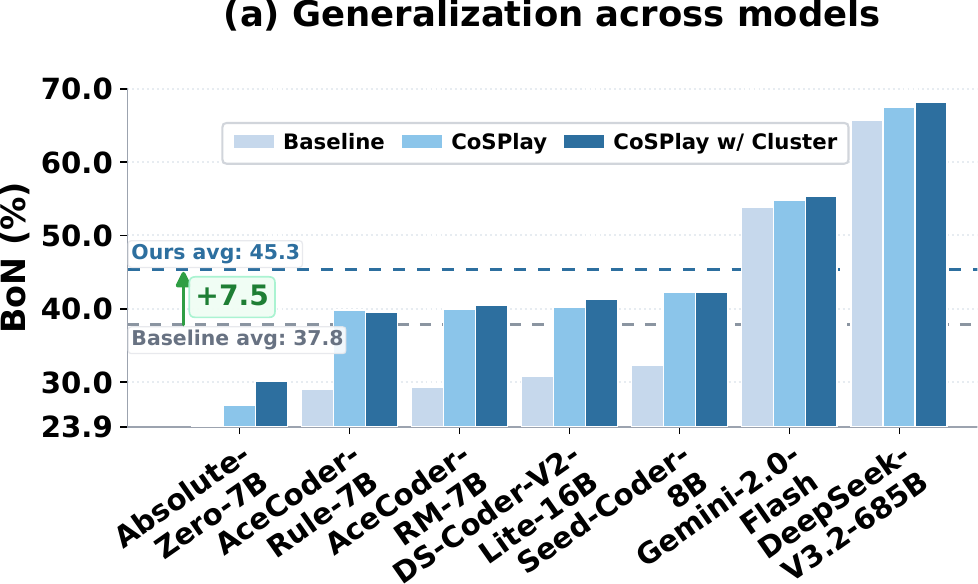}
    \end{minipage}%
    \hfill
    \begin{minipage}[t]{0.27\textwidth}
        \vspace{0pt}
        \centering
        \includegraphics[width=\linewidth]{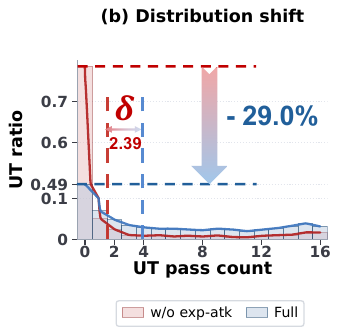}
    \end{minipage}%
    \hfill
    \begin{minipage}[t]{0.27\textwidth}
        \vspace{0pt}
        \centering
        \includegraphics[width=\linewidth]{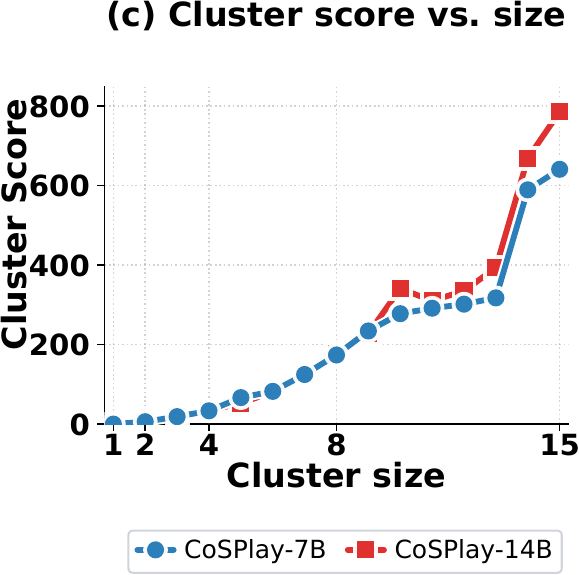}
    \end{minipage}%
    \vspace{-0.3em}
    \caption{
    (a) shows the generalization of CoSPlay across diverse base and RL models.
    (b) compares UT pass-count distributions between direct sampling and exp-atk based generation, where $\delta$ denotes the gap between their average UT pass counts.
    (c) shows the positive relationship between cluster size and average cluster score.
    }
    \vspace{-1.2em}
    \label{fig:generalization_exp_atk_cluster_score_row}
\end{figure*}
\begin{figure*}[!t]
    \centering

    \begin{minipage}[t]{0.29\textwidth}
        \centering
        \includegraphics[width=\linewidth]{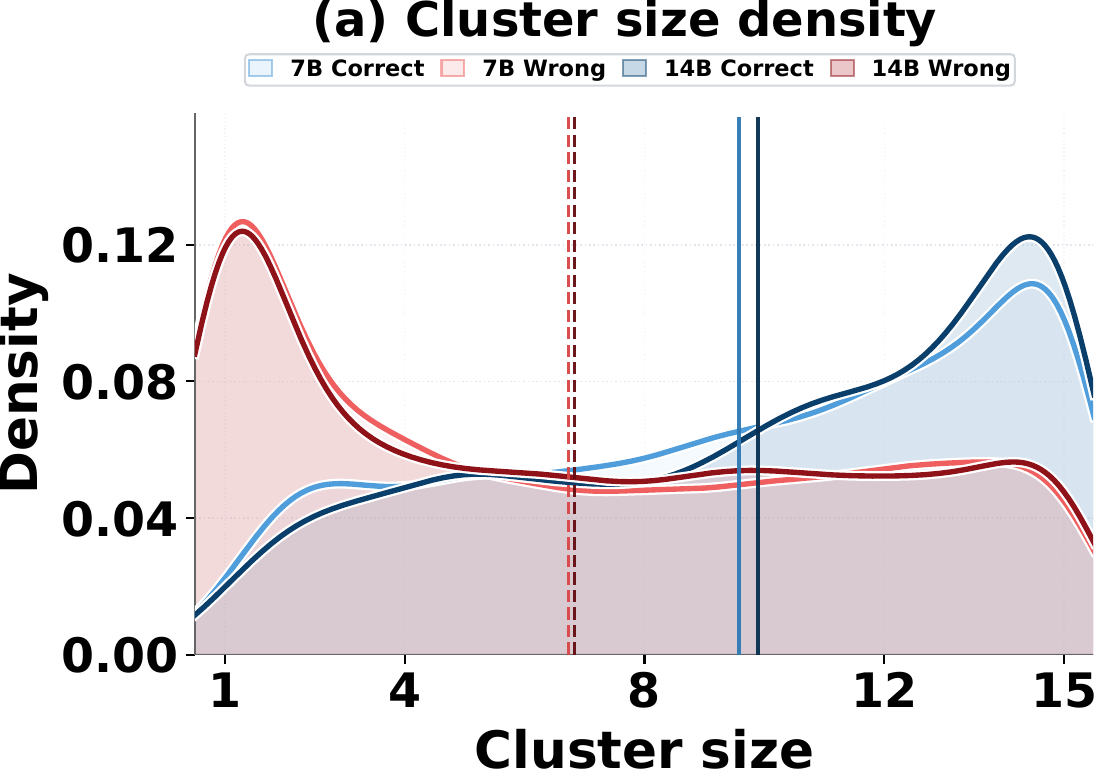}
    \end{minipage}
    \hfill
    \begin{minipage}[t]{0.29\textwidth}
        \centering
        \includegraphics[width=\linewidth]{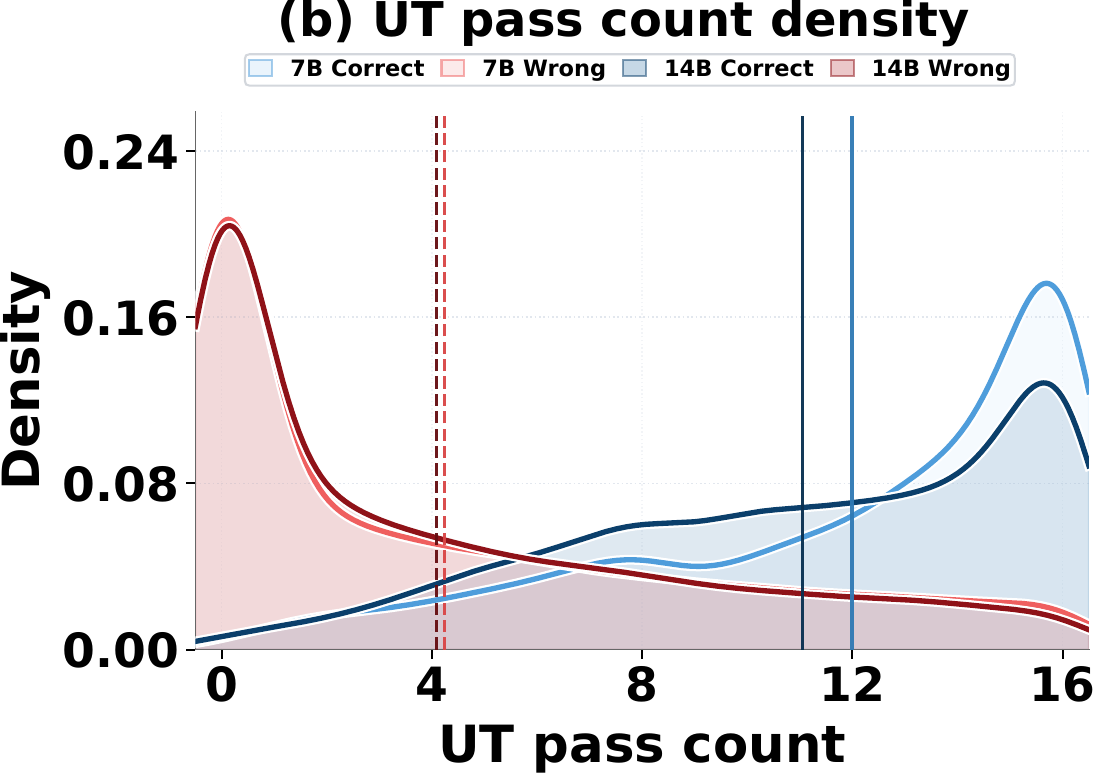}
    \end{minipage}
    \hfill
    \begin{minipage}[t]{0.29\textwidth}
        \centering
        \includegraphics[width=\linewidth]{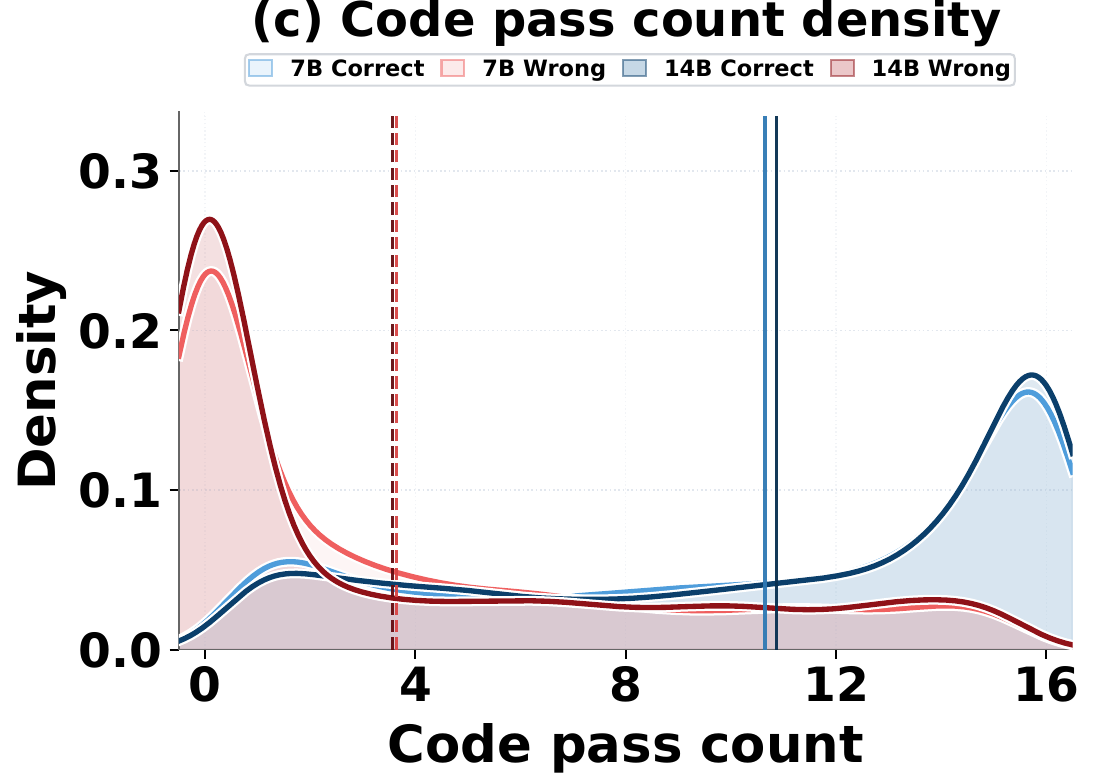}
    \end{minipage}


    \begin{minipage}[t]{0.29\textwidth}
        \centering
        \includegraphics[width=\linewidth]{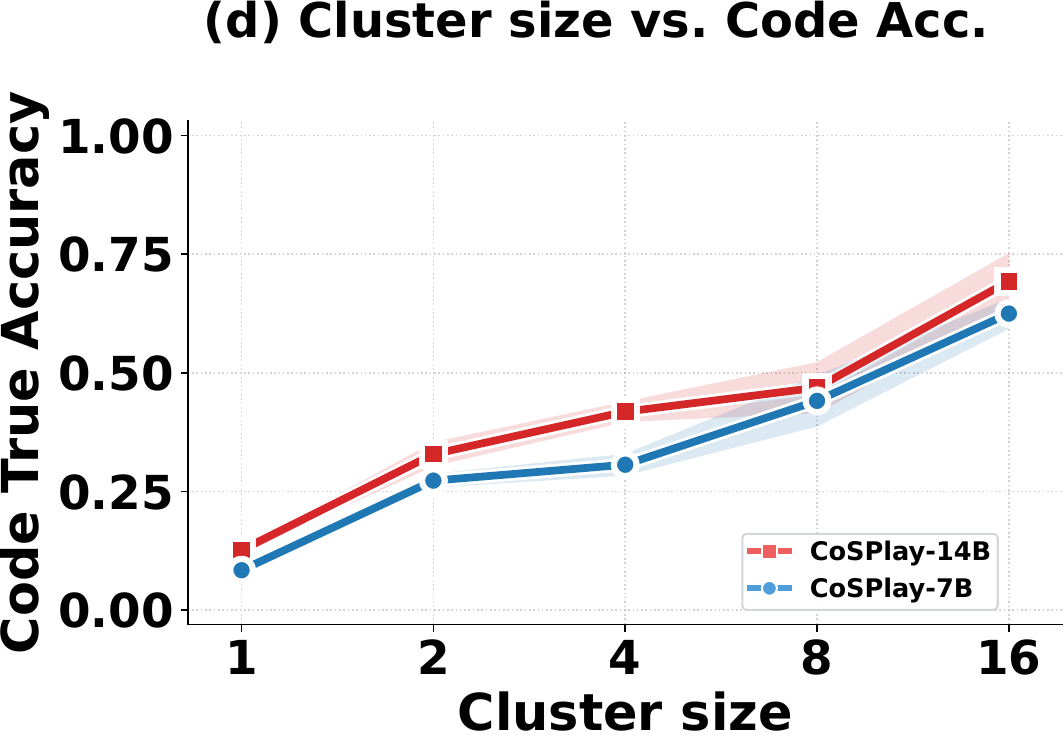}
    \end{minipage}
    \hfill
    \begin{minipage}[t]{0.29\textwidth}
        \centering
        \includegraphics[width=\linewidth]{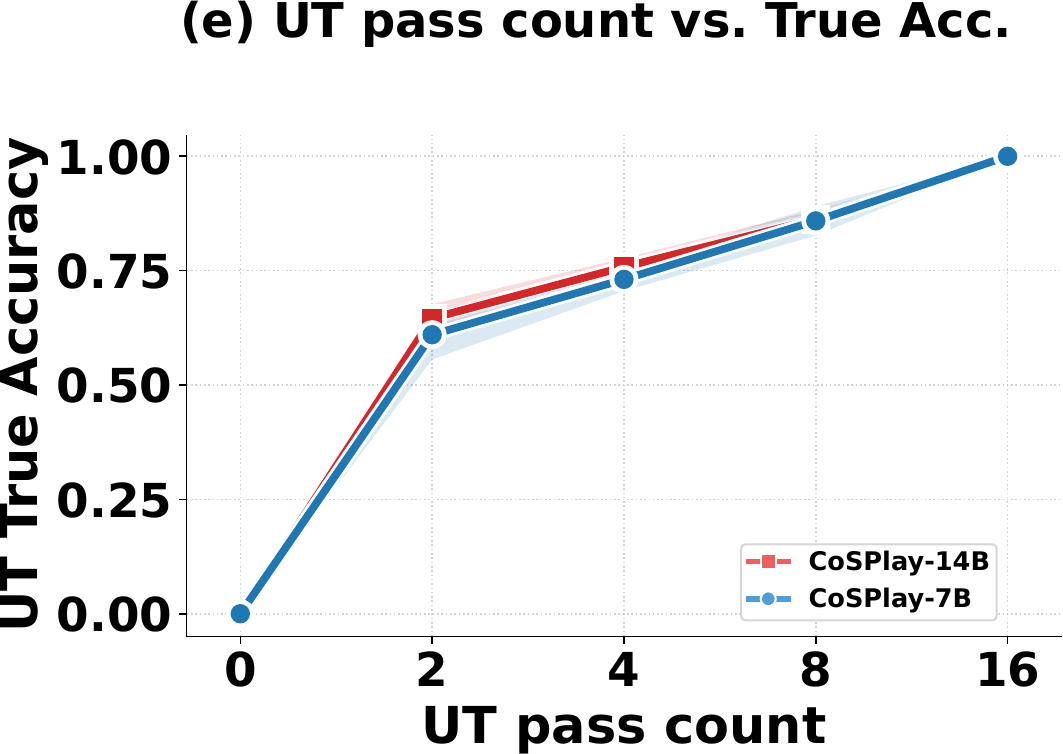}
    \end{minipage}
    \hfill
    \begin{minipage}[t]{0.29\textwidth}
        \centering
        \includegraphics[width=\linewidth]{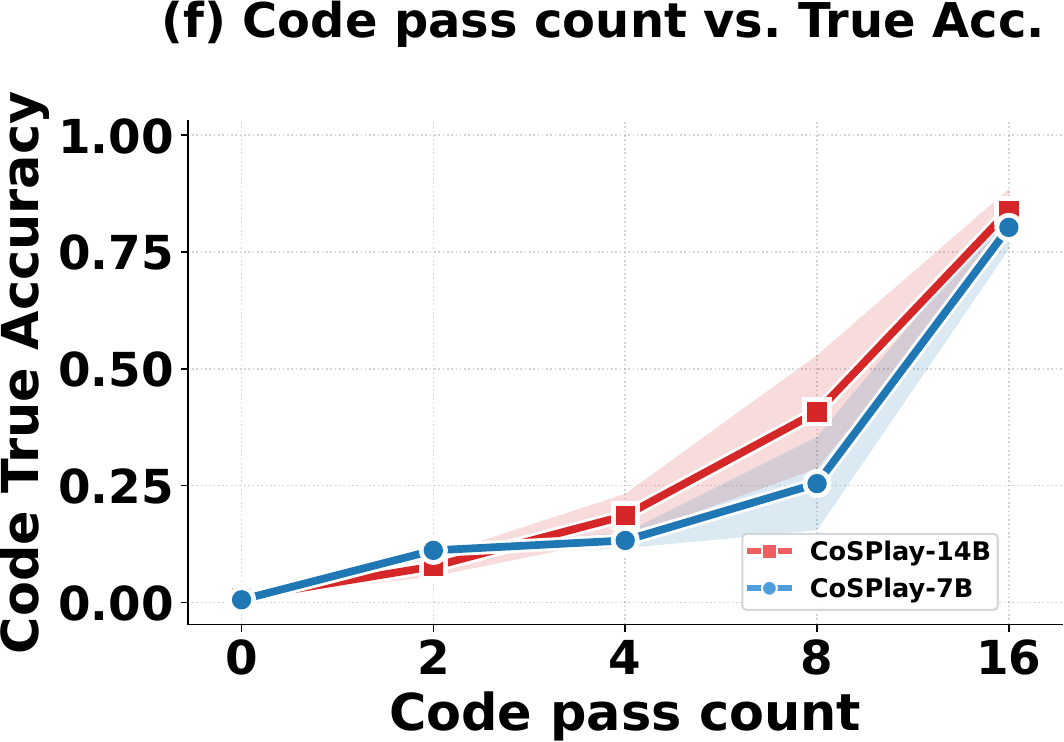}
    \end{minipage}

    \vspace{-0.5em}
    \caption{
    \textbf{Execution-consensus and pass-count analysis.}
    Panels (a-c) show the density distributions of cluster size, UT pass count, and code pass count for correct and wrong candidates, where vertical lines indicate the corresponding mean values.
    Panels (d-f) show that GT correctness increases with larger cluster sizes and higher pass counts.
    These results support the use of execution-consensus clusters and execution-matrix pass counts to estimate code quality and UT reliability.
    }
    \vspace{-1.7em}
    \label{fig:generalization_density_related_curves}
\end{figure*}

\vspace{-0.3cm}
\subsection{Exploration-Attack Guided Idea Generation Improves Initial UT Quality}
\vspace{-0.2cm}
Our exploration-attack (exp-atk) stage improves the accuracy and discriminativeness of initial UTs.
As shown in Table~\ref{tab:ablation_minibench} in Appendix~\ref{app:Detailed ablation results}, even without self-play or self-consistency, exploration alone raises $UT_{\text{Acc}}$ from 12.5\% to 37.2\% and Signal from 26.5\% to 33.7\%.
Figure~\ref{fig:generalization_exp_atk_cluster_score_row}(b) further explains this improvement from the pass-count distribution.
Without exp-atk, most UTs concentrate near zero pass count, suggesting that many initial UTs are invalid, overly difficult, or tied to spurious code behavior.
With exp-atk, the distribution shifts toward higher pass counts, increasing the average UT pass count by $\delta=2.39$ and reducing extreme low-support UTs by 29.0\%.
This indicates that exp-atk produces more reliable UTs that are supported by multiple candidate codes, while still preserving enough variation for selection.
Together, these results show that exp-atk improves both UT correctness and discriminative power.

\vspace{-0.2cm}
\subsection{Pass Count Reflects True Accuracy}
\vspace{-0.2cm}
To validate that pass count reflects actual quality, we study the relationship between pass count and GT correctness for both code and UTs. As shown in Figure~\ref{fig:generalization_density_related_curves}(e-f), both UTs and codes exhibit a strong positive correlation: UTs passed by more codes are more likely to be correct, while codes passing more UTs are also more likely to be functionally correct. Consistently, Figure~\ref{fig:generalization_density_related_curves}(b-c) shows that correct candidates concentrate more heavily in high pass-count regions than wrong candidates. These results indicate that pass counts from the execution matrix can serve as an effective GT-free proxy for candidate quality. 

\vspace{-0.2cm}
\subsection{Cluster-Based Selection Reflects Functional Consistency}
\vspace{-0.2cm}
Our cluster-based selection is built on the assumption that larger execution-consensus clusters are more likely to contain correct code. 
\textbf{We first verify that cluster size is positively correlated with true code correctness.}
Figure~\ref{fig:generalization_density_related_curves}(a) shows that correct codes concentrate in large clusters, while wrong codes mostly form clusters of size 1 or 2; Figure~\ref{fig:generalization_density_related_curves}(d) further shows that average true code accuracy increases steadily with cluster size for both 7B and 14B models. 
This supports the intuition that correct codes tend to share consistent functional behavior, whereas wrong codes fail in diverse ways. 
\textbf{We further verify that cluster size is positively correlated with our aggregate cluster score.}
As shown in Figure~\ref{fig:generalization_exp_atk_cluster_score_row}(c), larger clusters generally obtain higher scores for both CoSPlay-7B and CoSPlay-14B, indicating that our scoring rule preserves the largest-cluster intuition while adapting it to runtime-error settings. 
Since raw cluster size can be unreliable when candidates produce invalid executions, the cluster score aggregates valid pairwise agreements and provides a more robust estimate of cluster reliability.

\begin{figure*}[!t]
    \centering
    \includegraphics[width=\textwidth]{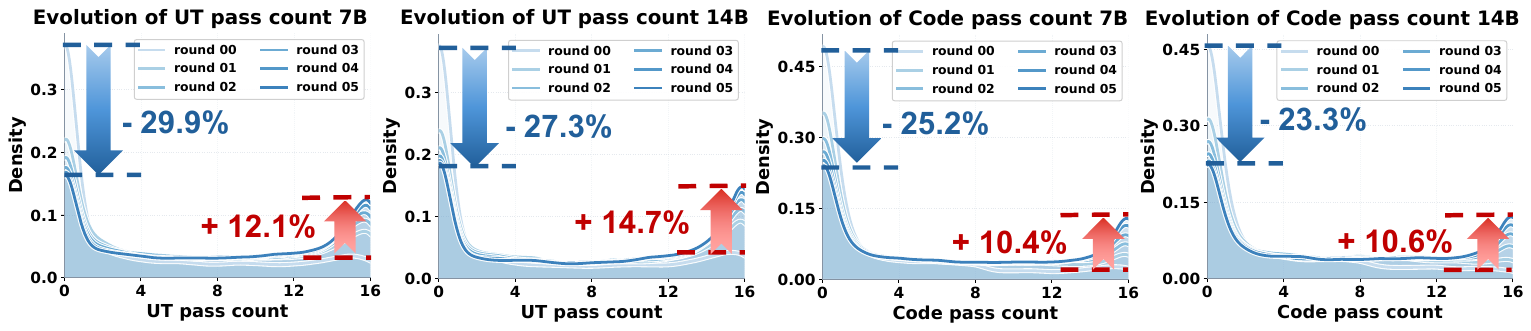}
    \caption{
    \textbf{Evolution of pass-count distributions during self-play.}
    Both UT and code pass-count distributions progressively shift toward higher-support regions across self-play rounds, suggesting that execution-matrix-driven self-play gradually concentrates support on more reliable UTs and stronger code candidates.
    }
    \vspace{-1.2em}
    \label{fig:evolution_of_pass_count}
\end{figure*}

\begin{figure*}[!t]
    \centering
    \includegraphics[width=\textwidth]{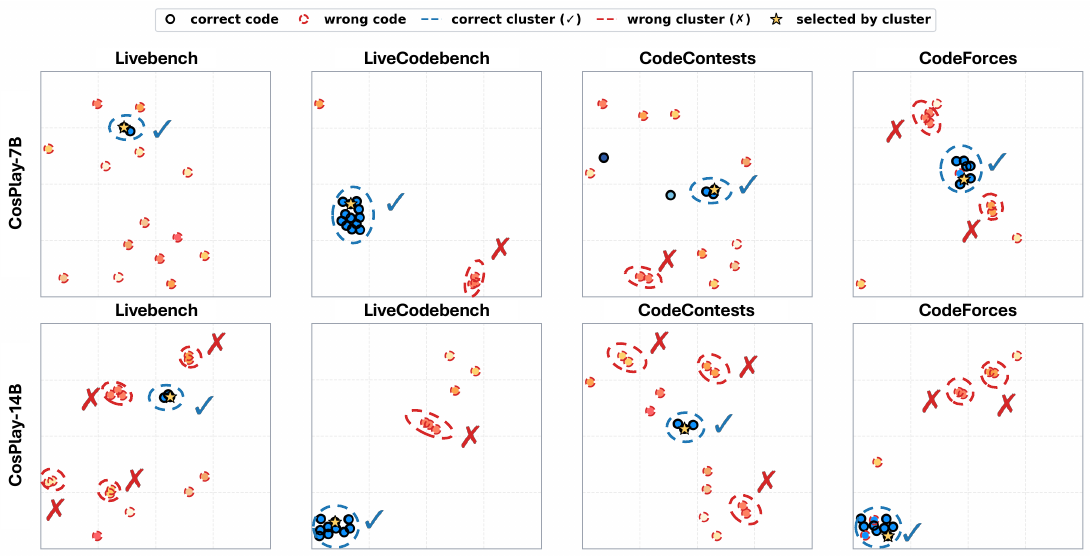}
    \vspace{-0.5em}
    \caption{
    \textbf{t-SNE visualization of clusters.}
    Across four datasets, correct codes tend to form compact high-density clusters, whereas incorrect codes are more scattered, supporting execution-consensus clustering as effective GT-free selection signal.
    }
    \vspace{-1.6em}
    \label{fig:cluster_analysis_tsne}
\end{figure*}

\vspace{-0.4cm}
\subsection{Self-play progressively shifts pass-count distributions to the right}
\vspace{-0.2cm}
Figure~\ref{fig:evolution_of_pass_count} shows the average evolution of UT and code pass-count distributions from round 0 to round 5. For both 7B and 14B models, self-play consistently reduces the density in low pass-count regions and increases the density near the maximum pass count. This trend appears for both UTs and codes: fewer UTs remain rarely passed by the code pool, and more code candidates pass a larger number of UTs after iterative refinement. Quantitatively, the low-pass-density regions decrease by 29.9\%/27.3\% for UTs and 25.2\%/23.3\% for codes at the 7B/14B scales, while the high-pass-density regions increase accordingly. 
These results suggest that execution-matrix-driven self-play progressively removes low-support UTs and codes, improving UT reliability and code quality.

\vspace{-0.4cm}
\subsection{t-SNE Visualization of Clusters}
\vspace{-0.2cm}
Figure~\ref{fig:cluster_analysis_tsne} provides t-SNE visualizations of code candidates from CoSPlay-7B and CoSPlay-14B using their execution-output similarity. By projecting output-similarity patterns into a low-dimensional space, we observe that correct codes (blue dots) tend to concentrate in large execution-consensus clusters, often including the largest cluster indicated by dashed blue circles. This observation is consistent with the intuition behind our strategy: functionally equivalent correct codes yield identical outputs for the same input, thereby exhibiting high consistency. In contrast, many incorrect codes (red dots) appear more scattered or form smaller isolated clusters, suggesting that their failure modes are often less consistent across random inputs. Together with the quantitative cluster-size analysis, this visualization provides qualitative evidence that cluster size can serve as a useful empirical proxy for code correctness. Larger clusters generally indicate stronger output consistency and are empirically associated with a higher likelihood of correctness. This provides additional support for using output-consensus clustering to select more functionally consistent candidates from the code pool.

\vspace{-2em}
\subsection{Scalability of CoSPlay with Increasing Candidate-Pool Size}
\vspace{-0.5em}
As shown in Figure~\ref{fig:combined_core_and_rank_analysis}(a), both Qwen2.5-7B-Instruct and CURE-7B obtain higher BoN as $k$ increases with CoSPlay, indicating that our method can effectively exploit larger inference-time candidate pools.
Notably, Qwen2.5-7B-Instruct + CoSPlay becomes comparable to CURE-7B when $k \geq 16$, despite using no GT data or training.
At $k=64$, CoSPlay improves Qwen2.5-7B-Instruct by 13.2\%, while CURE-7B + CoSPlay continues to benefit from larger budgets, showing that CoSPlay can further enhance already RLVR-tuned models. 
Moreover, both CoSPlay settings show no clear saturation at $k=64$, suggesting
that performance may continue to improve with larger candidate pools; however,
we leave this further scaling study to future work due to computational
resource constraints.
\afterpage{%
\begin{figure*}[!t]


    \begin{subfigure}[t]{0.49\textwidth}
        \centering
        \includegraphics[width=\linewidth]{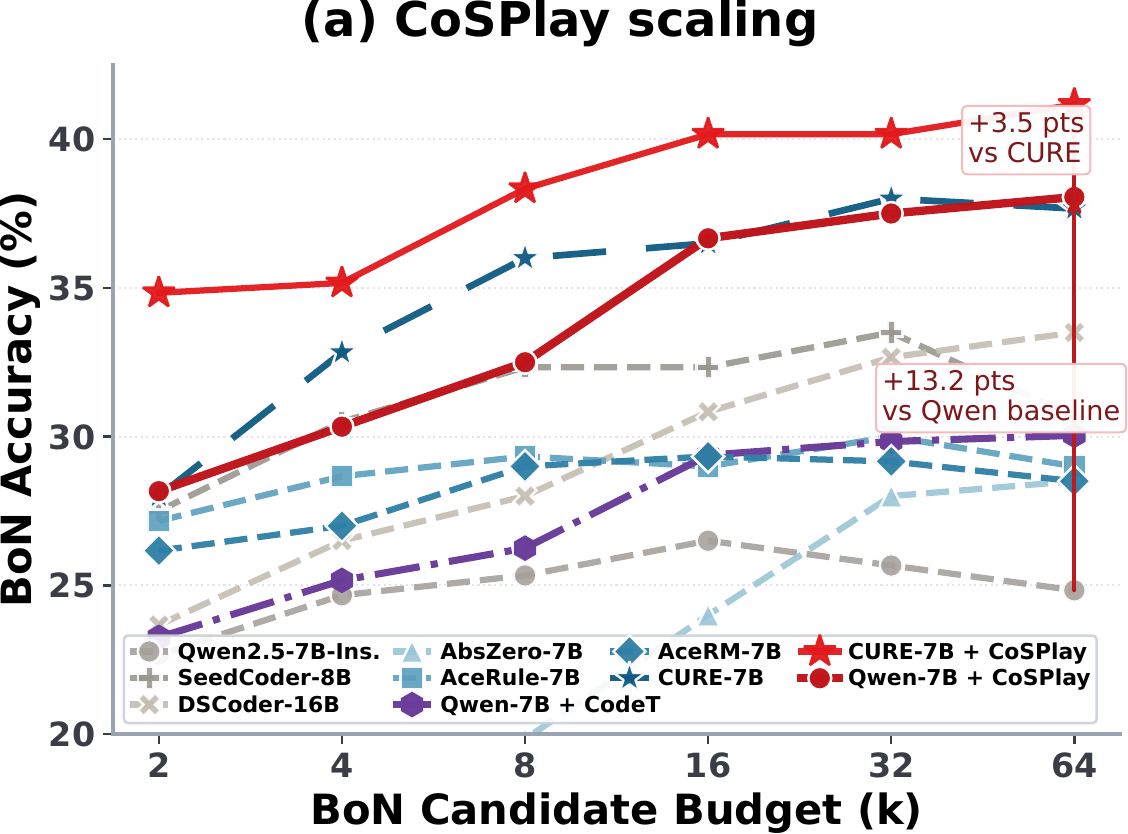}
        \label{fig:core_f_scaling}
    \end{subfigure}
    \hfill
    \begin{subfigure}[t]{0.49\textwidth}
        \centering
        \includegraphics[width=\linewidth]{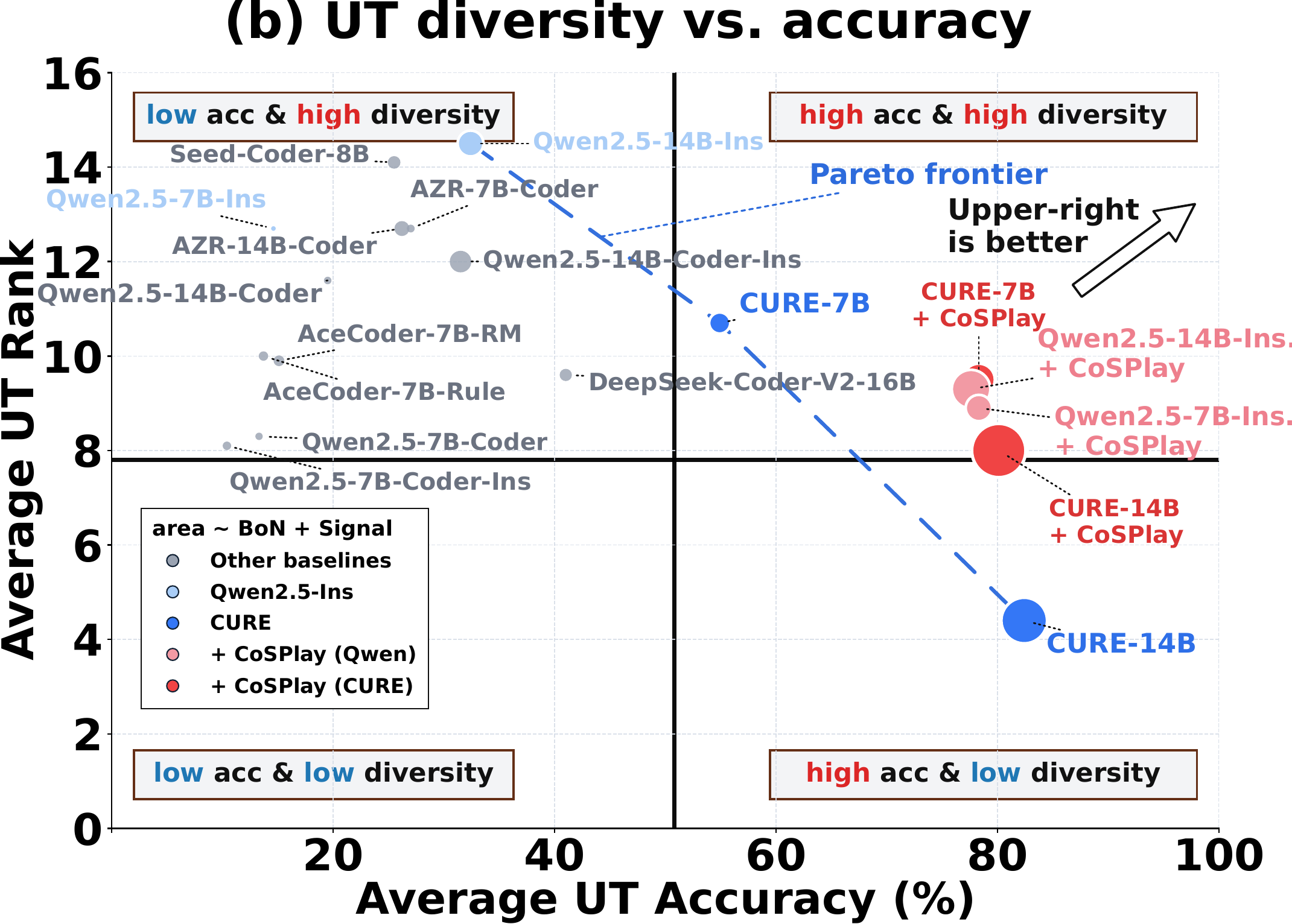}
        \label{fig:core_g_rank_tradeoff}
    \end{subfigure}

    \vspace{-0.5em}
    \caption{
    (a) shows the scalability of CoSPlay with candidate-pool size.
    (b) shows the trade-off between UT diversity and UT accuracy for CoSPlay, instruct-tuned models, and RLVR models.
    }
    \vspace{-1.2em}
    \label{fig:combined_core_and_rank_analysis}
\end{figure*}
  \begin{figure*}[!t]
    \centering
    \includegraphics[width=\textwidth]{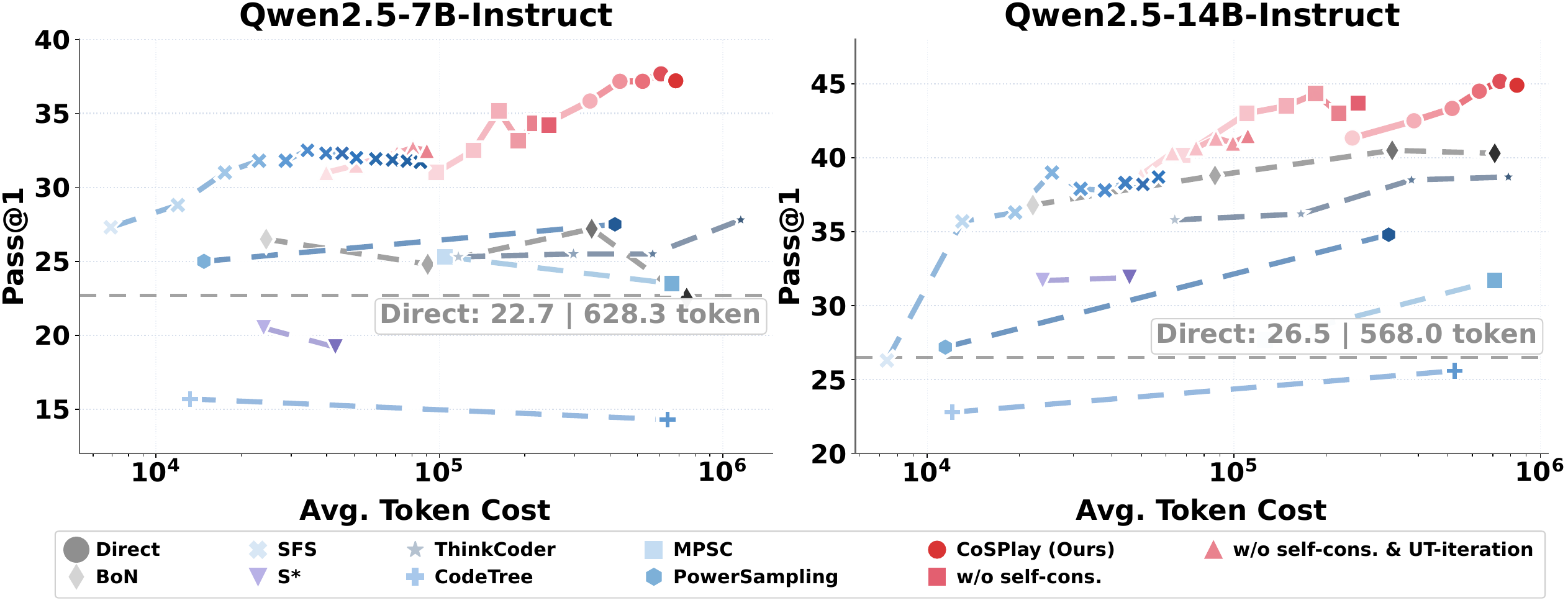}

    \caption{
Token cost versus Pass@1 of TTS methods and CoSPlay on Qwen2.5-Instruct models.
For each baseline method, darker markers indicate its scaled variant with a larger inference budget.
For CoSPlay, markers in each setting show the trajectory from self-play round 0 to round 5, illustrating how performance improves as additional self-play compute is used.
Detailed scaling configurations and data are provided in Appendix~\ref{app:tts_setting} and Appendix~\ref{app:tts_analysis}.
}
    \label{fig:tts_cost_analysis}
    \vspace{-1.5em}
\end{figure*}

    
}

\vspace{-0.5em}
\subsection{UT Accuracy-Diversity Trade-off of CoSPlay}
\vspace{-0.5em}
To quantify UT diversity, we define the \textit{UT rank} $\mathcal{T}_{\mathrm{rank}} = \left| \{x_j \mid (x_j, y_j) \in \mathcal{T}\} \right|$ as the number of unique test inputs in the pool $\mathcal{T}$.
While this metric does not fully capture semantic diversity, it provides a lightweight estimate of UT diversity. 
Figure~\ref{fig:combined_core_and_rank_analysis}(b) shows that CoSPlay-enhanced models achieve a better UT accuracy-diversity trade-off than the original baselines.
Without CoSPlay, the Pareto frontier is mainly formed by models that favor either high diversity with low accuracy or high accuracy with low diversity.
After applying CoSPlay, the corresponding points move closer to the upper-right region, indicating improved balance between UT correctness and diversity. CURE-14B is a representative case: it achieves high UT accuracy but relatively low UT rank, suggesting that RL-based training may improve correctness while reducing diversity~\cite{zhao2025echo}.
CoSPlay alleviates this issue by promoting UT diversity through idea-level exploration, while preserving UT accuracy by filtering low-quality UTs with pass-count signals from the execution matrix.
Qwen-based models and CURE-7B mainly gain in UT accuracy with slightly decreased yet competitive rank.
The larger marker sizes further indicate higher BoN and Signal scores, showing that this improved trade-off translates into stronger final selection capability.

\vspace{-0.5em}
\subsection{Efficiency Analysis of CoSPlay}
\vspace{-0.5em}
Figure~\ref{fig:tts_cost_analysis} compares CoSPlay with TTS methods under different token budgets.
Although full CoSPlay uses additional computation for Code-UT generation, execution, and self-play, existing TTS baselines still achieve lower Pass@1 when scaled to comparable or even larger token costs.
Moreover, as the inference budget increases, CoSPlay improves faster than competing TTS baselines, indicating stronger budget scalability rather than a simple constant gain.
This shows that CoSPlay attains a higher performance ceiling rather than merely benefiting from larger inference budgets.
The ablation variants further show that even after removing costly components such as self-consistency and UT iteration, CoSPlay substantially reduces token consumption while still outperforming most baselines, and in some cases even surpasses baselines scaled to much larger budgets.
These results suggest that CoSPlay provides a favorable cost-performance trade-off by allocating test-time compute to verifiable Code-UT co-evolution rather than undirected candidate sampling.

\section{Conclusion and Future Directions}
\label{sec: limitation and future work}

We introduce \textbf{CoSPlay}, a \textbf{GT-free} and \textbf{training-free} test-time scaling framework that jointly optimizes code candidates and unit tests via execution-driven co-evolution. CoSPlay first explores diverse high-level solution strategies together with their potential failure modes to derive discriminative UT ideas for better UT generation. It then leverages execution-matrix pass-count signals to iteratively refine both the code pool and the UT pool through cooperative self-play. Finally, for BoN-tied codes, CoSPlay further performs output-consensus clustering on random valid inputs. Since correct codes tend to produce consistent outputs while incorrect ones often diverge in different ways, it first selects the most reliable consensus cluster and then chooses the highest-reliability code within that cluster as the final solution.
It consistently outperforms GT-free TTS baselines and matches or surpasses RLVR methods, improving both UT and code quality. These results suggest a scalable route toward GT-free test-time improvement for verifiable reasoning.

\textit{Limitation and Future Directions.}
CoSPlay relies on an executable environment to obtain Code-UT feedback, making direct extension to non-executable reasoning tasks more challenging.
CoSPlay also consumes more tokens due to iterative Code-UT evolution. However, under comparable test-time compute budgets, it achieves the best performance among evaluated methods and substantially outperforms existing GT-free TTS methods. Improving token efficiency and extending to broader reasoning tasks remain promising directions.


\newpage

\bibliographystyle{plain}
\bibliography{cosplay}

\newpage
\appendix
\onecolumn
\appendix
\startcontents[appendix]
\section*{Appendix}

\setcounter{tocdepth}{2}
\printcontents[appendix]{l}{1}{\setcounter{tocdepth}{2}}

\newpage

\section{Theory Analysis}
\label{app:theory}

\subsection{Analysis of Pass-Count Signals}
This section analyzes an i.i.d. model for the standard BoN baseline where codes and UTs are sampled independently. The goal is to explain why pass counts can serve as statistical signals for both code quality and UT quality. 
The first part introduces the execution-matrix setup and the thresholds under which pass counts allow codes and UTs to judge each other's reliability. The second part derives the posterior direction and convergence rate induced by binomial support counts.

\subsubsection{When Pass Counts Indicate Correctness}
\label{app:theory:initial-matrix}
\label{app:theory:setup}
Consider one independent sampling round under the standard BoN baseline: code
candidates and a UT pool are generated without self-play updates. Fix a coding
problem, and let
\[
    \mathcal C=\{c_i\}_{i=1}^{N_c},
    \qquad
    \mathcal T=\{t_j\}_{j=1}^{N_t},
    \qquad
    t_j=(x_j,y_j).
\]
The execution matrix are
\[
    M_{ij}:=\mathbf 1\{\operatorname{Exec}(c_i,x_j)=y_j\},
\]
the UT-side and code-side pass counts, and their
corresponding pass rates represented as:
\[
    p_j^{\mathrm{UT}}=\sum_{i=1}^{N_c}M_{ij},
    \qquad
    p_i^{\mathrm{code}}=\sum_{j=1}^{N_t}M_{ij},
    \qquad
    \alpha_j^{\mathrm{UT}}=\frac{p_j^{\mathrm{UT}}}{N_c},
    \qquad
    \alpha_i^{\mathrm{code}}=\frac{p_i^{\mathrm{code}}}{N_t}.
\]

Here \(p_j^{\mathrm{UT}}\) and \(\alpha_j^{\mathrm{UT}}\) are UT-side support
statistics, while \(p_i^{\mathrm{code}}\) and \(\alpha_i^{\mathrm{code}}\) are
code-side support statistics. Runtime errors are treated as failures.

Let \(C^+\) and \(U^+\) denote the events that a randomly sampled code and a
randomly sampled test are correct, respectively. Let \(C^-\) and \(U^-\) denote
their complements, and write
\[
    \rho_C=\mathbb P(C^+),
    \qquad
    \rho_T=\mathbb P(U^+),
    \qquad
    0<\rho_C,\rho_T<1.
\]
Codes and tests are sampled independently. After conditioning on the correctness
state of the evaluated object, pass events from different evaluators are
conditionally independent. A correct code always passes a correct test and
always fails an incorrect test; the only randomness comes from incorrect codes.
Let \(\varepsilon_1,\varepsilon_2\in[0,1]\) be the pass rates of an incorrect
code on a correct test and on an incorrect test, respectively:
\[
\begin{alignedat}{2}
    \mathbb P(M=1\mid C^+,U^+) &= 1,
    \qquad&
    \mathbb P(M=1\mid C^+,U^-) &= 0, \\
    \mathbb P(M=1\mid C^-,U^+) &= \varepsilon_1,
    &
    \mathbb P(M=1\mid C^-,U^-) &= \varepsilon_2 .
\end{alignedat}
\]
By the law of total probability and independence, the marginal pass
probabilities are
\begin{equation}
\label{eq:theta-phi-def}
\begin{alignedat}{2}
    \theta_1 &:= \mathbb P(M=1\mid U^+)
        = \rho_C+(1-\rho_C)\varepsilon_1,
    \\
    \theta_0 &:= \mathbb P(M=1\mid U^-)
        = (1-\rho_C)\varepsilon_2, 
    \\
    \varphi_1 &:= \mathbb P(M=1\mid C^+)
        = \rho_T,
    \\
    \varphi_0 &:= \mathbb P(M=1\mid C^-)
        = \varepsilon_1\rho_T+\varepsilon_2(1-\rho_T).
\end{alignedat}
\end{equation}
\(\theta_1,\theta_0\) are the probabilities that a correct or incorrect test is
passed by a random code. \(\varphi_1,\varphi_0\) are the probabilities that a
correct or incorrect code passes a random test. The former captures the fact
that correct tests should be easier to pass, while incorrect tests should be
harder to pass. The latter captures the fact that correct codes should pass
tests more often, whereas the pass noise of incorrect codes is controlled by
\(\varepsilon_1\) and \(\varepsilon_2\). The gaps between the positive and
negative classes quantify the attraction advantage of positive objects over
negative objects on an execution-matrix entry \(M\):
\begin{equation}
\label{equ:Thredhold}
\begin{aligned}
    \Delta_U
    &:=
    \theta_1-\theta_0
    =
    \rho_C+(1-\rho_C)(\varepsilon_1-\varepsilon_2),
    &
    \Delta_U>0
    &\Longleftrightarrow
    \rho_C>\rho_C^\star:=
    \frac{\varepsilon_2-\varepsilon_1}{1+\varepsilon_2-\varepsilon_1},
    \\
    \Delta_C
    &:=
    \varphi_1-\varphi_0
    =
    \rho_T(1-\varepsilon_1)-\varepsilon_2(1-\rho_T),
    &
    \Delta_C>0
    &\Longleftrightarrow
    \rho_T>\rho_T^\star:=
    \frac{\varepsilon_2}{1-\varepsilon_1+\varepsilon_2}.
\end{aligned}
\end{equation}

The UT-side advantage $\Delta_U$ is easier to obtain.  Any correct code passes correct UTs, and partially correct code can still pass UTs that match its correct logic. In contrast, wrong code can pass a wrong-output UT only when it matches that specific wrong logic. Since correct solutions share the same answer logic while wrong answers are usually dispersed, a usable base model is unlikely to make wrong code pass wrong-output UTs more often than correct UTs, namely
$\varepsilon_1\geq\varepsilon_2$. This makes the UT-side threshold easy to satisfy.
The code-side advantage $\Delta_C$ is also controlled by
the incorrect-test noise $\varepsilon_2$. When both $\varepsilon_1$ and
$\varepsilon_2$ are small, $\rho_T^\star$ is also small. However, the code
side still requires the fraction of correct tests $\rho_T$ in the UT pool to exceed this
noise level $\rho_T^\star$. Hence its threshold is relatively higher than the UT-side
threshold, and it needs a sufficiently high probability of correctness to keep discriminative.

\subsubsection{Support-Count Posterior}
\label{app:theory:pass-counts}

Each object to be judged is evaluated by $m$ sources. The $i$-th source
returns a binary support result $X_i\in\{0,1\}$, and the support count is
$S=\sum_{i=1}^m X_i$. If these results are independent and identically
distributed under each hypothesis, then $S$ follows the corresponding
binomial distribution. In the notation of the main CoSPlay paper, the UT side
takes $m=N_c$ and $S=p_j^{\mathrm{UT}}$, while the code side takes
$m=N_t$ and $\S=p_i^{\mathrm{code}}$.

\begin{theorem}[Posterior direction of support counts]
\label{thm:count-signal}
Suppose the support count \(S\) satisfies
\[
    S\mid H^+\sim\operatorname{Binomial}(m,q_1),
    \qquad
    S\mid H^-\sim\operatorname{Binomial}(m,q_0),
\]
where $H^+$ is the hypothesis that the object is correct, $H^-$ is the
hypothesis that the object is incorrect, $q_1$ is the probability that a
correct object is supported by one evaluator, $q_0$ is the probability that an
incorrect object is supported by one evaluator, and $0<q_0,q_1<1$. If
$q_1>q_0$, then a larger support count implies a larger posterior probability
of correctness. If $q_1=q_0$, the support count does not change the posterior.
If $q_1<q_0$, then a larger support count implies a smaller posterior
probability of correctness. The posterior odds ratio changes exponentially with
the support count.
\end{theorem}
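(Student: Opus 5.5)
The plan is to apply Bayes' rule with a prior $\pi=\mathbb P(H^+)\in(0,1)$ and work with posterior odds rather than the posterior probability itself. For an observed count $S=s$, the binomial coefficients $\binom{m}{s}$ cancel in the likelihood ratio, so
\[
    \frac{\mathbb P(H^+\mid S=s)}{\mathbb P(H^-\mid S=s)}
    =\frac{\pi}{1-\pi}\cdot\frac{q_1^{s}(1-q_1)^{m-s}}{q_0^{s}(1-q_0)^{m-s}}
    =\frac{\pi}{1-\pi}\left(\frac{1-q_1}{1-q_0}\right)^{m}\lambda^{s},
    \qquad
    \lambda:=\frac{q_1(1-q_0)}{q_0(1-q_1)} .
\]
The assumption $0<q_0,q_1<1$ makes every factor finite and positive, so this expression is well defined for all $s\in\{0,\dots,m\}$.

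The key observation is that the posterior odds are an exponential function of $s$ with base $\lambda$. Equivalently, the log-odds are affine in $s$ with slope $\log\lambda$, which already gives the final claim that the odds change exponentially with the count. Going from $s$ to $s+1$ multiplies the odds by exactly $\lambda$.

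Next, I determine the sign of $\log\lambda$. Since $x\mapsto x/(1-x)$ is strictly increasing on $(0,1)$, we have $\lambda>1$ iff $q_1>q_0$, $\lambda=1$ iff $q_1=q_0$, and $\lambda<1$ iff $q_1<q_0$. The posterior probability equals $O/(1+O)$, where $O$ denotes the posterior odds, and this map is strictly increasing in $O$. Hence the three cases of the theorem follow directly:
\begin{itemize}
\item if $q_1>q_0$, the posterior is strictly increasing in $s$;
\item if $q_1=q_0$, the odds equal the prior odds, so the count does not change the posterior;
\item if $q_1<q_0$, the posterior is strictly decreasing in $s$.
\end{itemize}

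There is no real obstacle here. The only care needed is interpretive: the statement's phrase about posterior probability has to be read through the monotone odds-to-probability map. As a final remark, I would connect the result to the earlier setup. Taking $(q_1,q_0)=(\theta_1,\theta_0)$ on the UT side and $(q_1,q_0)=(\varphi_1,\varphi_0)$ on the code side, the condition $q_1>q_0$ is exactly $\Delta_U>0$ or $\Delta_C>0$ in Eq.~\ref{equ:Thredhold}. The boundary cases where some $q\in\{0,1\}$, for example $\varphi_1=\rho_T$, are excluded by the hypothesis $0<q_0,q_1<1$.
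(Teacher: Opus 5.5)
Your proof is correct and is essentially the paper's own argument: Bayes' rule with the $\binom{m}{s}$ cancellation, posterior odds $\frac{\pi}{1-\pi}\bigl(\frac{1-q_1}{1-q_0}\bigr)^m\lambda^s$ with $\lambda$ equal to the paper's $r$, $\lambda\gtrless 1\iff q_1\gtrless q_0$, and strict monotonicity of $O\mapsto O/(1+O)$. One slip, in your closing side remark only: $\varphi_1=\rho_T$ is not a boundary case, since $0<\rho_T<1$ is assumed; boundary cases instead come from the noise rates, e.g.\ $\varepsilon_2=0$ forces $\theta_0=0$.
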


\begin{proof}
Let the prior be \(\rho=\mathbb P(H^+)\in(0,1)\). By Bayes' rule and the
binomial probability mass function,
\begin{equation}
\label{eq:count-odds}
\begin{aligned}
    \operatorname{OR}(s)
    &:=
    \frac{\mathbb P(H^+\mid S=s)}
         {\mathbb P(H^-\mid S=s)}
    =
    \frac{\rho}{1-\rho}
    \frac{\binom ms q_1^s(1-q_1)^{m-s}}
         {\binom ms q_0^s(1-q_0)^{m-s}}
    =
    \frac{\rho}{1-\rho}
    \left(\frac{1-q_1}{1-q_0}\right)^m
    r^s,
    r:=
    \frac{q_1(1-q_0)}{q_0(1-q_1)} .
\end{aligned}
\end{equation}
The posterior probability and the odds ratio satisfy
\begin{equation}\label{eq:posterior-odds}
    \mathbb P(H^+\mid S=s)
    =
    \frac{\operatorname{OR}(s)}{1+\operatorname{OR}(s)} .
\end{equation}
The function \(x\mapsto x/(1+x)\) is strictly increasing on \(x>0\), and
\[
    r>1
    \Longleftrightarrow
    q_1(1-q_0)>q_0(1-q_1)
    \Longleftrightarrow
    q_1>q_0 .
\]
Thus the sign of \(q_1-q_0\) determines the posterior direction:
\begin{enumerate}
    \item If \(q_1>q_0\), then \(r>1\). Both the posterior odds ratio and the
    posterior probability of correctness are strictly increasing in \(s\).
    \item If \(q_1=q_0\), then \(r=1\). The support count changes neither the
    posterior odds ratio nor the posterior probability of correctness.
    \item If \(q_1<q_0\), then \(0<r<1\). Both the posterior odds ratio and the
    posterior probability of correctness are strictly decreasing in \(s\).
\end{enumerate}
Moreover, Eq.~\eqref{eq:count-odds} shows that the posterior odds ratio depends
on \(s\) through the exponential factor \(r^s\).
\end{proof}

\begin{corollary}[Posterior convergence at a fixed support ratio]
\label{cor:critical-ratio}
Under the binomial model in Theorem~\ref{thm:count-signal}, let \(m\) be the
number of independent evaluators, and let \(s=\lfloor\eta m\rfloor\), where
\(\eta\in[0,1]\) is fixed. If \(q_1\ne q_0\), then there exists a unique
threshold \(\eta^\star\) between \(q_0\) and \(q_1\). As \(m\to\infty\), any
fixed support ratio on the side biased toward \(q_1\) makes the posterior
probability of correctness converge exponentially fast to \(1\), while any
fixed support ratio on the other side makes it converge exponentially fast to
\(0\). If \(q_1=q_0\) or \(\eta=\eta^\star\), no exponential decision is
obtained.
\end{corollary}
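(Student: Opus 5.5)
The plan is to work directly with the closed form of the posterior odds in Eq.~\eqref{eq:count-odds}, taking logarithms and normalizing by $m$. With $s=\lfloor \eta m\rfloor$,
\[
\frac{1}{m}\log \operatorname{OR}(s)
= \frac{1}{m}\log\frac{\rho}{1-\rho} + \log\frac{1-q_1}{1-q_0} + \frac{s}{m}\log r .
\]
Since $s/m=\eta+O(1/m)$, this equals $g(\eta)+O(1/m)$, where
\[
g(\eta):=\eta\log r+\log\frac{1-q_1}{1-q_0}
=\eta\log\frac{q_1}{q_0}+(1-\eta)\log\frac{1-q_1}{1-q_0}.
\]
This is the log-likelihood-ratio rate; equivalently, $g(\eta)=D(\eta\|q_0)-D(\eta\|q_1)$, a difference of Bernoulli KL divergences.

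Next, assume $q_1\neq q_0$. By Theorem~\ref{thm:count-signal} we have $\log r\neq 0$, so $g$ is affine in $\eta$ with nonzero slope. It therefore has a unique root
\[
\eta^\star=\frac{\log\frac{1-q_0}{1-q_1}}{\log r}.
\]
To place $\eta^\star$ strictly between $q_0$ and $q_1$, I will use the KL representation:
\[
g(q_1)=D(q_1\|q_0)>0,
\qquad
g(q_0)=-D(q_0\|q_1)<0 .
\]
Both strict inequalities hold because $q_0\neq q_1$ and $0<q_0,q_1<1$. By the intermediate value theorem the root lies strictly between them, and uniqueness comes from affinity. The same computation fixes the side: $g(\eta)>0$ exactly when $\eta$ lies on the same side of $\eta^\star$ as $q_1$, and $g(\eta)<0$ on the $q_0$ side.

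For convergence, fix $\eta$ on the $q_1$ side, so $g(\eta)=:\gamma>0$. For $m$ large enough that the $O(1/m)$ term (prior plus floor error, bounded by $|\log\rho/(1-\rho)|+|\log r|$) is below $\gamma m/2$, we get $\operatorname{OR}(s)\ge e^{\gamma m/2}$. By Eq.~\eqref{eq:posterior-odds},
\[
1-\mathbb P(H^+\mid S=s)=\frac{1}{1+\operatorname{OR}(s)}\le e^{-\gamma m/2},
\]
which is exponential convergence to $1$. Symmetrically, on the $q_0$ side $\operatorname{OR}(s)\le e^{-|\gamma| m/2}$, so $\mathbb P(H^+\mid S=s)\le \operatorname{OR}(s)\to 0$ exponentially.

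For the degenerate cases: if $q_1=q_0$ then $\operatorname{OR}(s)=\rho/(1-\rho)$ for all $s$, so the posterior stays at the prior. If $\eta=\eta^\star$ then $g=0$, and $\log \operatorname{OR}$ is $O(1)$. The only nontrivial contribution there is the floor error $(s-\eta m)\log r$, which is bounded, so the posterior stays bounded away from $0$ and $1$ and no exponential decision is obtained. The boundary values $\eta\in\{0,1\}$ need no special care, because $q_0,q_1\in(0,1)$ keeps every logarithm finite.

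I expect the main obstacle to be the step locating $\eta^\star$ strictly between $q_0$ and $q_1$, since it does not follow from the raw formula for $\eta^\star$ without an inequality. Writing $g$ as a KL difference settles it cleanly. If needed, a fallback is the elementary bound $\log x<x-1$ applied to $g(q_0)$ and $g(q_1)$. The remaining steps are bookkeeping of the $O(1)$ terms.
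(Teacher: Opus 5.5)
Your proposal is correct and follows essentially the same route as the paper: both write $\log\operatorname{OR}(s)$ as $m$ times the affine rate $\eta\log(q_1/q_0)+(1-\eta)\log\{(1-q_1)/(1-q_0)\}$ plus a bounded prior-and-floor term, locate its unique root strictly between $q_0$ and $q_1$ from its values $-\mathrm{KL}$ at $q_0$ and $+\mathrm{KL}$ at $q_1$, and read the exponential decision off the sign of the rate. Your explicit bounds such as $1-\mathbb P(H^+\mid S=s)\le e^{-\gamma m/2}$ simply make quantitative the paper's step $\mathbb P(H^+\mid S=s)=1/(1+\exp\{-mD(\eta)+O(1)\})$; one labeling slip: the quantity you compare against $\gamma m/2$ is the unnormalized $O(1)$ term, not an $O(1/m)$ one.
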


\begin{proof}
Let the prior be \(\rho=\mathbb P(H^+)\in(0,1)\), and write
\[
    r
    :=
    \frac{q_1(1-q_0)}{q_0(1-q_1)} .
\]
Let \(\delta_m=s-\eta m\). Since \(s=\lfloor\eta m\rfloor\), we have
\(|\delta_m|\le 1\). By Eq.~\eqref{eq:count-odds},
\begin{align*}
    \log\operatorname{OR}(s)
    &=
    \log\frac{\rho}{1-\rho}
    + m\log\frac{1-q_1}{1-q_0}
    + s\log r \\
    &=
    \log\frac{\rho}{1-\rho}
    + m\left[
        \log\frac{1-q_1}{1-q_0}
        +\eta\log r
    \right]
    +\delta_m\log r \\
    &=
    \log\frac{\rho}{1-\rho}
    + m\left[
        \log\frac{1-q_1}{1-q_0}
        +\eta\log\frac{q_1(1-q_0)}{q_0(1-q_1)}
    \right]
    +O(1) \\
    &=
    \log\frac{\rho}{1-\rho}
    + m\left[
        \eta\log\frac{q_1}{q_0}
        +(1-\eta)\log\frac{1-q_1}{1-q_0}
    \right]
    +O(1) \\
    &=
    \log\frac{\rho}{1-\rho}
    +mD(\eta)+O(1).
\end{align*}
Here
\begin{equation}
\label{eq:fixed-ratio-rate}
    D(\eta)
    :=
    \eta\log\frac{q_1}{q_0}
    +(1-\eta)\log\frac{1-q_1}{1-q_0}.
\end{equation}

If \(q_1=q_0\), then \(D(\eta)\equiv0\). We now assume \(q_1\ne q_0\). By
Eq.~\eqref{eq:fixed-ratio-rate},
\[
    D(q_0)=
    -\operatorname{KL}\big(\operatorname{Bern}(q_0)\Vert\operatorname{Bern}(q_1)\big)
    < 0, \qquad
    D(q_1)=
    \operatorname{KL}\big(\operatorname{Bern}(q_1)\Vert\operatorname{Bern}(q_0)\big)
    > 0.
\]
Thus \(D\) is a nonconstant linear function, and its unique zero lies between
\(q_0\) and \(q_1\). Solving \(D(\eta^\star)=0\) in
Eq.~\eqref{eq:fixed-ratio-rate} gives
\[
    \eta^\star
    :=
    \frac{\log\{(1-q_0)/(1-q_1)\}}{\log r}.
\]
Finally, by Eq.~\eqref{eq:posterior-odds},
\[
    \mathbb P(H^+\mid S=s)
    =
    \frac{1}{1+\exp\{-\log\operatorname{OR}(s)\}}
    =
    \frac{1}{1+\exp\{-mD(\eta)+O(1)\}}.
\]
Moreover, \(D'(\eta)=\log r\). We distinguish three cases:
\begin{enumerate}
    \item If \(q_1>q_0\), then \(D\) is strictly increasing. Hence, when
    \(\eta>\eta^\star\), \(D(\eta)>0\), and the posterior probability of
    correctness converges exponentially fast to \(1\). When
    \(\eta<\eta^\star\), \(D(\eta)<0\), and the posterior probability of
    correctness converges exponentially fast to \(0\).
    \item If \(q_1<q_0\), then \(D\) is strictly decreasing, so the sign
    direction is reversed. Hence, when \(\eta<\eta^\star\), the posterior
    probability of correctness converges exponentially fast to \(1\), and when
    \(\eta>\eta^\star\), it converges exponentially fast to \(0\).
    \item If \(\eta=\eta^\star\), then \(D(\eta)=0\), and the posterior odds
    remain only at constant order. No exponential decision is obtained.
\end{enumerate}
This proves the claim.
\end{proof}

\subsection{Analysis of Execution-Consensus Clustering}
Execution-Consensus Clustering is applied after BoN filtering. Let
\(\mathcal C_{\mathrm{high}}\) be the high-scoring set retained by BoN, and draw
a code \(c\) from this set. All probabilities below are conditional on
\(c\in\mathcal C_{\mathrm{high}}\) and are taken over both code and input
randomness. Draw \(R\) independent valid random inputs \(z_1,\ldots,z_R\), and
write
\[
    \sigma(c):=\big(c(z_1),\ldots,c(z_R)\big),\qquad
    \sigma^+=(\sigma^+_1,\ldots,\sigma^+_R),
\]
where \(\boldsymbol{\sigma}(c)\) is the output signature of candidate \(c\), and
\(\sigma^+\) is the correct output signature.

\begin{proposition}[Separation of the correct output signature]
\label{prop:signature-mode-separation}
Suppose \(\alpha:=\mathbb P(C^+)>0\), and there exists
\(\beta\in[0,1)\) such that every incorrect code \(c\), output value \(y\),
and \(r\in[1,R]\) satisfy \(\mathbb P(c(z_r)=y\mid c)\le\beta\). Then, for
sufficiently large \(R\), \(\sigma^+\) is the unique maximum-probability
signature. Moreover, for any such \(R\), if the signature space is finite and
candidates are sampled independently, then the largest empirical cluster
converges almost surely to the correct output-consensus cluster.
\end{proposition}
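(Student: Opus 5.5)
The plan is to prove the two claims separately: first a population-level separation for the signature distribution, then an almost-sure statement about empirical clusters via the strong law of large numbers over a finite signature space.

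\textbf{Step 1 (population separation).} The probability of the correct signature is bounded below by the correct-code mass. Since every correct code outputs $\sigma^+$ on every input, $\mathbb P(\sigma(c)=\sigma^+)\ge \alpha$.

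For any fixed signature $s=(s_1,\dots,s_R)\neq\sigma^+$, only incorrect codes can contribute beyond what correct codes do, and correct codes give $s$ probability zero. So
\[
\mathbb P(\sigma(c)=s)\le (1-\alpha)\,\mathbb P(\sigma(c)=s\mid C^-).
\]
Conditioning on the incorrect code $c$ and using independence of $z_1,\dots,z_R$ given $c$, we get
\[
\mathbb P(\sigma(c)=s\mid c)=\prod_{r=1}^R \mathbb P(c(z_r)=s_r\mid c)\le \beta^R.
\]
Averaging over incorrect codes gives $\mathbb P(\sigma(c)=s)\le(1-\alpha)\beta^R$.

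Since $\beta<1$ and $\alpha>0$, there is an $R_0$ with $(1-\alpha)\beta^{R}<\alpha$ for all $R\ge R_0$. Explicitly, we can take $R_0>\log(\alpha/(1-\alpha))/\log\beta$ when $\alpha<1$, and any $R_0$ when $\alpha=1$. For such $R$, $\sigma^+$ is the strict, hence unique, maximizer of the signature probability.

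The main subtlety is the independence step. The hypothesis bounds single-coordinate probabilities, and we need a product bound. I will read the statement as saying the inputs are drawn i.i.d. independently of the code, so outputs are conditionally independent given $c$. If only the coordinatewise bound were available, we could still use $\mathbb P(\sigma=s\mid c)\le\min_r\mathbb P(c(z_r)=s_r\mid c)\le\beta$. That would give separation only when $(1-\alpha)\beta<\alpha$, which is why I will make the conditional independence explicit.

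\textbf{Step 2 (empirical convergence).} Fix $R\ge R_0$. Let the signature space $\Sigma$ be finite, with probabilities $\pi(s)$, and let $n$ be the number of sampled candidates. Write $\hat\pi_n(s)$ for the empirical frequency of signature $s$.

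By the strong law of large numbers, applied to each of the finitely many indicators, $\hat\pi_n(s)\to\pi(s)$ almost surely for every $s$, and hence simultaneously for all $s\in\Sigma$. Let
\[
\gamma:=\pi(\sigma^+)-\max_{s\ne\sigma^+}\pi(s)>0.
\]
On the almost-sure event, there is a random $n_0$ such that $\max_s|\hat\pi_n(s)-\pi(s)|<\gamma/2$ for all $n\ge n_0$. Then $\hat\pi_n(\sigma^+)>\hat\pi_n(s)$ for every $s\ne\sigma^+$. So the largest empirical cluster is eventually exactly the set of candidates with signature $\sigma^+$, and in particular it contains every correct candidate.

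In the error-free setting, clusters are exact signature classes, and by Section~\ref{sec:consensus_selection} maximizing $S_{\mathrm{cls}}$ coincides with selecting the largest cluster. This also yields the convergence claim for the selection rule.

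Optionally, Hoeffding's inequality plus a union bound over $\Sigma$ gives an exponential rate $|\Sigma|\,2e^{-n\gamma^2/2}$ for the failure probability. I would note this rate but not require it.
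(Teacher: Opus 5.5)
Your two steps reproduce the paper's proof essentially line by line: the lower bound $\alpha$ from correct codes, the $(1-\alpha)\beta^R$ product bound via independence of the $z_r$, and the SLLN over a finite signature space with a positive gap. So relative to the paper nothing is missing. However, the argument, yours and the paper's alike, has a genuine gap in how the inputs are treated.

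Step~1 must average over the inputs. Given $c$ and a fixed $z_r$, the output $c(z_r)$ is deterministic, so the hypothesis $\mathbb P(c(z_r)=y\mid c)\le\beta$ is only satisfiable as a probability over $z_r$. But then $\sigma^+=\sigma^+(z_1,\dots,z_R)$ is itself random, and your claim that correct codes give a fixed $s\neq\sigma^+$ probability zero no longer makes sense.

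Step~2 applies the SLLN to the indicators $\mathbf 1\{\sigma(c_i)=s\}$ as if they were i.i.d.\ with mean $\pi(s)$. In the clustering stage, however, all candidates are executed on the \emph{same} inputs $\mathcal Z$. These indicators are therefore only conditionally i.i.d.\ given $z=(z_1,\dots,z_R)$, and the SLLN gives $\hat\pi_n(s)\to\pi_z(s):=\mathbb P(\sigma(c)=s\mid z)$, not $\pi(s)$. The $\beta^R$ bound only says that each \emph{fixed} vector has small input-averaged mass. It does not control $\pi_z$ at a wrong signature that moves with $z$. If instead each candidate got fresh inputs, the signatures would be i.i.d., but correct candidates would no longer share a signature, so there would be no correct cluster to converge to.

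This cannot be patched under the stated hypothesis. Let every incorrect code compute $w(x)=x$, let the correct function be $f(x)=(x\bmod K)+1$, and let the $z_r$ be uniform on $\{1,\dots,K\}$ with $K\ge 2$. The hypothesis holds with $\beta=1/K$. Yet for every realization of $z$, the wrong cluster has limiting frequency $1-\alpha$ and the correct one $\alpha$. If $\alpha<1/2$, the largest empirical cluster is almost surely eventually wrong, for every $R$.

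What the argument needs is anti-concentration over the \emph{code} distribution given the shared inputs: diversity among wrong programs, not spread of a single wrong program's outputs across inputs. For example, one could assume $\mathbb P(\sigma(c)=s\mid C^-,z)\le\beta^R$ for almost every realized $z$ and every $s$. More minimally, one could assume directly that $\max_{s\neq\sigma^+(z)}\mathbb P(\sigma(c)=s\mid z)<\alpha$. Under such a condition your Step~2, run conditionally on $z$, goes through verbatim.
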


\begin{proof}
A correct code has output signature \(\sigma^+\), so
\(\mathbb P(\sigma(c)=\sigma^+\mid C^+)=1\). By the law of total probability,
\begin{equation}
\label{eq:true-signature-lower}
\begin{aligned}
    \mathbb P\big(\sigma(c)=\sigma^+\big)
    &=
    \mathbb P(C^+)\mathbb P\big(\sigma(c)=\sigma^+\mid C^+\big)
    +\mathbb P(C^-)\mathbb P\big(\sigma(c)=\sigma^+\mid C^-\big)
    \\
    &\ge
    \mathbb P(C^+)\mathbb P\big(\sigma(c)=\sigma^+\mid C^+\big)
    =
    \alpha .
\end{aligned}
\end{equation}
Let \(\mathcal S_R\) be the length-\(R\) output-signature space, write
\(\mathcal S_R^-:=\mathcal S_R\setminus\{\sigma^+\}\), and take any incorrect
signature \(\sigma^-=(\sigma^-_1,\ldots,\sigma^-_R)\in\mathcal S_R^-\).
A correct code never produces an incorrect signature, so
\[
    \mathbb P\big(\sigma(c)=\sigma^-\mid C^+\big)=0.
\]

The anti-concentration condition and the independence of \(z_1,\ldots,z_R\)
give
\[
    \mathbb P\big(\sigma(c)=\sigma^-\mid C^-\big)
    =
    \mathbb E_{c\mid C^-}
    \left[
        \mathbb P\big(\sigma(c)=\sigma^-\mid c\big)
    \right]
    =
    \mathbb E_{c\mid C^-}
    \left[
        \prod_{r=1}^R
        \mathbb P\big(c(z_r)=\sigma^-_r\mid c\big)
    \right]
    \le
    \beta^R .
\]
Again by the law of total probability,
\begin{equation}
\label{eq:wrong-signature-upper}
\begin{aligned}
    \mathbb P\big(\sigma(c)=\sigma^-\big)
    &=
    \mathbb P(C^+)\mathbb P\big(\sigma(c)=\sigma^-\mid C^+\big)
    +\mathbb P(C^-)\mathbb P\big(\sigma(c)=\sigma^-\mid C^-\big)
    \\
    &=
    (1-\alpha)\mathbb P\big(\sigma(c)=\sigma^-\mid C^-\big)\le
    (1-\alpha)\beta^R .
\end{aligned}
\end{equation}
Combining Eqs.~\eqref{eq:true-signature-lower} and
\eqref{eq:wrong-signature-upper}, and taking the supremum over incorrect
signatures, gives
\[
    \mathbb P\big(\sigma(c)=\sigma^+\big)
    -
    \sup_{\sigma^-\in\mathcal S_R^-}
    \mathbb P\big(\sigma(c)=\sigma^-\big)
    \ge
    \alpha-(1-\alpha)\beta^R
    \xrightarrow{R\to\infty}
    \alpha > 0.
\]
Therefore, for sufficiently large \(R\), \(\sigma^+\) has strictly larger
probability than any incorrect signature and is the unique population mode.

Take such an \(R\), and let
\(\delta_R:=\mathbb P(\sigma(c)=\sigma^+)
-\sup_{\sigma^-\in\mathcal S_R^-}
\mathbb P(\sigma(c)=\sigma^-)>0\). If candidates \(c_1,\ldots,c_n\) are sampled
independently from \(\mathcal C_{\mathrm{high}}\), then for any incorrect
signature \(\sigma^-\), the strong law of large numbers gives
\[
    \frac1n
    \sum_{i=1}^n
    \left(
    \mathbf 1\{\sigma(c_i)=\sigma^+\}
    -
    \mathbf 1\{\sigma(c_i)=\sigma^-\}
    \right)
    \xrightarrow{\mathrm{a.s.}}
    \mathbb P\big(\sigma(c)=\sigma^+\big)
    -
    \mathbb P\big(\sigma(c)=\sigma^-\big)
    \ge
    \delta_R .
\]
Since \(\mathcal S_R\) is finite, this convergence holds for all incorrect
signatures simultaneously. Hence, almost surely, there exists \(n_0\) such that
for every \(n\ge n_0\), the empirical cluster size of \(\sigma^+\) is strictly
larger than that of every incorrect signature. Therefore the largest empirical
cluster is eventually the correct output-consensus cluster.
\end{proof}

\clearpage
\section{Detailed Algorithm Design}

\algnewcommand\algorithmicinput{\textbf{Input:}}
\algnewcommand\algorithmicoutput{\textbf{Output:}}
\algnewcommand\Input{\item[\algorithmicinput]}
\algnewcommand\Output{\item[\algorithmicoutput]}

\begin{algorithm}[H]
\caption{The CoSPlay Framework}
\label{alg:cosplay}
\begin{algorithmic}[1]
\setlength{\itemsep}{0.12em}

\Input Problem $\mathcal{P}$, pool sizes $N_c,N_t$, clustering input size $R$, max self-play iterations $T_{\max}$
\Output The best generated code candidate $c^*$

\Statex \textcolor{gray}{\textbf{\textit{// Stage 1: Code and UT Idea Exploration}}}
\State $\mathcal{H} \gets \text{Generate high-level solution hints from } \mathcal{P} \text{ via LLM}$
\State $\Omega \gets \{\omega \subseteq \mathcal{H} \mid |\omega| \in \{1,2\}\}$
\State $\mathcal{S} \gets \bigcup_{\omega \in \Omega} \text{Generate detailed plans from }(\mathcal{P},\omega)\text{ via LLM}$
\State $\mathcal{A} \gets \bigcup_{s \in \mathcal{S}} \text{Generate failure-oriented UT ideas from }s$

\Statex \textcolor{gray}{\textbf{\textit{// Stage 2: Execution-Guided Iterative Self-Play}}}
\State $\mathcal{C} \gets \text{Generate } N_c \text{ code candidates conditioned on plans in } \mathcal{S}$
\State $\mathcal{T} \gets \text{Initialize } N_t \text{ UTs (half random valid, half from } \mathcal{A}\text{) via self-consistency}$
\Statex \textcolor{gray}{\textit{// \textsc{Refresh}$(\mathcal{C},\mathcal{T})$: compute $\mathbf{M}_{ij}=\mathbb{I}(\mathrm{Exec}(c_i,x_j)=y_j)$, $p_j^{\mathrm{UT}}=\sum_i\mathbf{M}_{ij}$, $p_i^{\mathrm{code}}=\sum_j\mathbf{M}_{ij}$, $\alpha_j^{\mathrm{UT}}=p_j^{\mathrm{UT}}/N_c$, and $\alpha_i^{\mathrm{code}}=p_i^{\mathrm{code}}/N_t$.}}
\Statex \textcolor{gray}{\textit{// After each step in self-play, \textsc{Refresh} is called to refresh M.}}

\For{$t = 1$ \textbf{to} $T_{\max}$}
    \If{all $\mathbf{M}_{ij}=1$}
        \State \textbf{break}
    \EndIf

    \Statex \textcolor{gray}{\textit{// Step 1: Code Cleaning}}
    \State $\mathcal{C}_{\mathrm{fail}} \gets \{c_i \in \mathcal{C} \mid \alpha_i^{\mathrm{code}}=0\}$
    \State $\mathcal{C} \gets (\mathcal{C}\setminus\mathcal{C}_{\mathrm{fail}})\cup \{\text{Generate }|\mathcal{C}_{\mathrm{fail}}|\text{ new codes from }\mathcal{S}\}$

    \Statex \textcolor{gray}{\textit{// Step 2: Breaking Spurious Code--UT Coupling}}
    \If{there exists $j$ such that $0<\alpha_j^{\mathrm{UT}}<1$}
        \State $j_{\mathrm{worst}} \gets \arg\min_j \{\alpha_j^{\mathrm{UT}} \mid 0<\alpha_j^{\mathrm{UT}}<1\}$
        \State $\mathcal{T} \gets (\mathcal{T}\setminus\{\mathcal{T}_{j_{\mathrm{worst}}}\}) \cup \{\text{Regenerate UT from }\mathcal{A}\text{ using }\mathcal{T}_{j_{\mathrm{worst}}}\text{ and its passed codes}\}$
    \EndIf

    \Statex \textcolor{gray}{\textit{// Step 3: Code Fixing}}
    \If{there exists $j$ such that $0<\alpha_j^{\mathrm{UT}}<1$}
        \State $j_{\mathrm{best}} \gets \arg\max_j \{\alpha_j^{\mathrm{UT}} \mid 0<\alpha_j^{\mathrm{UT}}<1\}$
        \State $\mathcal{C}_{\mathrm{target}} \gets \{c_i \in \mathcal{C} \mid \mathbf{M}_{i,j_{\mathrm{best}}}=0\}$
        \State $\mathcal{C} \gets (\mathcal{C}\setminus\mathcal{C}_{\mathrm{target}})\cup \{\text{Repair } \mathcal{C}_{\mathrm{target}} \text{ using }t_{j_{\mathrm{best}}}\text{ and execution feedback}\}$
    \EndIf

    \Statex \textcolor{gray}{\textit{// Step 4: Replacing Zero-Discrimination UTs}}
    \State $\mathcal{T}_{\mathrm{triv}} \gets \{t_j\in\mathcal{T} \mid \alpha_j^{\mathrm{UT}}\in\{0,1\}\}$
    \If{$\mathcal{T}_{\mathrm{triv}}\neq\emptyset$}
        \State $\mathcal{T} \gets (\mathcal{T}\setminus\mathcal{T}_{\mathrm{triv}})\cup \{\text{Generate }|\mathcal{T}_{\mathrm{triv}}|\text{ new UTs from }\mathcal{A}\text{ via self-consistency}\}$
    \EndIf
\EndFor

\Statex \textcolor{gray}{\textbf{\textit{// Stage 3: Execution-Consensus-Based Cluster Selection}}}
\State $\mathcal{C}_{\mathrm{high}} \gets \{c_i \in \mathcal{C} \mid p_i^{\mathrm{code}}=\max_{\ell}p_{\ell}^{\mathrm{code}}\}$
\State $\mathcal{Z} \gets \text{Generate }R\text{ random valid inputs }\{z_r\}_{r=1}^{R}\text{ via }\mathcal{M}$
\State Let $o_{i}(z_r)=\mathrm{Exec}(c_i,z_r)$, with runtime errors mapped to $\mathrm{ERR}$
\State Partition $\mathcal{C}_{\mathrm{high}}$ into $\mathcal{G}=\{G_1,\dots,G_M\}$ by runtime-error-aware compatibility $c_i\sim c_j$

\For{each cluster $G_m \in \mathcal{G}$}
    \For{each candidate $c_i \in G_m$}
        \State $S_{\mathrm{ind}}(c_i) \gets \sum_{c_j \in G_m \setminus \{c_i\}} \sum_{r=1}^{R} \mathbb{I}\!\left[o_{i}(z_r)=o_{j}(z_r)\neq \mathrm{ERR}\right]$
    \EndFor
    \State $S_{\mathrm{cls}}(G_m) \gets \sum_{c_i \in G_m} S_{\mathrm{ind}}(c_i)$
\EndFor

\State $G^* \gets \arg\max_{G_m \in \mathcal{G}} S_{\mathrm{cls}}(G_m)$
\State $c^* \gets \arg\max_{c_i \in G^*} S_{\mathrm{ind}}(c_i)$ \Comment{Break ties}
\State \Return $c^*$
\end{algorithmic}
\end{algorithm}

\section{Related Work}
\label{sec:related_work}
\subsection{Fine-tuning Methods for Code Generation}

For code generation, most existing training-based methods rely heavily on external ground-truth data, approaches like $\mu$Code~\cite{jain2025mucode} and AceCoder~\cite{zeng2025acecoder} employ Supervised Fine-Tuning using ground-truth or synthetic datasets, while Focused DPO~\cite{zhang2025focuseddpo} and CTRL~\cite{xie2025ctrl} utilize RLVR by optimizing rewards design. Similarly, O1-Coder~\cite{zhang2024o1coder} and UTGen~\cite{prasad2025utgen} fine-tune models specifically for UT generation by leveraging ground-truth code. Some recent approaches aim to co-evolve these capabilities using RLVR but often retain critical dependencies: UTRL~\cite{lee2025utrl} requires ground-truth code, and CURE~\cite{wang2025cure} relies on ground-truth UT. While Absolute-Zero~\cite{zhao2025absolutezero} achieves purely self-supervised learning without external data, it demands prohibitive computational resources for RLVR training.

\subsection{TTS for LLM Code Generation}

Test-Time Scaling (TTS) strategies typically begin with repeated sampling, such as Best-of-N (BoN), but they often struggle to identify the best candidate without GT data. To improve selection, consensus-based methods such as CodeT~\cite{chen2022codet} and MPSC~\cite{huang2024mpsc} use self-generated UTs for voting, although CodeT is limited by the potential unreliability of generated tests and MPSC incurs substantial overhead from extensive sampling. More broadly, sampling-based inference-time methods such as PowerSampling improve reasoning by drawing from sharpened sequence-level distributions without additional training, suggesting that stronger performance can sometimes be elicited purely through better test-time sampling~\cite{karan2025reasoningsamplingbasemodel}. Beyond simple resampling, a growing line of work treats code generation as a structured search problem. SFS formulates code generation as black-box optimization in code space and improves exploration diversity to avoid local optima~\cite{sfs}, while CodeTree organizes planning, generation, and debugging within a unified tree structure guided by both execution feedback and agent feedback~\cite{codertee}. ThinkCoder further decomposes test-time computation into a thorough exploration stage followed by an optimal refinement stage, improving efficiency by first broadening the solution space and then sharpening promising candidates~\cite{thinkbeforecoding}. Finally, execution-guided approaches leverage runtime feedback for iterative improvement. Methods such as ORPS~\cite{yu2025orps}, S*~\cite{li2025s*}, LDB~\cite{ldb}, and Reflexion~\cite{shinn2023reflexion} use execution signals for process supervision or refinement. Another paper~\cite{revisit} further shows that self-generated tests can support iterative debugging, while also highlighting the bias and reliability issues that arise when such tests are used naively.

\subsection{Self-Play in LLM}

The self-play paradigm, popularized by AlphaZero ~\cite{silver2017alphazero} and AlphaGo~\cite{silver2016alphago}, has recently been adapted to optimize LLMs via RL. However, most existing approaches in the coding domain remain dependent on external supervision and heavy training. For instance, CURE ~\cite{wang2025cure} relies on GT UT to guide the co-evolution of generation capability of code and UT, while UTRL ~\cite{lee2025utrl} necessitates ground-truth code for improving model performance. Although Absolute-Zero ~\cite{zhao2025absolutezero} achieves a fully self-supervised setup without human data, it yields limited performance gains despite the prohibitive training costs of RLVR training. Crucially, these methods are exclusively training-time optimizations. In contrast, CoSPlay explores inference-time self-play for GT-free code generation, leveraging the pass count in execution metrix as signals  to iteratively refine and filter candidates without parameter updates.

\subsection{Search in Natural Language Space}

To enhance the reasoning capabilities of LLMs, recent studies have proposed Chain-of-Thought~\cite{wei2023cot} strategies into structured search processes within the natural language space. Frameworks such as Tree of Thoughts~\cite{yao2023treeofthought} and Reasoning via Planning~\cite{hao2023rap} treat reasoning steps as a search tree, exploring various intermediate thoughts to reach a solution. In the specific domain of code generation, methods like PlanSearch~\cite{wang2025plansearch} demonstrate that generating high-level natural language plans before implementation significantly outperforms direct code generation. The primary benefit of such natural language search lies in its ability to introduce structural diversity, preventing the model from converging to a narrow set of repetitive solutions—a common issue in direct code resampling. However, most existing methods for code generation remain static, generating plans without leveraging dynamic execution feedback or iterative self-play to refine.


 
\section{Detailed baseline setting}
\label{app:tts_setting}

\textbf{RL Baselines.} Detailed data usage and base models are listed in Table~\ref{tab:RL_baselines}. We follow the original prompts and LLM parameters as provided in each method's code or paper.

\textbf{TTS Baselines.} For a fair comparison, we control the total code and UT generation budgets to be comparable across methods, and all UTs are generated by the model itself without external supervision. Specific settings are as follows:

\textbf{(1) SFS}~\cite{sfs}: num\_seed = 10, max\_iters = 40 (total budget of 50) and we scale to 80, generated UT = 6; we use the \texttt{Role} seed theme, the best-performing variant reported in the original paper. The sampling temperature is $0.2$.

\textbf{(2) CodeTree}~\cite{codertee}: full \texttt{agent} setting with max budget = 50, generated UT = 6, max depth = 5, search width = 10. The scaled-up setting with max budget = 1000, generated UT = 16, max depth = 20, search width = 100. The sampling temperature is $0.0$.

\textbf{(3) MPSC}~\cite{huang2024mpsc}: solution = 60, specification = 50, test case = 100; we use \texttt{MPSC-label} mode, the best-performing mode for code generation reported in the original paper. The scaled-up setting with solution = 300, specification = 250, test case = 600. The sampling temperature is $0.8$.  

\textbf{(4) PowerSampling}~\cite{karan2025reasoningsamplingbasemodel}: n\_samples = 16, mcmc\_steps = 10, code sampling and UT generation temperature = 0.6.

\textbf{(5) ThinkCoder}~\cite{thinkbeforecoding}: $k=20$ per iteration, each candidate prompted to generate 3 UTs,  temperature is 1.0. budget $n$ to 2, and we scale to 5/10/20

\textbf{(6) S*}~\cite{li2025s*}: we use the local S* pipeline with generated UT = 16, $N=16$ code samples per round, $\texttt{num\_round}=3$ and we scale to 6. The sampling temperature is $0.7$. 

\textbf{(7) BoN}: we set N = 16/64/128/256 for both code and UT. 
\begin{table}[htbp]
\centering
\caption{ Training Data Source and Base Model for baselines.}
\label{tab:RL_baselines}
\setlength{\tabcolsep}{2pt} 
\renewcommand{\arraystretch}{1.2}
\begin{tabular}{llll}
\toprule
\textbf{Model} & \textbf{Base model} & \textbf{Data} & \textbf{training}\\
\midrule
    AceCoder-7B-Coder-Rule & Qwen2.5-7B-Coder-Instruct & 22k code data & Yes\\
    AceCoder-7B-Coder-RM & Qwen2.5-7B-Coder-Instruct & 22k code data + RM(307k data) & Yes\\
    Absolute-Zero-Coder-7B & Qwen-2.5-7B-Coder & \textbf{0} & Yes\\
    CURE-7B & Qwen2.5-7B-Instruct & 4.5k CodeContests data & Yes\\
    CoSPlay-7B & Qwen2.5-7B-Instruct & \textbf{0} & \textbf{No}\\
    
    \midrule 
    
    Absolute-Zero-Coder-14B & Qwen-2.5-14B-Coder & \textbf{0} & Yes\\
    CURE-14B & Qwen2.5-14B-Instruct & 4.5k CodeContests data & Yes\\
    CoSPlay-14B & Qwen2.5-14B-Instruct & \textbf{0} & \textbf{No}\\
    
    \midrule

    AceCoder-RM-7B & Qwen2.5-Coder-7B-Instruct & 307k preference pairs & Yes\\
    AceCoder-RM-32B & Qwen2.5-Coder-32B-Instruct & 307k preference pairs & Yes\\

\bottomrule
\end{tabular}
\end{table}

\begin{table*}[!htbp]
    \centering
    \tiny
    \renewcommand{\arraystretch}{0.9}
    \caption{\textbf{Detailed TTS computational cost breakdown on Qwen2.5-7B-Instruct.} The columns are defined as follows: \textbf{Method} indicates the evaluated baseline or our proposed method; \textbf{Code} and \textbf{UT} denote the average numbers of generated code candidates and unit tests per problem, respectively; \textbf{Avg. Call} and \textbf{Avg. Token} measure the average number of LLM calls and generated tokens across all evaluated problems; \textbf{Succ. Call} and \textbf{Succ. Token} report the corresponding averages over successfully solved problems. Finally, \textbf{Pass@1} denotes the final pass@1 accuracy of each method; for CoSPlay variants, it is reported after applying the cluster step. \textbf{Bold} means the best performance.}
    \label{tab:TTS_cost_7B}
    
    \resizebox{\textwidth}{!}{%
    \begin{tabular}{@{} l *{7}{c} @{}}
        \toprule
        \multirow{2}{*}{\textbf{Method}} & \multirow{2}{*}{\textbf{Code}} & \multirow{2}{*}{\textbf{UT}} & \textbf{Avg.} & \textbf{Avg.} & \textbf{Succ.} & \textbf{Succ.} & \multirow{2}{*}{\textbf{Pass@1}} \\
        & & & \textbf{Call} & \textbf{Token} & \textbf{Call} & \textbf{Token} & \\
        \midrule
        
        \rowcolor[gray]{0.92} \multicolumn{8}{c}{\textbf{Qwen2.5-7B-Instruct}} \\
        \midrule
        Direct              & 1.0   & 0.0   & 1.0     & 628.3      & 1.0     & 515.6      & 22.7 \\
        BoN (N=16)          & 16.0  & 16.0  & 32.0    & 24,478.1   & 32.0    & 20,160.3   & 26.5 \\
        BoN (N=64)          & 64.0  & 64.0  & 128.0   & 90,952.9   & 128.0   & 72,583.1   & 24.8 \\
        BoN (N=128)         & 128.0 & 128.0 & 256.0   & 344,674.5  & 256.0   & 260,191.6  & 27.2 \\
        BoN (N=256)         & 256.0 & 256.0 & 512.0   & 745,385.6  & 512.0   & 555,398.1  & 22.5 \\
        SFS (Round 20)      & 24.1  & 6.0   & 43.7    & 17,501.6   & 33.2    & 11,634.4   & 31.0 \\
        SFS (Round 40)      & 37.3  & 6.0   & 69.0    & 28,677.3   & 43.4    & 15,823.4   & 31.8 \\
        SFS    (Round 80)     & 55.2 & 6.0   & 122.0  & 50,951.9   & 100.8 & 45,871.3 & 32.0 \\
        S* (Round 3) & 48.0 & 16.0 & 70.7 & 23,973.3 & 67.8 & 15,707.0 & 20.5 \\
        S* (Round 6) & 96.0 & 16.0 & 121.7 & 42,945.7 & 118.8 & 29,509.3 & 19.2 \\
        ThinkCoder (Round 2)         & 39.4 & 118.2 & 40.4 & 116,646.3 & 39.2 & 96,088.0 & 25.3 \\
        ThinkCoder (Round 5)        & 97.1  & 291.3  & 99.6    & 297,219.4  & 95.6    & 253,272.6  & 25.5 \\
        ThinkCoder (Round 10)         & 189.6 & 568.7 & 192.7 & 565,558.7 & 176.0 & 476,425.6 & 25.5 \\
        ThinkCoder (Round 20)         & 382.9 & 1148.6 & 386.9 & 1,156,593.2 & 361.8 & 971,276.7 & 27.8 \\
        CodeTree            & 15.6  & 5.8   & 38.2    & 13,173.5   & 35.3    & 9,068.0    & 15.7 \\
        CodeTree (Scale)  & 907.3 & 15.5  & 1,849.5 & 638,268.8  & 1,868.7 & 402,823.6  & 14.3 \\
        MPSC                & 60.0  & 99.5  & 6.8     & 104,709.6  & 6.9     & 97,773.8   & 25.3 \\
        MPSC (Scale) & 300.0 & 478.4 & 18.8    & 661,763.8  & 18.9    & 612,221.5  & 23.5 \\
        PowerSampling       & 21.2  & 0.0   & 49.3    & 14,762.1   & 41.2    & 9,957.4    & 25.0 \\
        PowerSampling + BoN & 338.7 & 338.7 & 1,477.2 & 416,238.5  & 1,264.3 & 303,952.7  & 27.5 \\
        \hline
        \textbf{CoSPlay (Ours)} & 38.5 & 206.0 & 776.1 & 682,292.9 & 603.4 & 479,437.9 & \textbf{37.2} \\
        \quad \textbf{w/o self-cons.} & 41.0 & 74.9 & 395.3 & 243,800.4 & 374.4 & 208,419.8 & 34.2 \\
        \quad \textbf{w/o self-cons. \& UT-iteration} & 41.4 & 24.0 & 198.4 & 90,291.4 & 183.8 & 68,641.2 & 32.5 \\
        
        \bottomrule
    \end{tabular}
    }
\end{table*}
\vspace{-1em}
\begin{table*}[!htbp]
    \centering
    \tiny
    \renewcommand{\arraystretch}{0.85}
    \caption{\textbf{Detailed TTS cost breakdown on Qwen2.5-14B-Instruct.} The column definitions follow Table~\ref{tab:TTS_cost_7B}.}
    \label{tab:TTS_cost_14B}
    
    \resizebox{\textwidth}{!}{%
    \begin{tabular}{@{} l *{7}{c} @{}}
    \toprule
    \multirow{2}{*}{\textbf{Method}} 
    & \multirow{2}{*}{\textbf{Code}} 
    & \multirow{2}{*}{\textbf{UT}} 
    & \textbf{Avg.} 
    & \textbf{Avg.} 
    & \textbf{Succ.} 
    & \textbf{Succ.} 
    & \multirow{2}{*}{\textbf{Pass@1}} \\
    & & & \textbf{Call} & \textbf{Token} & \textbf{Call} & \textbf{Token} & \\
    \midrule
    \rowcolor[gray]{0.92} \multicolumn{8}{c}{\textbf{Qwen2.5-14B-Instruct}} \\
    \midrule
        Direct              & 1.0   & 0.0   & 1.0    & 568.0      & 1.0    & 472.8      & 26.5 \\
        BoN (N=16)          & 16.0  & 16.0  & 32.0   & 22,102.9   & 32.0   & 19,505.9   & 36.8 \\
        BoN (N=64)          & 64.0  & 64.0  & 128.0  & 86,830.2   & 128.0  & 77,246.0   & 38.8 \\
        BoN (N=128)         & 128.0 & 128.0 & 256.0  & 328,727.6  & 256.0  & 293,841.5  & 40.5 \\
        BoN (N=256)         & 256.0 & 256.0 & 512.0  & 710,314.1  & 512.0  & 635,587.2  & 40.3 \\
        SFS (Round 20) & 26.8 & 6.0 & 50.6 & 19,341.6 & 47.1 & 15,658.6 & 36.3 \\
        SFS (Round 40) & 42.8 & 6.0 & 76.3 & 31,629.9 & 68.1 & 24,228.1 & 37.9 \\
        SFS (Round 80) & 60.7 & 6.0 & 131.9 & 56,802.1 & 157.7 & 59,382.7 & 38.7 \\
        S* (Round 3) & 48.0 & 16.0 & 51.3 & 23,787.8 & 51.1 & 18,855.8 & 31.7 \\
        S* (Round 6) & 96.0 & 16.0 & 118.3 & 45,672.6 & 114.6 & 35,259.8 & 31.9 \\
        ThinkCoder (Round 2)         & 39.3 & 117.8 & 39.3 & 64,189.4 & 39.3 & 57,505.0 & 35.8 \\
        ThinkCoder (Round 5)          & 96.8 & 290.5 & 96.8 & 165,598.4 & 93.8 & 148,618.3 & 36.2 \\
        ThinkCoder (Round 10)         & 190.8 & 572.5 & 190.8 & 379,948.4 & 184.1 & 370,315.7 & 38.5 \\
        ThinkCoder (Round 20)         & 375.9 & 1127.6 & 375.9 & 786,704.7 & 361.6 & 798,329.0 & 38.7 \\
        CodeTree            & 14.4  & 5.9   & 33.9   & 12,056.6   & 23.4   & 7,078.5    & 22.8 \\
        CodeTree (Scale) & 842.0 & 15.9 & 1711.4 & 525,068.9 & 1552.2 & 402,579.9 & 25.6 \\
        MPSC                & 60.0  & 93.8  & 6.5    & 126,328.7  & 6.8    & 142,208.8  & 27.7 \\
        MPSC (Scale) & 300.0 & 412.7 & 18.4   & 710,164.0  & 18.9   & 647,139.1  & 31.7 \\
        PowerSampling       & 18.2  & 0.0   & 43.1   & 11,451.3   & 37.2   & 8,488.4    & 27.2 \\
        PowerSampling + BoN & 282.6 & 282.6 & 1273.2 & 320,064.1  & 1,084.4 & 242,852.4 & 34.8 \\
        \hline
        \textbf{CoSPlay (Ours)} & 36.2 & 171.3 & 763.7 & 838,745.7 & 664.8 & 748,581.2 & \textbf{45.0} \\
        \quad \textbf{w/o self-cons.} & 39.0 & 83.7 & 328.4 & 254,158.5 & 314.8 & 232,620.7 & 43.7 \\
        \quad \textbf{w/o self-cons. \& UT-iteration} & 41.8 & 24.0 & 202.9 & 111,134.0 & 184.8 & 89,815.0 & 41.5 \\
        
        \bottomrule
    \end{tabular}%
    }
    
    
\end{table*}

\label{app:detailed_baseline_setting}

\section{Analysis of the cost and performance balance for TTS methods}
\label{app:tts_analysis}

\begin{table*}[t]
\centering
\scriptsize
\renewcommand{\arraystretch}{1.19}
\setlength{\tabcolsep}{2pt}
\caption{\textbf{Performance comparison of TTS methods}. \textbf{pass@1} reports the accuracy of the final selected solution. \textbf{Code} and \textbf{UT} denote the average numbers of sampled code candidates and sampled unit tests, respectively. \textbf{Avg.} represents the average performance across all four datasets. \textbf{Bold} means the best performance.}
\label{tab:tts_performance}

\resizebox{\textwidth}{!}{%
\begin{tabular}{lc|ccc|ccc|ccc|ccc|ccc}
\toprule
\textbf{Method} & \textbf{Venue} & \multicolumn{3}{c|}{\textbf{LB}} & \multicolumn{3}{c|}{\textbf{LCB}} & \multicolumn{3}{c|}{\textbf{CC}} & \multicolumn{3}{c|}{\textbf{CF}} & \multicolumn{3}{c}{\textbf{Avg.}} \\
& & pass@1 & \#Code & \#UT & pass@1 & \#Code & \#UT & pass@1 & \#Code & \#UT & pass@1 & \#Code & \#UT & pass@1 & \#Code & \#UT \\
\midrule

\rowcolor{gray!15}
\multicolumn{17}{l}{\textbf{Qwen2.5-7B-Instruct}} \\
\midrule

Direct & - & 36.0 & 1.0 & 0.0 & 27.1 & 1.0 & 0.0 & 21.3 & 1.0 & 0.0 & 6.5 & 1.0 & 0.0 & 22.7 & 1.0 & 0.0 \\
BoN (N=16) & - & 38.0 & 16.0 & 16.0 & 32.7 & 16.0 & 16.0 & 25.3 & 16.0 & 16.0 & 10.0 & 16.0 & 16.0 & 26.5 & 16.0 & 16.0 \\
BoN (N=64) & - & 37.3 & 64.0 & 64.0 & 27.3 & 64.0 & 64.0 & 26.0 & 64.0 & 64.0 & 8.7 & 64.0 & 64.0 & 24.8 & 64.0 & 64.0 \\
BoN (N=128) & - & 44.0 & 128.0 & 128.0 & 30.0 & 128.0 & 128.0 & 24.7 & 128.0 & 128.0 & 10.0 & 128.0 & 128.0 & 27.2 & 128.0 & 128.0 \\
BoN (N=256) & - & 36.0 & 256.0 & 256.0 & 24.0 & 256.0 & 256.0 & 24.0 & 256.0 & 256.0 & 6.0 & 256.0 & 256.0 & 22.5 & 256.0 & 256.0 \\
MPSC & ACL24 Long & 39.3 & 60.0 & 99.2 & 30.7 & 60.0 & 99.2 & 23.3 & 60.0 & 99.6 & 8.0 & 60.0 & 100.0 & 25.3 & 60.0 & 99.5 \\
MPSC (Scale) & ACL24 Long & 41.3 & 300.0 & 392.7 & 29.3 & 300.0 & 406.5 & 18.7 & 300.0 & 546.6 & 4.7 & 300.0 & 567.6 & 23.5 & 300.0 & 478.4 \\
CodeTree & NAACL25 Findings & 18.0 & 15.0 & 5.9 & 14.0 & 16.3 & 5.9 & 22.7 & 16.1 & 5.8 & 8.0 & 15.0 & 5.8 & 15.7 & 15.6 & 5.8 \\
CodeTree (Scale) & NAACL25 Findings & 15.0 & 973.8 & 15.7 & 13.3 & 933.6 & 15.7 & 20.7 & 924.6 & 15.4 & 8.1 & 797.2 & 15.3 & 14.3 & 907.3 & 15.5 \\
SFS (Round 20) & ICLR25 Poster & 38.0 & 21.4 & 6.0 & 38.0 & 19.7 & 6.0 & 33.3 & 25.8 & 6.0 & 14.7 & 29.4 & 6.0 & 31.0 & 24.1 & 6.0 \\
SFS (Round 40) & ICLR25 Poster & 38.0 & 31.9 & 6.0 & 40.0 & 27.6 & 6.0 & 34.7 & 41.0 & 6.0 & 14.7 & 48.6 & 6.0 & 31.8 & 37.3 & 6.0 \\
SFS (Round 80) & ICLR25 Poster & 38.0 & 33.5 & 6.0 & 40.0 & 32.0 & 6.0 & 34.7 & 69.8 & 6.0 & 15.3 & 85.5 & 6.0 & 32.0 & 55.2 & 6.0 \\
S* (Round 3) & EMNLP 2025 Findings & 27.3 & 48.0 & 16.0 & 26.7 & 48.0 & 16.0 & 20.7 & 48.0 & 16.0 & 7.3 & 48.0 & 16.0 & 20.5 & 48.0 & 16.0 \\
S* (Round 6) & EMNLP 2025 Findings & 26.0 & 96.0 & 16.0 & 26.0 & 96.0 & 16.0 & 20.0 & 96.0 & 16.0 & 4.7 & 96.0 & 16.0 & 19.2 & 96.0 & 16.0 \\
ThinkCoder (Round 2) & ACL25 Findings & 36.7 & 39.2 & 117.6 & 32.7 & 38.9 & 116.8 & 27.3 & 39.5 & 118.4 & 4.7 & 40.0 & 120.0 & 25.3 & 39.4 & 118.2 \\
ThinkCoder (Round 5) & ACL25 Findings & 36.7 & 96.4 & 289.2 & 33.3 & 94.4 & 283.2 & 26.7 & 97.6 & 292.8 & 5.3 & 100.0 & 300.0 & 25.5 & 97.1 & 291.3 \\
ThinkCoder (Round 10) & ACL25 Findings & 38.0 & 191.1 & 573.2 & 34.0 & 185.7 & 557.2 & 24.7 & 181.5 & 544.4 & 5.3 & 200.0 & 600.0 & 25.5 & 189.6 & 568.7 \\
ThinkCoder (Round 20) & ACL25 Findings & 38.7 & 378.9 & 1136.8 & 36.0 & 366.9 & 1100.8 & 29.3 & 387.6 & 1162.8 & 7.3 & 398.0 & 1194.0 & 27.8 & 382.9 & 1148.6 \\
PowerSampling & ICLR26 Oral & 37.3 & 20.6 & 0.0 & 31.3 & 23.6 & 0.0 & 22.0 & 20.3 & 0.0 & 9.3 & 20.3 & 0.0 & 25.0 & 21.2 & 0.0 \\
\quad + BoN & ICLR26 Oral & 38.0 & 308.0 & 308.0 & 34.0 & 332.9 & 332.9 & 27.3 & 345.2 & 345.2 & 10.7 & 368.6 & 368.6 & 27.5 & 338.7 & 338.7 \\

\midrule

\textbf{CoSPlay} & - & \textbf{51.3} & 28.9 & 194.2 & \textbf{41.3} & 29.6 & 193.6 & 36.0 & 36.7 & 208.7 & \textbf{18.0} & 58.9 & 227.7 & 36.7 & 38.5 & 206.0 \\
\quad + Cluster & - & \textbf{51.3} & 28.9 & 194.2 & \textbf{41.3} & 29.6 & 193.6 & \textbf{38.0} & 36.7 & 208.7 & \textbf{18.0} & 58.9 & 227.7 & \textbf{37.2} & 38.5 & 206.0 \\
\hline
\textbf{w/o self cons.} & - & 48.0 & 30.5 & 70.8 & 38.0 & 30.1 & 67.2 & 30.7 & 46.2 & 74.6 & 16.7 & 57.2 & 86.9 & 33.3 & 41.0 & 74.9 \\
\quad + Cluster & - & 50.0 & 30.5 & 70.8 & 39.3 & 30.1 & 67.2 & 30.7 & 46.2 & 74.6 & 16.7 & 57.2 & 86.9 & 34.2 & 41.0 & 74.9 \\
\hline
\textbf{w/o  self cons. \& UT-iteration} & - & 43.3 & 30.9 & 24.0 & 34.0 & 29.8 & 24.0 & 33.3 & 41.5 & 24.0 & 16.7 & 63.4 & 24.0 & 31.8 & 41.4 & 24.0 \\
\quad + Cluster & - & 44.7 & 30.9 & 24.0 & 34.7 & 29.8 & 24.0 & 34.0 & 41.5 & 24.0 & 16.7 & 63.4 & 24.0 & 32.5 & 41.4 & 24.0 \\

\midrule

\rowcolor{gray!15}
\multicolumn{17}{l}{\textbf{Qwen2.5-14B-Instruct}} \\
\midrule

Direct & - & 41.0 & 1.0 & 0.0 & 34.2 & 1.0 & 0.0 & 22.9 & 1.0 & 0.0 & 7.9 & 1.0 & 0.0 & 26.5 & 1.0 & 0.0 \\
BoN (N=16) & - & 56.0 & 16.0 & 16.0 & 44.0 & 16.0 & 16.0 & 31.3 & 16.0 & 16.0 & 16.0 & 16.0 & 16.0 & 36.8 & 16.0 & 16.0 \\
BoN (N=64) & - & 58.7 & 64.0 & 64.0 & 46.7 & 64.0 & 64.0 & 31.3 & 64.0 & 64.0 & 18.7 & 64.0 & 64.0 & 38.8 & 64.0 & 64.0 \\
BoN (N=128) & - & 60.7 & 128.0 & 128.0 & 45.3 & 128.0 & 128.0 & 34.7 & 128.0 & 128.0 & 21.3 & 128.0 & 128.0 & 40.5 & 128.0 & 128.0 \\
BoN (N=256) & - & 60.0 & 256.0 & 256.0 & 45.3 & 256.0 & 256.0 & 36.0 & 256.0 & 256.0 & 20.0 & 256.0 & 256.0 & 40.3 & 256.0 & 256.0 \\
MPSC & ACL24 Long & 42.0 & 60.0 & 99.2 & 33.3 & 60.0 & 88.6 & 23.3 & 60.0 & 99.3 & 12.0 & 60.0 & 88.0 & 27.7 & 60.0 & 93.8 \\
MPSC (Scale) & ACL24 Long & 44.7 & 300.0 & 389.8 & 42.7 & 300.0 & 347.0 & 26.7 & 300.0 & 451.7 & 12.7 & 300.0 & 462.4 & 31.7 & 300.0 & 412.7 \\
CodeTree & NAACL25 Findings & 28.7 & 10.7 & 6.0 & 31.3 & 11.7 & 6.0 & 24.7 & 14.0 & 5.9 & 6.7 & 21.4 & 5.9 & 22.8 & 14.4 & 5.9 \\
CodeTree (Scale) & NAACL25 Findings & 33.3 & 839.7 & 15.8 & 34.2 & 846.4 & 15.9 & 26.4 & 829.2 & 15.9 & 8.6 & 852.9 & 15.9 & 25.6 & 842.0 & 15.9 \\
SFS (Round 20) & ICLR25 Poster & 40.0 & 26.5 & 6.0 & 44.7 & 23.3 & 6.0 & 38.0 & 28.2 & 6.0 & 22.7 & 29.2 & 6.0 & 36.3 & 26.8 & 6.0 \\
SFS (Round 40) & ICLR25 Poster & 42.7 & 42.5 & 6.0 & 46.0 & 35.0 & 6.0 & 38.7 & 45.5 & 6.0 & 24.0 & 48.0 & 6.0 & 37.9 & 42.8 & 6.0 \\
SFS (Round 80) & ICLR25 Poster & 44.7 & 43.9 & 6.0 & 46.7 & 37.7 & 6.0 & 38.7 & 75.5 & 6.0 & 24.7 & 85.8 & 6.0 & 38.7 & 60.7 & 6.0 \\

S* (Round 3) & EMNLP 2025 Findings & 44.7 & 48.0 & 16.0 & 34.7 & 48.0 & 16.0 & 32.7 & 48.0 & 16.0 & 14.7 & 48.0 & 16.0 & 31.7 & 48.0 & 16.0 \\
S* (Round 6) & EMNLP 2025 Findings & 45.0 & 96.0 & 16.0 & 34.7 & 96.0 & 16.0 & 33.0 & 96.0 & 16.0 & 14.7 & 96.0 & 16.0 & 31.9 & 96.0 & 16.0 \\
ThinkCoder (Round 2) & ACL25 Findings & 53.3 & 38.8 & 116.4 & 46.0 & 39.5 & 118.4 & 29.3 & 39.2 & 117.6 & 14.7 & 39.6 & 118.8 & 35.8 & 39.3 & 117.8 \\
ThinkCoder (Round 5) & ACL25 Findings & 52.0 & 94.3 & 282.8 & 46.0 & 97.9 & 293.6 & 30.7 & 96.8 & 290.4 & 16.0 & 98.4 & 295.2 & 36.2 & 96.8 & 290.5 \\
ThinkCoder (Round 10) & ACL25 Findings & 55.3 & 186.3 & 558.8 & 46.7 & 189.3 & 568.0 & 34.7 & 191.3 & 574.0 & 17.3 & 196.4 & 589.2 & 38.5 & 190.8 & 572.5 \\
ThinkCoder (Round 20) & ACL25 Findings & 55.3 & 366.9 & 1100.8 & 46.0 & 369.2 & 1107.6 & 34.0 & 377.1 & 1131.2 & 19.3 & 390.3 & 1170.8 & 38.7 & 375.9 & 1127.6 \\
PowerSampling & ICLR26 Oral & 41.3 & 15.3 & 0.0 & 36.7 & 16.1 & 0.0 & 21.3 & 22.0 & 0.0 & 9.3 & 19.2 & 0.0 & 27.2 & 18.2 & 0.0 \\
PowerSampling + BoN & ICLR26 Oral & 46.0 & 252.2 & 252.2 & 44.0 & 263.9 & 263.9 & 35.3 & 316.1 & 316.1 & 14.0 & 298.4 & 298.4 & 34.8 & 282.6 & 282.6 \\

\midrule

\textbf{CoSPlay} & - & 58.7 & 25.7 & 158.0 & 52.7 & 27.7 & 156.8 & \textbf{41.5} & 35.5 & 171.0 & 26.0 & 56.0 & 199.5 & 44.7 & 36.2 & 171.3 \\
\quad + Cluster & - & \textbf{59.0} & 25.7 & 158.0 & \textbf{53.0} & 27.7 & 156.8 & \textbf{41.5} & 35.5 & 171.0 & \textbf{26.7} & 56.0 & 199.5 & \textbf{45.0} & 36.2 & 171.3 \\
\hline
\textbf{w/o self-cons.} & - & 57.3 & 28.2 & 82.8 & 48.7 & 28.7 & 79.7 & 42.0 & 39.3 & 83.1 & 25.7 & 59.6 & 89.1 & 43.4 & 39.0 & 83.7 \\
\quad + Cluster & - & 58.0 & 28.2 & 82.8 & 48.7 & 28.7 & 79.7 & 42.0 & 39.3 & 83.1 & 26.0 & 59.6 & 89.1 & 43.7 & 39.0 & 83.7 \\
\hline
\textbf{w/o self-cons. \& UT-iteration} & - & 56.0 & 29.4 & 24.0 & 49.3 & 32.3 & 24.0 & 32.7 & 40.8 & 24.0 & 26.0 & 64.5 & 24.0 & 41.1 & 41.8 & 24.0 \\
\quad + Cluster & - & 56.7 & 29.4 & 24.0 & 50.0 & 32.3 & 24.0 & 33.3 & 40.8 & 24.0 & 26.0 & 64.5 & 24.0 & 41.5 & 41.8 & 24.0 \\

\bottomrule
\end{tabular}%
}
\vspace{-10pt}
\end{table*}

Table~\ref{tab:tts_performance} reports the per-dataset pass@1 results of each TTS method, while Table~\ref{tab:TTS_cost_7B} and Table~\ref{tab:TTS_cost_14B} summarize their computational costs on Qwen2.5-7B-Instruct and Qwen2.5-14B-Instruct, respectively. 
Overall, CoSPlay achieves the best pass@1 accuracy on both model scales, improving the strongest competing TTS baseline from 32.0\% to 37.2\% on 7B and from 40.5\% to 45.0\% on 14B. 
More importantly, this gain is not obtained by simply sampling more programs. 
On 7B, several scaled baselines use comparable or even larger token budgets than CoSPlay but achieve much lower accuracy: BoN (N=256) uses 745K tokens and obtains 22.5\%, MPSC (Scale) uses 662K tokens and obtains 23.5\%, CodeTree (Scale) uses 638K tokens and obtains 14.3\%, and ThinkCoder (Round 20) uses 1.16M tokens but only reaches 27.8\%. 
In contrast, CoSPlay uses 682K tokens and reaches 37.2\%, showing that its advantage comes from more effective use of test-time computation rather than brute-force scaling.

The ablation results further show a favorable efficiency--performance trade-off. 
The full CoSPlay pipeline includes two token-consuming components: self-consistency validation for UT generation and iterative UT updating during self-play. 
Removing self-consistency reduces the 7B token cost from 682K to 244K while still achieving 34.2\% pass@1, which remains higher than all competing TTS baselines. 
Further removing UT-iteration reduces the cost to 90K tokens and 198 calls, yet the method still achieves 32.5\% pass@1, slightly outperforming the strongest baseline SFS (Round 80) at 32.0\%. 
A similar trend holds on 14B: the lightweight variant reduces the token cost from 839K to 111K and the number of calls from 763.7 to 202.9, while still reaching 41.5\% pass@1, above the strongest competing baseline. 
These results suggest that even when the most expensive modules are removed, the core execution-guided design of CoSPlay remains more efficient than existing TTS strategies under comparable budgets.

This comparison is especially important under our GT-free setting. 
Several prior TTS methods were originally designed to benefit from reliable public tests or external feedback, but in our setting they must rely only on self-generated UTs. 
Without guaranteed-correct tests, their search or refinement signals can become noisy, and scaling them does not necessarily improve performance. 
For example, on 7B, S* drops from 20.5\% at Round 3 to 19.2\% at Round 6, CodeTree and CodeTree (Scale) remain below direct sampling, and scaling MPSC from 60 code candidates to 300 reduces pass@1 from 25.3\% to 23.5\%. 
These results indicate that naively increasing the number of generated codes, UTs, or search steps can amplify noisy self-generated feedback rather than improve selection.

CoSPlay addresses this issue by spending additional computation on improving the quality of both sides of the execution matrix. 
During UT generation, exploration-attack ideas produce more targeted tests for potential failure cases. 
During self-play, low-value codes are pruned, suspicious low-support UTs are refreshed, buggy codes are refined using high-support non-trivial UTs, and zero-discrimination UTs are replaced. 
Thus, the extra computation is used to strengthen the verification signal itself, rather than merely increasing the sample count. 
This explains why the full method outperforms scaled TTS baselines at similar or larger budgets, while its lightweight variants still preserve strong accuracy with substantially lower token usage.

Finally, all competing baselines are evaluated under generous budgets. 
Methods with built-in stopping criteria, such as SFS~\cite{sfs} and CodeTree~\cite{codertee}, are allowed to run until convergence. 
For methods without automatic stopping, we assign large sampling or search budgets to avoid under-exploration. 
Therefore, the comparisons in Table~\ref{tab:tts_performance}, Table~\ref{tab:TTS_cost_7B}, and Table~\ref{tab:TTS_cost_14B} should be understood as comparisons against strong scaled configurations of existing TTS baselines. 
Detailed hyperparameter and budget configurations are provided in Appendix~\ref{app:tts_setting}.

\section{Detailed data about generalization of our method}
\label{app:detailed generalization}
As shown in Table~\ref{tab:generalization_combined}, we provide a detailed breakdown of CoSPlay on different instruct and RLVR-tuned backbones. The evaluated models span from 7B-scale models to DeepSeek-V3.2-685B, allowing us to examine whether the proposed Code-UT co-evolution mechanism transfers across model scales and training paradigms. Averaged over all evaluated backbones, CoSPlay w/ Cluster improves BoN from 37.8\% to 45.3\%, corresponding to a +7.5\% absolute gain.

The table shows three main trends. First, CoSPlay brings large gains on most mid-sized models. For instance, Seed-Coder-8B-Instruct improves from 32.3\% to 42.3\%, and DeepSeek-Coder-V2-Lite-Instruct-16B improves from 30.8\% to 41.3\%. Similar improvements are observed on RLVR-tuned 7B models: Absolute-Zero-7B increases from 24.0\% to 30.2\%, AceCoder-Rule-7B from 29.0\% to 39.5\%, and AceCoder-RM-7B from 29.3\% to 40.5\%. These results indicate that CoSPlay is not limited to one specific backbone type, but can serve as an inference-time complement to both instruction tuning and RLVR-style post-training.

Second, CoSPlay remains useful when strong models still face difficult benchmarks. DeepSeek-V3.2-685B already has a strong average baseline BoN of 65.7\%, but CoSPlay w/ Cluster further improves it to 68.2\%. The improvement is most visible on CodeForces, where BoN increases from 39.3\% to 50.0\%. This suggests that even very strong models can benefit from additional self-generated UTs when the task remains hard and the candidate pool contains many plausible but incorrect solutions.

Third, the table also reveals two boundary cases. On easier benchmarks that are already within the backbone's capability, extra Code-UT self-play may disturb a near-optimal solution and hurt performance. For example, on LiveBench, Gemini-2.0-Flash decreases from 68.0\% to 64.7\% after CoSPlay, and CoSPlay w/ Cluster recovers only to 66.0\%, still below the baseline. DeepSeek-V3.2-685B also drops from 85.1\% to 78.4\% on LiveBench after CoSPlay. In the opposite case, when the backbone is too weak to cover the data patterns of a difficult benchmark, the initial code and UT generations may be too noisy to provide reliable self-play signals. This can be seen from Absolute-Zero-7B on CodeForces, where CoSPlay w/ Cluster remains below the 11.3\% baseline BoN.

Overall, these detailed results suggest that CoSPlay is most effective when the initial Code-UT signal is useful but not saturated. In such settings, execution feedback can guide code and UTs to improve each other. When the baseline is already near-optimal on an easier benchmark, the extra self-play process may introduce unnecessary perturbations; when the initial model is too weak on a difficult benchmark, the signal may be too noisy to bootstrap reliable co-evolution. Detailed analysis are provided in Section~\ref{generalization}

\begin{table*}[t]
    \centering
    \scriptsize
    \setlength{\tabcolsep}{2pt} 
    \renewcommand{\arraystretch}{1.2} 
    
    \caption{Generalization capability of CoSPlay across different \textbf{base models} and \textbf{RL-tuned models}. \textbf{UT} and \textbf{Code} represent the average accuracy of generated unit tests and code solutions, respectively. \textbf{Signal} denotes the quality of the generated UTs as selection signals. \textbf{BoN} reports the accuracy of the final selected code. The \textbf{baseline} evaluated here represents the model's standard \textbf{BoN} performance. \textbf{Bold} means the best performance.}
    \label{tab:generalization_combined}
    \resizebox{\textwidth}{!}{%
    \begin{tabular}{l|cccc|cccc|cccc|cccc|cccc}
        \toprule
        \multirow{2}{*}{\textbf{Method}} &
        \multicolumn{4}{c|}{\textbf{LiveBench}} &
        \multicolumn{4}{c|}{\textbf{LiveCodeBench}} &
        \multicolumn{4}{c|}{\textbf{CodeContests}} &
        \multicolumn{4}{c|}{\textbf{CodeForces}} &
        \multicolumn{4}{c}{\textbf{Average}} \\ 
         & \textbf{Signal} & \textbf{UT} & \textbf{Code} & \textbf{BoN}
         & \textbf{Signal} & \textbf{UT} & \textbf{Code} & \textbf{BoN}
         & \textbf{Signal} & \textbf{UT} & \textbf{Code} & \textbf{BoN}
         & \textbf{Signal} & \textbf{UT} & \textbf{Code} & \textbf{BoN}
         & \textbf{Signal} & \textbf{UT} & \textbf{Code} & \textbf{BoN} \\
        \midrule
        \midrule
        \multicolumn{21}{c}{\textbf{Base Models}} \\
        \hline
        
        \multicolumn{21}{l}{\textit{\textbf{Seed-Coder-8B-Instruct}}} \\
        Baseline & {48.0} & 26.8 & 40.7 & 50.7 & {32.0} & 26.5 & 32.4 & 38.7 & {28.7} & 21.4 & 16.6 & 26.7 & {10.0} & 13.5 & 6.7 & 13.3 & {29.7} & 22.1 & 24.1 & 32.3 \\
        \textbf{CoSPlay (Ours)} & {50.7} & \textbf{63.3} & \textbf{47.5} & \textbf{56.0} & \textbf{42.7} & \textbf{75.9} & \textbf{42.2} & \textbf{52.7} & \textbf{34.0} & \textbf{77.4} & \textbf{29.5} & \textbf{35.3} & \textbf{17.3} & \textbf{93.1} & \textbf{13.5} & \textbf{25.3} & {36.2} & \textbf{77.4} & \textbf{33.2} & \textbf{42.3} \\
        \quad \textbf{+ Cluster} & \textbf{52.0} & \textbf{63.3} & \textbf{47.5} & \textbf{56.0} & \textbf{42.7} & \textbf{75.9} & \textbf{42.2} & \textbf{52.7} & \textbf{34.0} & \textbf{77.4} & \textbf{29.5} & \textbf{35.3} & \textbf{17.3} & \textbf{93.1} & \textbf{13.5} & \textbf{25.3} & \textbf{36.5} & \textbf{77.4} & \textbf{33.2} & \textbf{42.3} \\
        
        \midrule 
        \multicolumn{21}{l}{\textit{\textbf{DeepSeek-Coder-V2-Lite-Instruct-16B}}} \\
        Baseline & {48.7} & 27.4 & 30.8 & 38.7 & {38.0} & 39.9 & 28.4 & 34.7 & \textbf{34.0} & 51.9 & 22.8 & 34.0 & {12.7} & 27.3 & 7.9 & 16.0 & {33.3} & 36.6 & 22.5 & 30.8 \\
        \textbf{CoSPlay (Ours)} & {52.7} & \textbf{68.7} & \textbf{42.0} & 48.0 & \textbf{43.3} & \textbf{86.6} & \textbf{43.5} & 48.0 & {32.0} & \textbf{82.8} & \textbf{35.8} & 40.7 & \textbf{17.3} & \textbf{92.9} & \textbf{15.4} & \textbf{24.0} & \textbf{36.3} & \textbf{82.7} & \textbf{34.2} & 40.2 \\
        \quad \textbf{+ Cluster} & \textbf{53.3} & \textbf{68.7} & \textbf{42.0} & \textbf{51.3} & \textbf{43.3} & \textbf{86.6} & \textbf{43.5} & \textbf{48.7} & {31.3} & \textbf{82.8} & \textbf{35.8} & \textbf{41.3} & {16.7} & \textbf{92.9} & \textbf{15.4} & \textbf{24.0} & {36.2} & \textbf{82.7} & \textbf{34.2} & \textbf{41.3} \\
        
        \midrule
        
        \multicolumn{21}{l}{\textit{\textbf{DeepSeek-V3.2-685B}}} \\
        Baseline & \textbf{58.7} & \textbf{82.1} & 68.5 & \textbf{77.3} & {45.3} & \textbf{85.1} & 75.7 & \textbf{82.7} & \textbf{38.0} & 76.3 & 58.8 & 63.3 & \textbf{18.0} & 52.4 & 23.2 & 39.3 & \textbf{40.0} & 74.0 & 56.6 & 65.7 \\
        \textbf{CoSPlay (Ours)} & {57.3} & 74.0 & \textbf{76.9} & 76.7 & {46.0} & 83.2 & \textbf{78.4} & 79.3 & {35.3} & \textbf{76.9} & \textbf{60.2} & 66.0 & {16.7} & \textbf{83.7} & \textbf{39.4} & 48.0 & {38.8} & \textbf{79.5} & \textbf{63.8} & 67.5 \\
        \quad \textbf{+ Cluster} & {57.3} & 74.0 & \textbf{76.9} & 76.0 & \textbf{46.7} & 83.2 & \textbf{78.4} & 80.0 & {36.0} & \textbf{76.9} & \textbf{60.2} & \textbf{66.7} & {16.7} & \textbf{83.7} & \textbf{39.4} & \textbf{50.0} & {39.2} & \textbf{79.5} & \textbf{63.8} & \textbf{68.2} \\

        \midrule
        
        \multicolumn{21}{l}{\textit{\textbf{Gemini-2.0-Flash}}} \\
        Baseline & \textbf{56.7} & 50.3 & 59.8 & \textbf{68.0} & {42.0} & 50.2 & 56.6 & 62.0 & {30.7} & 36.2 & 44.9 & 48.7 & \textbf{14.7} & 26.4 & 33.0 & 36.7 & \textbf{36.0} & 40.8 & 48.6 & 53.8 \\
        \textbf{CoSPlay (Ours)} & {51.3} & \textbf{76.2} & \textbf{63.2} & 64.7 & {42.7} & \textbf{83.5} & \textbf{61.5} & 63.3 & \textbf{32.7} & \textbf{67.0} & \textbf{48.7} & 50.7 & {13.3} & \textbf{69.3} & \textbf{35.3} & \textbf{40.7} & {35.0} & \textbf{74.0} & \textbf{52.2} & 54.8 \\
        \quad \textbf{+ Cluster} & {53.3} & \textbf{76.2} & \textbf{63.2} & 66.0 & \textbf{43.3} & \textbf{83.5} & \textbf{61.5} & \textbf{65.3} & \textbf{32.7} & \textbf{67.0} & \textbf{48.7} & \textbf{51.3} & {14.0} & \textbf{69.3} & \textbf{35.3} & 38.7 & {35.8} & \textbf{74.0} & \textbf{52.2} & \textbf{55.3} \\
        
        \midrule
        \midrule
        
        \multicolumn{21}{c}{\textbf{RL-tuned Models}} \\
        \hline

        \multicolumn{21}{l}{\textit{\textbf{Absolute-Zero-7B}}} \\
        Baseline & {44.0} & 19.3 & 11.6 & 36.7 & {32.7} & 31.0 & 11.6 & 24.7 & {28.0} & 28.5 & 8.9 & 23.3 & \textbf{11.3} & 15.7 & 4.8 & \textbf{11.3} & {29.0} & 23.6 & 9.2 & 24.0 \\
        \textbf{CoSPlay (Ours)} & {42.7} & \textbf{80.4} & \textbf{33.0} & 40.7 & {34.7} & \textbf{84.9} & \textbf{28.1} & 33.3 & \textbf{30.7} & \textbf{88.9} & \textbf{21.7} & 23.3 & {10.0} & \textbf{84.9} & \textbf{6.8} & 10.0 & {29.5} & \textbf{84.8} & \textbf{22.4} & 26.8 \\
        \quad \textbf{+ Cluster} & \textbf{52.0} & \textbf{80.4} & \textbf{33.0} & \textbf{47.3} & \textbf{36.0} & \textbf{84.9} & \textbf{28.1} & \textbf{36.7} & {29.3} & \textbf{88.9} & \textbf{21.7} & \textbf{26.7} & {9.3} & \textbf{84.9} & \textbf{6.8} & 10.0 & \textbf{31.7} & \textbf{84.8} & \textbf{22.4} & \textbf{30.2} \\


        \midrule
                
        \multicolumn{21}{l}{\textit{\textbf{AceCoder-Rule-7B}}} \\
        Baseline & {44.0} & 13.9 & 40.7 & 46.0 & {29.3} & 15.9 & 34.0 & 36.0 & {27.3} & 5.3 & 23.3 & 25.3 & {10.0} & 3.0 & 6.9 & 8.7 & {27.7} & 9.5 & 26.2 & 29.0 \\
        \textbf{CoSPlay (Ours)} & {51.3} & \textbf{64.6} & \textbf{45.2} & \textbf{54.7} & {43.3} & \textbf{69.7} & \textbf{40.0} & \textbf{51.3} & {33.3} & \textbf{72.7} & \textbf{25.5} & \textbf{34.0} & \textbf{17.3} & \textbf{87.4} & \textbf{9.6} & \textbf{19.3} & {36.3} & \textbf{73.6} & \textbf{30.1} & \textbf{39.8} \\
        \quad \textbf{+ Cluster} & \textbf{52.7} & \textbf{64.6} & \textbf{45.2} & \textbf{54.7} & \textbf{44.7} & \textbf{69.7} & \textbf{40.0} & \textbf{51.3} & \textbf{34.0} & \textbf{72.7} & \textbf{25.5} & 33.3 & \textbf{17.3} & \textbf{87.4} & \textbf{9.6} & 18.7 & \textbf{37.2} & \textbf{73.6} & \textbf{30.1} & 39.5 \\
        
        \midrule
        
        \multicolumn{21}{l}{\textit{\textbf{AceCoder-RM-7B}}} \\
        Baseline & {43.3} & 15.1 & 40.9 & 48.7 & {29.3} & 17.2 & 34.0 & 38.0 & {27.3} & 8.5 & 22.4 & 25.3 & {8.0} & 7.1 & 5.8 & 5.3 & {27.0} & 12.0 & 25.8 & 29.3 \\
        \textbf{CoSPlay (Ours)} & {54.0} & \textbf{61.3} & \textbf{47.0} & \textbf{56.7} & {43.3} & \textbf{70.7} & \textbf{39.8} & 47.3 & {31.3} & \textbf{73.5} & \textbf{25.8} & \textbf{34.7} & {16.0} & \textbf{85.2} & \textbf{10.4} & \textbf{21.3} & {36.2} & \textbf{72.7} & \textbf{30.8} & 40.0 \\
        \quad \textbf{+ Cluster} & \textbf{56.7} & \textbf{61.3} & \textbf{47.0} & 56.0 & \textbf{44.0} & \textbf{70.7} & \textbf{39.8} & \textbf{50.0} & \textbf{32.6} & \textbf{73.5} & \textbf{25.8} & \textbf{34.7} & \textbf{17.3} & \textbf{85.2} & \textbf{10.4} & \textbf{21.3} & \textbf{37.7} & \textbf{72.7} & \textbf{30.8} & \textbf{40.5} \\

        \midrule     
        \bottomrule
    \end{tabular}%
    }
\end{table*}

\section{Dataset Overview}
\label{app:dataset}
\textbf{LiveBench.}
LiveBench is a dynamic and contamination-limited benchmark designed for broad LLM evaluation, with tasks spanning multiple domains such as math, coding, reasoning, language, instruction following, and data analysis~\cite{white2024livebench}. A key feature of LiveBench is that it is updated frequently and emphasizes objective, automatically checkable evaluation, making it well suited for reducing benchmark contamination and maintaining difficulty over time. In our work, we use its coding subset (128 problems) as a challenging evaluation setting for measuring code reasoning and functional correctness under a more up-to-date benchmark distribution.

\textbf{LiveCodeBench.}
We use LiveCodeBench v2~\cite{jain2024livecodebench}, a contamination-controlled benchmark for evaluating code generation and related code abilities of LLMs. It contains 511 problems released between May 2023 and May 2024, collected from recent programming contests on LeetCode, AtCoder, and CodeForces. Since each problem is tagged with its original release time, the benchmark supports temporally controlled evaluation and helps reduce train-test leakage. The problems are drawn from real contests and have been validated by large numbers of participants, which helps ensure high data quality. 

\textbf{CodeContests.}
CodeContests is a competitive-programming dataset introduced with AlphaCode~\cite{li2022codecontests}. It is constructed from Codeforces, Description2Code, and APPS. In our experiments, we use a subset of 239 problems sampled from the original dataset provided by CURE~\cite{wang2025cure}.

\textbf{CodeForces.}
We use the \texttt{open-r1/codeforces} dataset~\cite{penedo2025codeforces}, which is built from CodeForces, one of the most popular competitive-programming platforms. The full dataset contains more than 10k unique problems collected from contests spanning the earliest rounds to 2025. In our experiments, we use a randomly sampled subset of 467 problems with difficulty levels below 2 provided by CURE~\cite{wang2025cure}. A key feature of this dataset is its emphasis on verifiability: for problems with multiple valid outputs, it provides custom checker programs, and it also includes additional challenging test cases beyond the truncated examples typically visible on the contest platform. These properties make it well suited for evaluating code reasoning and functional correctness on realistic competitive-programming tasks.

    
    
\begin{table*}[t]
    \centering
    \scriptsize
    \caption{Ablation study of CoSPlay-7B.}
    \label{tab:ablation_minibench}
    \renewcommand{\arraystretch}{1.15}
    \setlength{\tabcolsep}{1.3pt}
    \resizebox{\textwidth}{!}{%
    \begin{tabular}{l|cccc|cccc|cccc|cccc|cccc}
        \toprule
        \multirow{2}{*}{\textbf{Model Variant}} &
        \multicolumn{4}{c|}{\textbf{LiveBench}} &
        \multicolumn{4}{c|}{\textbf{LiveCodeBench}} &
        \multicolumn{4}{c|}{\textbf{CodeContests}} &
        \multicolumn{4}{c|}{\textbf{CodeForces}} &
        \multicolumn{4}{c}{\textbf{Average}} \\
        & \textbf{Signal} & \textbf{UT} & \textbf{Code} & \textbf{BoN}
        & \textbf{Signal} & \textbf{UT} & \textbf{Code} & \textbf{BoN}
        & \textbf{Signal} & \textbf{UT} & \textbf{Code} & \textbf{BoN}
        & \textbf{Signal} & \textbf{UT} & \textbf{Code} & \textbf{BoN}
        & \textbf{Signal} & \textbf{UT} & \textbf{Code} & \textbf{BoN} \\
        \midrule
        \textit{\textbf{Full}} & {54.7} & 71.9 & 41.3 & 51.3 & {40.7} & 72.7 & 31.6 & 41.3 & {34.0} & 77.4 & 23.7 & 36.0 & {16.7} & 92.8 & 9.4 & 18.0 & {36.5} & 78.7 & 26.5 & 36.7 \\
        w/o exp-atk & {42.0} & 29.4 & 41.7 & 48.0 & {26.7} & 32.5 & 30.4 & 32.7 & {27.3} & 42.1 & 25.7 & 30.7 & {7.3} & 26.2 & 7.6 & 10.0 & {25.8} & 32.5 & 26.3 & 30.3 \\
        w/o self-play & {49.3} & 53.0 & 30.9 & 50.0 & {36.7} & 60.4 & 24.3 & 40.7 & {32.7} & 59.4 & 18.8 & 29.3 & {17.3} & 50.0 & 5.6 & 12.0 & {34.0} & 55.7 & 19.9 & 33.0 \\
        w/o random UT & {51.3}& 65.3 & 41.8 & 52.7 & {40.7} & 74.3 & 31.9 & 40.0 & {34.7}  & 76.4 & 24.5 & 31.3 & {17.3} & 88.0 & 8.2 & 18.0 & {36.0} & 76.0 & 26.6 & 35.5 \\
        w/o non-trivial best & {53.3} & 71.7 & 42.1 & 50.7 & {42.0} & 72.7 & 32.3 & 37.3 & {32.7} & 77.7 & 25.4 & 32.7 & {16.7} & 91.6 & 10.4 & 19.3 & {36.2} & 78.4 & 27.6 & 35.0 \\
        w/o self-consis. & {51.3} & 55.4 & 41.9 & 48.0 & {40.0} & 64.4 & 30.9 & 38.0 & {35.3} & 62.6 & 23.0 & 30.7 & {16.0} & 72.2 & 8.2 & 16.7 & {35.7} & 63.7 & 26.0 & 33.3 \\
        w/o self-play step2 & {50.7} & 63.1 & 42.1 & 49.3 & {38.0} & 72.4 & 31.7 & 37.3 & {34.7} & 74.7 & 24.9 & 37.3 & {16.7} & 86.8 & 8.0 & 16.0 & {35.0} & 74.2 & 26.7 & 35.0 \\
        w/o self-play step3 & {52.0} & 63.0 & 39.7 & 52.7 & {41.3} & 76.0 & 29.8 & 40.0 & {32.7} & 70.9 & 22.1 & 32.0 & {17.3} & 85.8 & 8.6 & 18.7 & {35.8} & 73.9 & 25.1 & 35.8 \\
        w/o resample-UT 0\% & {48.7} & 33.6 & 42.2 & 46.7 & {38.0} & 48.1 & 31.9 & 38.0 & {32.7} & 43.5 & 24.4 & 28.0 & {14.7} & 30.7 & 9.0 & 16.7 & {33.5} & 38.9 & 26.9 & 32.3 \\
        w/o resample-UT 100\% &{52.7} & 77.6 & 41.3 & 47.3 & {42.0} & 88.0 & 32.0 & 40.0 & {33.3} & 83.8 & 24.3 & 33.3 & {16.7} & 86.4 & 8.7 & 16.7 & {36.2} & 83.9 & 26.6 & 34.3 \\
        w/o resample-Code 0\% & {51.3} & 77.2 & 33.8 & 48.0 & {42.7} & 80.7 & 25.5 & 38.0 & {31.3} & 73.4 & 22.4 & 32.7 & {16.7} & 91.0 & 7.0 & 14.7 & {35.5} & 80.6 & 22.2 & 33.3 \\
        w/o UT-iteration & {49.3} & 50.3 & 40.4 & 51.3 & {35.3} & 61.5 & 29.2 & 38.0 & {32.0} & 53.6 & 23.1 & 32.7 & {16.7} & 47.7 & 9.0 & 18.7 & {33.3} & 53.3 & 25.4 & 35.2 \\
        w/o self-play \& self-consis. & {49.3}& 38.3 & 33.3 & 46.7 & {38.7} & 49.6 & 21.3 & 36.0 & {32.0} & 30.5 & 18.7 & 24.7 & {14.7} & 30.3 & 5.3 & 13.3 & {33.7} & 37.2 & 19.7 & 30.2 \\
        baseline(Direct resample) & 38.0 & 8.2 & 36.0 & 38.0 & 32.7 & 13.9 & 27.1 & 32.7 & 25.3 & 15.1 & 21.3 & 25.3 & 10.0 & 12.6 & 6.5 & 10.0 & 26.5 & 12.5 & 22.7 & 26.5 \\
        \bottomrule
    \end{tabular}%
    }
\end{table*}

\section{Detailed Ablation Results}
\label{app:Detailed ablation results}

\subsection{Ablation Variant Descriptions}
We evaluate the contribution of each component through the following ablation variants.
\textbf{w/o exp-atk} removes Stage~1 entirely, generating codes and UTs directly without solution-aware idea exploration.
\textbf{w/o self-play} removes Stage~2 entirely, skipping all iterative refinement and selecting directly from the initial pool.
\textbf{w/o random UT} initializes the UT pool using only failure-oriented inputs, removing the random-valid-input half of the hybrid initialization strategy.
\textbf{w/o non-trivial best} replaces the targeted non-trivial Best UT selection in Step3 with a randomly chosen UT for code fixing.
\textbf{w/o self-consis.} disables self-consistency during UT output validation, using a single-sample output as the expected value instead of majority voting.
\textbf{w/o self-play step2} removes Step~2 (breaking spurious Code-UT coupling) while retaining all other self-play steps.
\textbf{w/o self-play step3} removes Step~3 (code fixing) while retaining all other self-play steps.
\textbf{w/o resample-UT 0\%} disables replacement of zero-pass-rate UTs in Step~4, retaining UTs that all candidates fail.
\textbf{w/o resample-UT 100\%} disables replacement of full-pass-rate UTs in Step~4, retaining UTs that all candidates pass.
\textbf{w/o resample-Code 0\%} disables code resampling in Step~1, retaining all-failing code candidates instead of replacing them.
\textbf{w/o UT-iteration} removes all UT-side update steps within self-play (Steps~2 and~4), keeping the UT pool fixed throughout all rounds while still performing code cleaning and fixing.
\textbf{w/o self-play \& self-consis.} removes both Stage~2 and self-consistency, serving as a lower-bound baseline.
\textbf{Baseline (Direct resample)} applies no CoSPlay mechanisms, directly sampling a large pool of codes and selecting via BoN with self-generated UTs.

\subsection{Analysis}
As shown in Table~\ref{tab:ablation_minibench}, removing the first stage leads to lower BoN performance even though raw code accuracy may increase, indicating that idea exploration mainly improves the discriminative quality of generated UTs rather than naive code quality. Ablations on the self-play stage show a consistent pattern: CoSPlay works because it jointly improves \emph{UT quality} and \emph{code quality}. On the UT side, removing random UT initialization, self-consistency, or UT regeneration all hurts final BoN. In particular, keeping 0\%-pass UTs causes severe degradation, while keeping 100\%-pass UTs also reduces BoN as such tests are trivial and non-discriminative. On the code side, removing the refinement step or not resampling 0\%-pass codes degrades both code accuracy and BoN. Finally, replacing the non-trivial best UT with a random UT weakens refinement, confirming that targeted UT selection provides a stronger refinement signal. Overall, these results support our design that effective self-play requires both progressively refined UTs and targeted code refinement.




\section{Random UT Initialization Enhances UT Diversity}
    

    


\begin{figure*}[!t]
    \centering
    
        \includegraphics[width=0.5\textwidth]{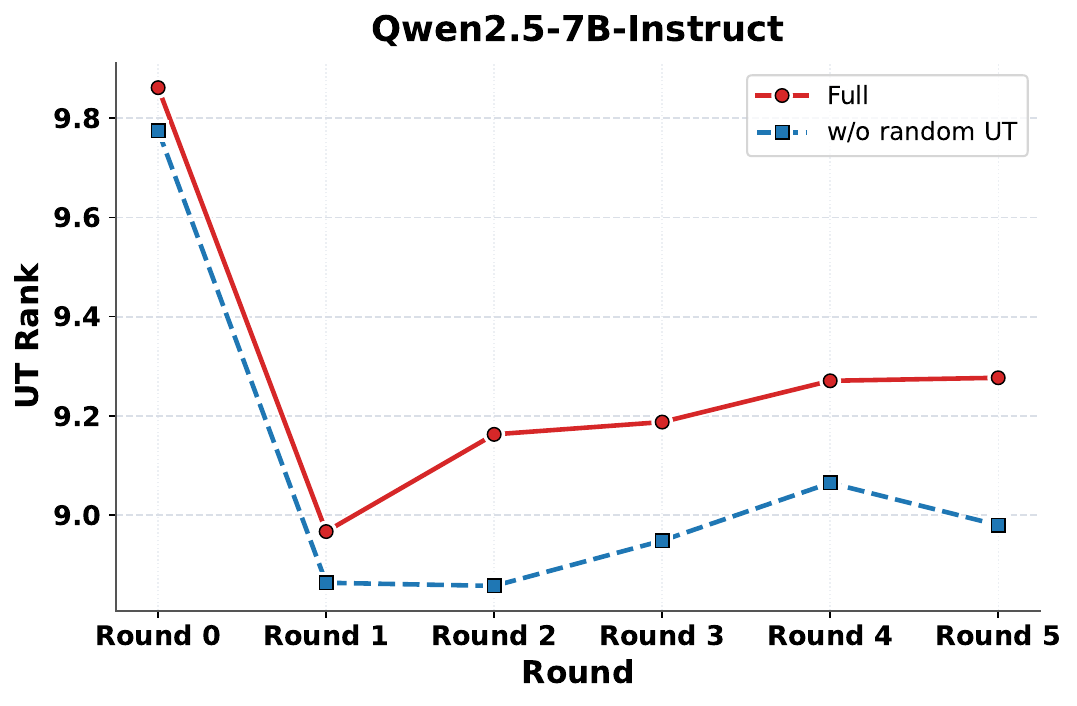}

    \caption{Comparison of the evolution of UT rank over self-play rounds between the ablation w/o random UT initialization and full CoSPlay.}
    \label{fig:random_init_and_cluster_scaling}
\end{figure*}

As shown in Figure~\ref{fig:random_init_and_cluster_scaling}, incorporating randomly generated valid UTs during initialization consistently yields a higher UT rank across all self-play rounds compared to the variant without random initialization. 
Although both variants drop after the first update, the full method recovers better in later rounds, suggesting that random UTs help preserve broader UT diversity. 
This complements attack-oriented UTs and prevents the pool from collapsing into a narrow set of similar failure-focused cases.




\section{Detailed data of UT pass count distribution about Idea-Level Exploration vs. direct sample}
\begin{figure*}[!t]
    \centering
    
    \includegraphics[width=0.99\textwidth]{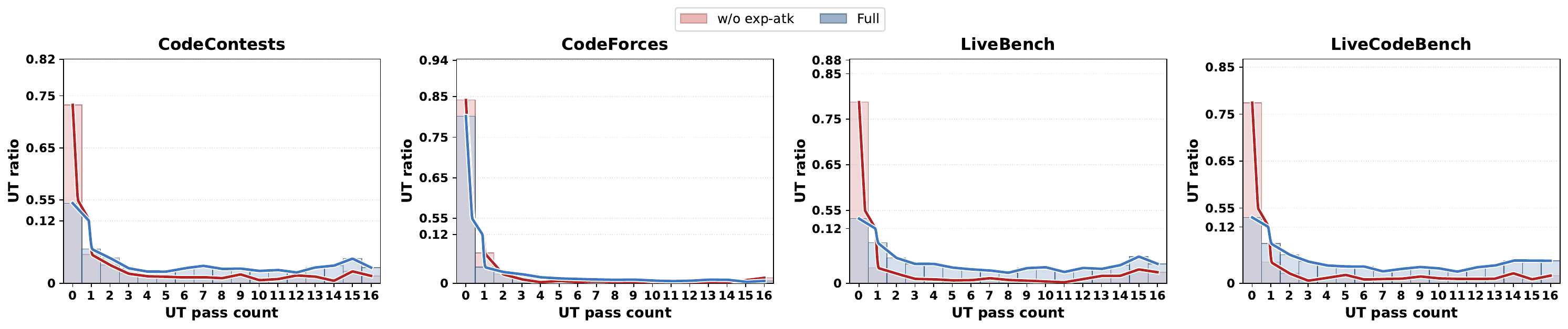}
    
    
    
    \caption{UT pass count (number of code candidates passing each UT) distributions at the UT initialization stage, comparing the full method (blue) against the variant w/o exp-atk (red).}
    \label{fig:attack_exploration_init}
\end{figure*}

Figure~\ref{fig:attack_exploration_init} presents a detailed comparison of UT pass count distributions between the full method and the variant without Exploration-Attack stage (w/o exp-atk), evaluated at the UT initialization stage before self-play stage. In all of the UT pass count plots, the w/o exp-atk variant shows a pronounced concentration of zero-pass UTs across all four datasets, indicating that a large proportion of directly sampled UTs are not passed by any code candidate--these are highly likely to be incorrect or noisy. In contrast, the full method substantially reduces this spike and shifts the distribution toward moderate pass counts (roughly 2-10). This more uniform spread suggests that UTs generated via our proposed Exploration-Attack stage are not only more accurate (fewer zero-pass outliers) but also more discriminative: instead of trivially failing or trivially passing all candidates, they meaningfully differentiate among code solutions. These distributions confirm that this exploration improves the quality of initial UT pool in a complementary manner: it produces UTs that are more accurate and more evenly discriminative, which in turn provides a stronger foundation for the subsequent self-play stage.

\section{Scalability Analysis of Random Input Size for Clustering}
\label{app:cluster_input_scaling}
\begin{figure*}[!t]
    \centering
    \includegraphics[
        width=0.24\textwidth,
        trim=30 0 30 0,
        clip
    ]{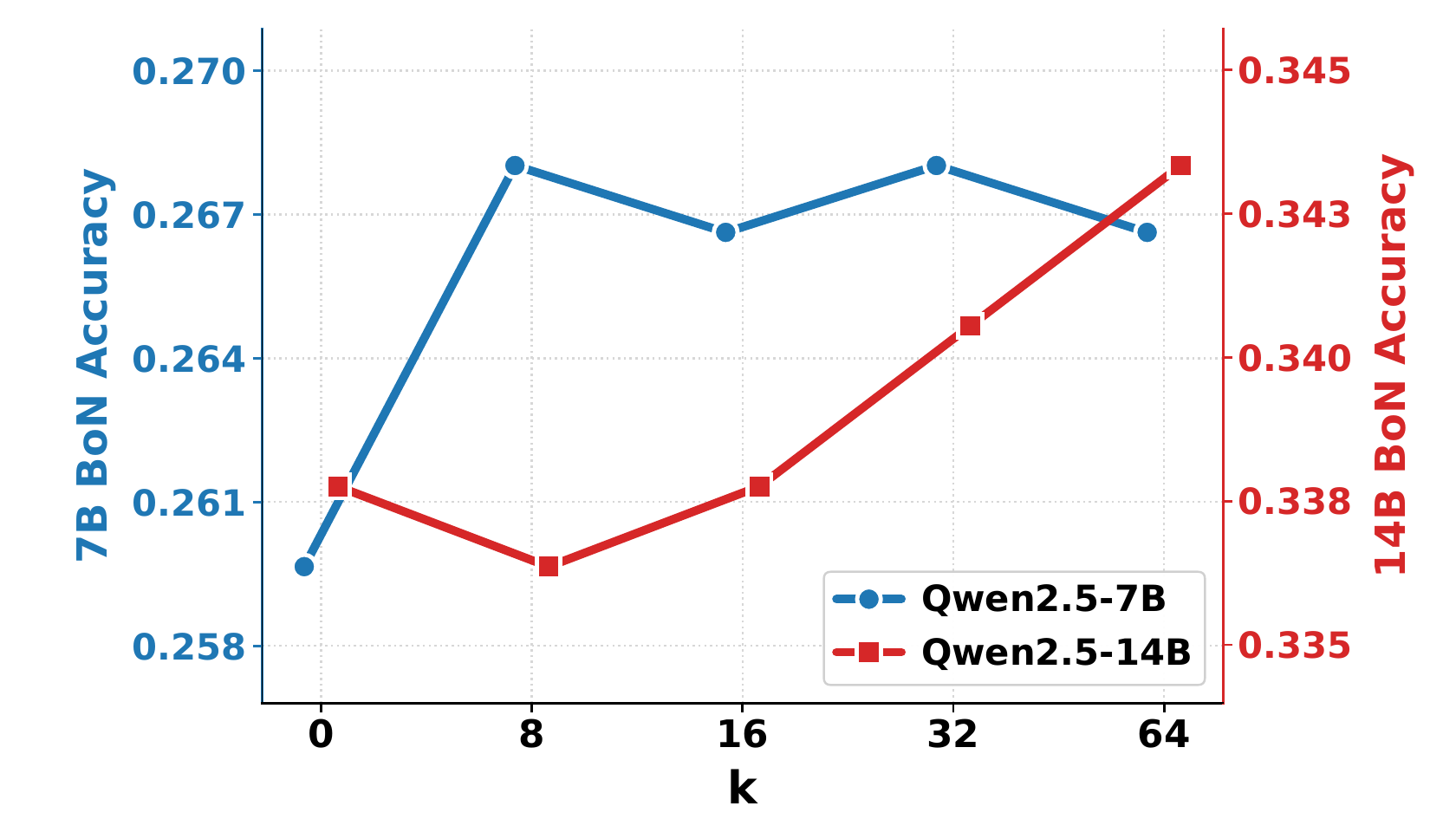}\hfill
    \includegraphics[
        width=0.24\textwidth,
        trim=30 0 30 0,
        clip
    ]{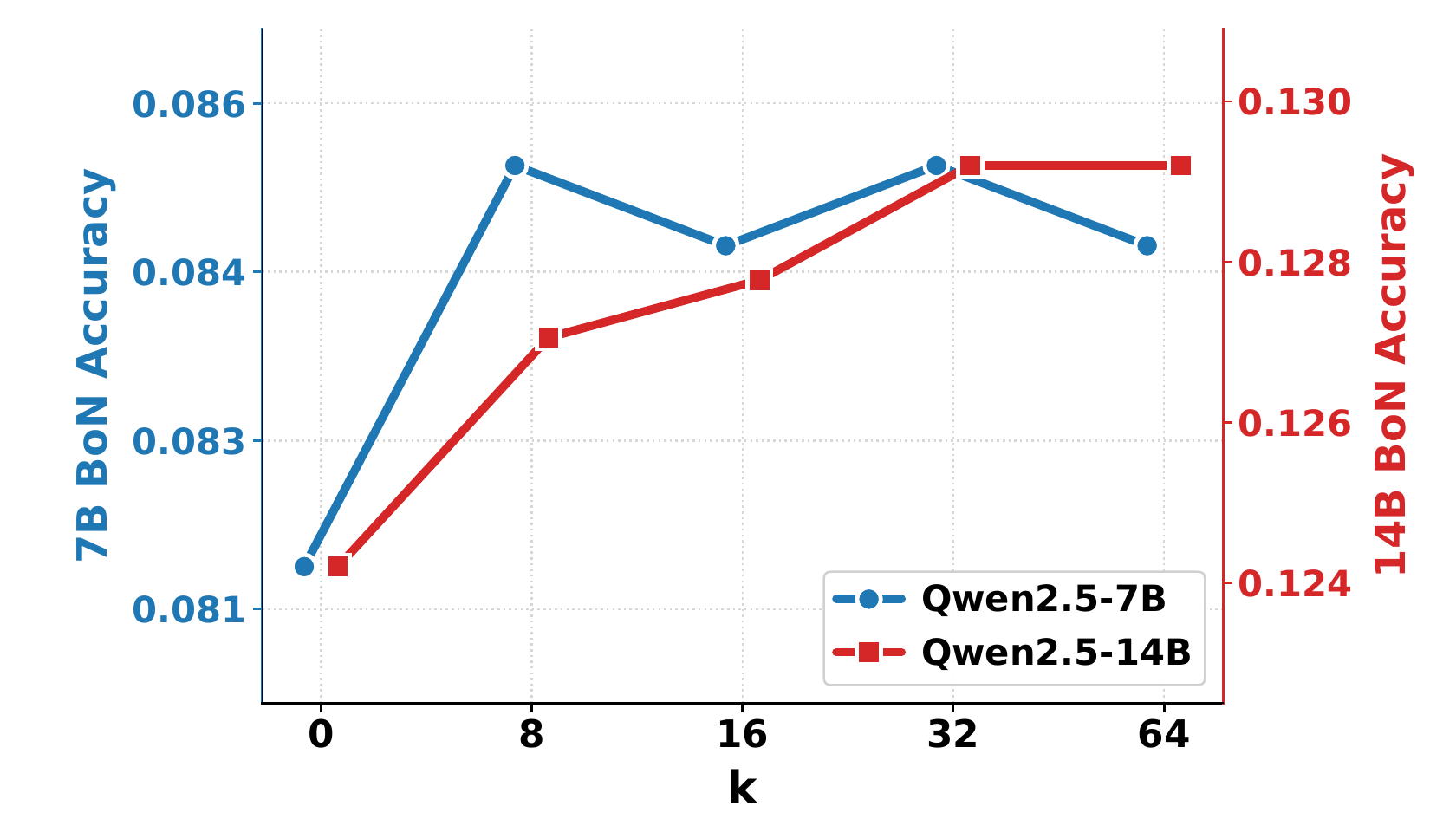}\hfill
    \includegraphics[
        width=0.24\textwidth,
        trim=30 0 30 0,
        clip
    ]{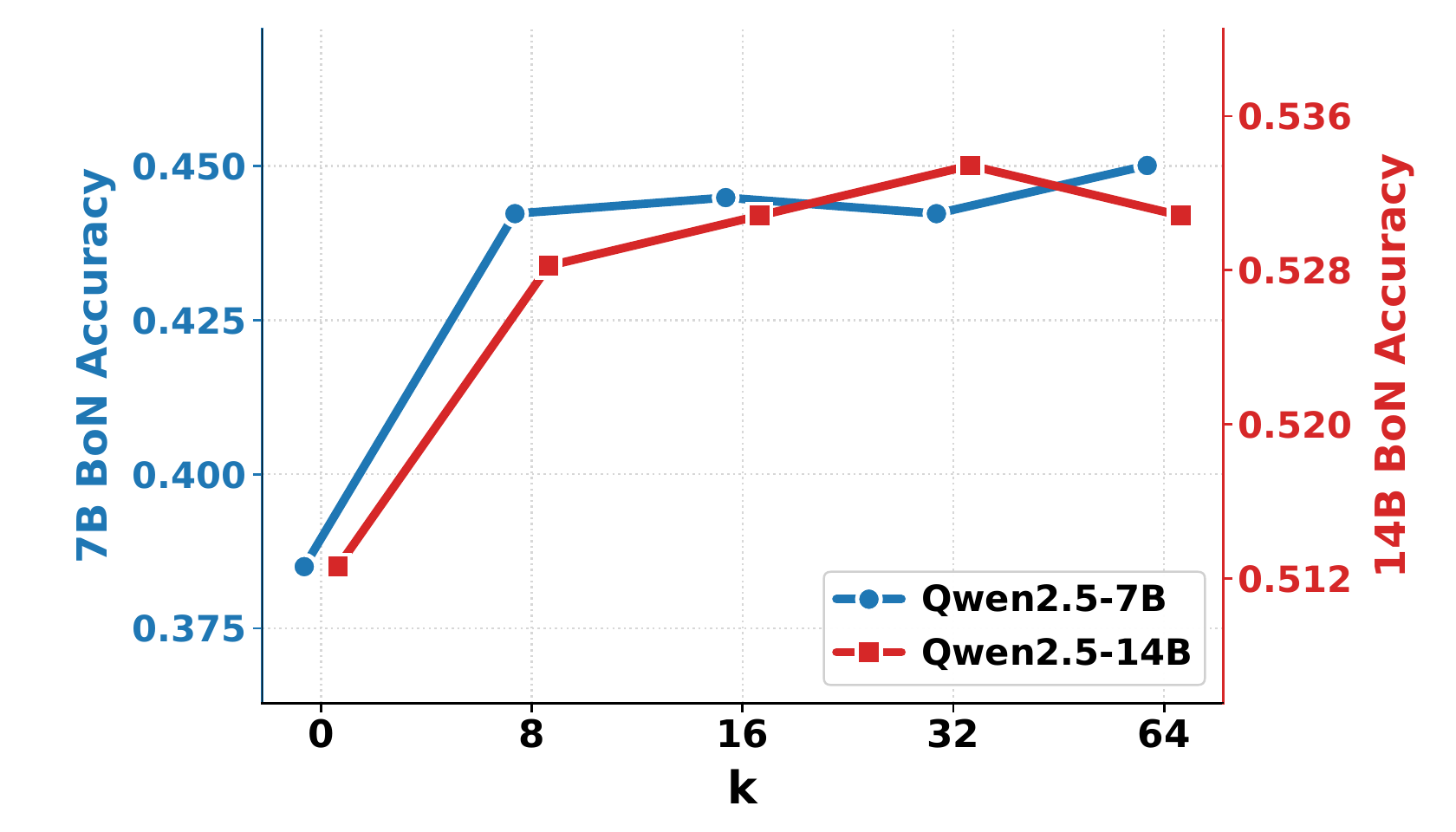}\hfill
    \includegraphics[
        width=0.24\textwidth,
        trim=30 0 30 0,
        clip
    ]{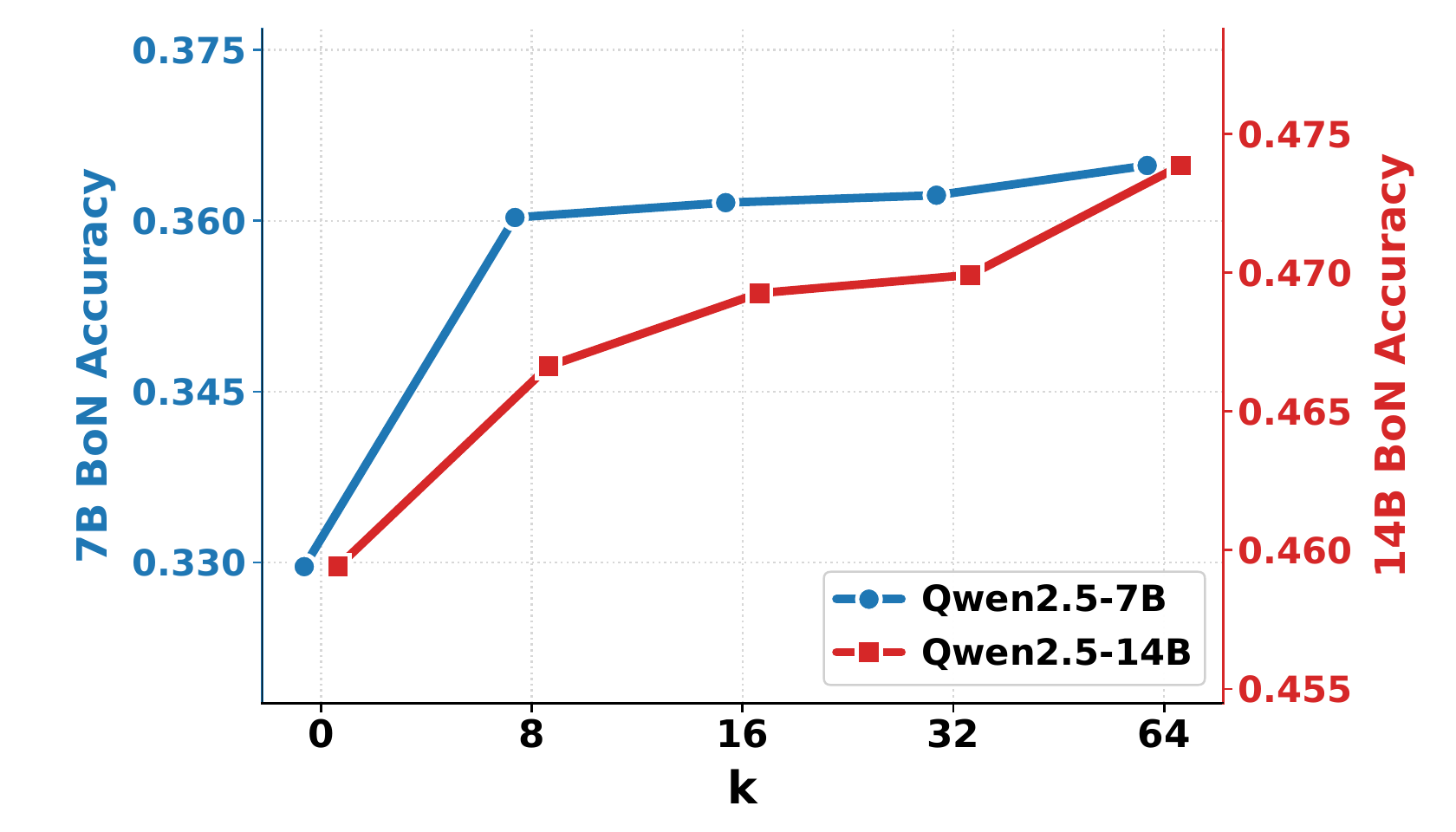}

    \vspace{-0.5em}
    \caption{
    Effect of the number of random valid inputs used for execution-consensus clustering.
    We vary the number of clustering inputs from 0 to 64 and report the results on CodeContests, CodeForces, LiveBench, and LiveCodeBench.
    Increasing the number of inputs provides a more reliable execution signature for grouping functionally equivalent candidates.
    }
    \vspace{-1em}
    \label{fig:cluster_scale_qwen}
\end{figure*}



To further evaluate the robustness of our execution-consensus-based cluster selection strategy, we analyze how the number of random valid inputs used for clustering affects the final performance. 
Specifically, we use Qwen2.5-7B-Instruct with BoN $(N=16)$ and vary the number of clustering inputs $k$ from 0 to 64.

As shown in Figure~\ref{fig:cluster_scale_qwen}, increasing $k$ generally improves or stabilizes performance across the four datasets. 
This is because more random inputs provide a more reliable execution signature, making it easier to group functionally equivalent candidates and separate spurious agreements. 
The gains are especially visible on LiveBench and LiveCodeBench, suggesting that execution-consensus selection benefits more when candidate behaviors are diverse and difficult to distinguish with only a few inputs.

The improvement is not strictly monotonic, since the randomly generated inputs may vary in discriminative strength. 
Nevertheless, the overall trend shows that using more clustering inputs leads to more stable selection, while performance does not collapse under larger $k$. 
This confirms that the cluster-selection stage scales favorably with the number of random inputs and can effectively exploit additional test-time execution budget.
 

\section{Variance analysis}
\label{app:error_analysis}
Table~\ref{tab:big-table-std} reports the mean performance and sample standard deviation over three independent runs. Each cell shows the averaged result, with the corresponding standard deviation reported in parentheses. The consistently small standard deviations across most datasets and metrics indicate that the observed improvements are stable rather than caused by random sampling variation. 

\newpage

\begin{table}[H]
    \centering
    \scriptsize
    \setlength{\tabcolsep}{2.2pt}
    \renewcommand{\arraystretch}{1.4}
    \caption{Mean accuracy and population standard deviation over three runs. Each cell reports the mean accuracy and the corresponding standard deviation, both expressed as percentages. The standard deviation is computed with normalization by the number of runs.}
    \label{tab:big-table-std}
    \resizebox{\textwidth}{!}{
    \begin{tabular}{l|cccc|cccc|cccc|cccc|cccc}
        \toprule
        \multirow{2}{*}{\textbf{Model}} & \multicolumn{4}{c|}{\textbf{LiveBench}} & \multicolumn{4}{c|}{\textbf{LiveCodeBench}} & \multicolumn{4}{c|}{\textbf{CodeContests}} & \multicolumn{4}{c|}{\textbf{CodeForces}} & \multicolumn{4}{c}{\textbf{Average}} \\
         & \textbf{Sig.} & \textbf{UT} & \textbf{Code} & \textbf{BoN} & \textbf{Sig.} & \textbf{UT} & \textbf{Code} & \textbf{BoN} & \textbf{Sig.} & \textbf{UT} & \textbf{Code} & \textbf{BoN} & \textbf{Sig.} & \textbf{UT} & \textbf{Code} & \textbf{BoN} & \textbf{Sig.} & \textbf{UT} & \textbf{Code} & \textbf{BoN} \\
        \midrule
        \rowcolor{gray!15} \multicolumn{21}{c}{\textbf{7B Models}} \\
        \midrule
        Qwen2.5-7B-Instruct & \shortstack{36.2\\(2.9)} & \shortstack{9.0\\(0.6)} & \shortstack{33.1\\(0.8)} & \shortstack{36.2\\(2.9)} & \shortstack{31.2\\(1.0)} & \shortstack{14.4\\(0.2)} & \shortstack{27.0\\(0.1)} & \shortstack{31.2\\(1.0)} & \shortstack{24.1\\(1.1)} & \shortstack{21.3\\(0.9)} & \shortstack{21.5\\(0.6)} & \shortstack{24.1\\(1.1)} & \shortstack{7.1\\(0.2)} & \shortstack{12.9\\(0.7)} & \shortstack{5.0\\(0.0)} & \shortstack{7.1\\(0.2)} & \shortstack{22.1\\(0.6)} & \shortstack{14.6\\(0.3)} & \shortstack{19.0\\(0.1)} & \shortstack{22.1\\(0.6)} \\
        
        
        \quad + CodeT & \shortstack{37.5\\(2.8)} & \shortstack{9.0\\(0.6)} & \shortstack{33.1\\(0.8)} & \shortstack{37.5\\(2.8)} & \shortstack{32.3\\(0.8)} & \shortstack{14.4\\(0.2)} & \shortstack{27.0\\(0.1)} & \shortstack{32.3\\(0.8)} & \shortstack{25.0\\(0.8)} & \shortstack{21.3\\(0.9)} & \shortstack{21.5\\(0.6)} & \shortstack{25.0\\(0.8)} & \shortstack{7.6\\(0.4)} & \shortstack{12.9\\(0.7)} & \shortstack{5.0\\(0.0)} & \shortstack{7.6\\(0.4)} & \shortstack{22.9\\(0.5)} & \shortstack{14.6\\(0.3)} & \shortstack{19.0\\(0.1)} & \shortstack{22.9\\(0.5)} \\

        \midrule
        
        Qwen2.5-7B-Coder & \shortstack{40.6\\(1.3)} & \shortstack{12.1\\(1.9)} & \shortstack{3.8\\(0.1)} & \shortstack{19.0\\(2.7)} & \shortstack{33.4\\(1.6)} & \shortstack{16.0\\(0.6)} & \shortstack{4.1\\(0.3)} & \shortstack{17.7\\(1.0)} & \shortstack{25.2\\(0.9)} & \shortstack{21.3\\(1.8)} & \shortstack{2.6\\(0.2)} & \shortstack{12.8\\(1.4)} & \shortstack{6.8\\(0.4)} & \shortstack{6.7\\(1.8)} & \shortstack{1.8\\(0.8)} & \shortstack{3.5\\(1.2)} & \shortstack{23.4\\(0.5)} & \shortstack{13.3\\(0.5)} & \shortstack{3.0\\(0.1)} & \shortstack{12.0\\(0.7)} \\

        \midrule
        
        Qwen2.5-7B-Coder-Ins. & \shortstack{42.7\\(1.6)} & \shortstack{12.5\\(0.3)} & \shortstack{36.3\\(0.3)} & \shortstack{47.7\\(1.1)} & \shortstack{33.7\\(0.4)} & \shortstack{13.4\\(1.3)} & \shortstack{28.8\\(0.7)} & \shortstack{34.4\\(2.9)} & \shortstack{24.0\\(0.7)} & \shortstack{12.7\\(0.6)} & \shortstack{20.9\\(0.2)} & \shortstack{23.2\\(0.4)} & \shortstack{6.9\\(0.4)} & \shortstack{5.3\\(0.8)} & \shortstack{5.4\\(0.1)} & \shortstack{7.9\\(0.6)} & \shortstack{23.5\\(0.5)} & \shortstack{10.4\\(0.3)} & \shortstack{20.0\\(0.2)} & \shortstack{24.5\\(1.0)} \\

        \midrule
        
        Seed-Coder-8B-Instruct & \shortstack{43.5\\(1.3)} & \shortstack{26.9\\(1.3)} & \shortstack{37.6\\(0.7)} & \shortstack{45.1\\(1.9)} & \shortstack{36.1\\(0.2)} & \shortstack{30.0\\(0.7)} & \shortstack{32.5\\(0.1)} & \shortstack{40.8\\(0.5)} & \shortstack{25.0\\(0.9)} & \shortstack{31.8\\(0.0)} & \shortstack{18.3\\(0.2)} & \shortstack{26.9\\(1.0)} & \shortstack{8.4\\(1.0)} & \shortstack{17.0\\(1.1)} & \shortstack{6.4\\(0.2)} & \shortstack{10.5\\(0.5)} & \shortstack{25.2\\(0.6)} & \shortstack{25.5\\(0.4)} & \shortstack{21.4\\(0.0)} & \shortstack{28.2\\(0.3)} \\

        \midrule
        
        AceCoder-7B-Rule & \shortstack{43.5\\(0.4)} & \shortstack{17.2\\(0.8)} & \shortstack{36.1\\(0.2)} & \shortstack{45.8\\(1.0)} & \shortstack{34.1\\(0.2)} & \shortstack{17.5\\(0.7)} & \shortstack{30.5\\(0.6)} & \shortstack{35.9\\(0.6)} & \shortstack{24.4\\(2.6)} & \shortstack{18.8\\(1.5)} & \shortstack{20.1\\(0.6)} & \shortstack{23.2\\(0.9)} & \shortstack{6.4\\(0.6)} & \shortstack{5.9\\(0.3)} & \shortstack{5.4\\(0.1)} & \shortstack{7.7\\(0.6)} & \shortstack{23.6\\(0.3)} & \shortstack{13.7\\(0.2)} & \shortstack{20.5\\(0.3)} & \shortstack{24.8\\(0.4)} \\

        \midrule
        
        AceCoder-7B-RM & \shortstack{43.0\\(1.7)} & \shortstack{17.7\\(0.6)} & \shortstack{37.5\\(0.5)} & \shortstack{48.4\\(1.7)} & \shortstack{33.7\\(1.1)} & \shortstack{19.9\\(0.7)} & \shortstack{30.3\\(0.4)} & \shortstack{37.1\\(0.2)} & \shortstack{25.0\\(0.5)} & \shortstack{20.0\\(0.9)} & \shortstack{20.6\\(0.4)} & \shortstack{25.0\\(0.5)} & \shortstack{6.6\\(0.3)} & \shortstack{6.6\\(1.0)} & \shortstack{5.2\\(0.1)} & \shortstack{7.8\\(0.4)} & \shortstack{23.6\\(0.5)} & \shortstack{15.1\\(0.7)} & \shortstack{20.6\\(0.1)} & \shortstack{25.8\\(0.4)} \\

        \midrule
        
        AZR-7B-Coder & \shortstack{44.5\\(1.1)} & \shortstack{23.1\\(0.7)} & \shortstack{12.5\\(0.4)} & \shortstack{34.9\\(2.2)} & \shortstack{37.1\\(0.6)} & \shortstack{31.7\\(1.1)} & \shortstack{12.4\\(0.3)} & \shortstack{29.8\\(1.0)} & \shortstack{29.1\\(0.7)} & \shortstack{39.0\\(1.2)} & \shortstack{13.7\\(6.3)} & \shortstack{24.4\\(2.9)} & \shortstack{7.6\\(0.7)} & \shortstack{16.9\\(0.5)} & \shortstack{5.3\\(0.7)} & \shortstack{8.4\\(0.3)} & \shortstack{26.1\\(0.4)} & \shortstack{27.0\\(0.5)} & \shortstack{10.2\\(1.3)} & \shortstack{21.9\\(0.7)} \\

        \midrule
        
        CURE-7B & \shortstack{52.3\\(1.7)} & \shortstack{54.6\\(1.3)} & \shortstack{37.4\\(0.1)} & \shortstack{51.6\\(2.3)} & \shortstack{42.1\\(0.3)} & \shortstack{58.4\\(0.9)} & \shortstack{31.2\\(0.3)} & \shortstack{42.7\\(0.7)} & \shortstack{33.9\\(0.7)} & \shortstack{63.7\\(0.9)} & \shortstack{25.5\\(0.3)} & \shortstack{34.5\\(1.1)} & \shortstack{11.5\\(0.4)} & \shortstack{46.6\\(1.9)} & \shortstack{7.5\\(0.2)} & \shortstack{16.1\\(1.0)} & \shortstack{31.0\\(0.4)} & \shortstack{54.9\\(0.5)} & \shortstack{22.5\\(0.1)} & \shortstack{32.9\\(0.3)} \\

        \midrule

        \textbf{Qwen2.5-7B-Ins. + CoSPlay} & \shortstack{47.4\\(1.6)} & \shortstack{66.7\\(1.3)} & \shortstack{38.8\\(0.6)} & \shortstack{50.0\\(1.3)} & \shortstack{43.2\\(1.0)} & \shortstack{77.0\\(1.3)} & \shortstack{33.2\\(0.1)} & \shortstack{41.9\\(1.5)} & \shortstack{32.1\\(0.9)} & \shortstack{74.4\\(2.4)} & \shortstack{26.0\\(0.4)} & \shortstack{34.3\\(2.0)} & \shortstack{13.3\\(0.2)} & \shortstack{85.0\\(0.6)} & \shortstack{6.8\\(0.5)} & \shortstack{16.8\\(1.3)} & \shortstack{31.3\\(0.5)} & \shortstack{78.3\\(0.9)} & \shortstack{23.3\\(0.2)} & \shortstack{32.6\\(0.4)} \\
        

        \quad w/ Cluster & \shortstack{50.3\\(1.0)} & \shortstack{66.7\\(1.3)} & \shortstack{38.8\\(0.6)} & \shortstack{49.7\\(1.6)} & \shortstack{43.7\\(0.8)} & \shortstack{77.0\\(1.3)} & \shortstack{33.2\\(0.1)} & \shortstack{43.4\\(1.4)} & \shortstack{32.6\\(0.7)} & \shortstack{74.4\\(2.4)} & \shortstack{26.0\\(0.4)} & \shortstack{34.6\\(1.8)} & \shortstack{13.6\\(0.3)} & \shortstack{85.0\\(0.6)} & \shortstack{6.8\\(0.5)} & \shortstack{16.8\\(1.3)} & \shortstack{31.9\\(0.3)} & \shortstack{78.3\\(0.9)} & \shortstack{23.3\\(0.2)} & \shortstack{33.2\\(0.2)} \\
        \rowcolor{green!4}
        
        \midrule

        \textbf{CURE-7B + CoSPlay} & \shortstack{51.3\\(1.9)} & \shortstack{66.2\\(2.6)} & \shortstack{45.4\\(0.9)} & \shortstack{52.1\\(0.4)} & \shortstack{44.6\\(0.2)} & \shortstack{75.6\\(0.9)} & \shortstack{37.4\\(0.3)} & \shortstack{48.5\\(1.3)} & \shortstack{35.3\\(0.4)} & \shortstack{76.6\\(1.5)} & \shortstack{29.8\\(0.1)} & \shortstack{39.6\\(1.5)} & \shortstack{14.2\\(0.2)} & \shortstack{85.6\\(0.2)} & \shortstack{11.1\\(0.3)} & \shortstack{23.1\\(0.5)} & \shortstack{33.0\\(0.2)} & \shortstack{78.3\\(0.4)} & \shortstack{27.7\\(0.3)} & \shortstack{38.4\\(0.7)} \\


        \quad w/ Cluster & \shortstack{53.6\\(1.9)} & \shortstack{66.2\\(2.6)} & \shortstack{45.4\\(0.9)} & \shortstack{52.3\\(0.6)} & \shortstack{44.7\\(0.2)} & \shortstack{75.6\\(0.9)} & \shortstack{37.4\\(0.3)} & \shortstack{48.6\\(1.1)} & \shortstack{35.6\\(0.7)} & \shortstack{76.6\\(1.5)} & \shortstack{29.8\\(0.1)} & \shortstack{40.0\\(0.9)} & \shortstack{14.4\\(0.2)} & \shortstack{85.6\\(0.2)} & \shortstack{11.1\\(0.3)} & \shortstack{23.1\\(0.5)} & \shortstack{33.4\\(0.2)} & \shortstack{78.3\\(0.4)} & \shortstack{27.7\\(0.3)} & \shortstack{38.6\\(0.5)} \\
        \midrule
        \rowcolor{gray!15} \multicolumn{21}{c}{\textbf{14B Models}} \\
        \midrule
        Qwen2.5-14B-Instruct & \shortstack{47.7\\(2.3)} & \shortstack{26.4\\(1.1)} & \shortstack{39.2\\(1.1)} & \shortstack{51.0\\(0.7)} & \shortstack{41.4\\(0.8)} & \shortstack{38.0\\(1.1)} & \shortstack{34.5\\(0.3)} & \shortstack{46.2\\(0.6)} & \shortstack{30.4\\(0.2)} & \shortstack{43.2\\(0.6)} & \shortstack{25.1\\(0.3)} & \shortstack{34.6\\(1.3)} & \shortstack{9.9\\(1.0)} & \shortstack{22.6\\(1.3)} & \shortstack{7.3\\(0.2)} & \shortstack{13.3\\(1.5)} & \shortstack{29.1\\(0.3)} & \shortstack{32.4\\(0.9)} & \shortstack{23.9\\(0.1)} & \shortstack{33.2\\(0.5)} \\


        \quad + CodeT & \shortstack{49.5\\(1.5)} & \shortstack{26.4\\(1.1)} & \shortstack{39.2\\(1.1)} & \shortstack{52.1\\(0.7)} & \shortstack{42.2\\(1.0)} & \shortstack{38.0\\(1.1)} & \shortstack{34.5\\(0.3)} & \shortstack{48.4\\(0.1)} & \shortstack{29.8\\(0.7)} & \shortstack{43.2\\(0.6)} & \shortstack{25.1\\(0.3)} & \shortstack{35.6\\(0.6)} & \shortstack{10.5\\(0.7)} & \shortstack{22.6\\(1.3)} & \shortstack{7.3\\(0.2)} & \shortstack{13.5\\(0.9)} & \shortstack{29.7\\(0.3)} & \shortstack{32.4\\(0.9)} & \shortstack{23.9\\(0.1)} & \shortstack{34.3\\(0.4)} \\

        \midrule

        Qwen2.5-14B-Coder & \shortstack{42.2\\(1.3)} & \shortstack{13.9\\(0.9)} & \shortstack{10.8\\(0.4)} & \shortstack{30.2\\(3.0)} & \shortstack{36.1\\(0.2)} & \shortstack{20.7\\(0.5)} & \shortstack{9.7\\(0.2)} & \shortstack{26.7\\(0.2)} & \shortstack{28.3\\(0.5)} & \shortstack{25.6\\(2.7)} & \shortstack{6.8\\(0.3)} & \shortstack{21.3\\(1.2)} & \shortstack{10.2\\(0.3)} & \shortstack{16.7\\(0.8)} & \shortstack{2.1\\(0.2)} & \shortstack{9.3\\(0.5)} & \shortstack{26.3\\(0.1)} & \shortstack{19.5\\(0.3)} & \shortstack{6.7\\(0.2)} & \shortstack{20.1\\(0.6)} \\
        \midrule

        Qwen2.5-14B-Coder-Ins. & \shortstack{48.2\\(0.4)} & \shortstack{37.8\\(0.3)} & \shortstack{47.2\\(0.4)} & \shortstack{53.9\\(0.6)} & \shortstack{39.5\\(0.6)} & \shortstack{40.6\\(1.0)} & \shortstack{41.0\\(0.2)} & \shortstack{49.4\\(0.6)} & \shortstack{27.5\\(1.9)} & \shortstack{34.8\\(1.1)} & \shortstack{25.1\\(0.2)} & \shortstack{31.4\\(0.7)} & \shortstack{7.9\\(0.2)} & \shortstack{18.1\\(0.7)} & \shortstack{5.6\\(0.0)} & \shortstack{9.6\\(0.6)} & \shortstack{27.2\\(0.1)} & \shortstack{31.5\\(0.6)} & \shortstack{26.5\\(0.1)} & \shortstack{32.8\\(0.4)} \\

        \midrule
        
        DeepSeek-Coder-V2-16B & \shortstack{45.8\\(1.3)} & \shortstack{31.0\\(0.8)} & \shortstack{37.4\\(0.3)} & \shortstack{38.3\\(1.1)} & \shortstack{39.3\\(0.7)} & \shortstack{42.1\\(3.6)} & \shortstack{25.7\\(6.9)} & \shortstack{36.7\\(2.6)} & \shortstack{33.6\\(1.4)} & \shortstack{51.6\\(6.7)} & \shortstack{22.1\\(6.8)} & \shortstack{35.1\\(2.6)} & \shortstack{13.8\\(1.4)} & \shortstack{37.2\\(6.1)} & \shortstack{6.5\\(0.8)} & \shortstack{16.8\\(1.1)} & \shortstack{30.1\\(1.1)} & \shortstack{41.0\\(4.1)} & \shortstack{19.5\\(3.7)} & \shortstack{29.6\\(0.9)} \\

        \midrule
        
        AZR-14B-Coder & \shortstack{39.8\\(2.3)} & \shortstack{22.2\\(0.4)} & \shortstack{38.9\\(0.5)} & \shortstack{43.5\\(0.4)} & \shortstack{34.1\\(3.2)} & \shortstack{29.7\\(0.6)} & \shortstack{35.3\\(0.2)} & \shortstack{39.8\\(0.3)} & \shortstack{25.8\\(1.7)} & \shortstack{35.3\\(1.5)} & \shortstack{26.4\\(0.5)} & \shortstack{30.4\\(1.5)} & \shortstack{8.5\\(1.0)} & \shortstack{18.9\\(0.5)} & \shortstack{9.5\\(0.1)} & \shortstack{16.2\\(1.0)} & \shortstack{24.3\\(2.0)} & \shortstack{26.2\\(0.1)} & \shortstack{25.1\\(0.1)} & \shortstack{30.3\\(0.3)} \\

        \midrule
        
        CURE-14B & \shortstack{53.6\\(0.7)} & \shortstack{77.3\\(0.5)} & \shortstack{47.7\\(0.2)} & \shortstack{59.9\\(1.8)} & \shortstack{45.1\\(0.2)} & \shortstack{86.1\\(0.8)} & \shortstack{41.2\\(0.2)} & \shortstack{50.5\\(0.6)} & \shortstack{34.9\\(0.7)} & \shortstack{87.0\\(1.1)} & \shortstack{32.5\\(0.3)} & \shortstack{44.1\\(0.5)} & \shortstack{14.9\\(0.1)} & \shortstack{77.5\\(1.1)} & \shortstack{12.0\\(0.2)} & \shortstack{26.1\\(0.5)} & \shortstack{33.6\\(0.2)} & \shortstack{82.4\\(0.4)} & \shortstack{30.1\\(0.1)} & \shortstack{41.8\\(0.6)} \\

        \midrule
        
        \textbf{Qwen2.5-14B-Ins. + CoSPlay} & \shortstack{54.9\\(0.4)} & \shortstack{70.1\\(2.5)} & \shortstack{47.9\\(0.2)} & \shortstack{55.7\\(1.9)} & \shortstack{46.0\\(1.4)} & \shortstack{74.4\\(1.0)} & \shortstack{43.5\\(0.3)} & \shortstack{53.6\\(0.4)} & \shortstack{34.4\\(1.2)} & \shortstack{71.7\\(0.7)} & \shortstack{31.5\\(0.6)} & \shortstack{39.1\\(1.4)} & \shortstack{14.3\\(0.2)} & \shortstack{86.1\\(0.7)} & \shortstack{11.8\\(0.2)} & \shortstack{24.7\\(1.4)} & \shortstack{33.8\\(0.5)} & \shortstack{77.6\\(0.5)} & \shortstack{30.8\\(0.2)} & \shortstack{41.2\\(0.8)} \\
        \quad w/ Cluster & \shortstack{56.5\\(1.3)} & \shortstack{70.1\\(2.5)} & \shortstack{47.9\\(0.2)} & \shortstack{57.6\\(1.0)} & \shortstack{46.2\\(1.3)} & \shortstack{74.4\\(1.0)} & \shortstack{43.5\\(0.3)} & \shortstack{54.8\\(1.2)} & \shortstack{34.0\\(0.9)} & \shortstack{71.7\\(0.7)} & \shortstack{31.5\\(0.6)} & \shortstack{40.0\\(1.3)} & \shortstack{14.3\\(0.2)} & \shortstack{86.1\\(0.7)} & \shortstack{11.8\\(0.2)} & \shortstack{24.6\\(1.2)} & \shortstack{34.0\\(0.4)} & \shortstack{77.6\\(0.5)} & \shortstack{30.8\\(0.2)} & \shortstack{42.0\\(0.9)} \\
        \rowcolor{green!4}

        \midrule

        \textbf{CURE-14B + CoSPlay} & \shortstack{53.6\\(1.5)} & \shortstack{71.5\\(2.6)} & \shortstack{52.4\\(0.6)} & \shortstack{61.2\\(1.3)} & \shortstack{45.4\\(0.9)} & \shortstack{76.2\\(2.1)} & \shortstack{46.9\\(0.2)} & \shortstack{54.9\\(0.2)} & \shortstack{34.6\\(0.4)} & \shortstack{75.1\\(2.9)} & \shortstack{36.8\\(0.3)} & \shortstack{44.9\\(2.6)} & \shortstack{15.3\\(0.1)} & \shortstack{89.4\\(1.9)} & \shortstack{19.1\\(0.3)} & \shortstack{32.5\\(1.4)} & \shortstack{33.8\\(0.2)} & \shortstack{80.1\\(2.0)} & \shortstack{36.0\\(0.2)} & \shortstack{45.9\\(0.9)} \\

        
        \quad w/ Cluster & \shortstack{54.9\\(1.3)} & \shortstack{71.5\\(2.6)} & \shortstack{52.4\\(0.6)} & \shortstack{60.7\\(0.7)} & \shortstack{45.5\\(0.9)} & \shortstack{76.2\\(2.1)} & \shortstack{46.9\\(0.2)} & \shortstack{55.3\\(0.7)} & \shortstack{34.9\\(0.8)} & \shortstack{75.1\\(2.9)} & \shortstack{36.8\\(0.3)} & \shortstack{45.3\\(2.3)} & \shortstack{15.3\\(0.1)} & \shortstack{89.4\\(1.9)} & \shortstack{19.1\\(0.3)} & \shortstack{32.6\\(1.5)} & \shortstack{34.1\\(0.2)} & \shortstack{80.1\\(2.0)} & \shortstack{36.0\\(0.2)} & \shortstack{46.2\\(0.7)} \\

        \bottomrule
    \end{tabular}
    }
\end{table}

\newpage

\section{Detailed Evolution of Cluster-Size Density for Correct and Incorrect Codes}
\label{app:cluster_density_visualization}

Figures~\ref{fig:cluster_density_7b_codecontests}--\ref{fig:cluster_density_14b_livecodebench} show the evolution of cluster-size distributions for correct and wrong code candidates during self-play.
Overall, correct codes tend to appear in larger execution-signature clusters, while wrong codes are more frequently distributed over smaller clusters.
This supports the use of cluster size as a GT-free proxy for functional correctness.

The separation is most evident on medium-difficulty datasets such as CodeContests and LiveCodeBench.
On these benchmarks, correct solutions often concentrate in large clusters, while wrong solutions remain relatively fragmented.
For harder datasets such as CodeForces, the distributions overlap more, indicating that candidate behaviors are harder to distinguish by cluster size alone.
These results suggest that execution-consensus clustering is especifically effective when the model can generate a non-trivial number of correct or near-correct candidates, allowing correct behaviors to form stable clusters.


\captionsetup[figure]{font=footnotesize,skip=1pt}

\begin{figure}[H]
    \centering

    \includegraphics[width=0.28\textwidth]{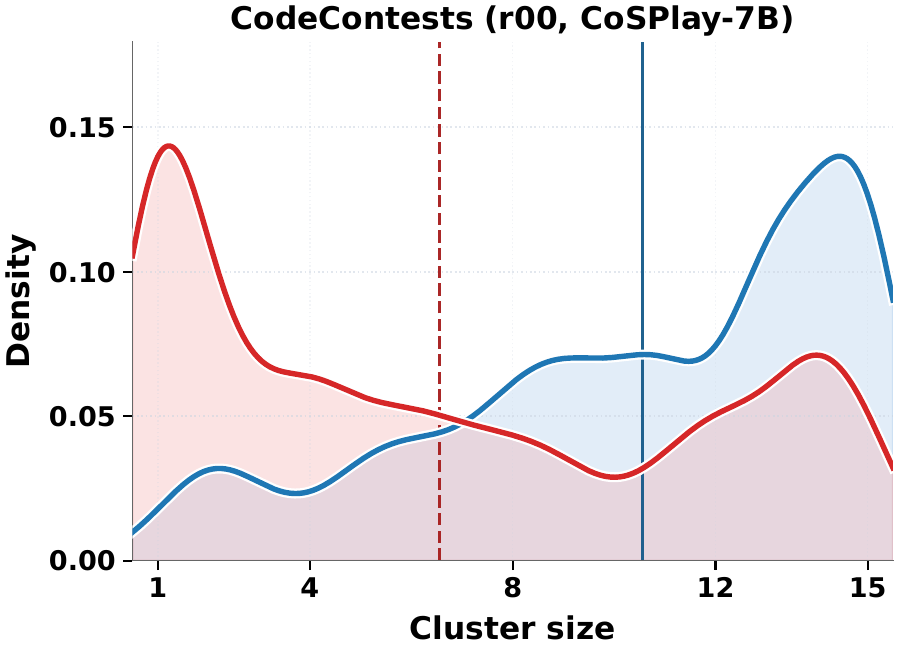}\hfill
    \includegraphics[width=0.28\textwidth]{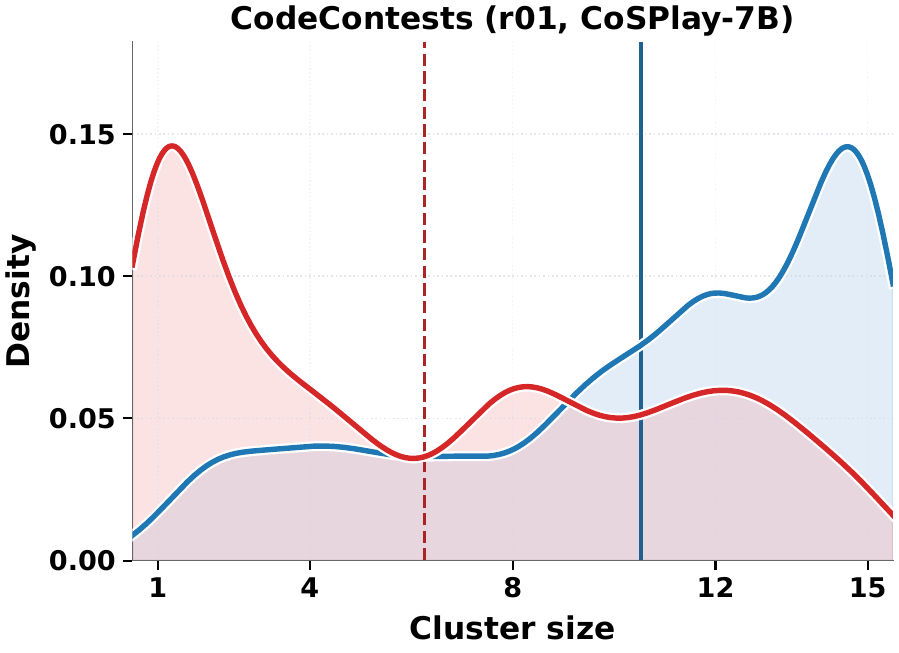}\hfill
    \includegraphics[width=0.28\textwidth]{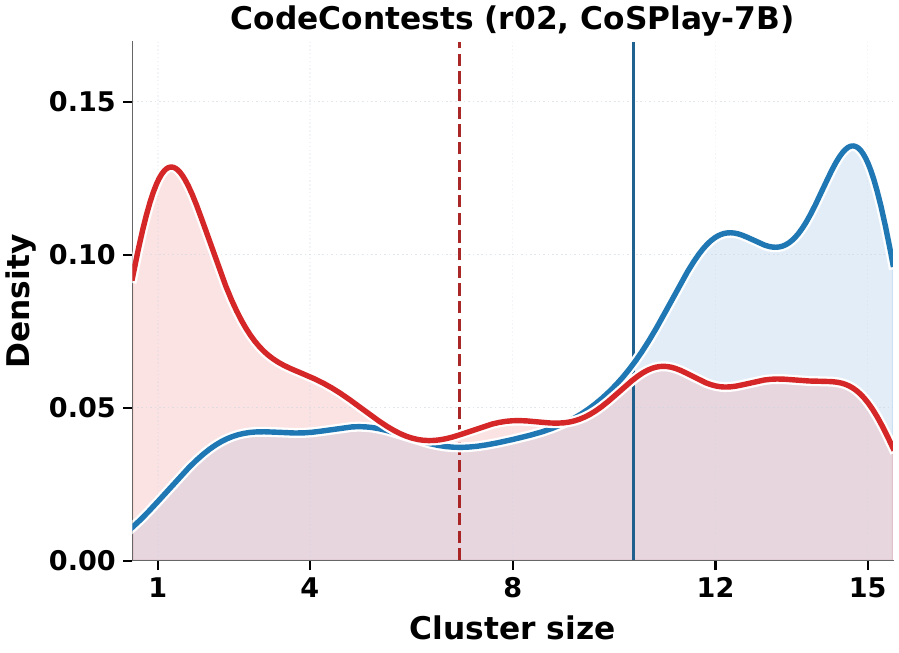}\\[-1.0em]

    \includegraphics[width=0.28\textwidth]{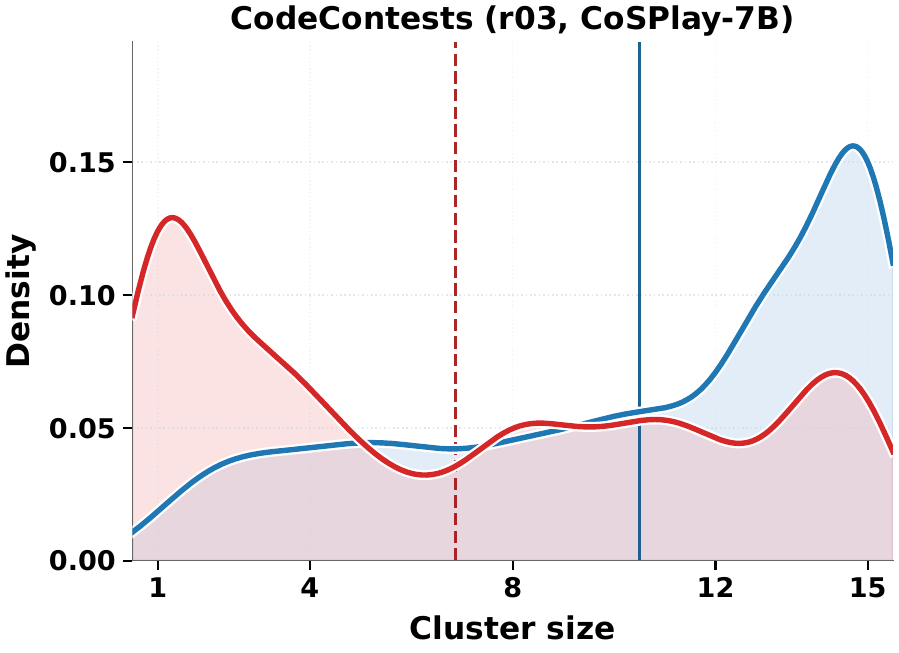}\hfill
    \includegraphics[width=0.28\textwidth]{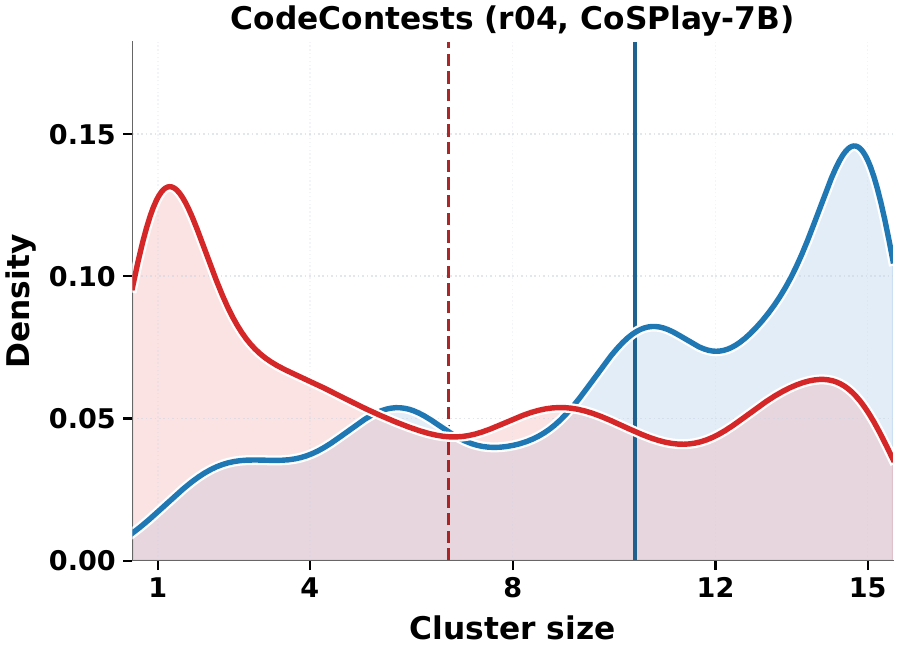}\hfill
    \includegraphics[width=0.28\textwidth]{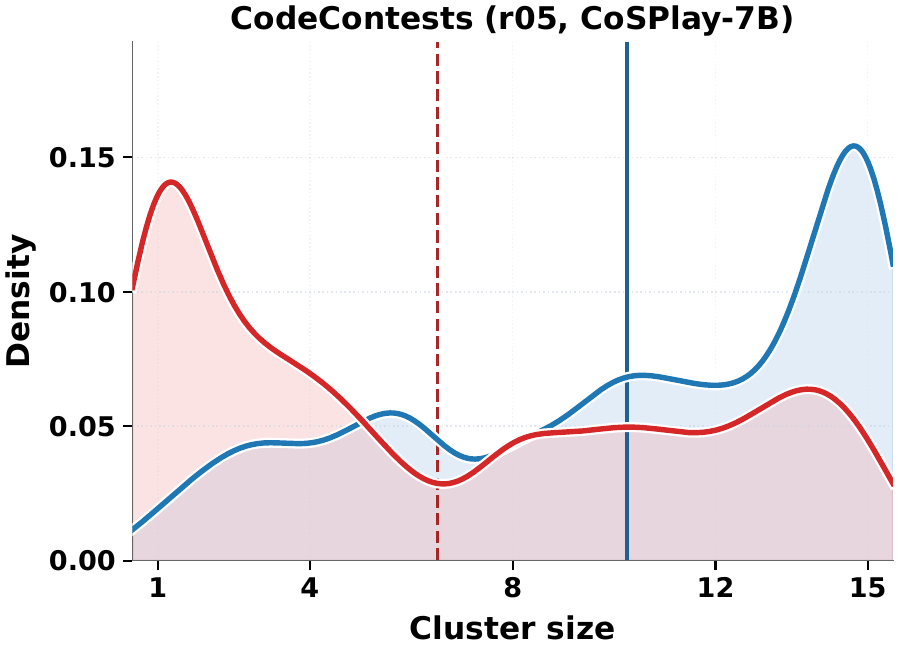}

    \vspace{-0.2em}
    \caption{Density distributions of cluster sizes for correct (blue) and wrong (red) code candidates during self-play on \textbf{CodeContests} with the \textbf{7B model}. The top row shows Round 0--2, and the bottom row shows Round 3--5. Vertical lines denote average values.}
    \label{fig:cluster_density_7b_codecontests}
\end{figure}

\begin{figure}[H]
    \centering

    \includegraphics[width=0.28\textwidth]{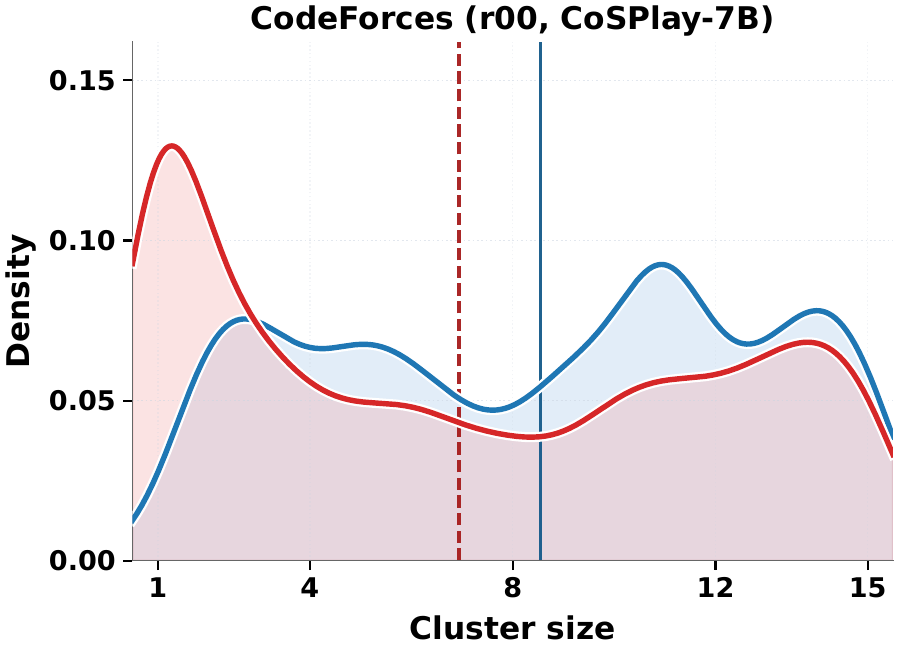}\hfill
    \includegraphics[width=0.28\textwidth]{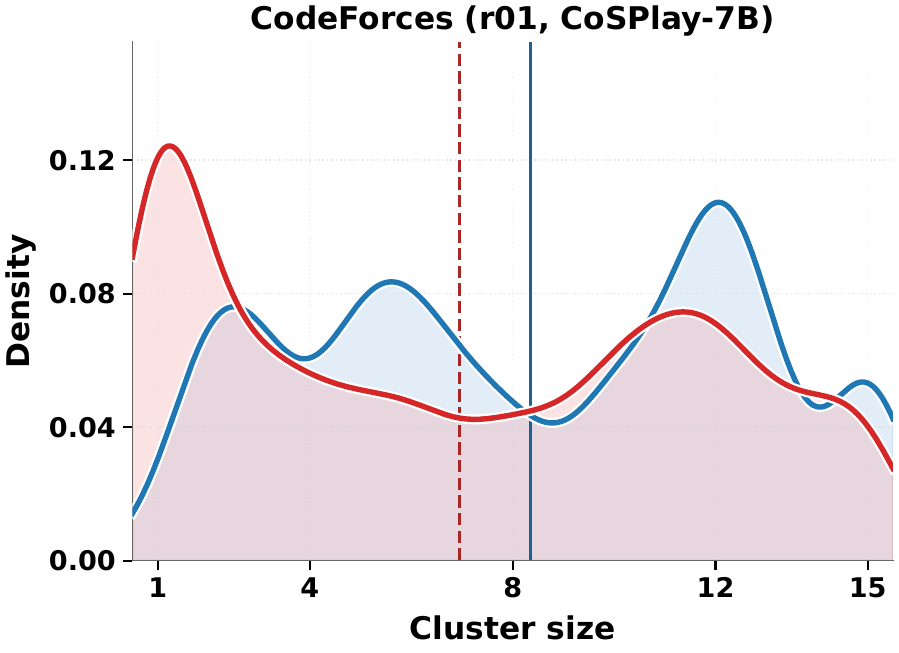}\hfill
    \includegraphics[width=0.28\textwidth]{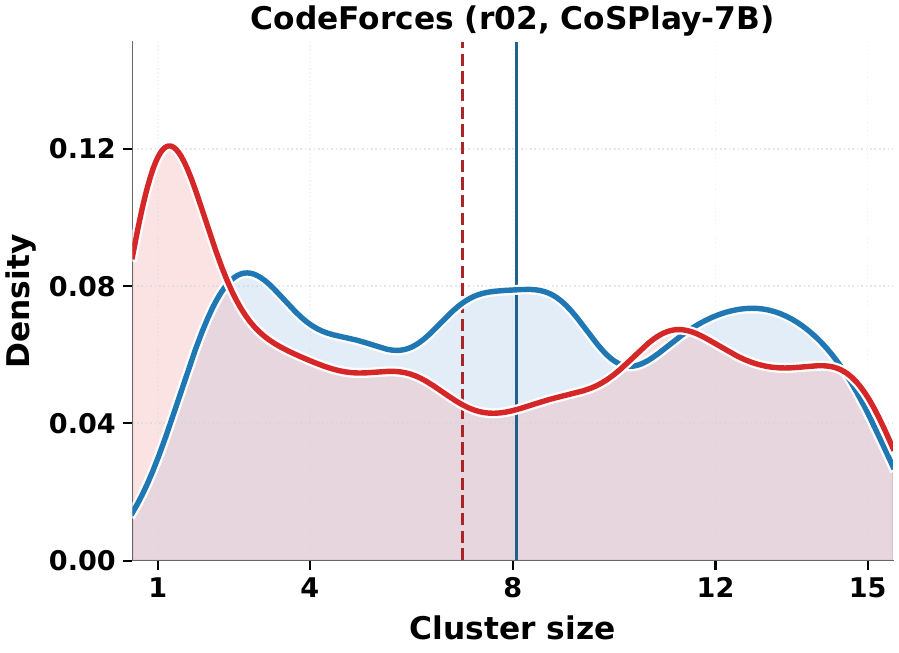}\\[-1.0em]

    \includegraphics[width=0.28\textwidth]{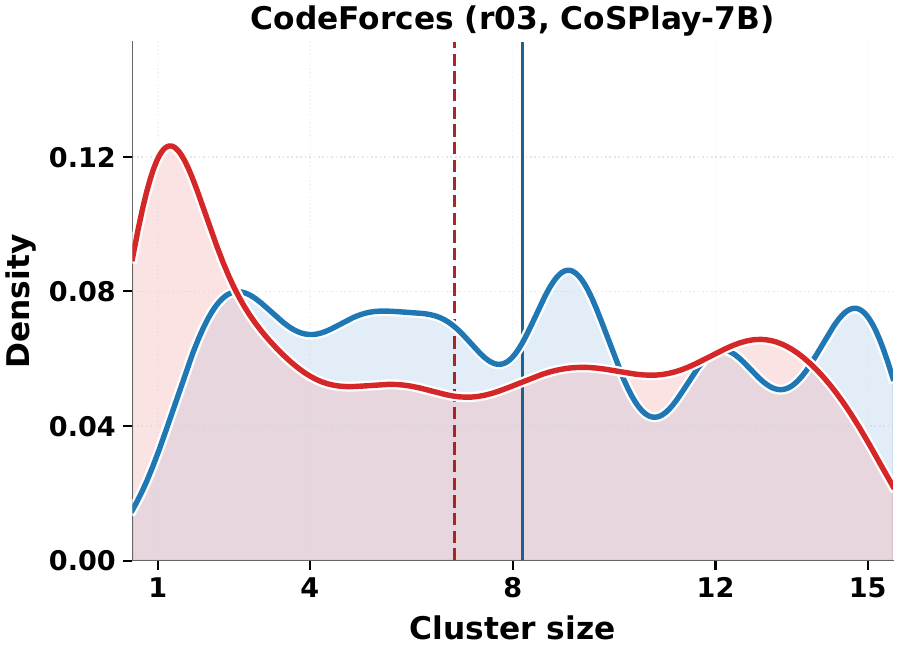}\hfill
    \includegraphics[width=0.28\textwidth]{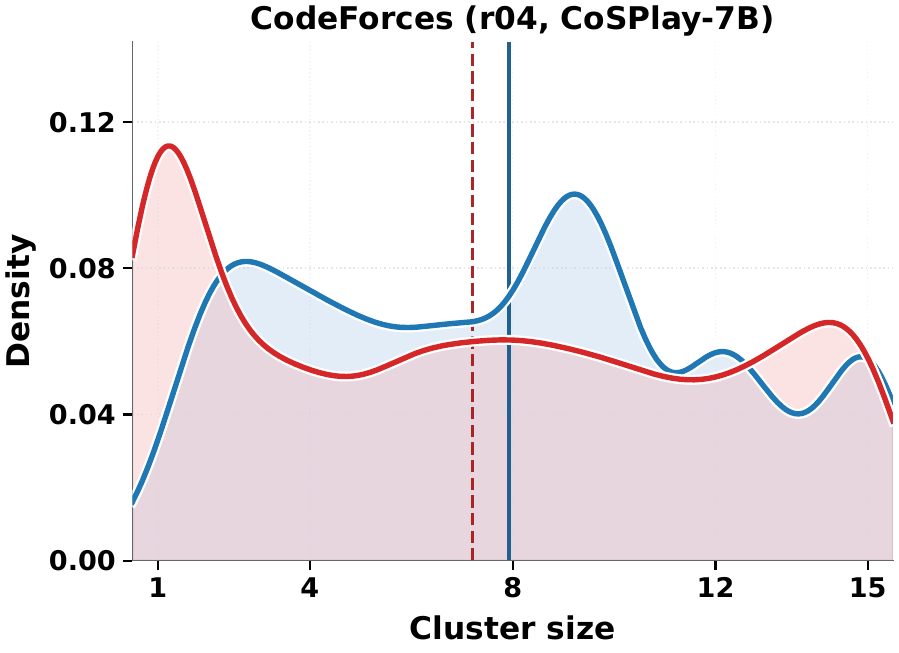}\hfill
    \includegraphics[width=0.28\textwidth]{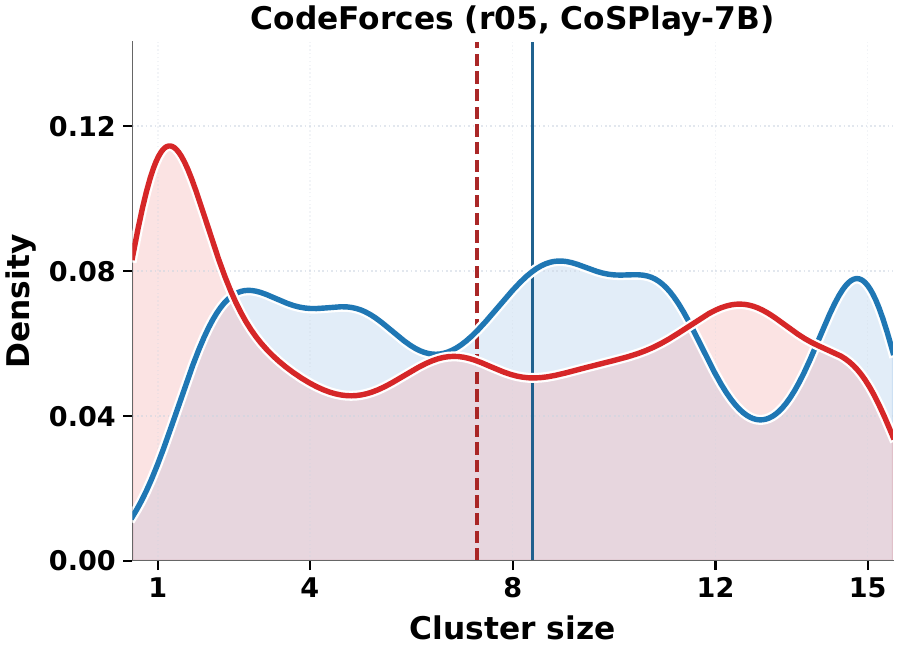}

    \vspace{-0.2em}
    \caption{Density distributions of cluster sizes for correct (blue) and wrong (red) code candidates during self-play on \textbf{CodeForces} with the \textbf{7B model}. The top row shows Round 0--2, and the bottom row shows Round 3--5. Vertical lines denote average values.}
    \label{fig:cluster_density_7b_codeforces}
\end{figure}

\par\noindent
\begin{minipage}{\textwidth}
    \centering
    \includegraphics[width=0.28\textwidth]{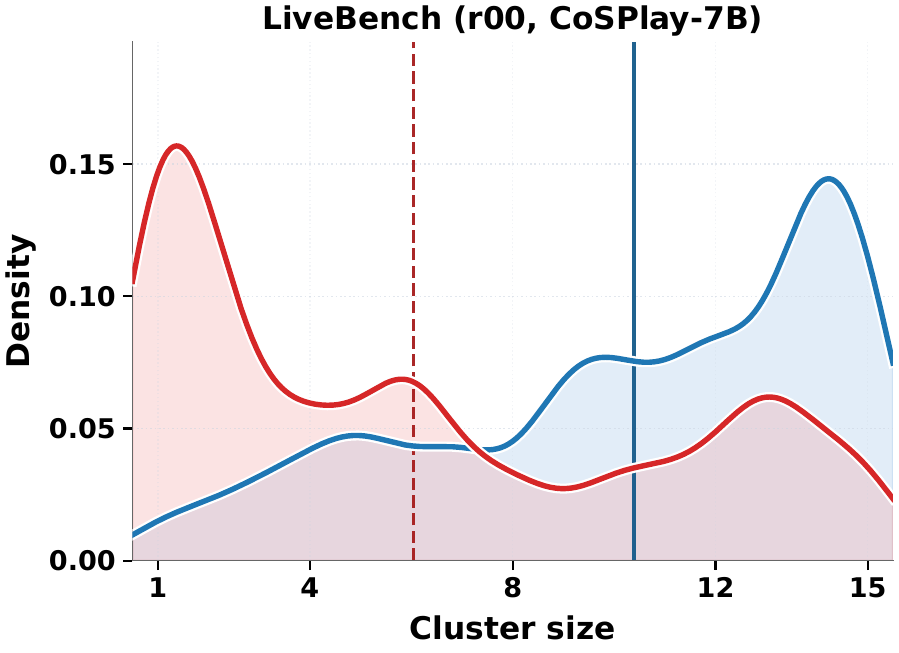}\hfill
    \includegraphics[width=0.28\textwidth]{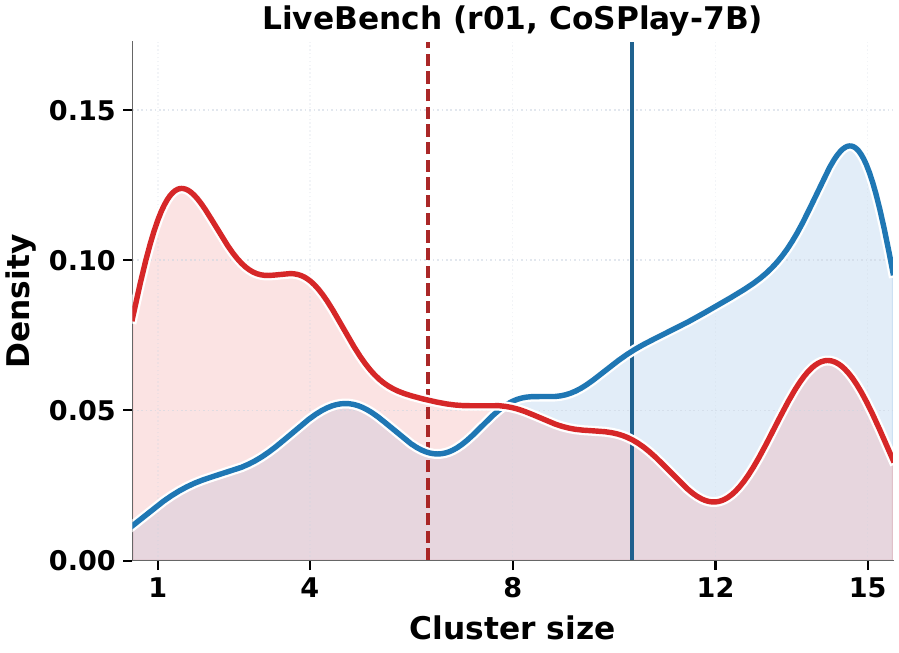}\hfill
    \includegraphics[width=0.28\textwidth]{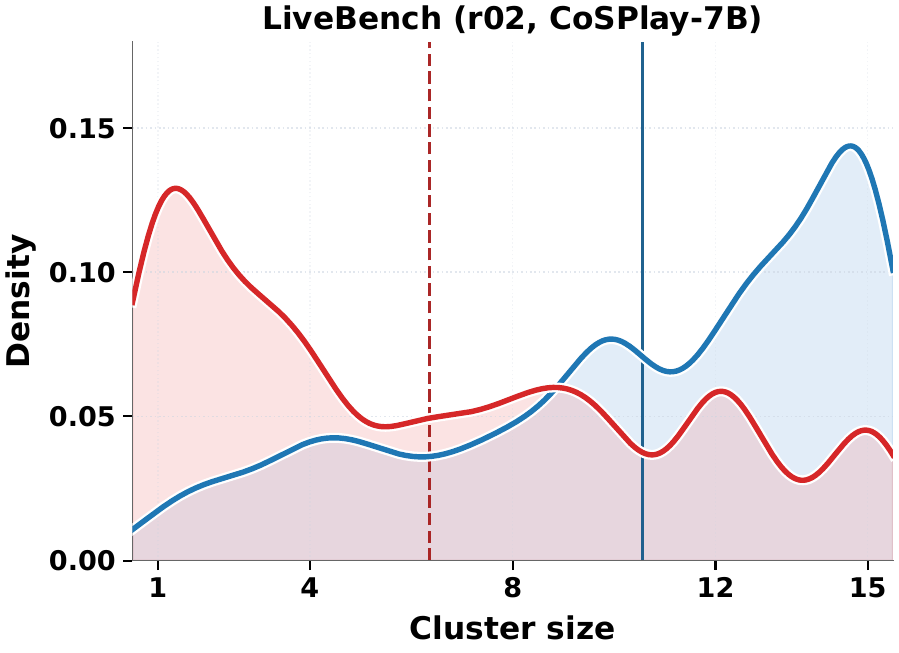}\\[-1.0em]
    \includegraphics[width=0.28\textwidth]{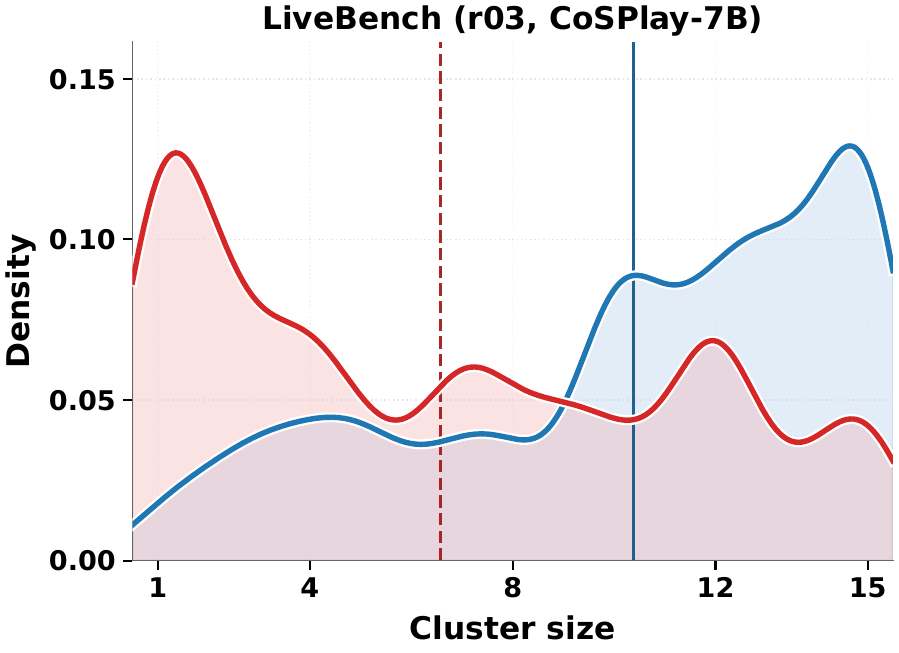}\hfill
    \includegraphics[width=0.28\textwidth]{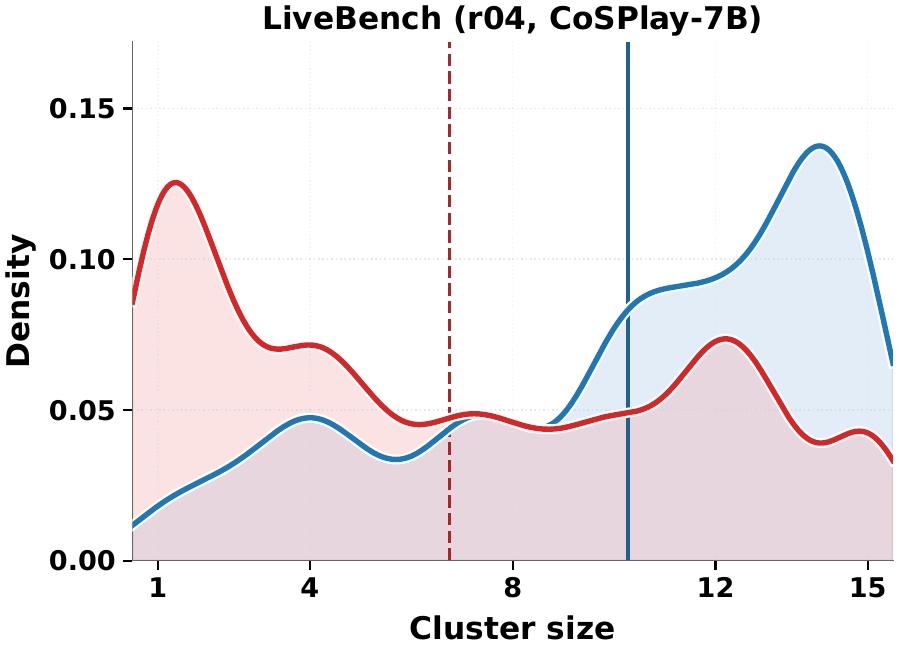}\hfill
    \includegraphics[width=0.28\textwidth]{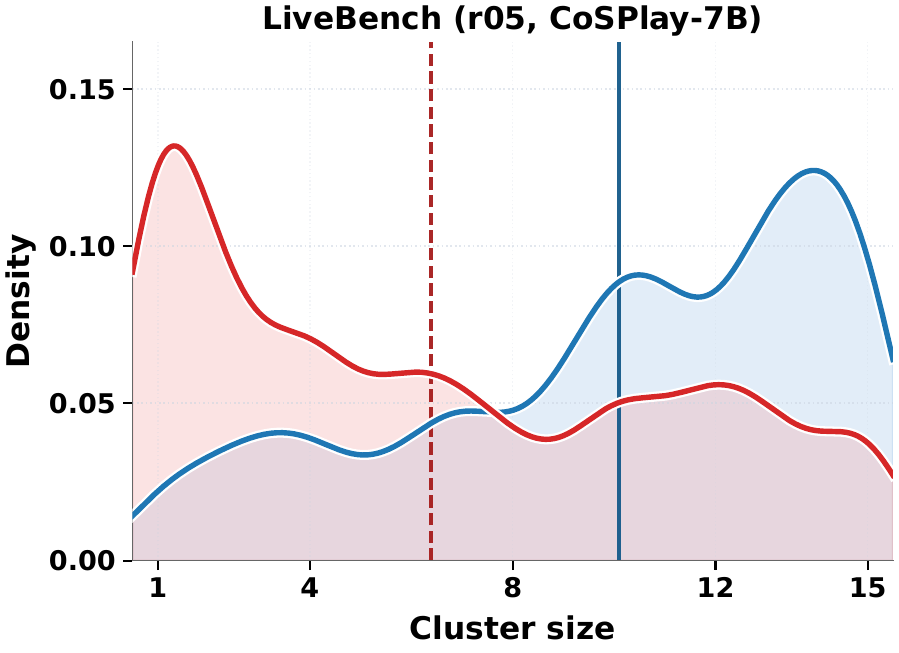}

    \vspace{-0.0em}
    \captionof{figure}{Density distributions of cluster sizes for correct (blue) and wrong (red) code candidates during self-play on \textbf{LiveBench} with the \textbf{7B model}. The top row shows Round 0--2, and the bottom row shows Round 3--5. Vertical lines denote average values.}
    \label{fig:cluster_density_7b_livebench}
\end{minipage}


\par\noindent
\begin{minipage}{\textwidth}
    \centering
    \includegraphics[width=0.28\textwidth]{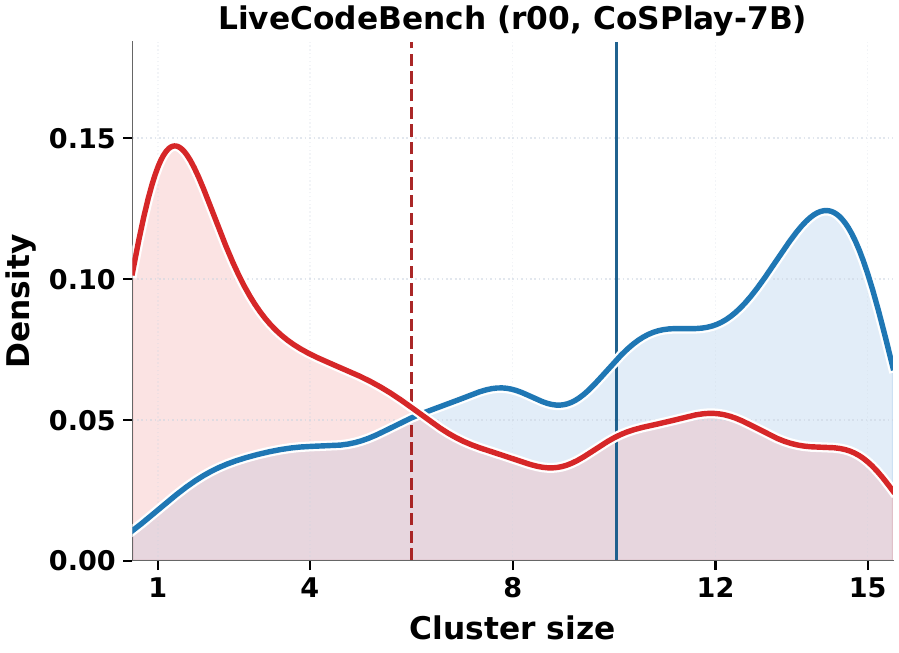}\hfill
    \includegraphics[width=0.28\textwidth]{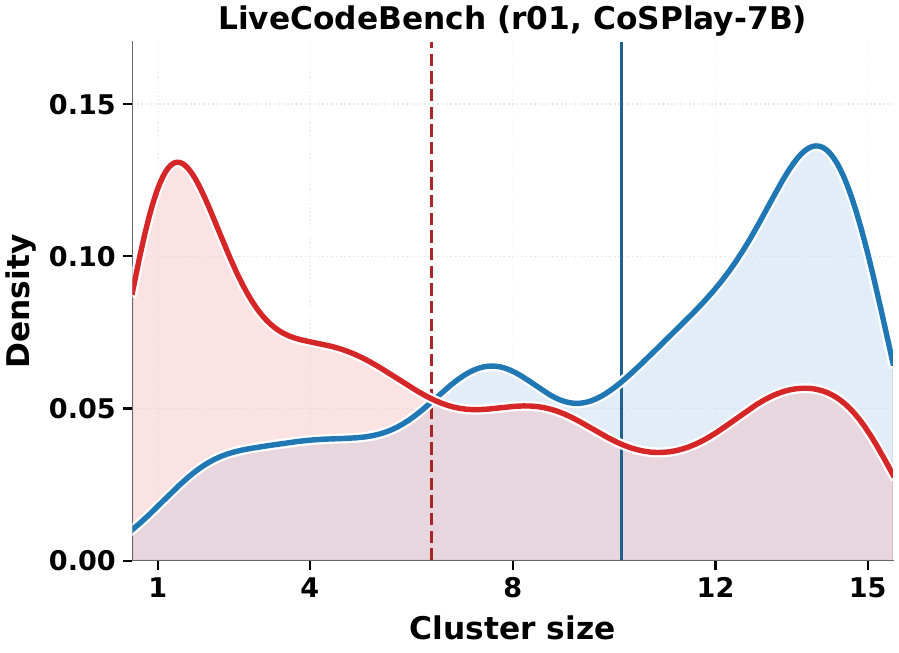}\hfill
    \includegraphics[width=0.28\textwidth]{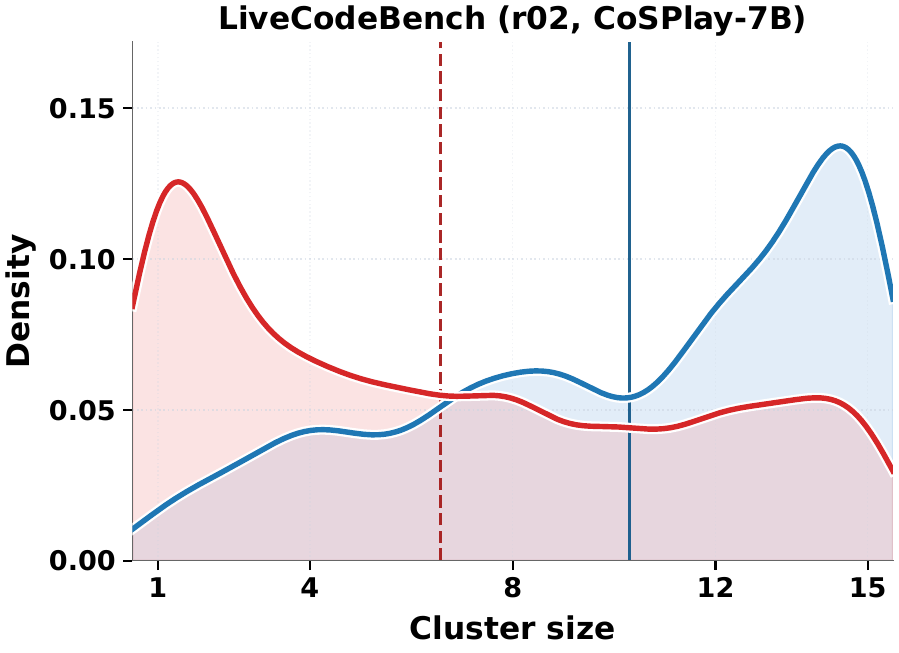}\\[-1.0em]
    \includegraphics[width=0.28\textwidth]{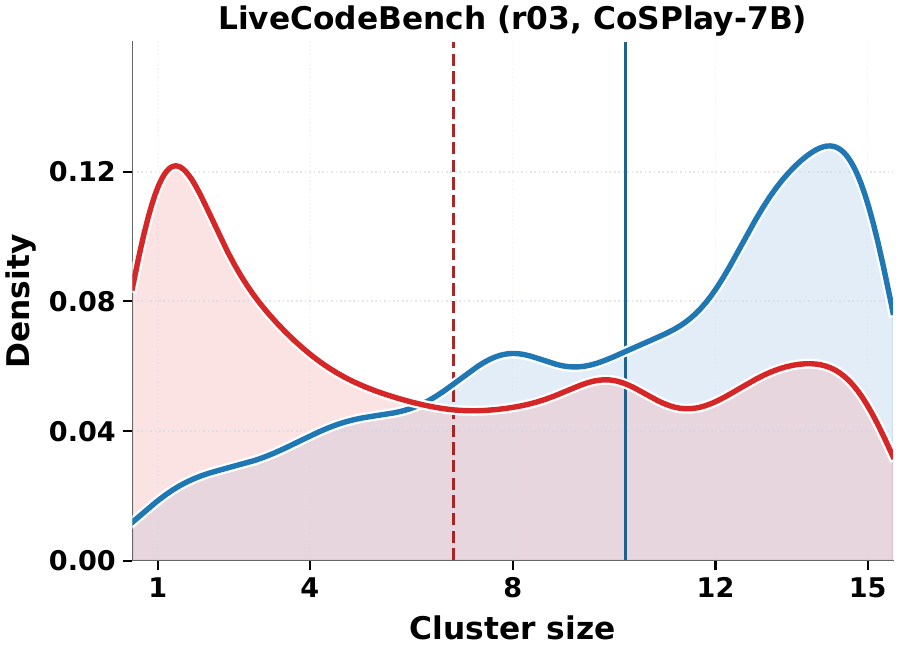}\hfill
    \includegraphics[width=0.28\textwidth]{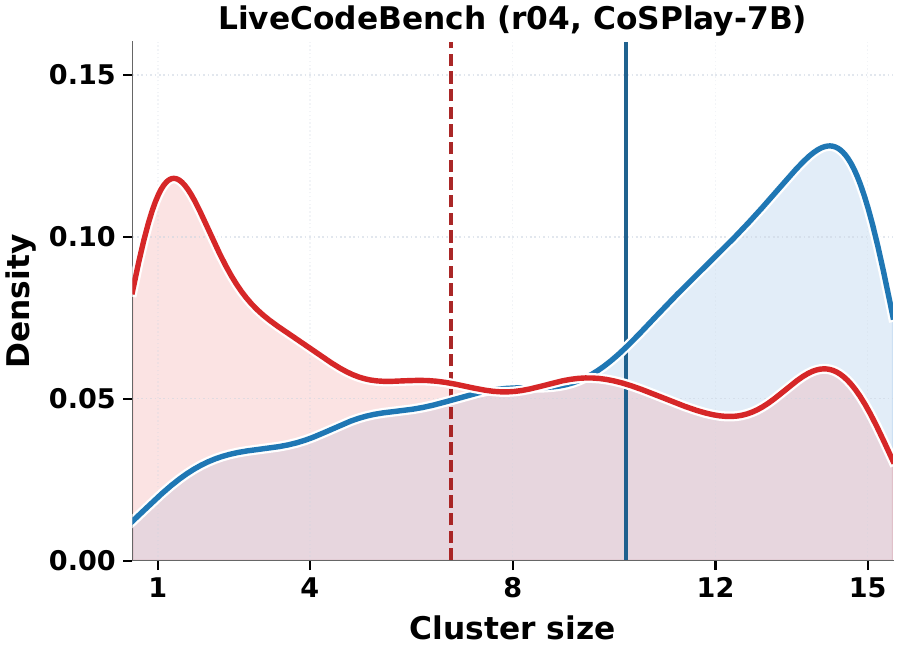}\hfill
    \includegraphics[width=0.28\textwidth]{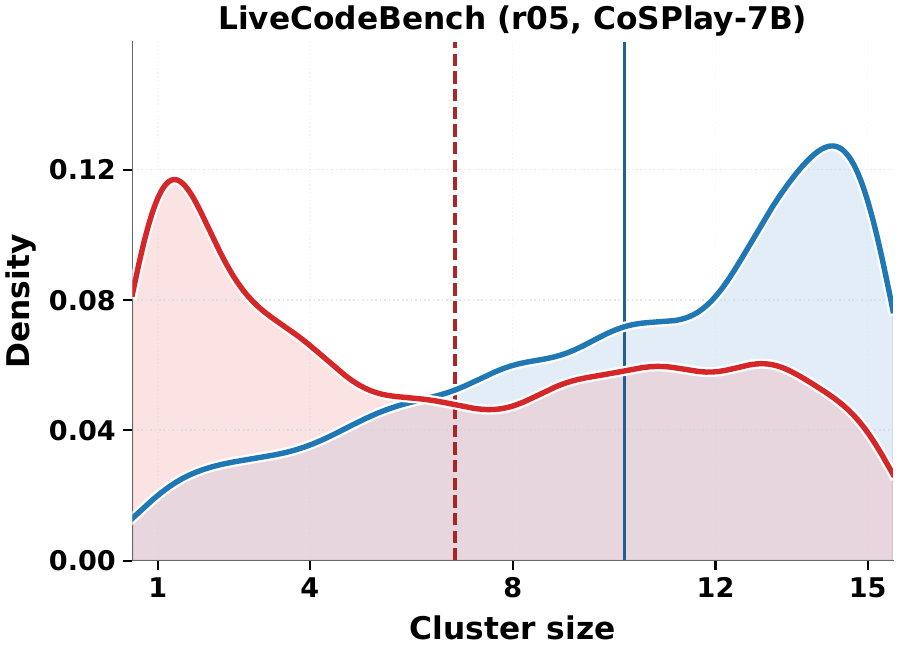}

    \vspace{-0.0em}
    \captionof{figure}{Density distributions of cluster sizes for correct (blue) and wrong (red) code candidates during self-play on \textbf{LiveCodeBench} with the \textbf{7B model}. The top row shows Round 0--2, and the bottom row shows Round 3--5. Vertical lines denote average values.}
    \label{fig:cluster_density_7b_livecodebench}
\end{minipage}

\par\noindent
\begin{minipage}{\textwidth}
    \centering
    \includegraphics[width=0.28\textwidth]{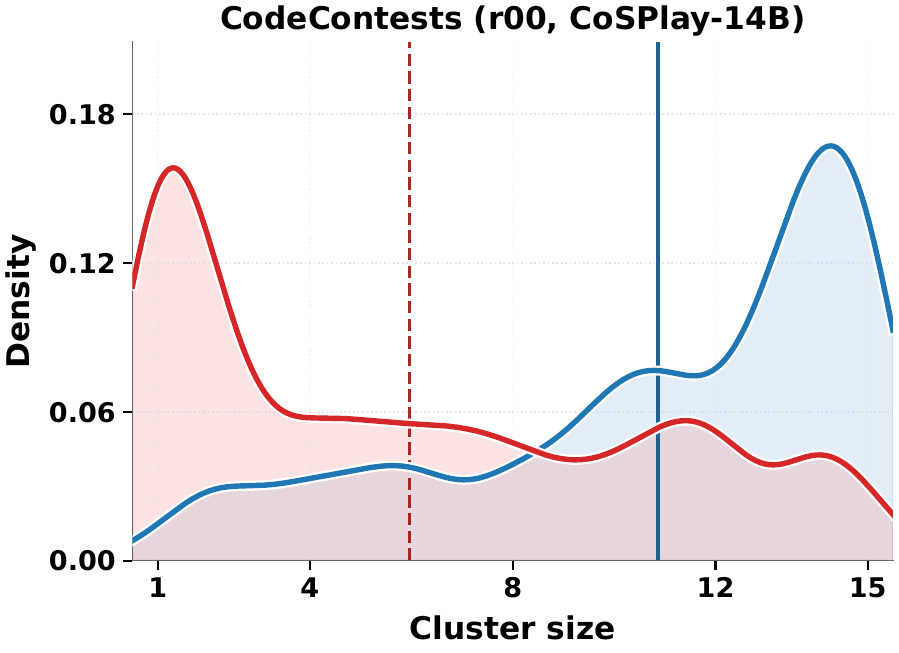}\hfill
    \includegraphics[width=0.28\textwidth]{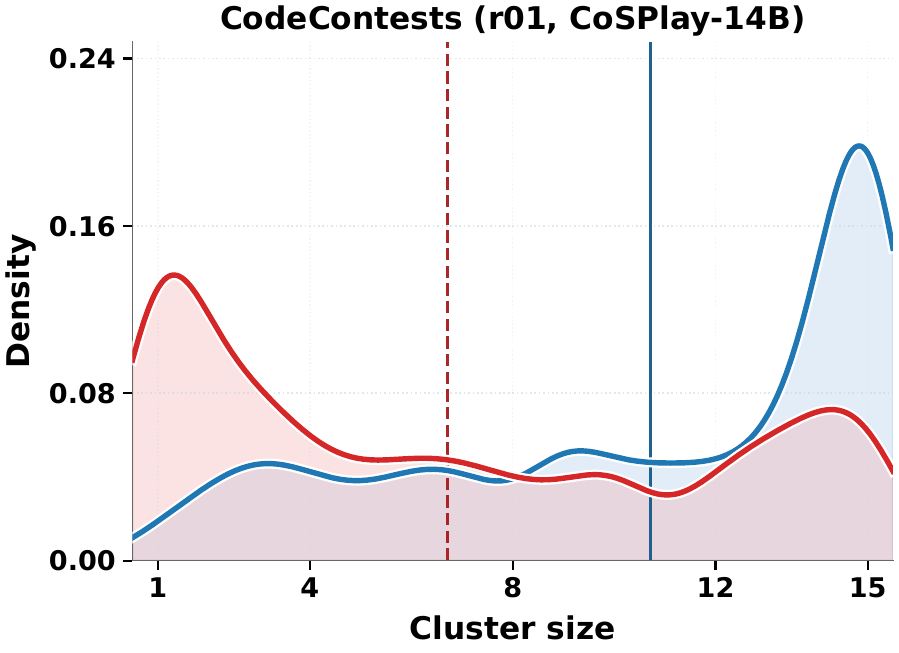}\hfill
    \includegraphics[width=0.28\textwidth]{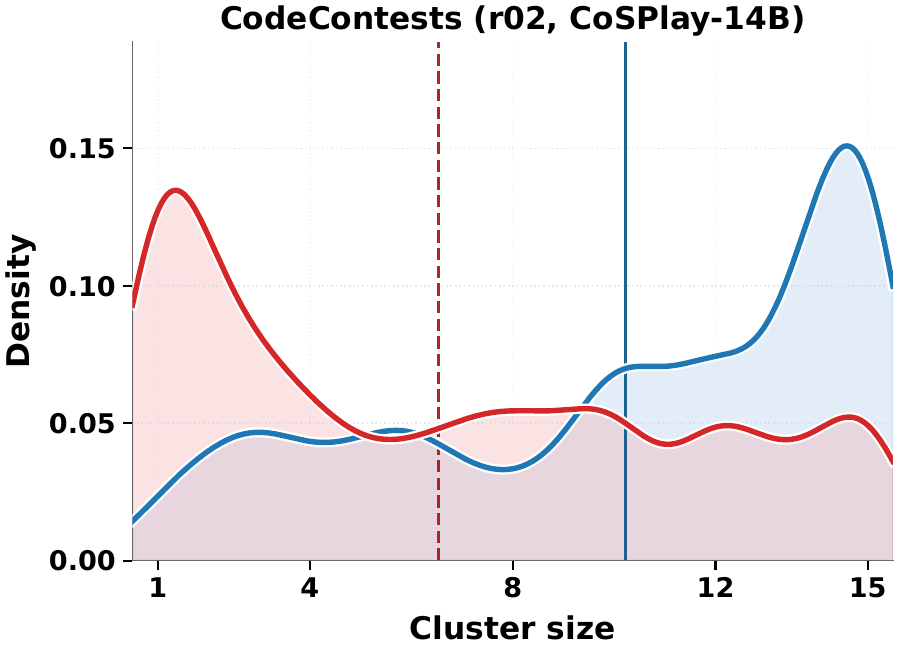}\\[-1.0em]
    \includegraphics[width=0.28\textwidth]{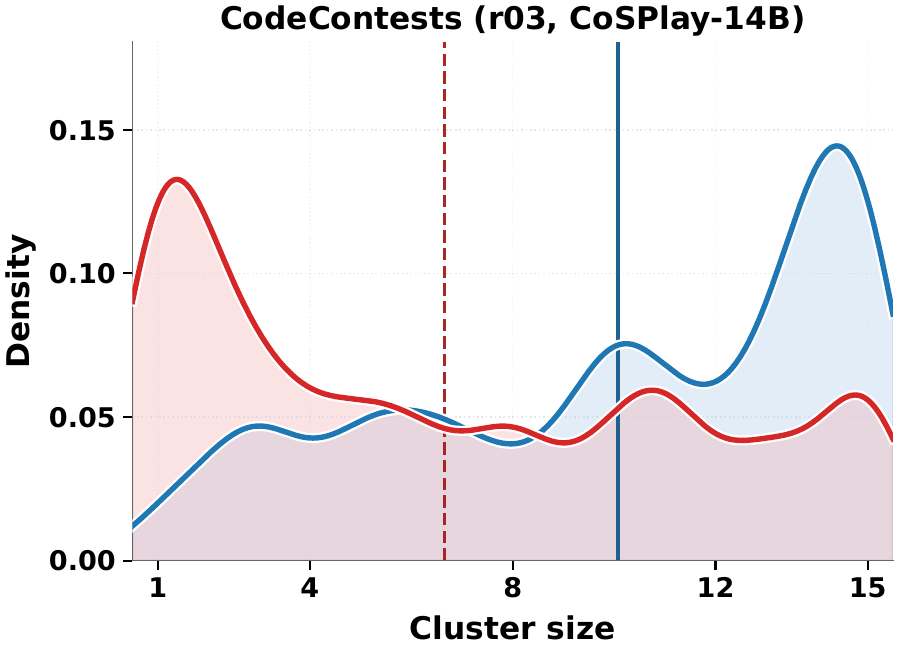}\hfill
    \includegraphics[width=0.28\textwidth]{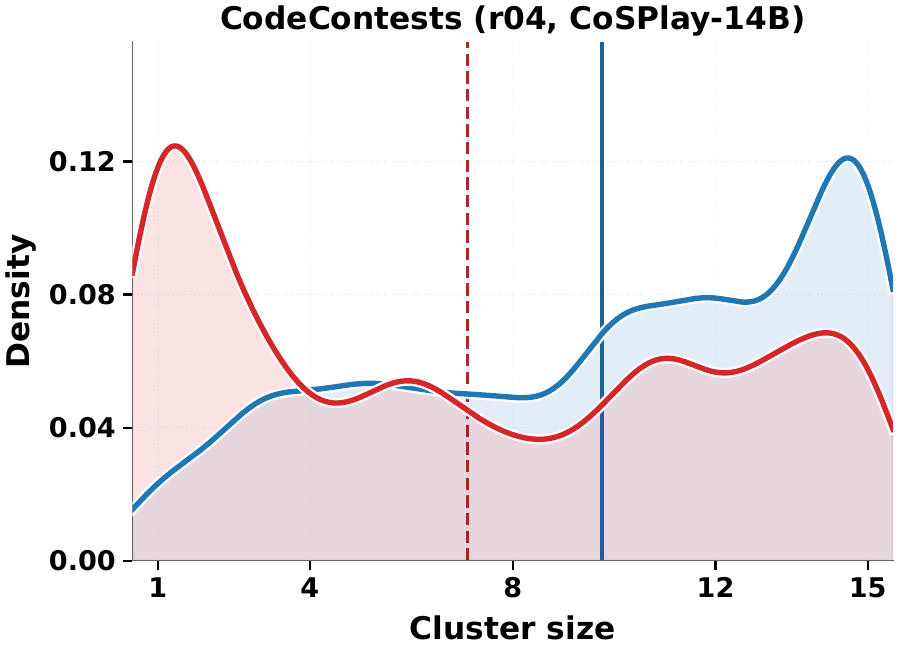}\hfill
    \includegraphics[width=0.28\textwidth]{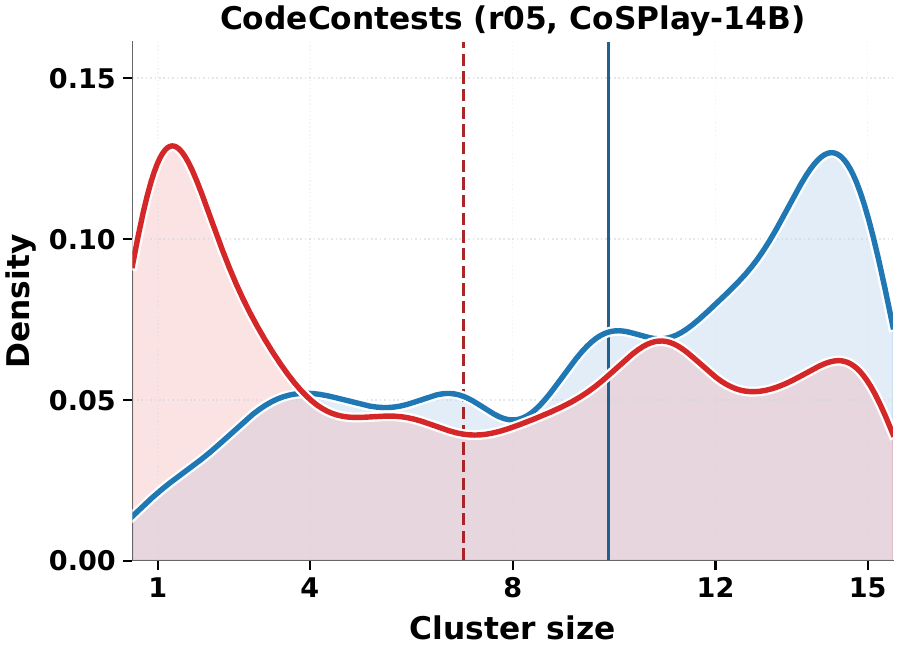}

    \vspace{-0.0em}
    \captionof{figure}{Density distributions of cluster sizes for correct (blue) and wrong (red) code candidates during self-play on \textbf{CodeContests} with the \textbf{14B model}. The top row shows Round 0--2, and the bottom row shows Round 3--5. Vertical lines denote average values.}
    \label{fig:cluster_density_14b_codecontests}
\end{minipage}

\par\noindent
\begin{minipage}{\textwidth}
    \centering
    \includegraphics[width=0.28\textwidth]{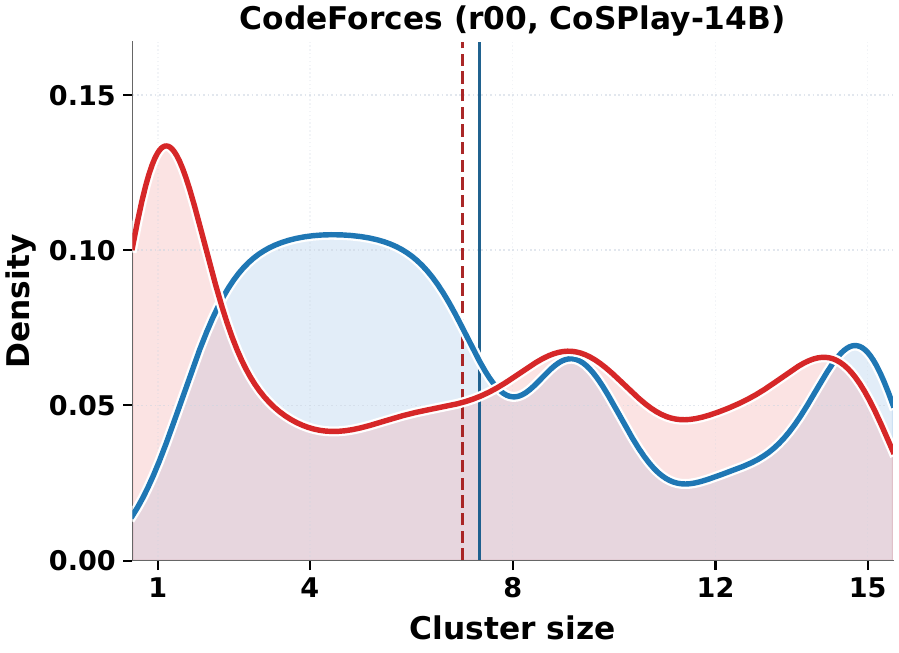}\hfill
    \includegraphics[width=0.28\textwidth]{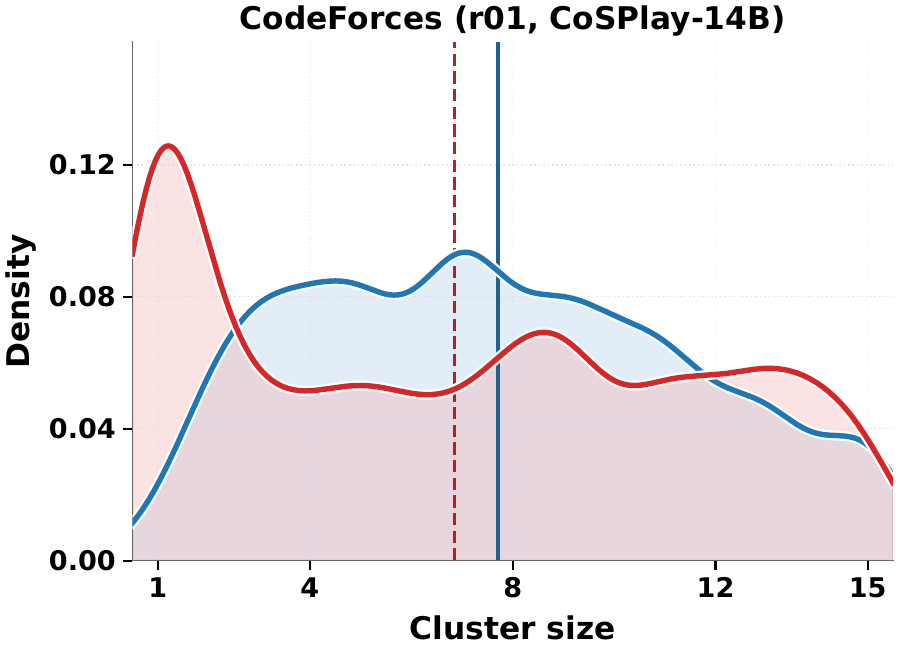}\hfill
    \includegraphics[width=0.28\textwidth]{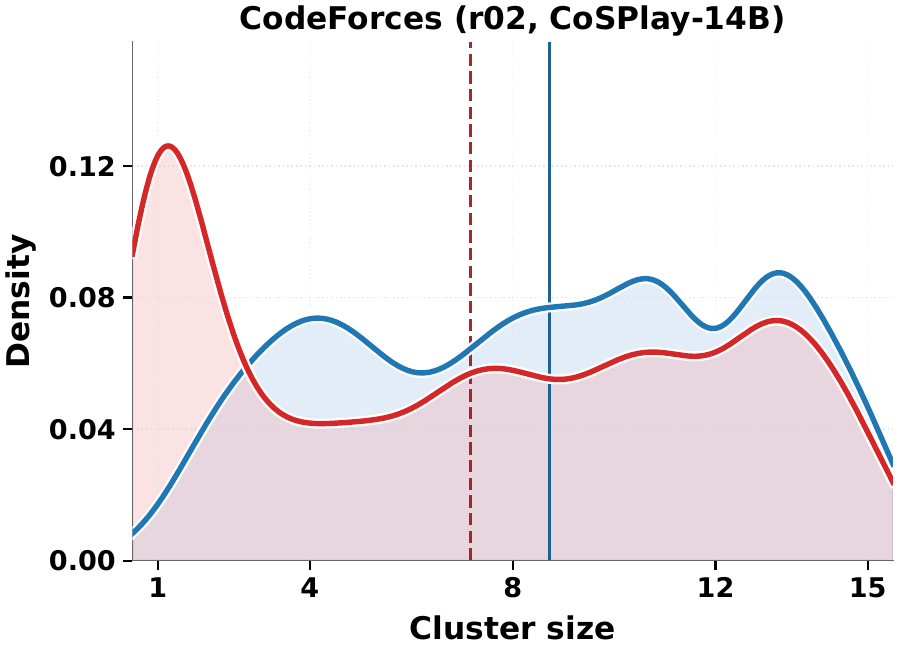}\\[-1.0em]
    \includegraphics[width=0.28\textwidth]{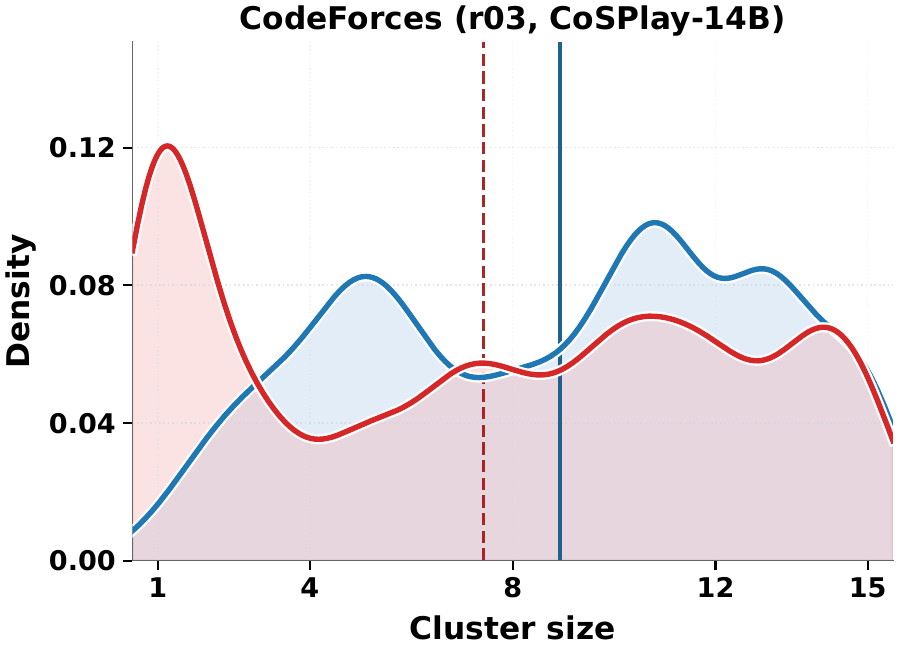}\hfill
    \includegraphics[width=0.28\textwidth]{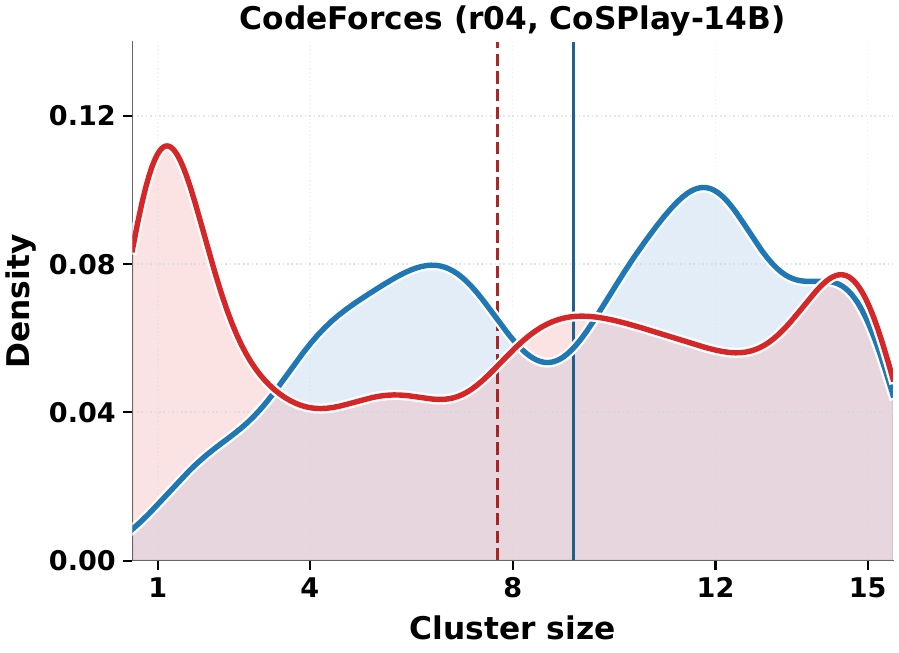}\hfill
    \includegraphics[width=0.28\textwidth]{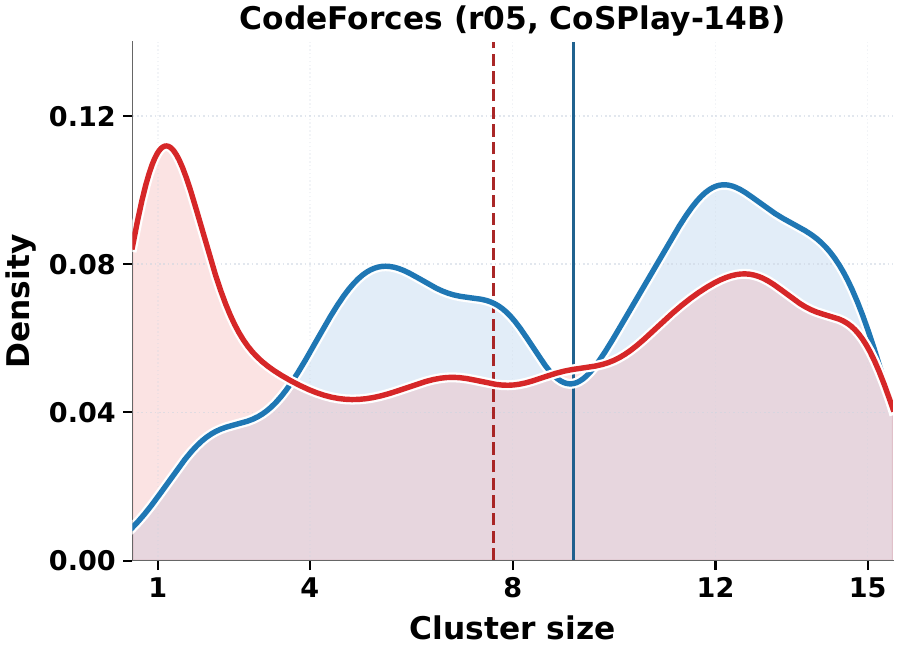}

    \vspace{-0.0em}
    \captionof{figure}{Density distributions of cluster sizes for correct (blue) and wrong (red) code candidates during self-play on \textbf{CodeForces} with the \textbf{14B model}. The top row shows Round 0--2, and the bottom row shows Round 3--5. Vertical lines denote average values.}
    \label{fig:cluster_density_14b_codeforces}
\end{minipage}


\par\noindent
\begin{minipage}{\textwidth}
    \centering
    \includegraphics[width=0.28\textwidth]{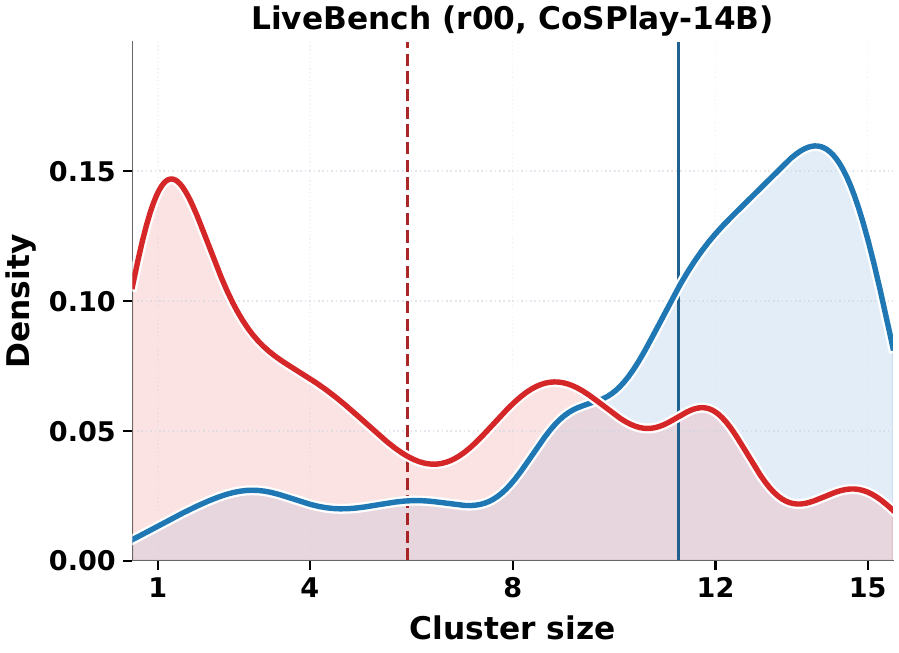}\hfill
    \includegraphics[width=0.28\textwidth]{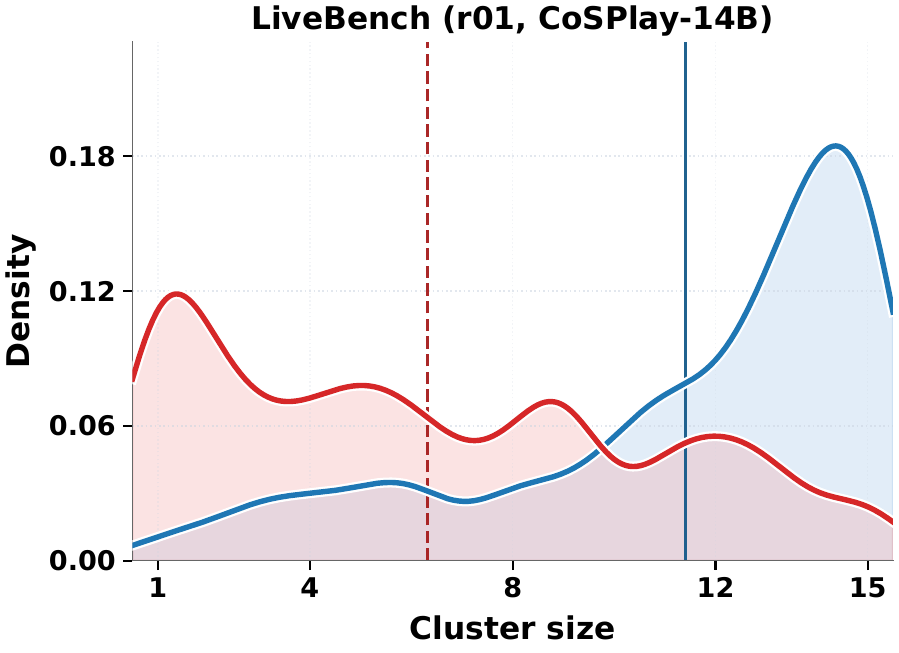}\hfill
    \includegraphics[width=0.28\textwidth]{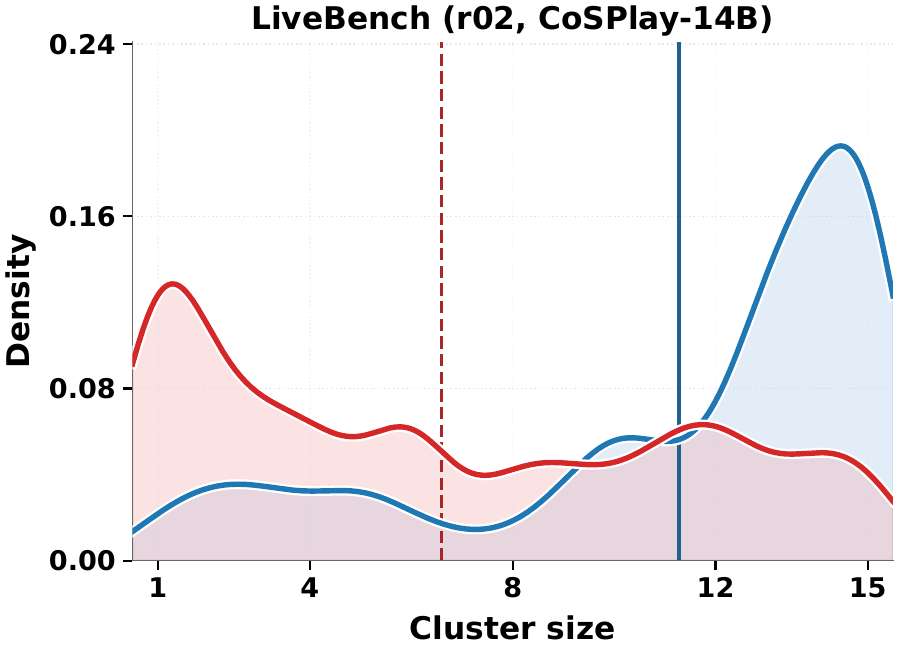}\\[-1.0em]
    \includegraphics[width=0.28\textwidth]{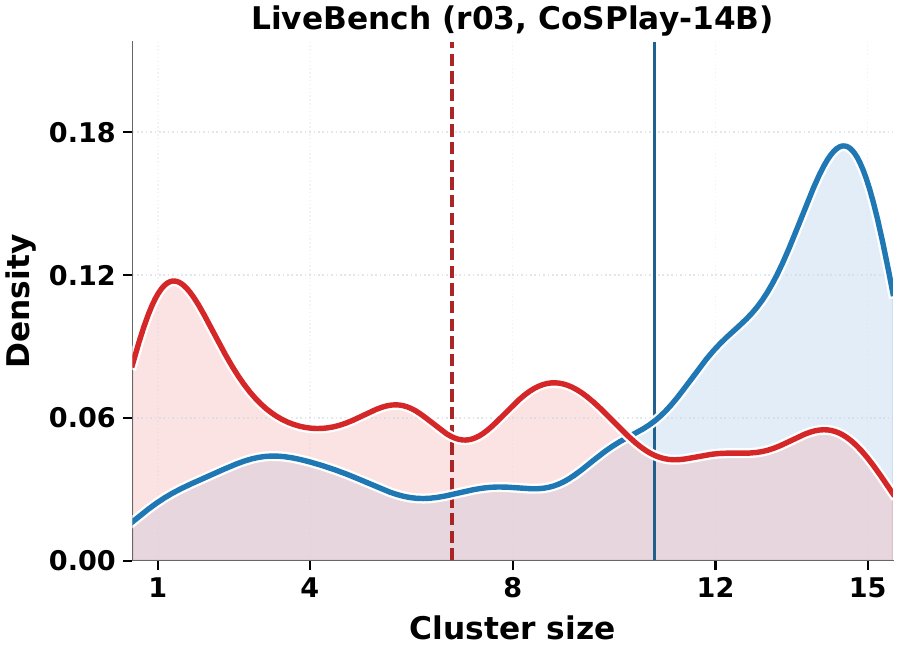}\hfill
    \includegraphics[width=0.28\textwidth]{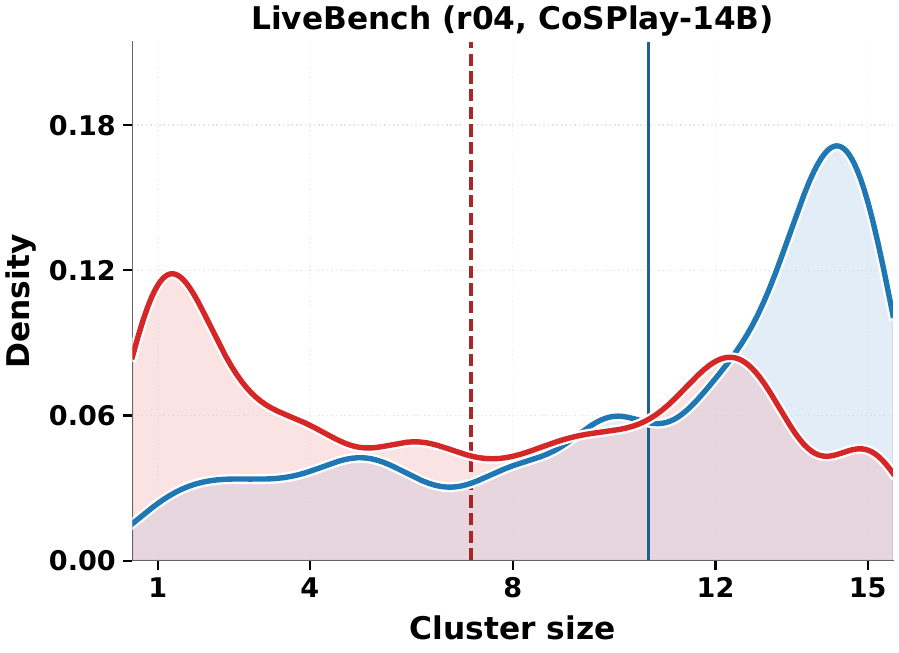}\hfill
    \includegraphics[width=0.28\textwidth]{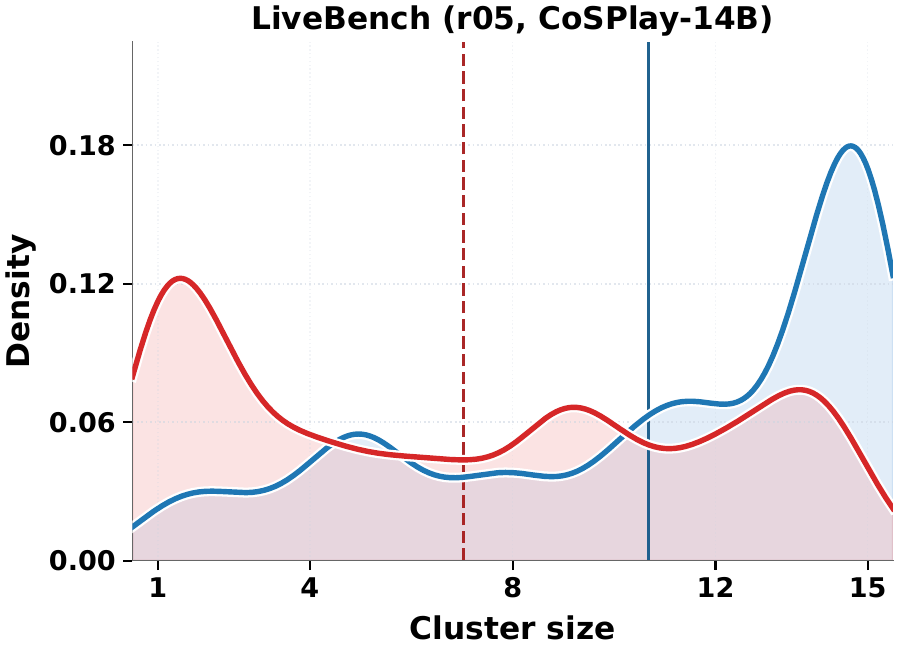}

    \vspace{-0.0em}
    \captionof{figure}{Density distributions of cluster sizes for correct (blue) and wrong (red) code candidates during self-play on \textbf{LiveBench} with the \textbf{14B model}. The top row shows Round 0--2, and the bottom row shows Round 3--5. Vertical lines denote average values.}
    \label{fig:cluster_density_14b_livebench}
\end{minipage}

\par\noindent
\begin{minipage}{\textwidth}
    \centering
    \includegraphics[width=0.28\textwidth]{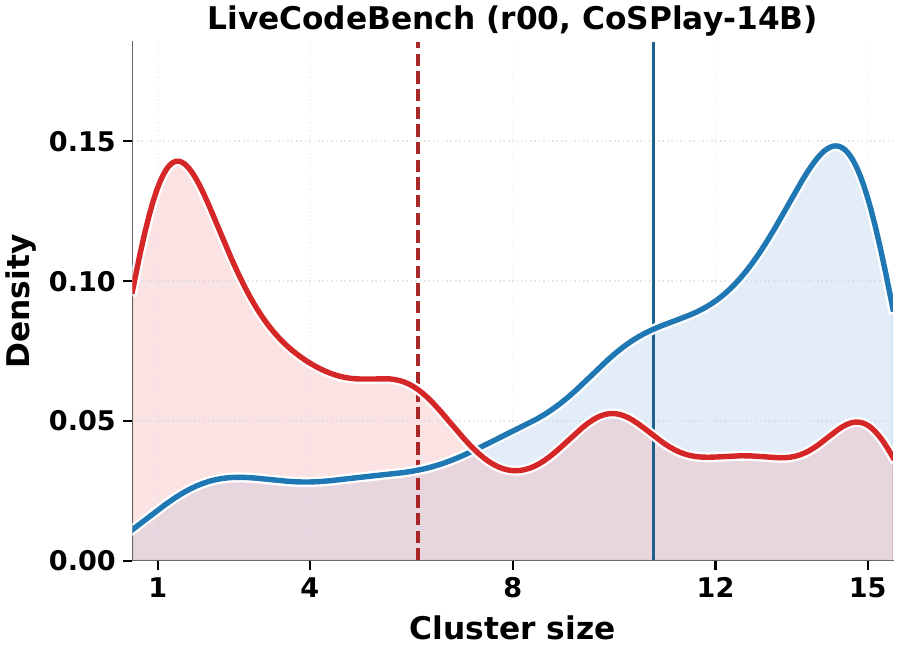}\hfill
    \includegraphics[width=0.28\textwidth]{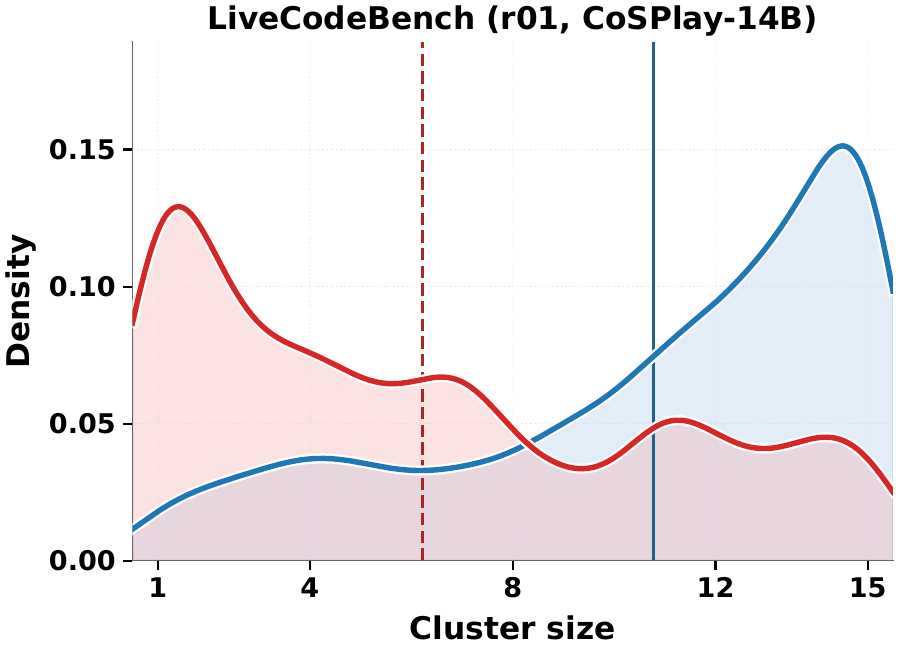}\hfill
    \includegraphics[width=0.28\textwidth]{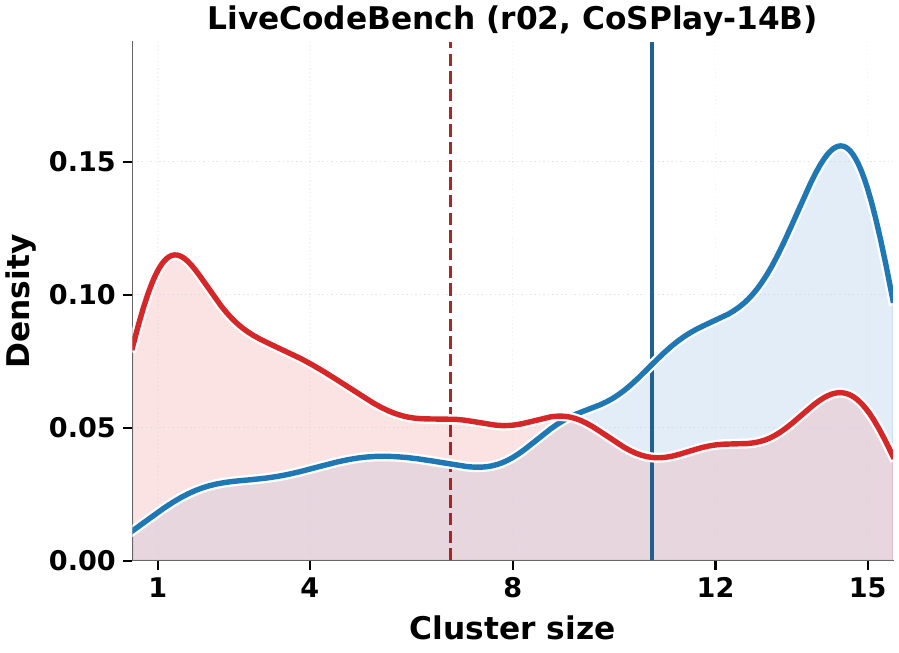}\\[-1.0em]
    \includegraphics[width=0.28\textwidth]{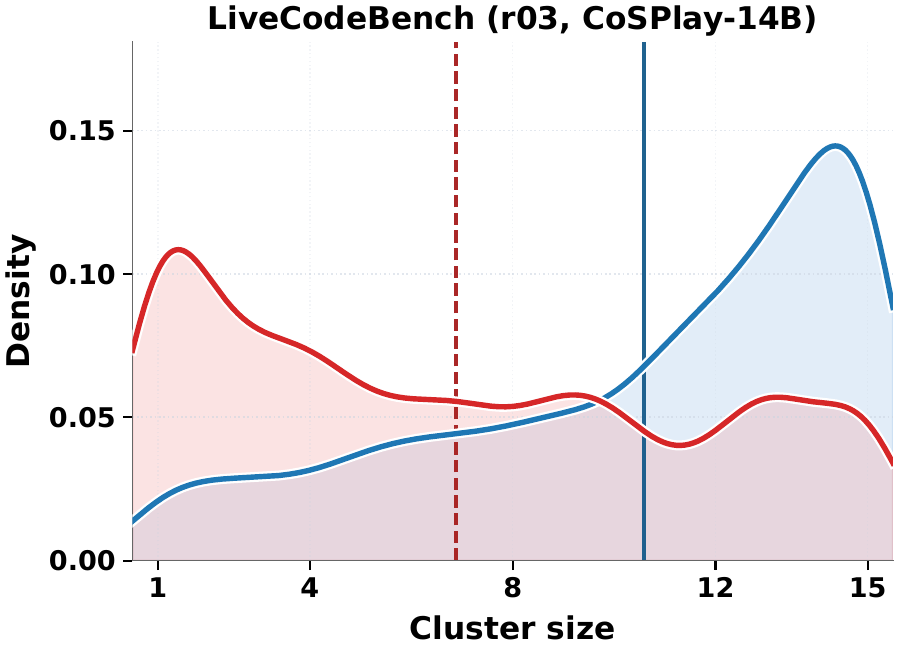}\hfill
    \includegraphics[width=0.28\textwidth]{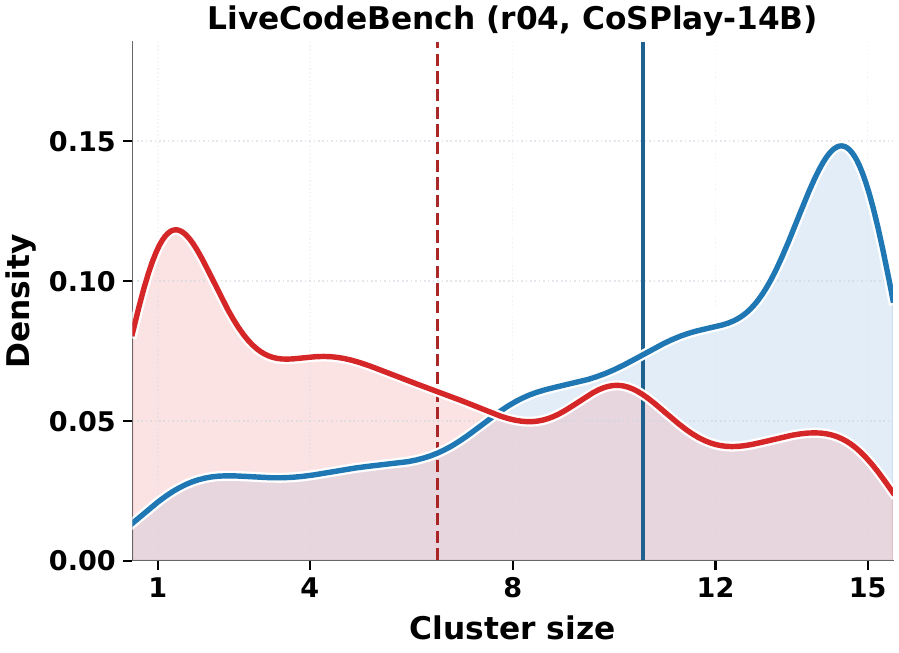}\hfill
    \includegraphics[width=0.28\textwidth]{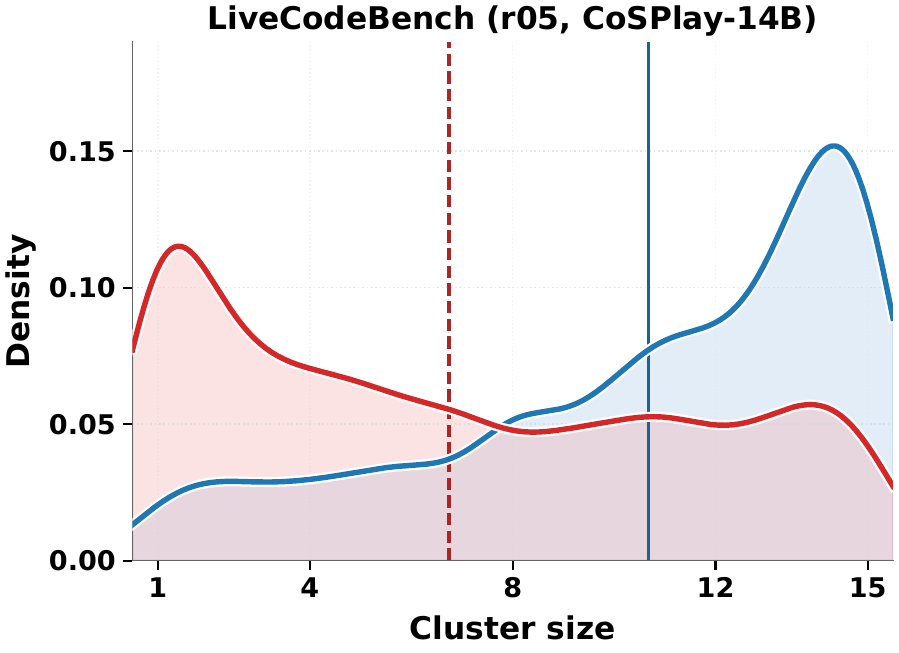}

    \vspace{-0.0em}
    \captionof{figure}{Density distributions of cluster sizes for correct (blue) and wrong (red) code candidates during self-play on \textbf{LiveCodeBench} with the \textbf{14B model}. The top row shows Round 0--2, and the bottom row shows Round 3--5. Vertical lines denote average values.}
    \label{fig:cluster_density_14b_livecodebench}
\end{minipage}

\clearpage
\section{Detailed Evolution of UT Pass-Count Density}
\label{app:ut pass count density}

Figures~\ref{fig:ut_pass_count_density_7b_codecontests}-\ref{fig:ut_pass_count_density_14b_livecodebench} provide per-dataset and per-round visualizations of the UT pass-count density for both 7B and 14B models. The UT pass count measures how many code candidates pass a generated UT. Across datasets, the density mass gradually shifts from extremely low pass-count regions toward higher pass-count regions as self-play proceeds. This indicates that CoSPlay progressively reduces overly hard, noisy, or weakly validated UTs that are passed by few candidates, while increasing the proportion of UTs that are semantically more consistent with the code pool.

Since UT pass count is positively correlated with true UT correctness, the rightward shift suggests that the UT pool becomes more reliable over self-play rounds. Meanwhile, the continued replacement of zero-discrimination UTs prevents the pool from collapsing into trivial tests, allowing the evolved UTs to remain useful for both refinement and selection.

\vspace{-1.0em}
\begin{figure}[H]
    \centering
    \includegraphics[width=0.28\textwidth]{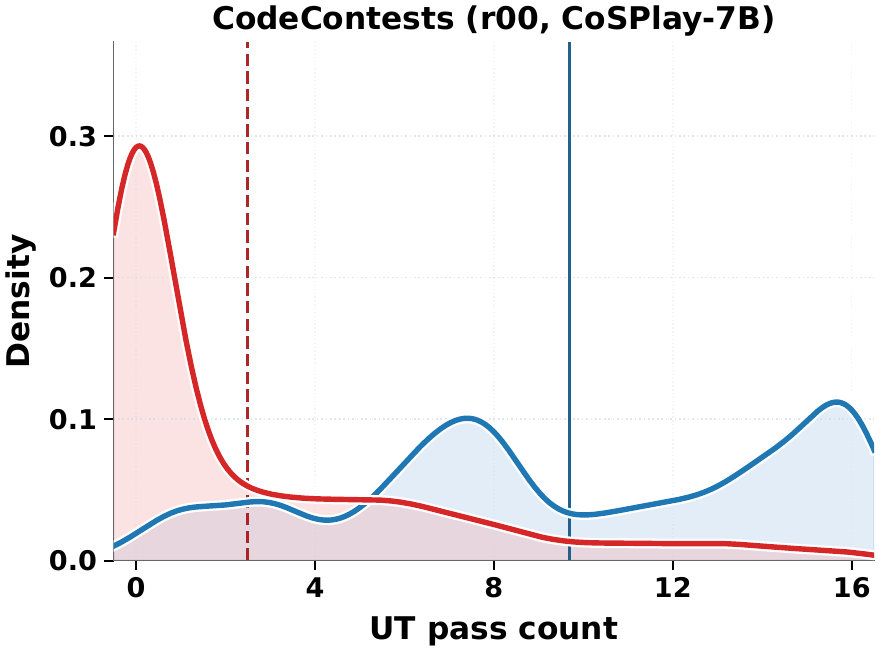}\hfill
    \includegraphics[width=0.28\textwidth]{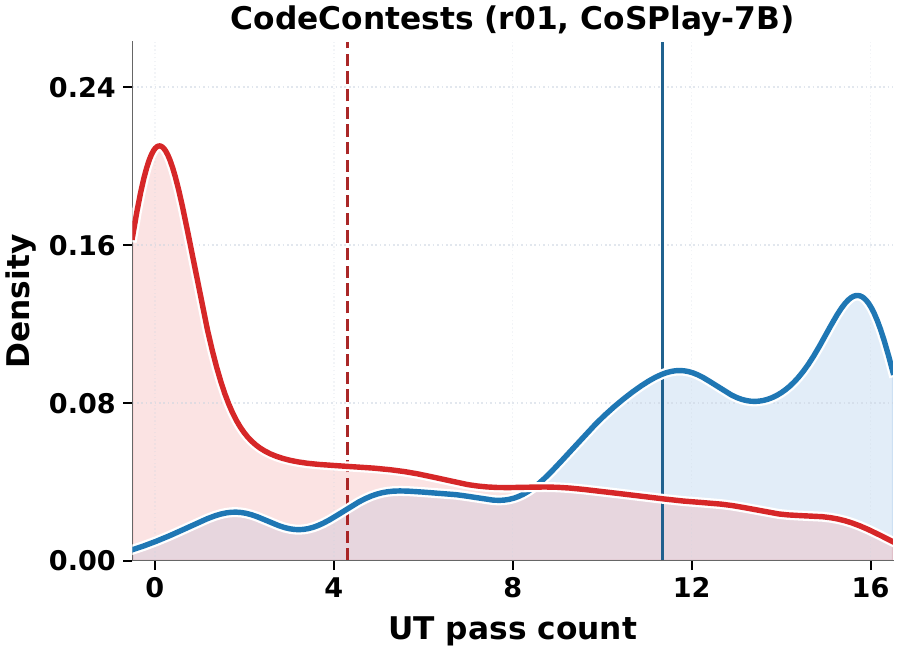}\hfill
    \includegraphics[width=0.28\textwidth]{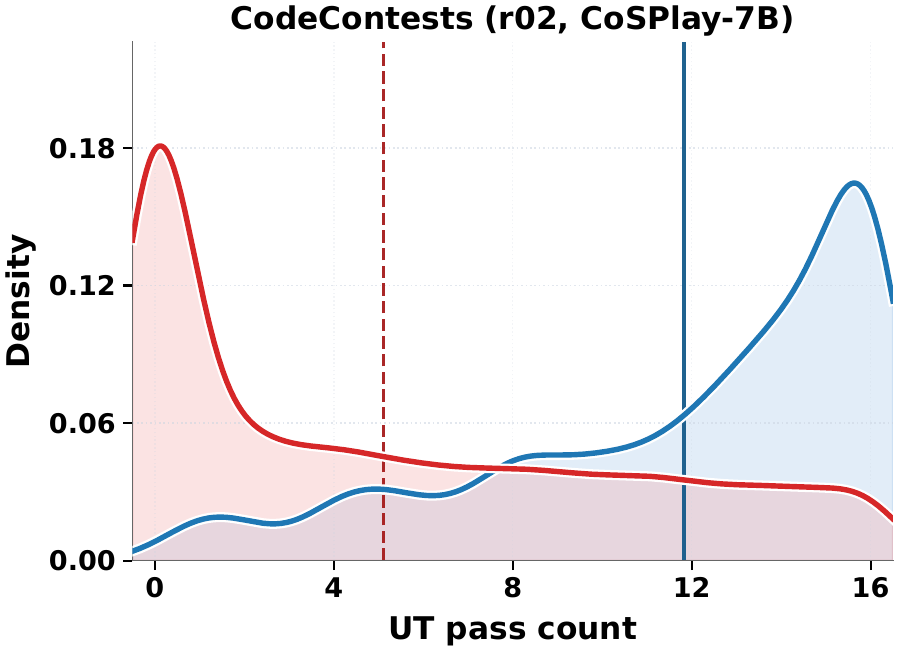}
    \vspace{-1.0em}

    \includegraphics[width=0.28\textwidth]{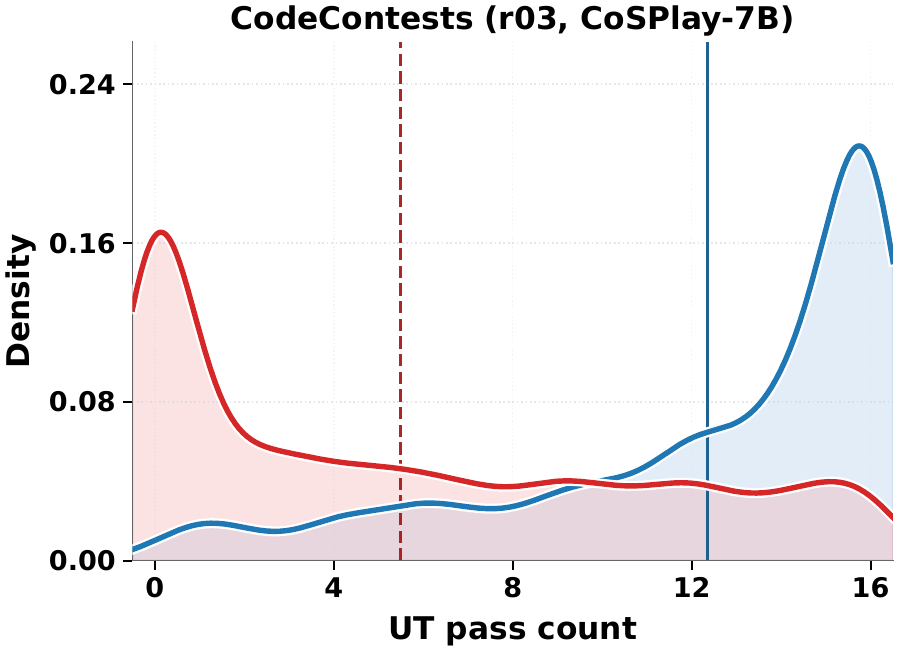}\hfill
    \includegraphics[width=0.28\textwidth]{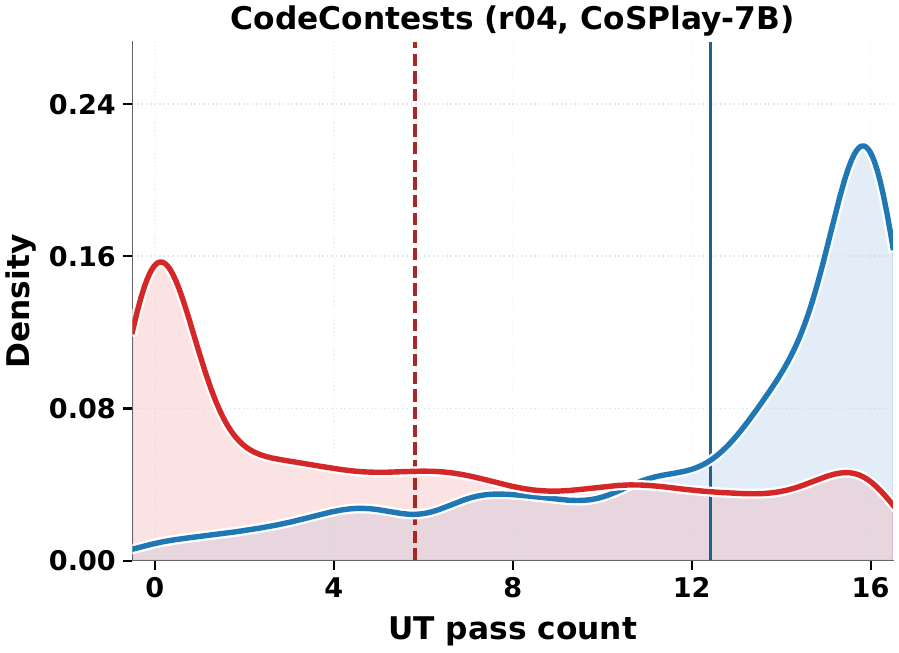}\hfill
    \includegraphics[width=0.28\textwidth]{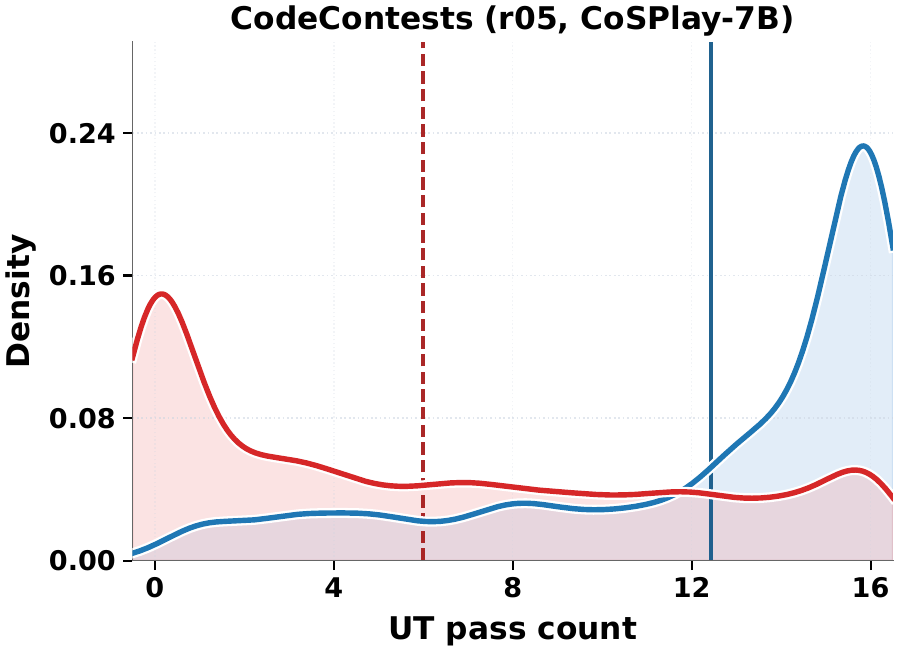}

    \caption{Density distributions of UT pass counts for correct (blue) and wrong (red)  UT candidates during self-play on \textbf{CodeContests} with the \textbf{7B model}. The top row shows Round 0-2, and the bottom row shows Round 3-5. Vertical lines mean the average values.}
    \label{fig:ut_pass_count_density_7b_codecontests}
\end{figure}


\begin{figure}[H]
    \centering
    \includegraphics[width=0.28\textwidth]{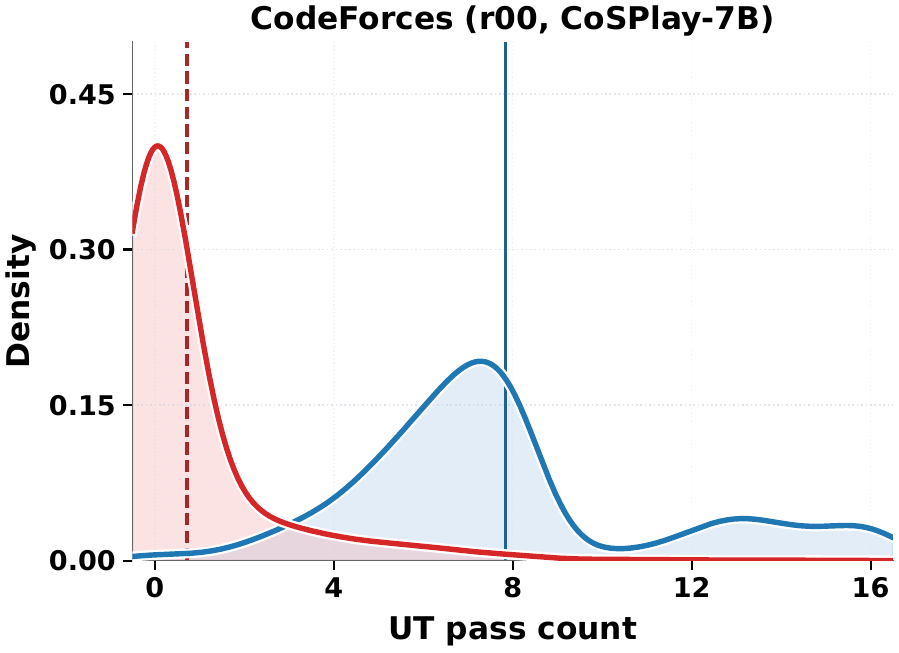}\hfill
    \includegraphics[width=0.28\textwidth]{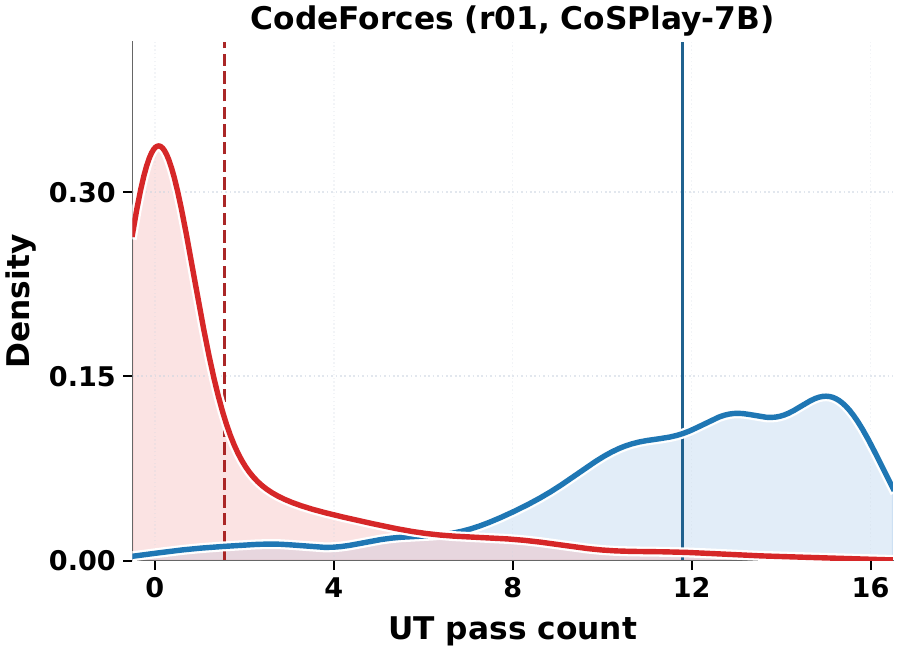}\hfill
    \includegraphics[width=0.28\textwidth]{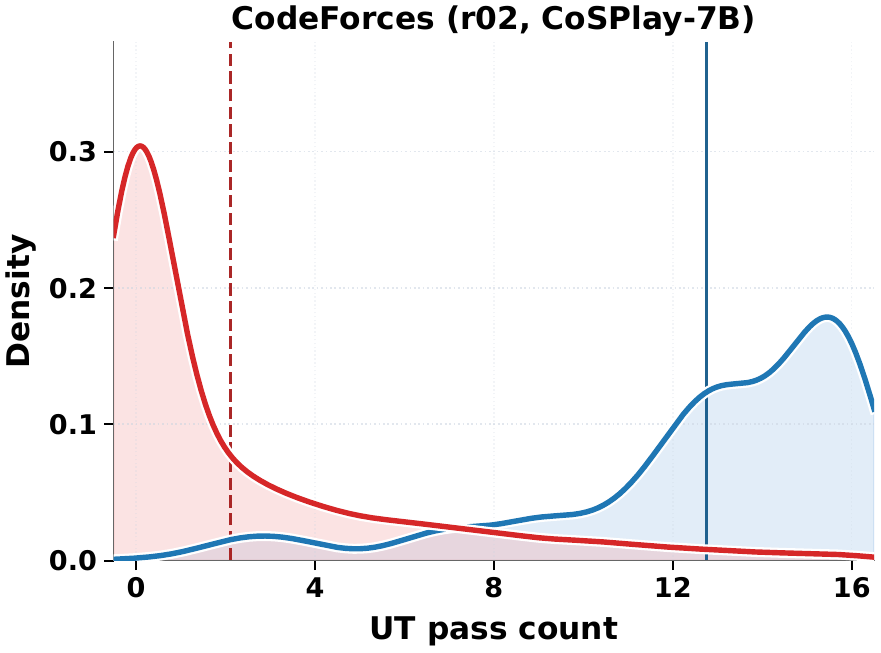}
    \vspace{-1.0em}

    \includegraphics[width=0.28\textwidth]{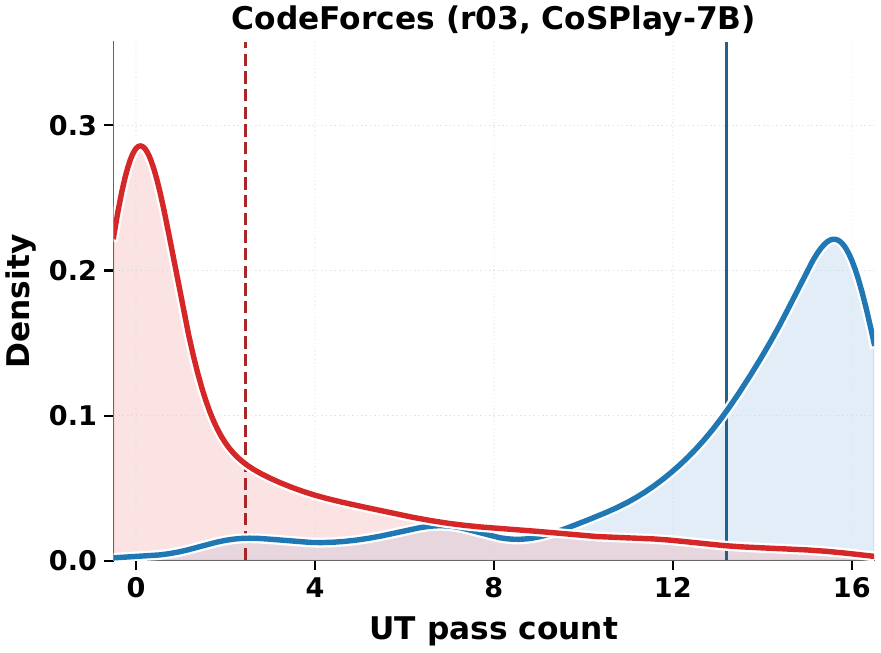}\hfill
    \includegraphics[width=0.28\textwidth]{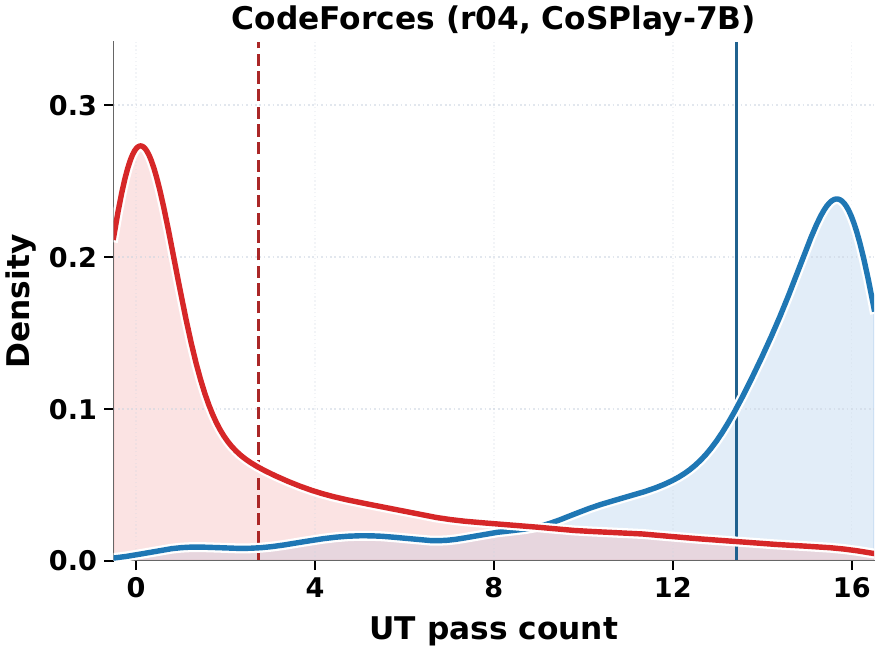}\hfill
    \includegraphics[width=0.28\textwidth]{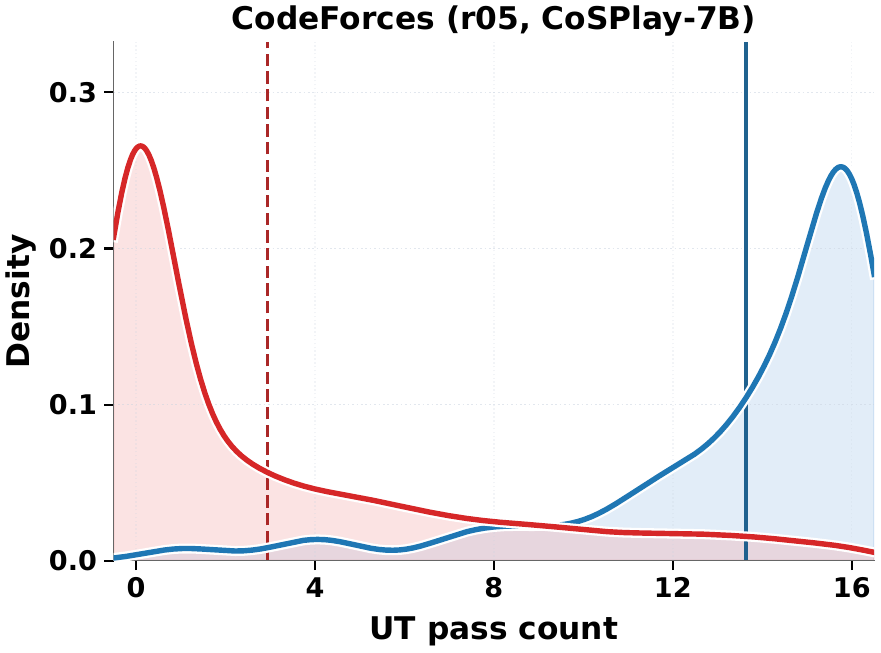}

    \caption{Density distributions of UT pass counts for correct (blue) and wrong (red)  UT candidates during self-play on \textbf{CodeForces} with the \textbf{7B model}. The top row shows Round 0-2, and the bottom row shows Round 3-5. Vertical lines mean the average values.}
    \label{fig:ut_pass_count_density_7b_codeforces}
\end{figure}

\begin{figure}[!t]
    \centering
    \includegraphics[width=0.28\textwidth]{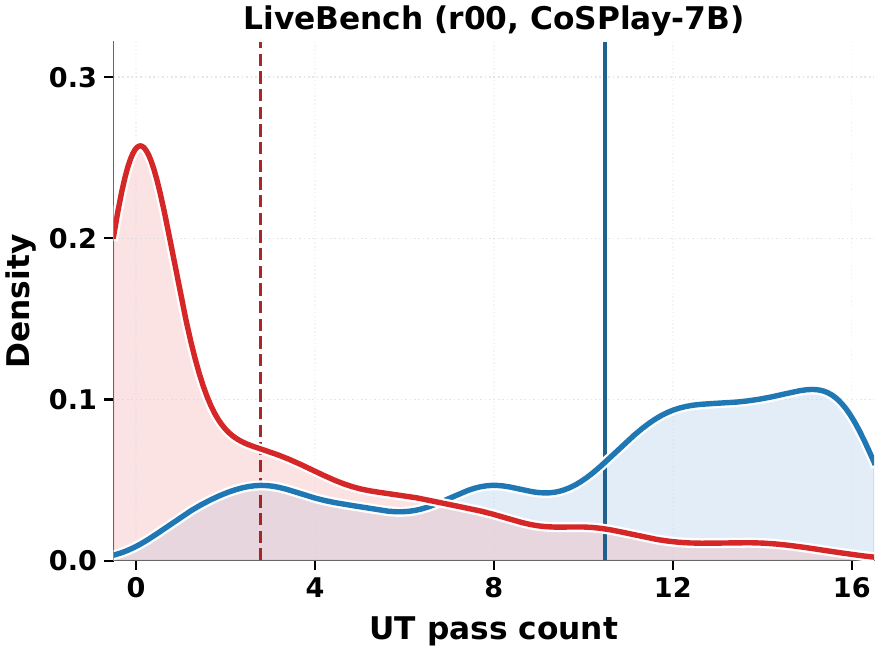}\hfill
    \includegraphics[width=0.28\textwidth]{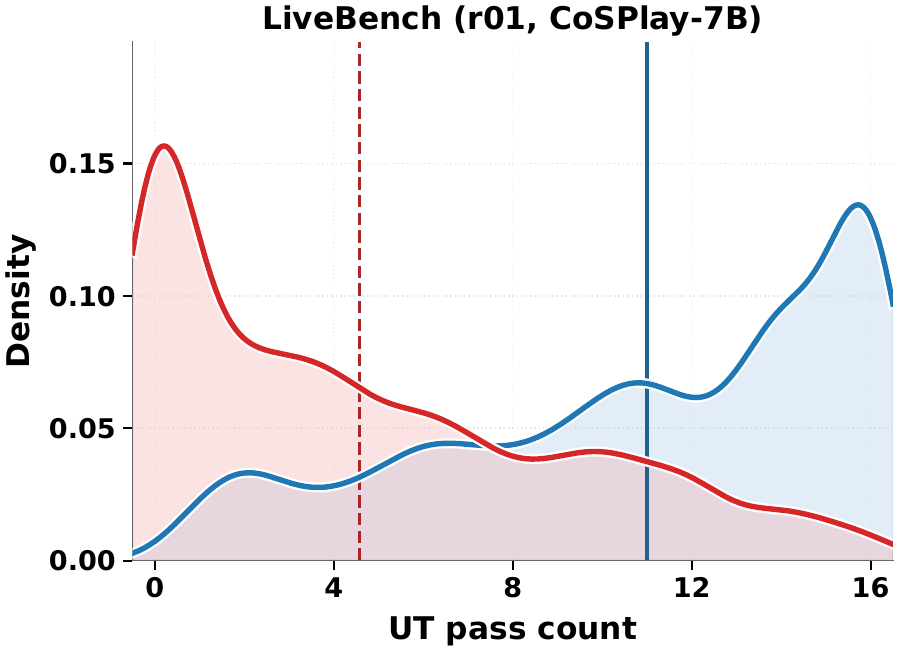}\hfill
    \includegraphics[width=0.28\textwidth]{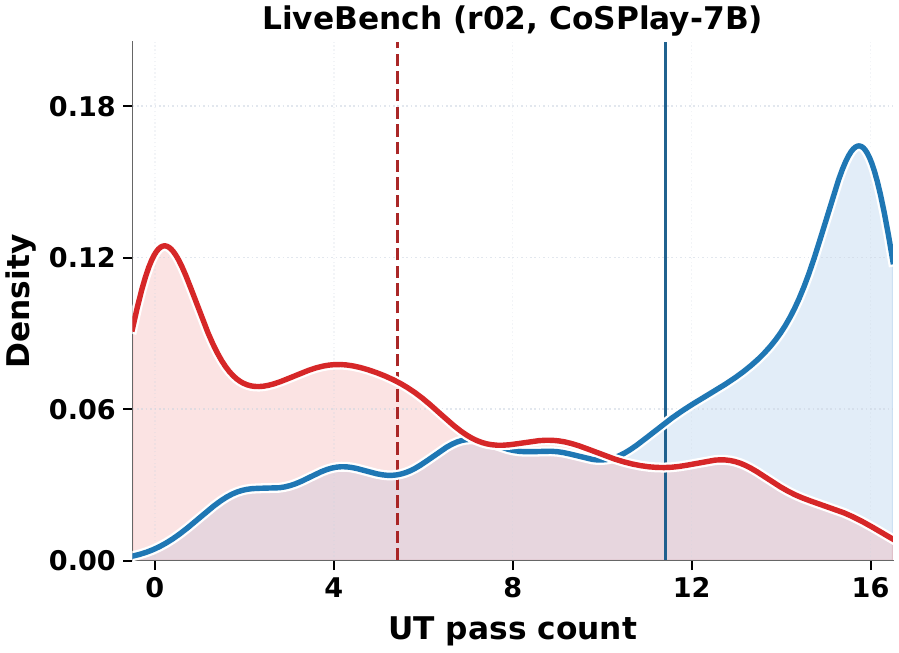}
    \vspace{-1.0em}

    \includegraphics[width=0.28\textwidth]{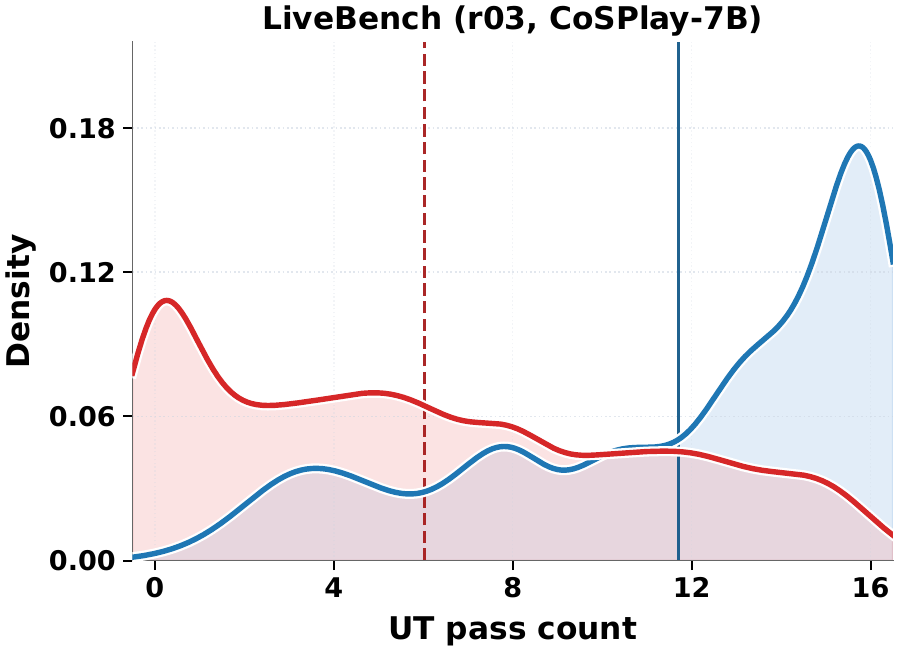}\hfill
    \includegraphics[width=0.28\textwidth]{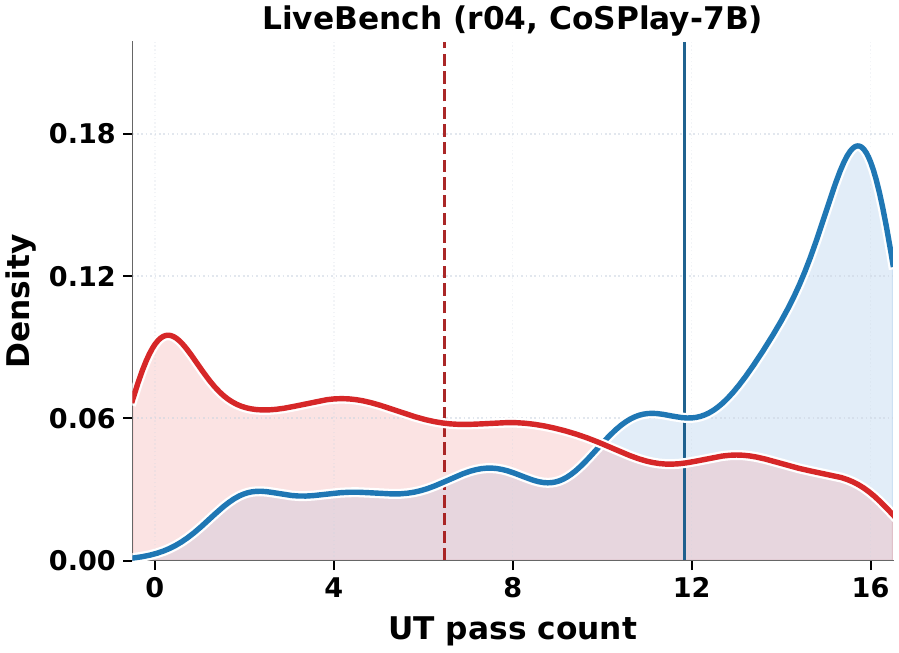}\hfill
    \includegraphics[width=0.28\textwidth]{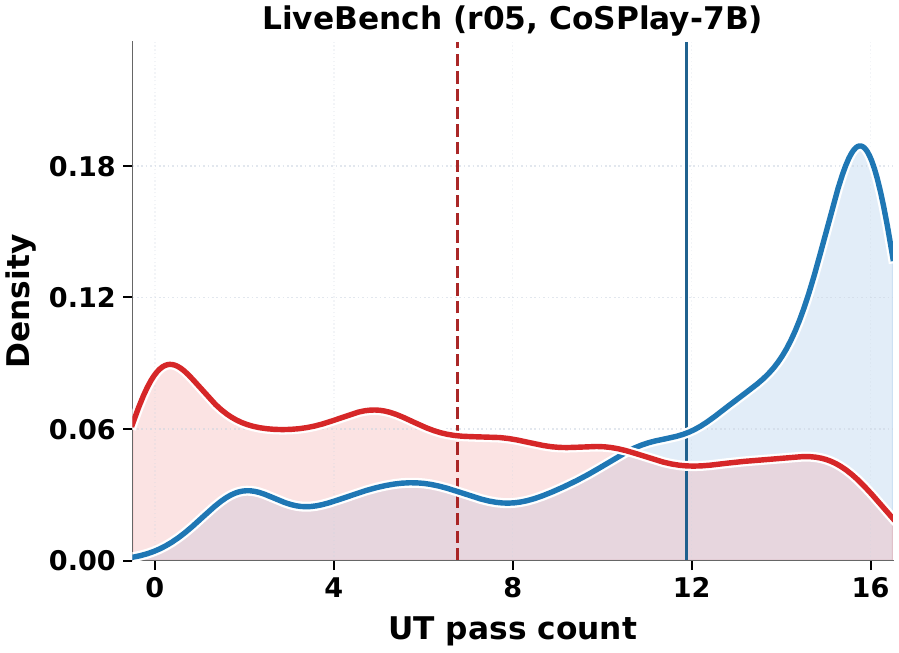}

    \caption{Density distributions of UT pass counts for correct (blue) and wrong (red) UT candidates during self-play on \textbf{LiveBench} with the \textbf{7B model}. The top row shows Round 0-2, and the bottom row shows Round 3-5. Vertical lines mean the average values.}
    \label{fig:ut_pass_count_density_7b_livebench}
\end{figure}

\begin{figure}[!t]
    \centering
    \includegraphics[width=0.28\textwidth]{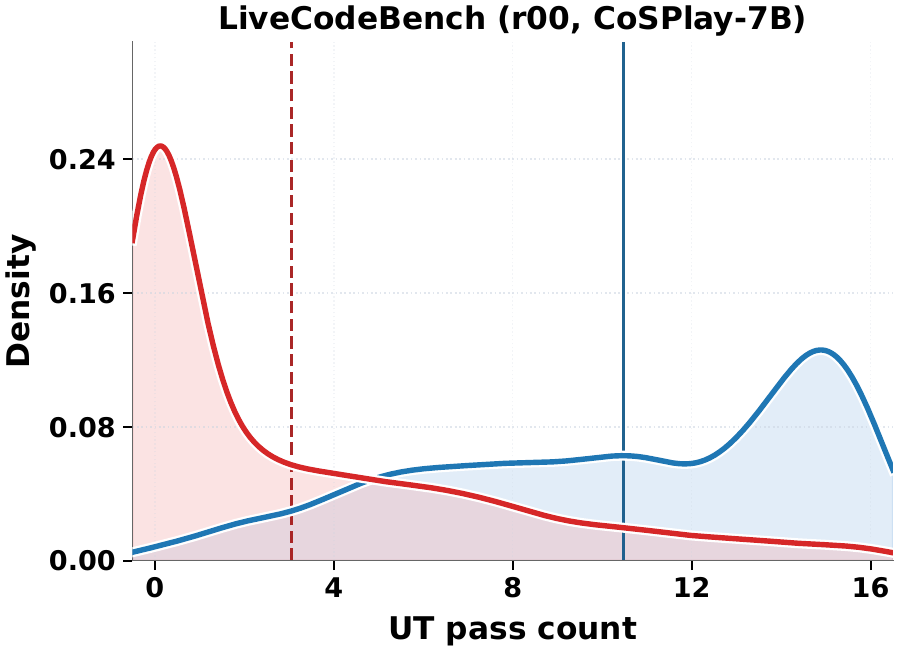}\hfill
    \includegraphics[width=0.28\textwidth]{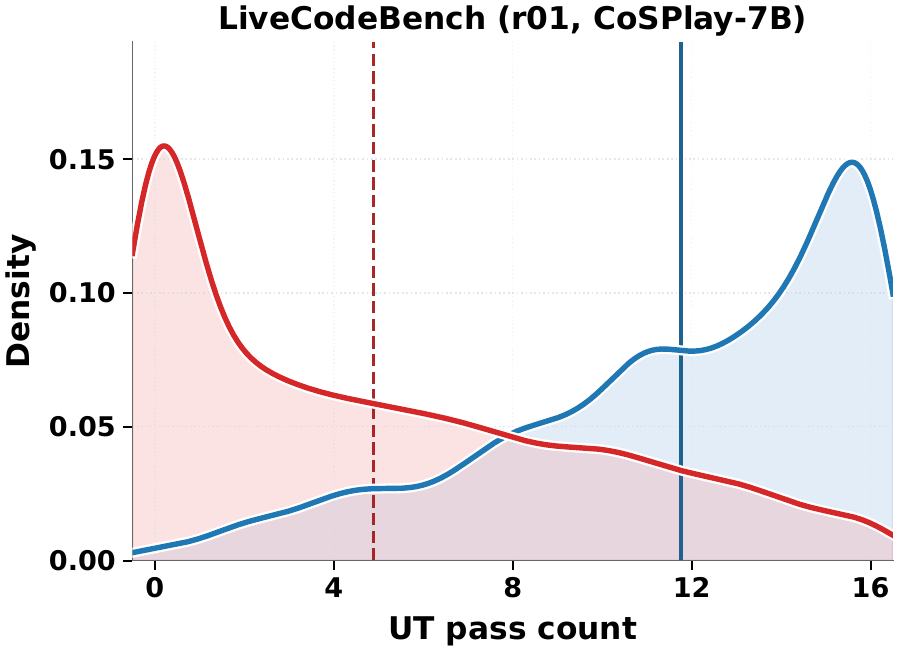}\hfill
    \includegraphics[width=0.28\textwidth]{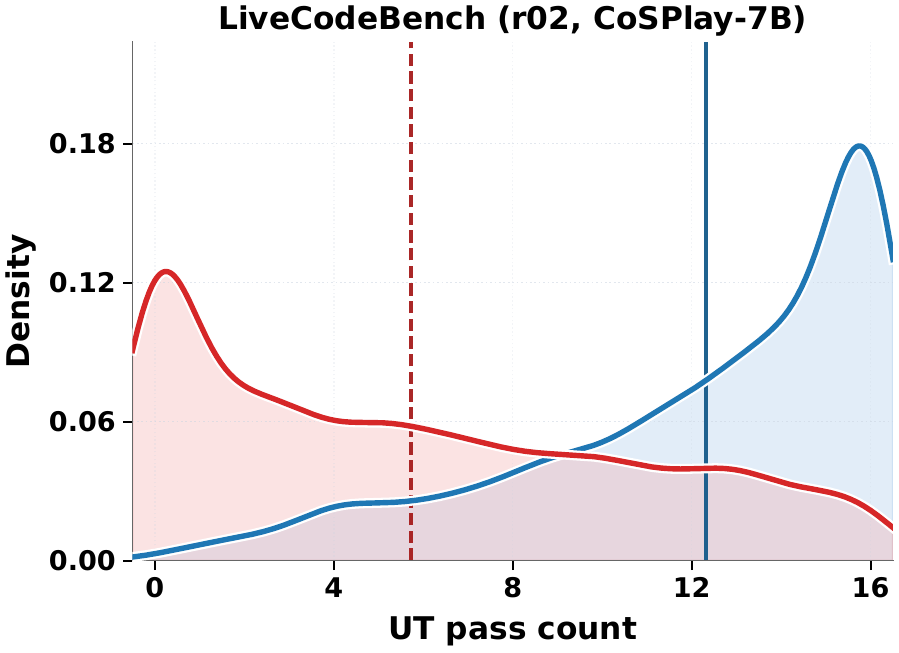}
    \vspace{-1.0em}

    \includegraphics[width=0.28\textwidth]{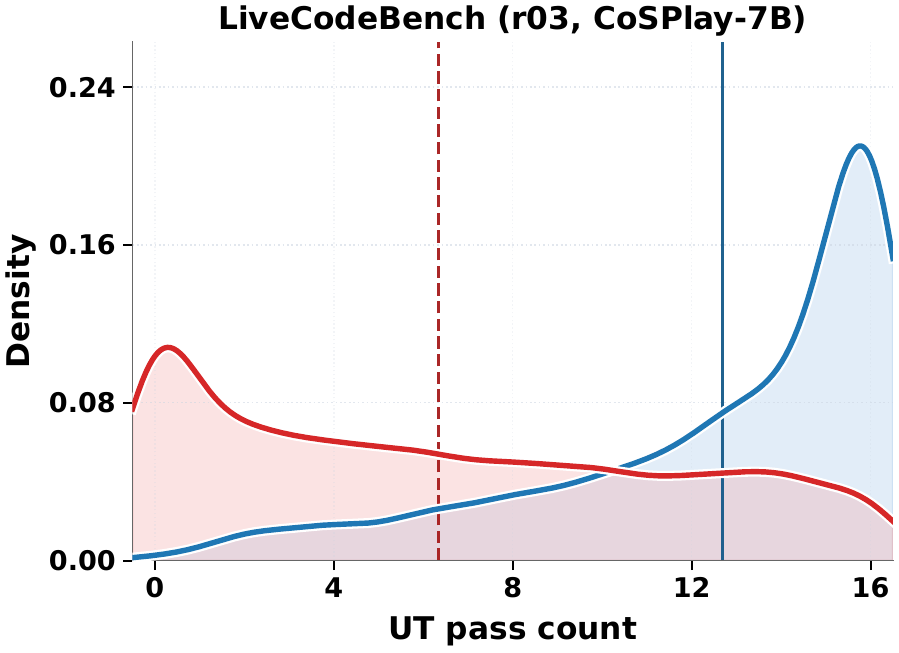}\hfill
    \includegraphics[width=0.28\textwidth]{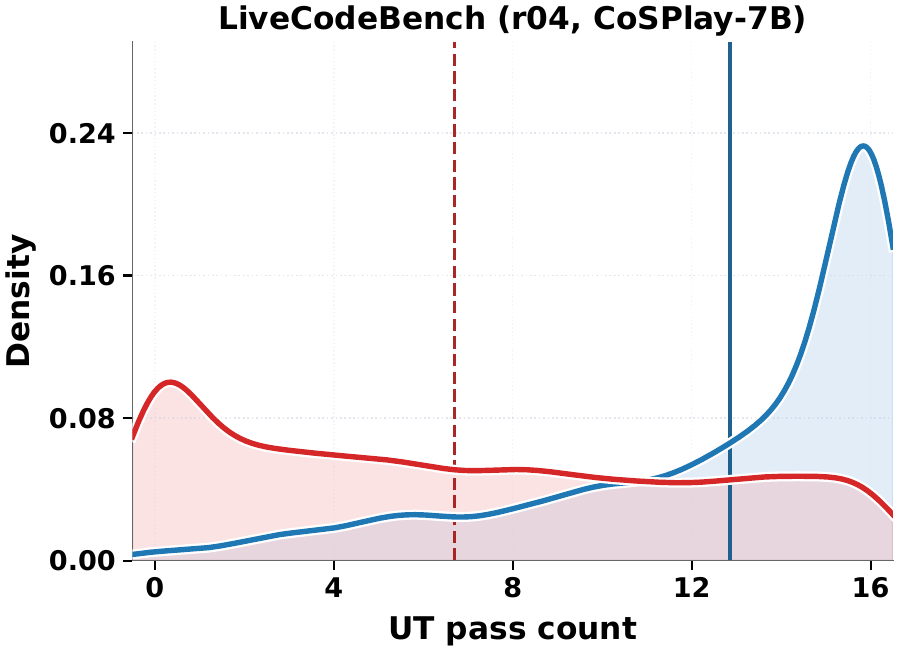}\hfill
    \includegraphics[width=0.28\textwidth]{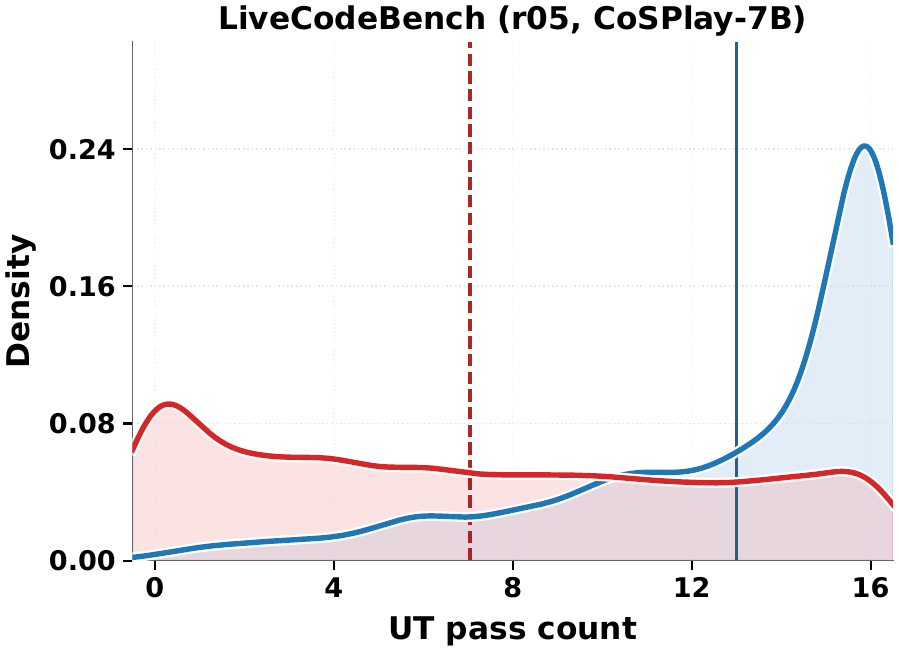}

    \caption{Density distributions of UT pass counts for correct (blue) and wrong (red)  UT candidates during self-play on \textbf{LiveCodeBench} with the \textbf{7B model}. The top row shows Round 0-2, and the bottom row shows Round 3-5. Vertical lines mean the average values.}
    \label{fig:ut_pass_count_density_7b_livecodebench}
\end{figure}

\begin{figure}[!t]
    \centering
    \includegraphics[width=0.28\textwidth]{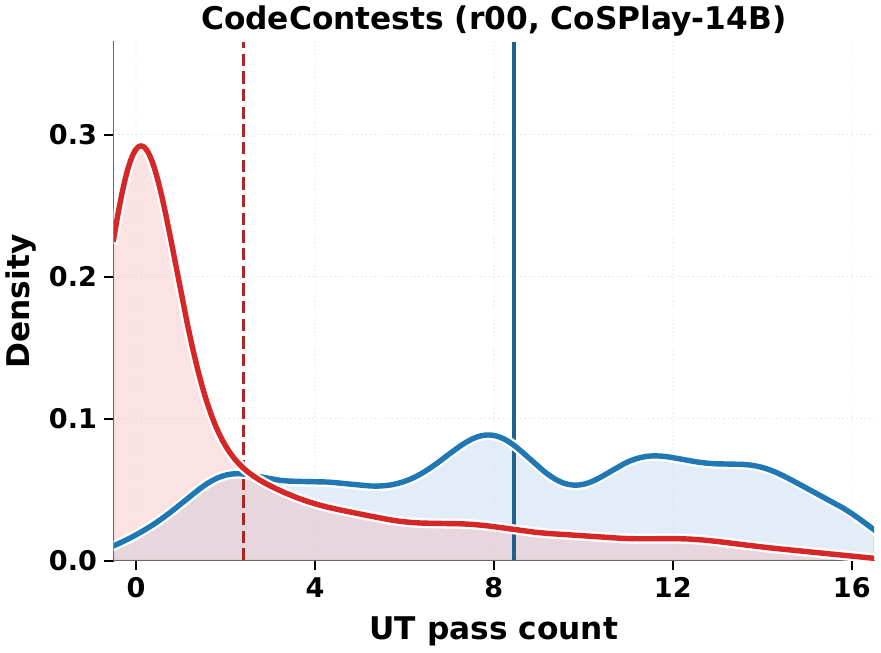}\hfill
    \includegraphics[width=0.28\textwidth]{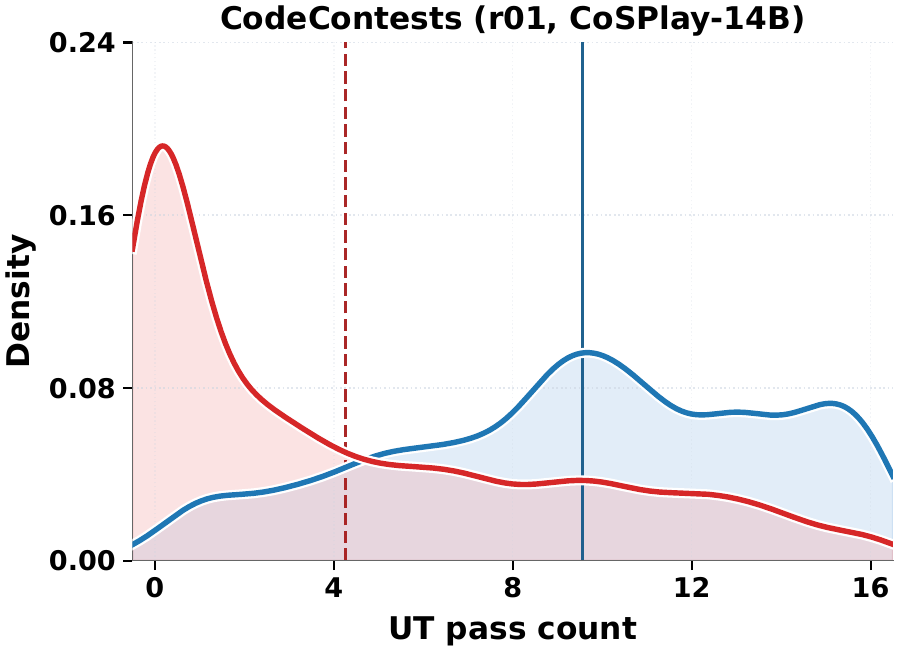}\hfill
    \includegraphics[width=0.28\textwidth]{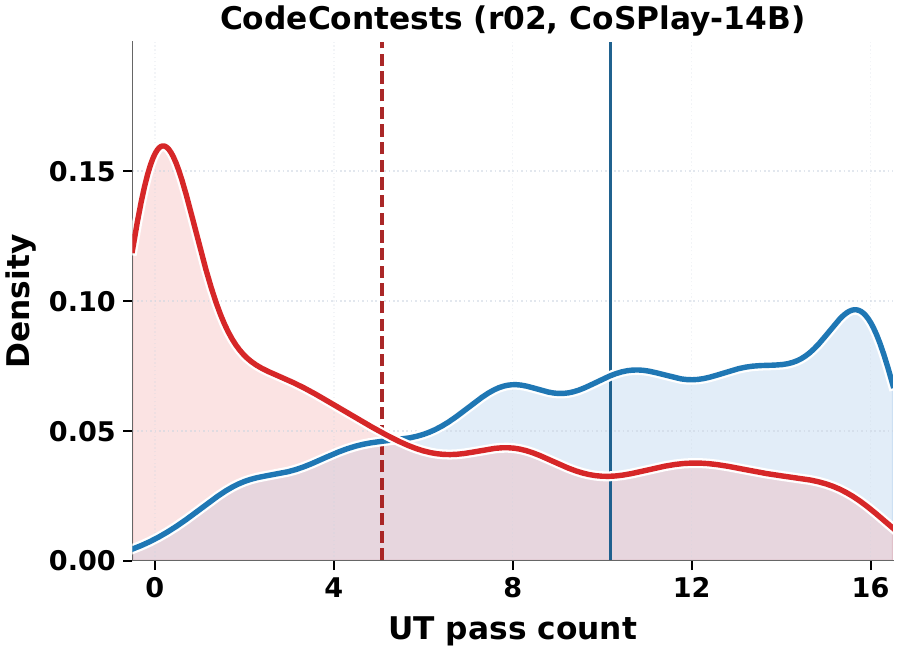}
    \vspace{-1.0em}

    \includegraphics[width=0.28\textwidth]{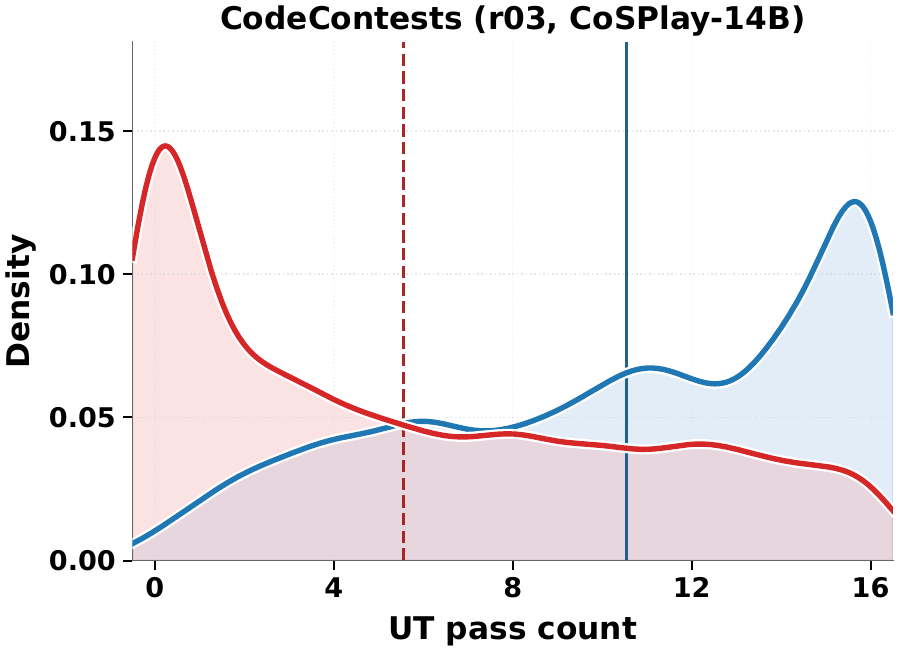}\hfill
    \includegraphics[width=0.28\textwidth]{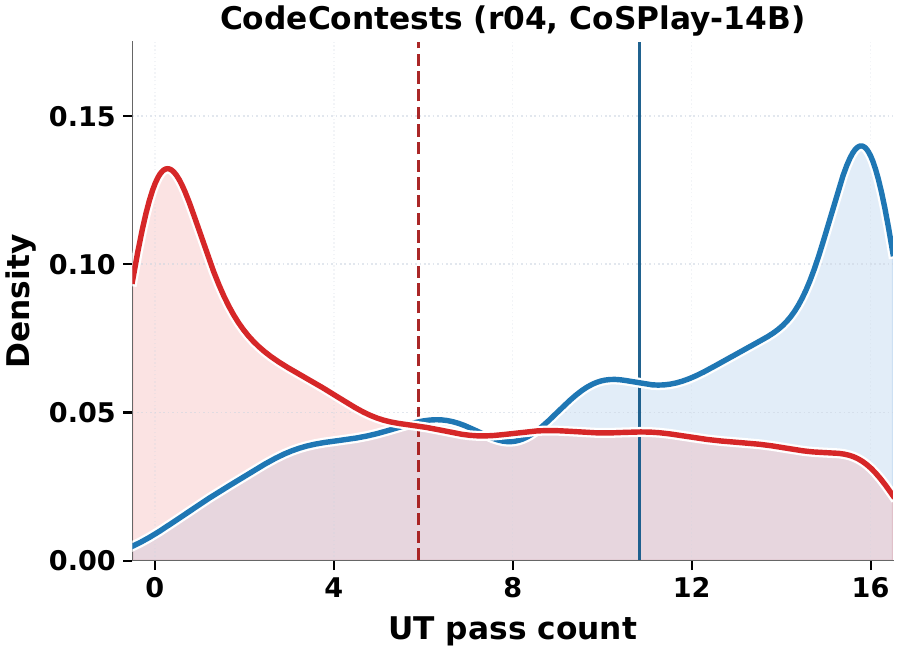}\hfill
    \includegraphics[width=0.28\textwidth]{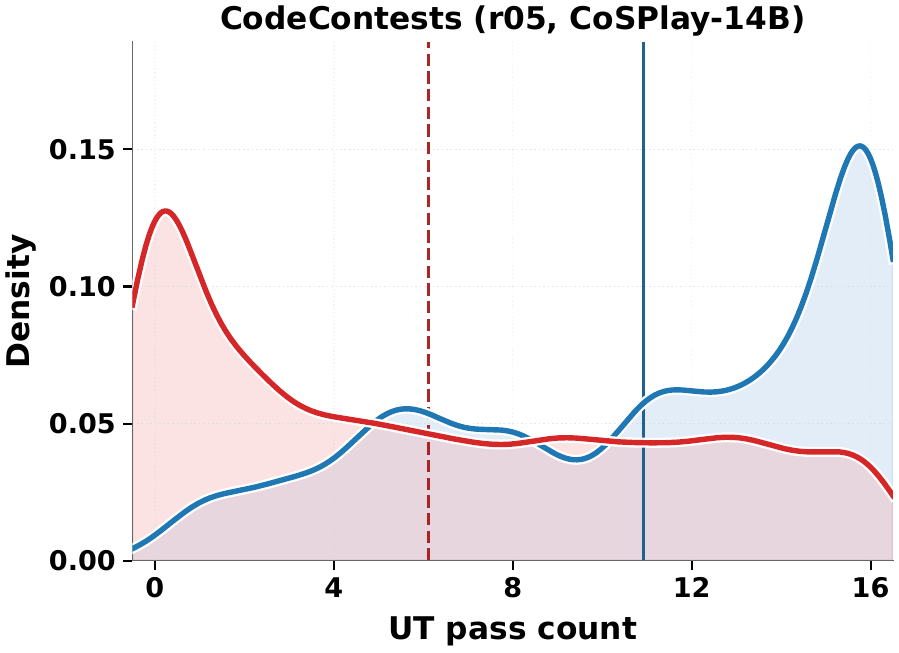}

    \caption{Density distributions of UT pass counts for correct (blue) and wrong (red)  UT candidates during self-play on \textbf{CodeContests} with the \textbf{14B model}. The top row shows Round 0-2, and the bottom row shows Round 3-5. Vertical lines mean the average values.}
    \label{fig:ut_pass_count_density_14b_codecontests}
\end{figure}

\begin{figure}[!t]
    \centering
    \includegraphics[width=0.28\textwidth]{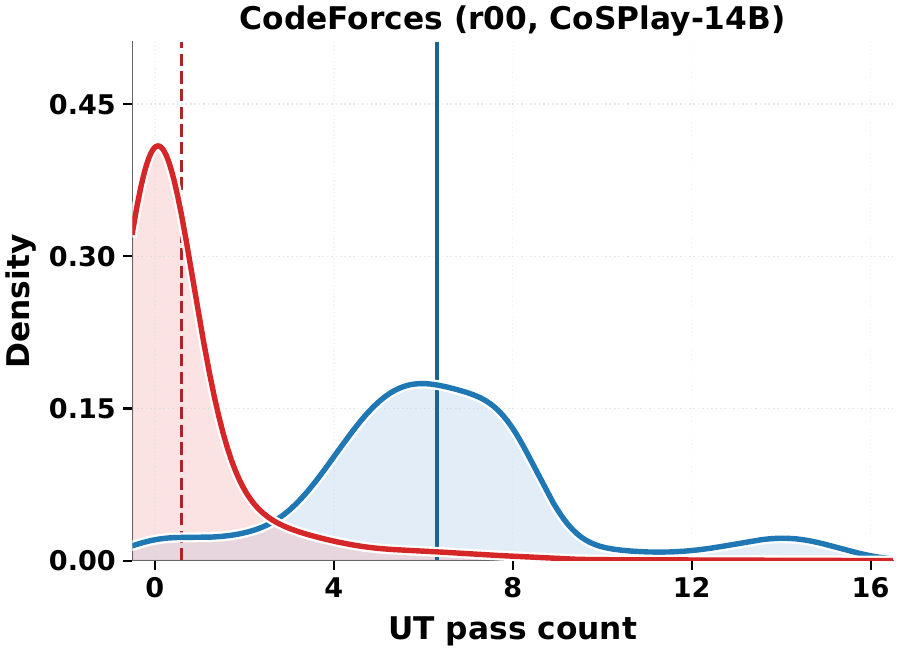}\hfill
    \includegraphics[width=0.28\textwidth]{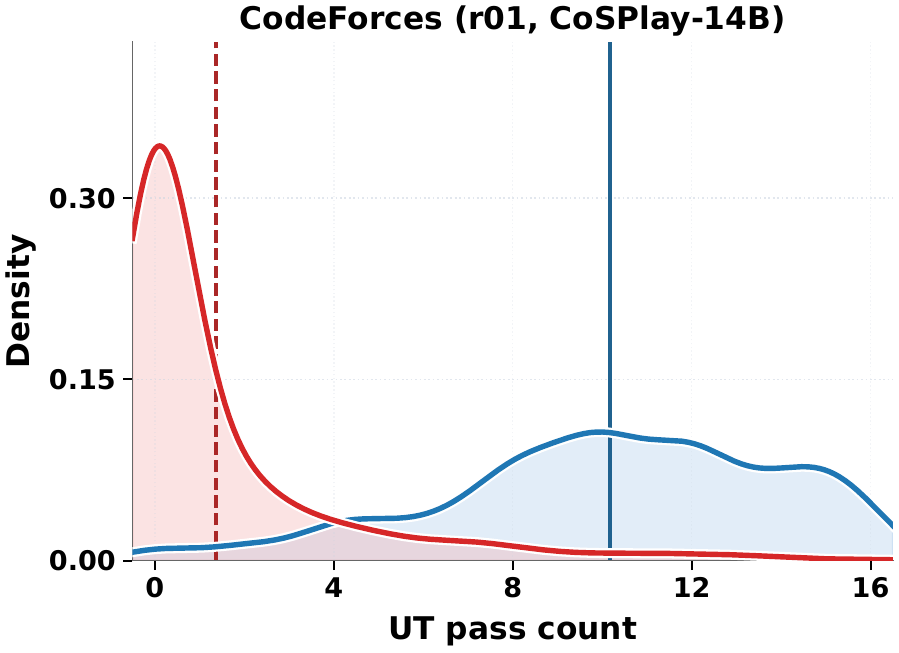}\hfill
    \includegraphics[width=0.28\textwidth]{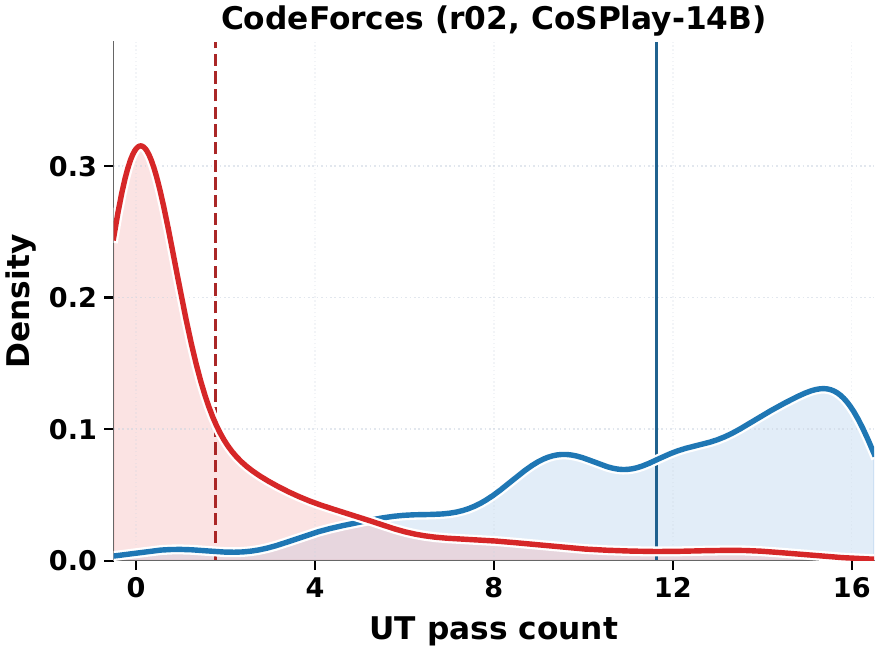}
    \vspace{-1.0em}

    \includegraphics[width=0.28\textwidth]{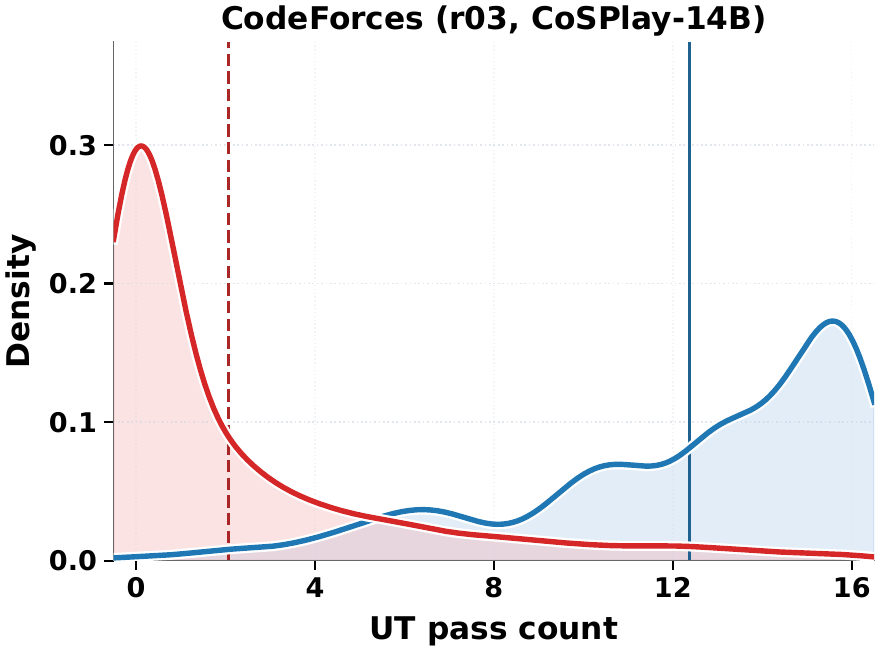}\hfill
    \includegraphics[width=0.28\textwidth]{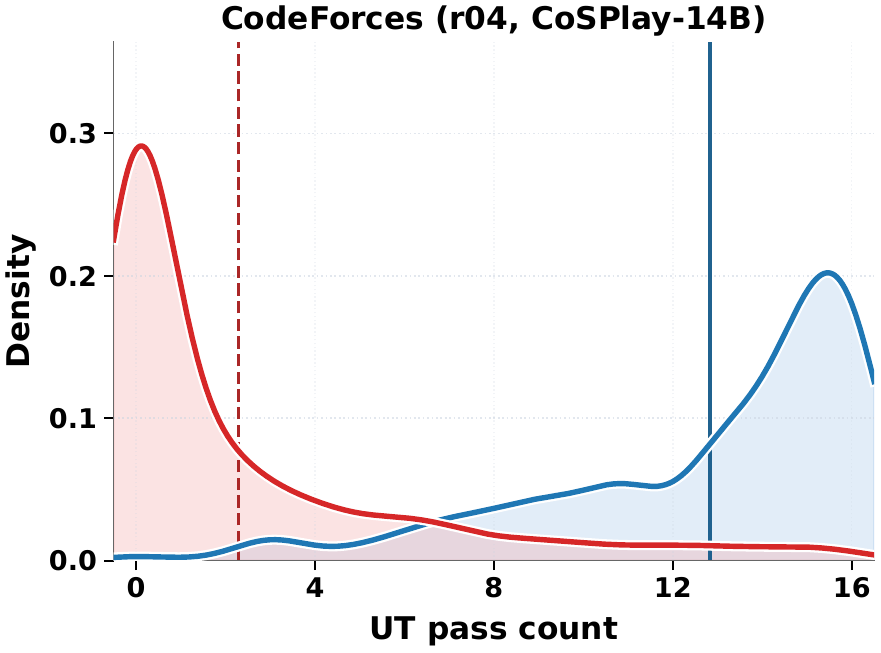}\hfill
    \includegraphics[width=0.28\textwidth]{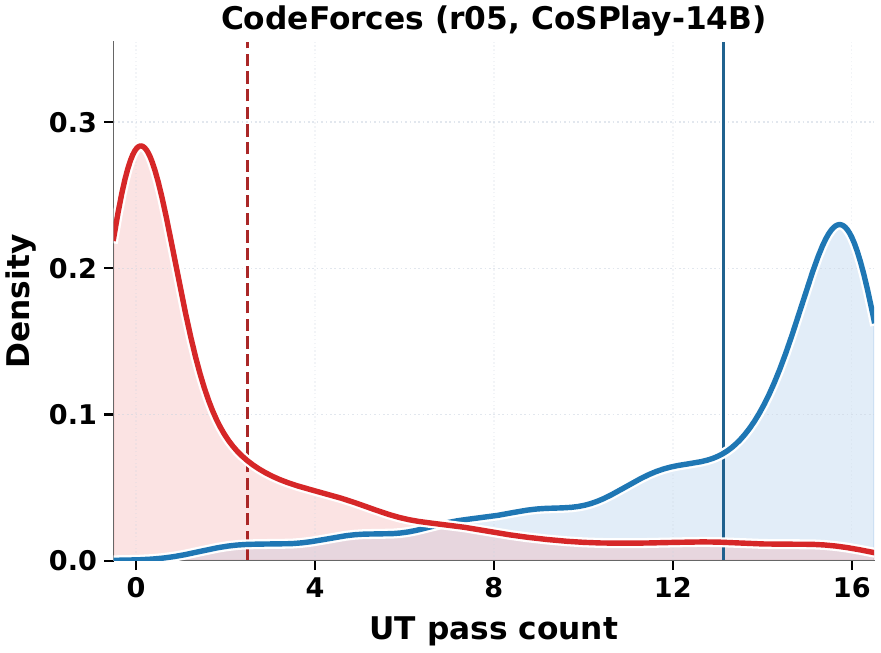}

    \caption{Density distributions of UT pass counts for correct (blue) and wrong (red)  UT candidates during self-play on \textbf{CodeForces} with the \textbf{14B model}. The top row shows Round 0-2, and the bottom row shows Round 3-5. Vertical lines mean the average values.}
    \label{fig:ut_pass_count_density_14b_codeforces}
\end{figure}

\begin{figure}[!t]
    \centering
    \includegraphics[width=0.28\textwidth]{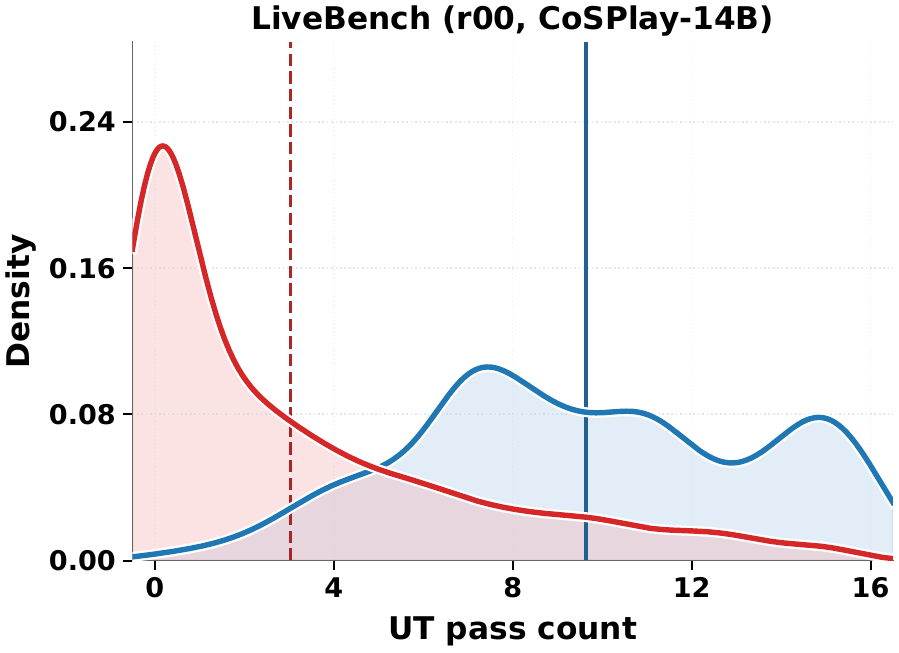}\hfill
    \includegraphics[width=0.28\textwidth]{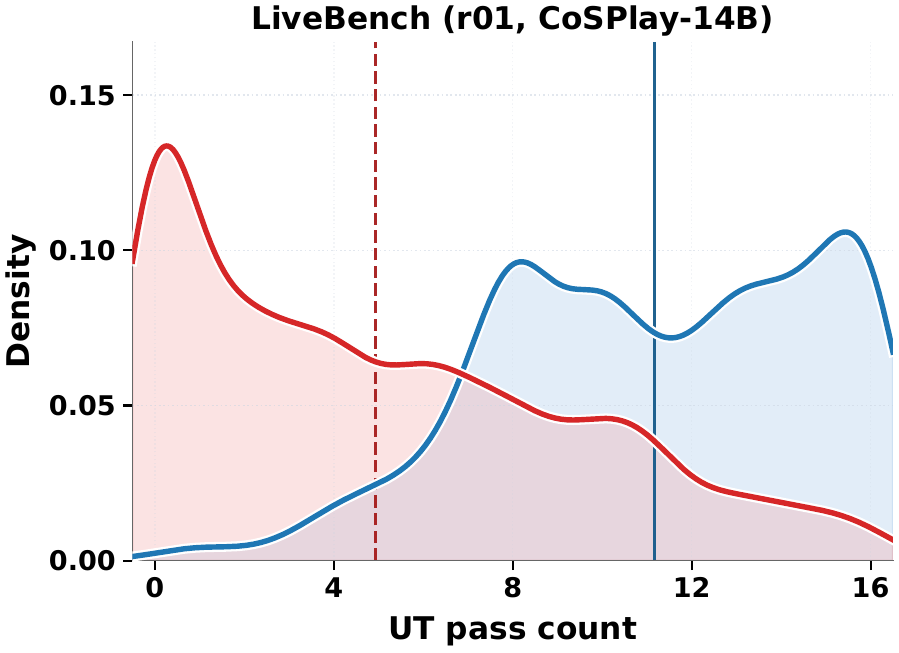}\hfill
    \includegraphics[width=0.28\textwidth]{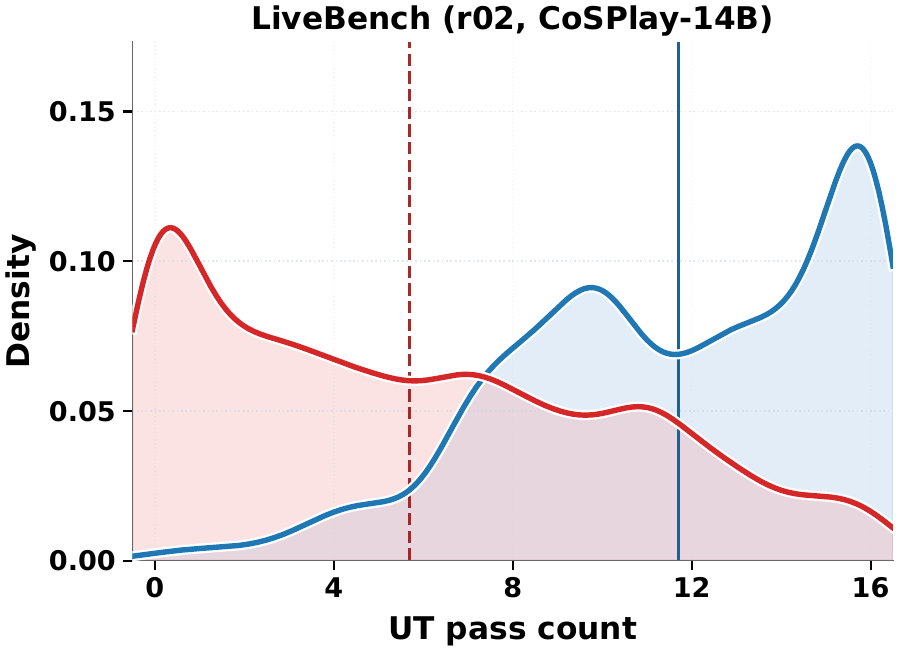}
    \vspace{-1.0em}

    \includegraphics[width=0.28\textwidth]{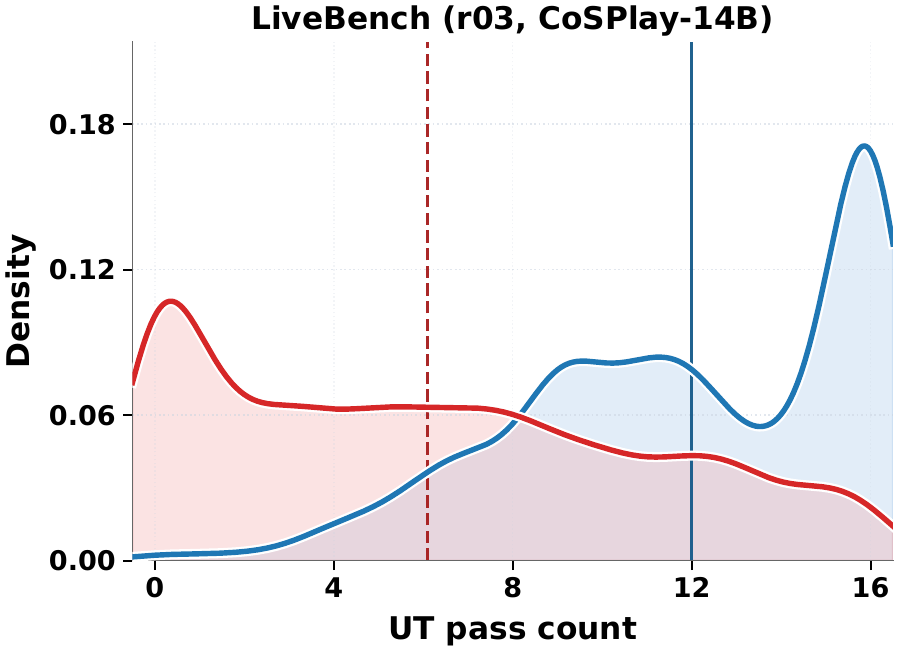}\hfill
    \includegraphics[width=0.28\textwidth]{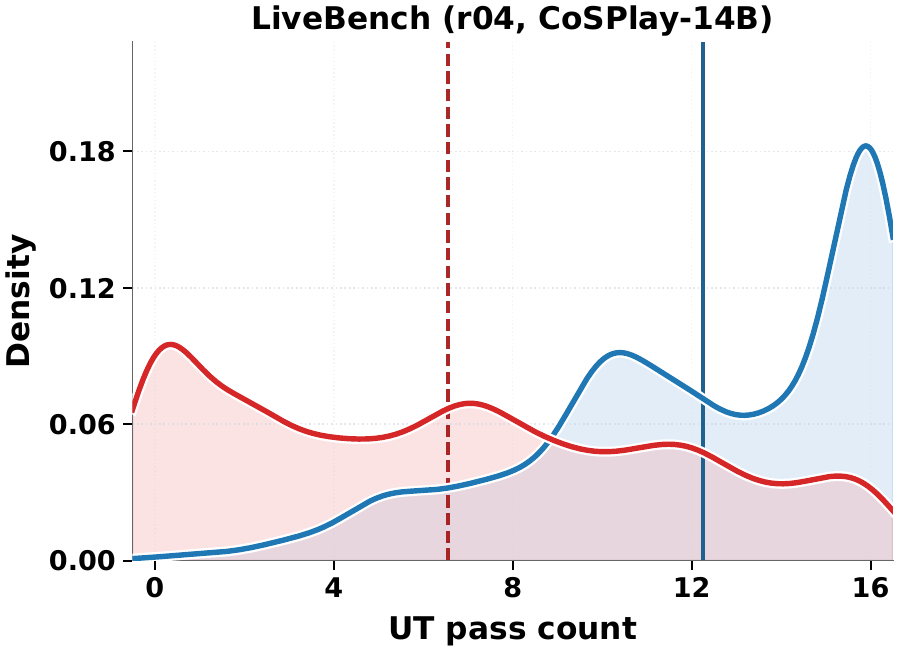}\hfill
    \includegraphics[width=0.28\textwidth]{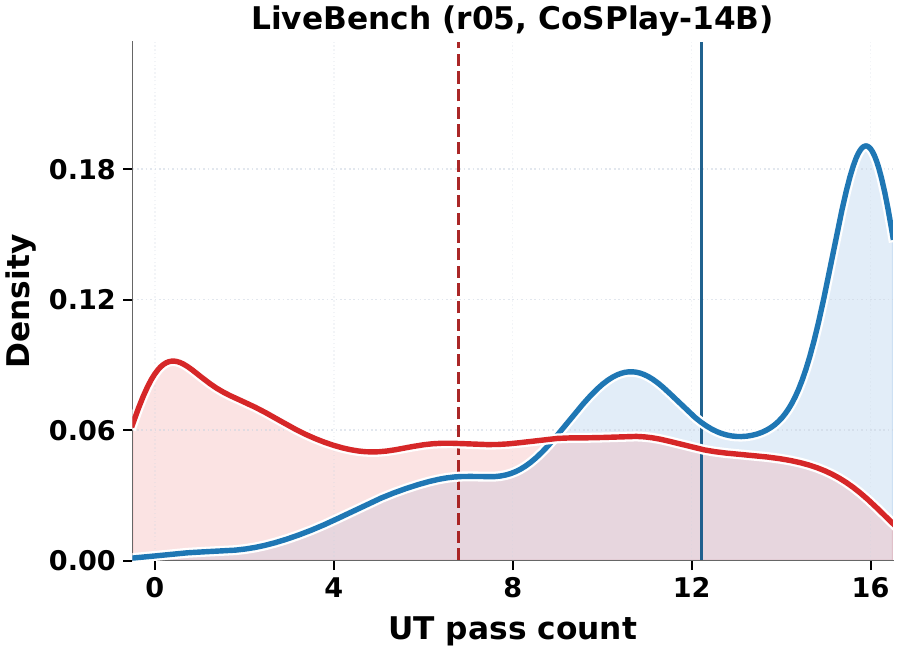}

    \caption{Density distributions of UT pass counts for correct (blue) and wrong (red)  UT candidates during self-play on \textbf{LiveBench} with the \textbf{14B model}. The top row shows Round 0-2, and the bottom row shows Round 3-5. Vertical lines mean the average values.}
    \label{fig:ut_pass_count_density_14b_livebench}
\end{figure}

\begin{figure}[!t]
    \centering
    \includegraphics[width=0.28\textwidth]{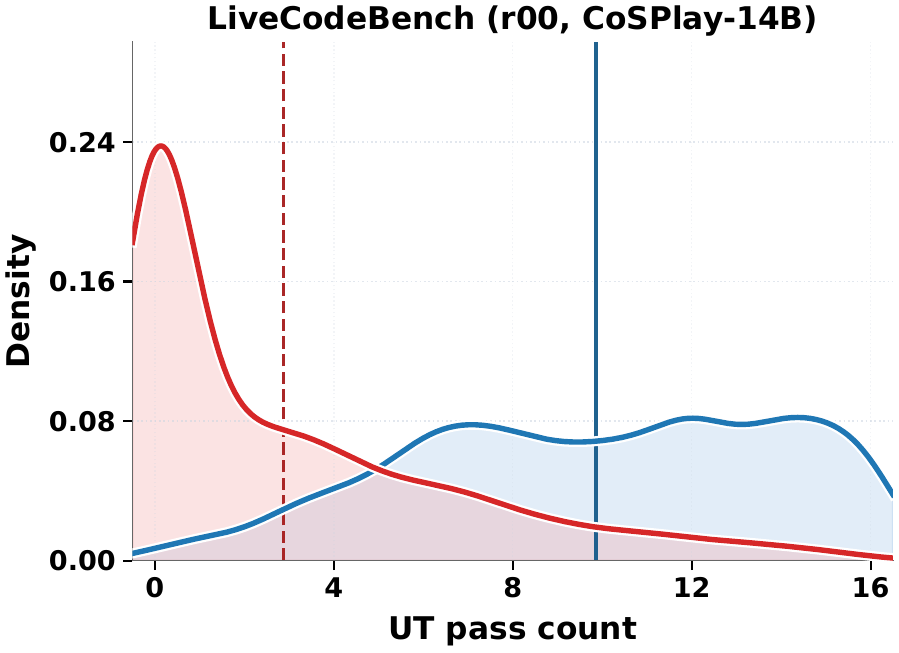}\hfill
    \includegraphics[width=0.28\textwidth]{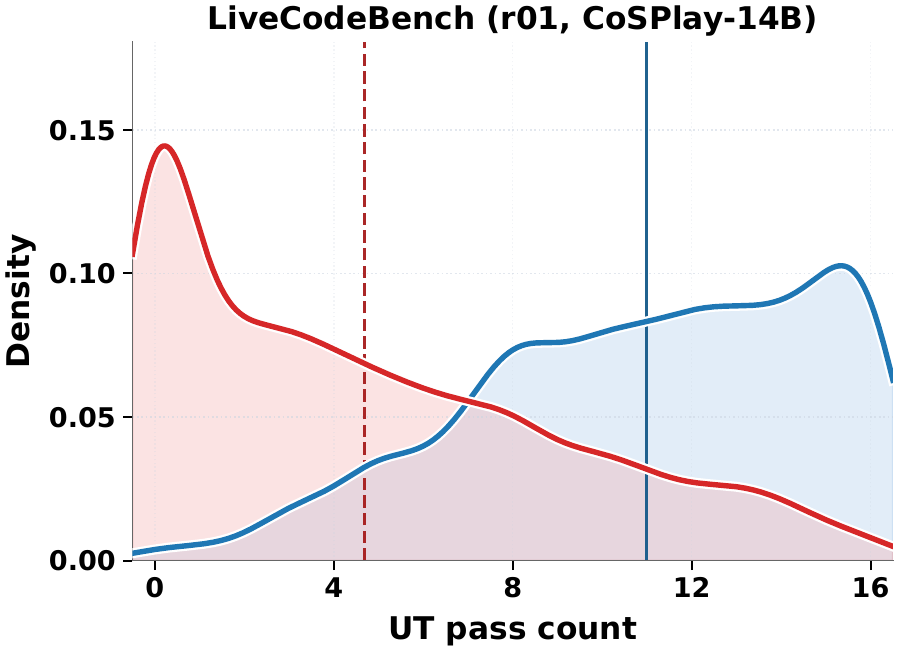}\hfill
    \includegraphics[width=0.28\textwidth]{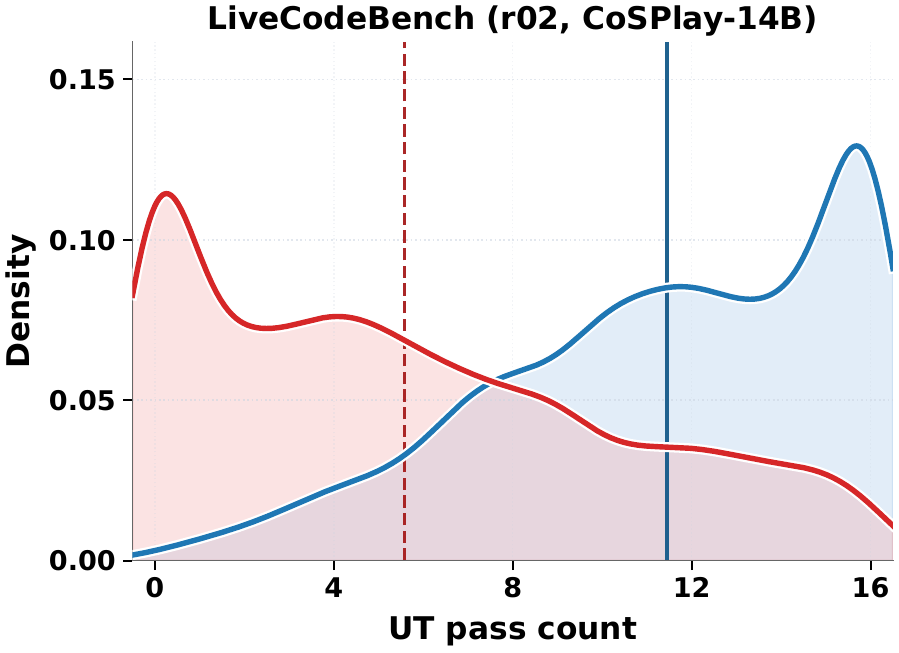}
    \vspace{-1.0em}

    \includegraphics[width=0.28\textwidth]{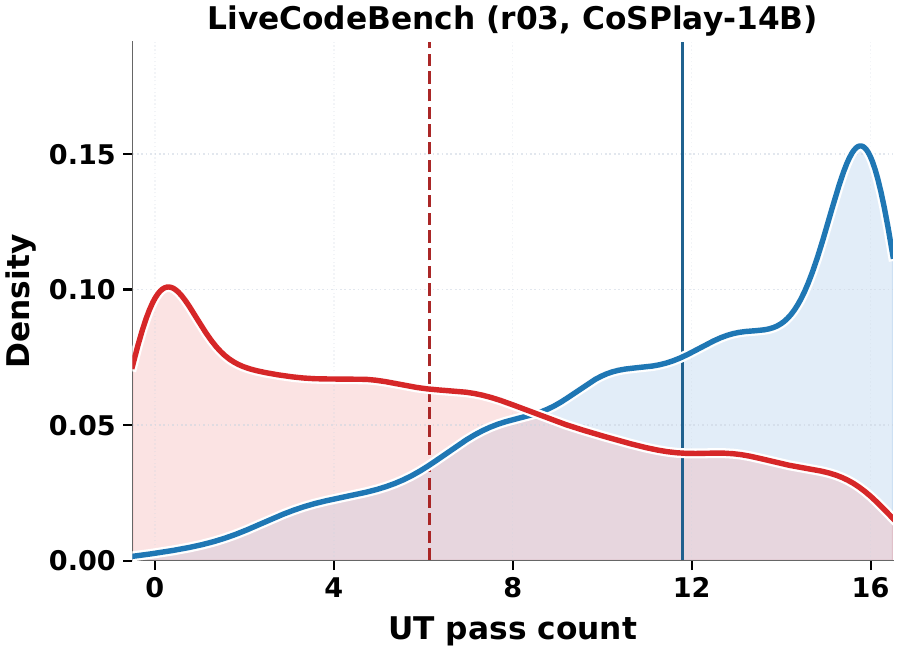}\hfill
    \includegraphics[width=0.28\textwidth]{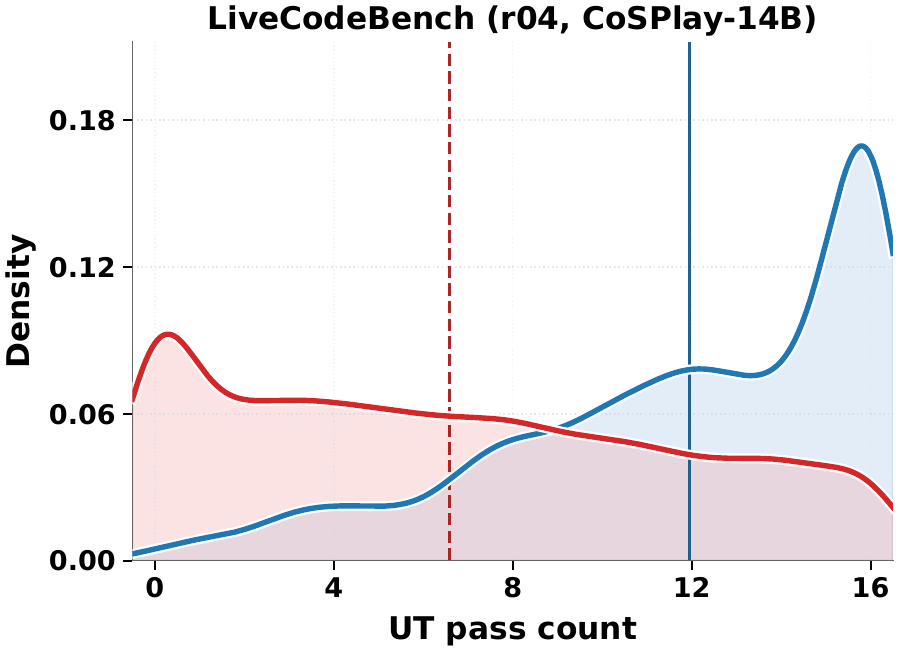}\hfill
    \includegraphics[width=0.28\textwidth]{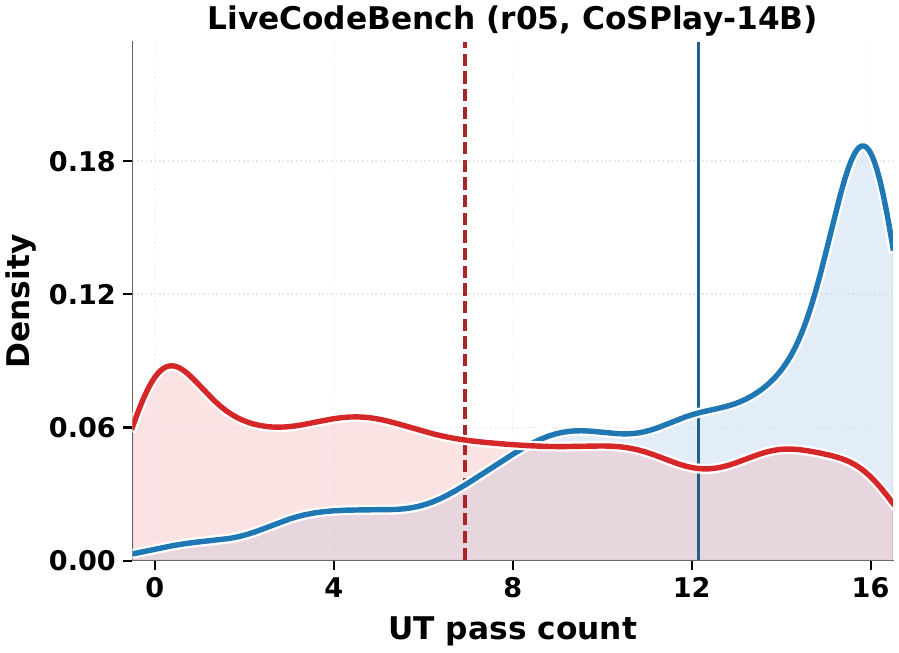}

    \caption{Density distributions of UT pass counts for correct (blue) and wrong (red)  UT candidates during self-play on \textbf{LiveCodeBench} with the \textbf{14B model}. The top row shows Round 0-2, and the bottom row shows Round 3-5. Vertical lines mean the average values.}
    \label{fig:ut_pass_count_density_14b_livecodebench}
\end{figure}

\clearpage
\section{Detailed Evolution of Code Pass-Count Density}
\label{app:code pass count density}
Figures~\ref{fig:code_pass_count_density_7b_codecontests}-\ref{fig:code_pass_count_density_14b_livecodebench} show the per-dataset and per-round plots of code pass-count density. The code pass count measures how many generated UTs a code candidate passes. Across datasets and model scales, the density progressively moves away from low pass-count regions and shifts toward higher pass-count regions, indicating that more code candidates satisfy a larger fraction of the evolved UT pool after self-play.

This pattern provides fine-grained evidence for the effectiveness of the iterative code cleaning and code fixing steps. All-failing or weak candidates are removed or refined, while the improved UT pool provides increasingly useful supervision for subsequent refinement. Together with the positive correlation between code pass count and true code correctness, these density shifts show that CoSPlay improves not only the UT quality but also the quality distribution of the code pool, which ultimately supports stronger BoN selection.

\vspace{1em}
\begin{figure}[H]
    \centering
    \includegraphics[width=0.28\textwidth]{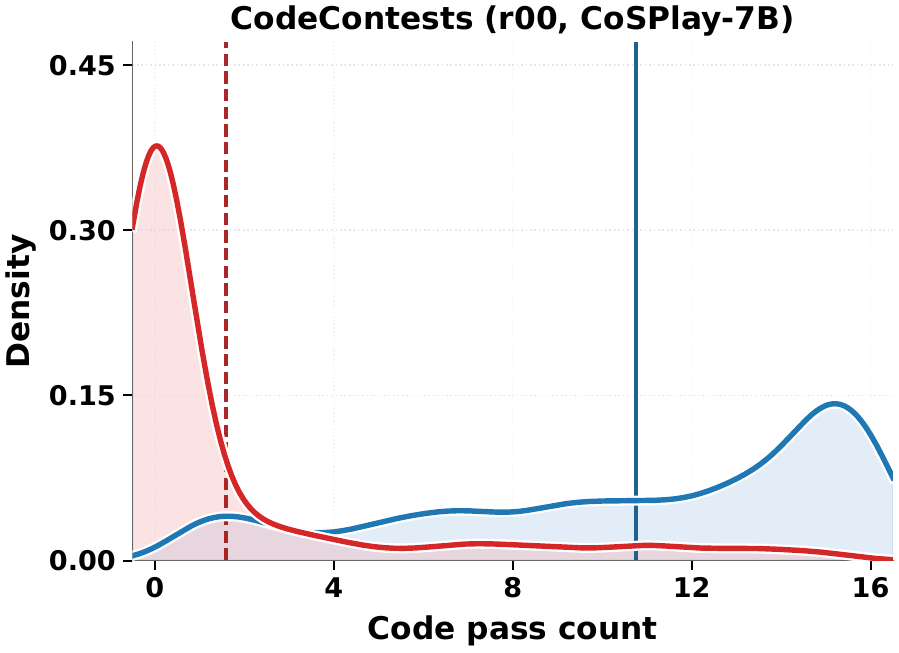}\hfill
    \includegraphics[width=0.28\textwidth]{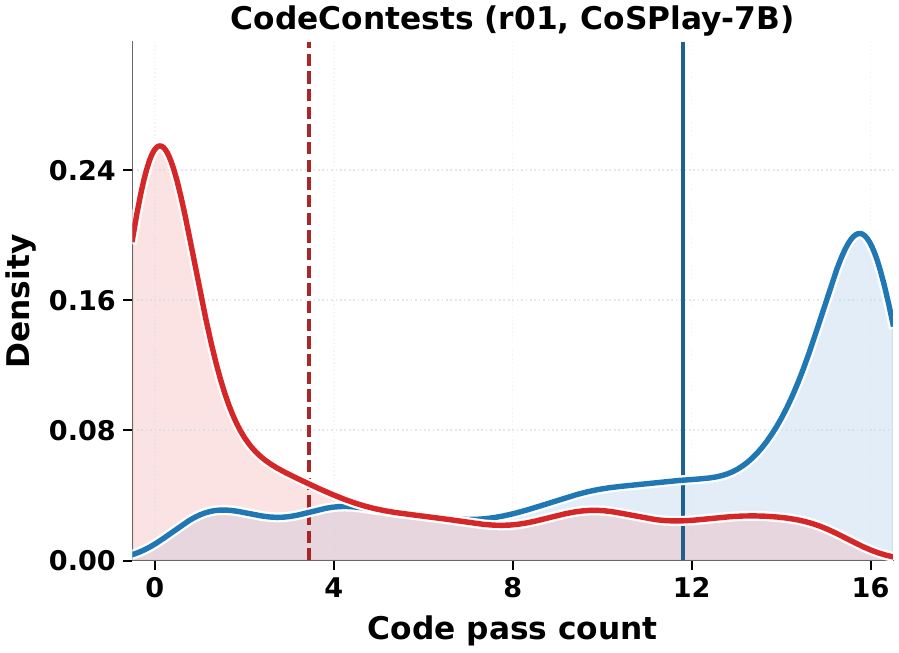}\hfill
    \includegraphics[width=0.28\textwidth]{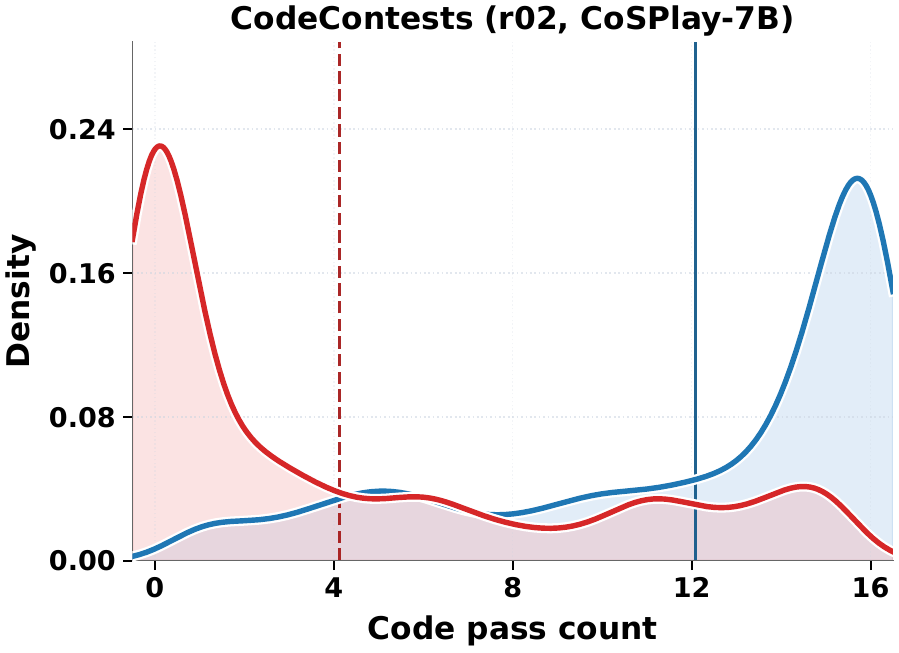}
    \vspace{-1.0em}

    \includegraphics[width=0.28\textwidth]{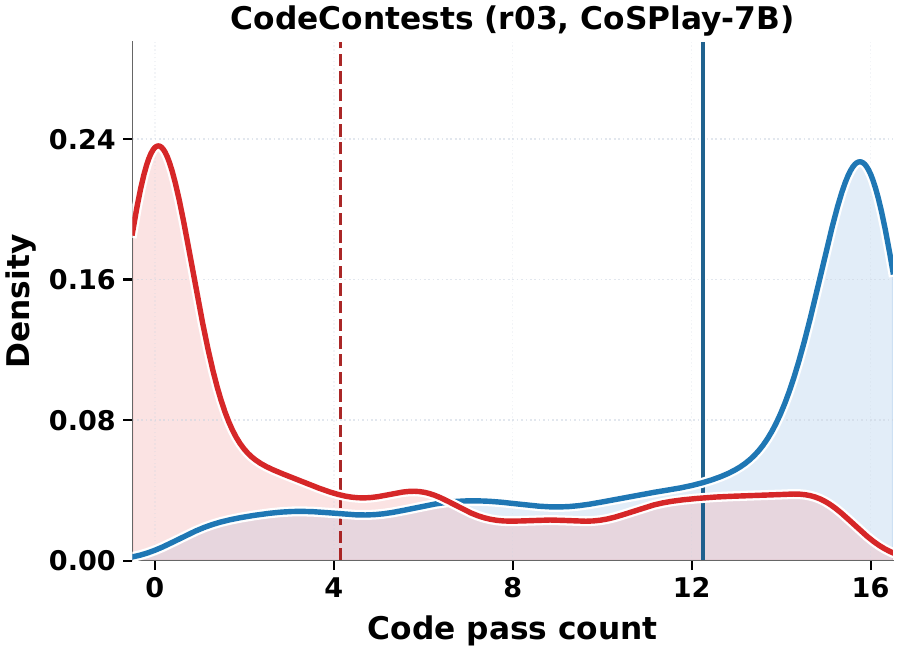}\hfill
    \includegraphics[width=0.28\textwidth]{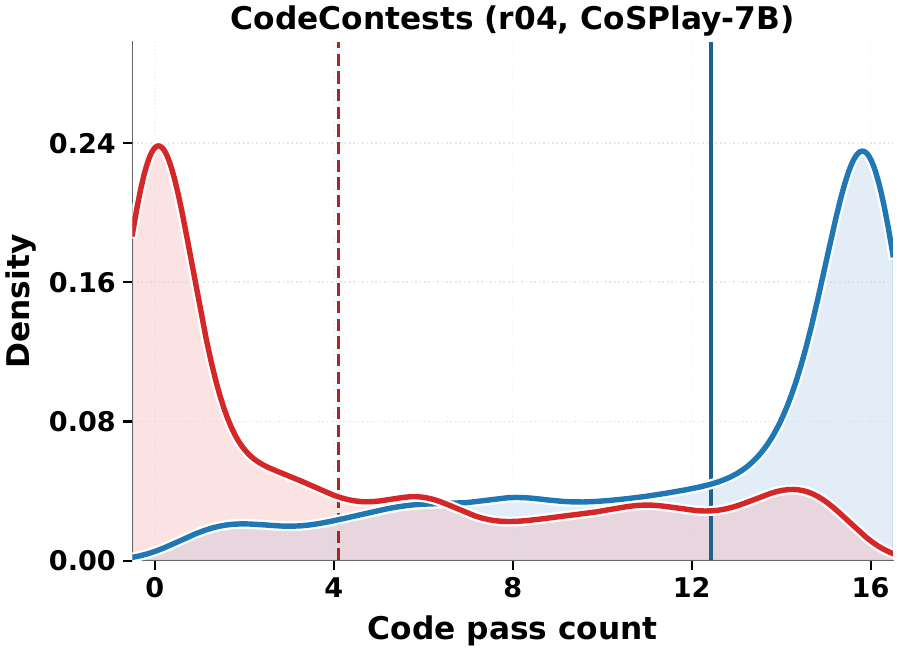}\hfill
    \includegraphics[width=0.28\textwidth]{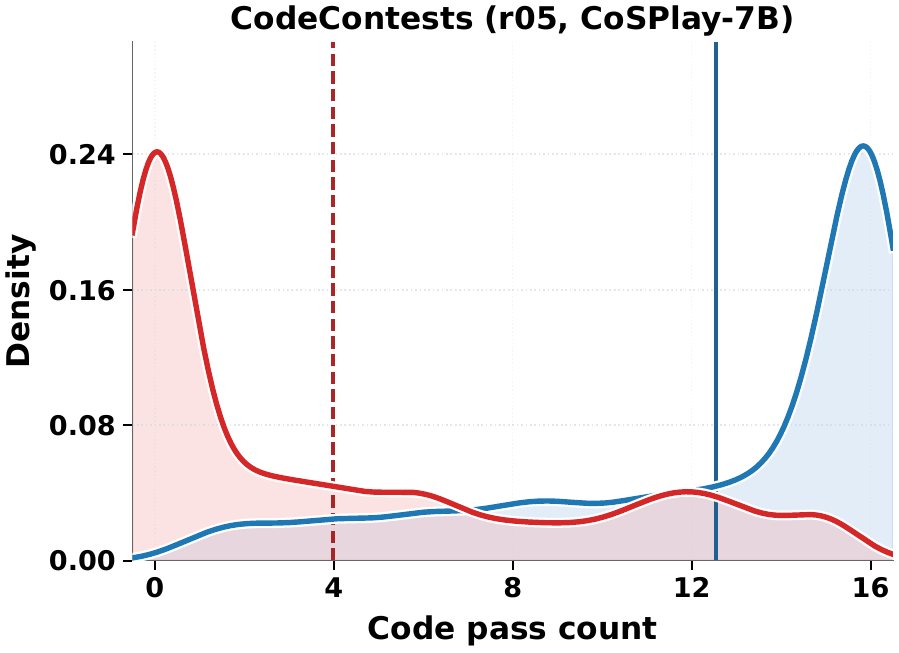}

    \caption{Density distributions of code pass counts for correct (blue) and incorrect (red) code candidates during self-play on \textbf{CodeContests} with the \textbf{7B model}. The top row shows Round 0-2, and the bottom row shows Round 3-5. Vertical lines mean the average values.}
    \label{fig:code_pass_count_density_7b_codecontests}
\end{figure}

\begin{figure}[H]
    \centering
    \includegraphics[width=0.28\textwidth]{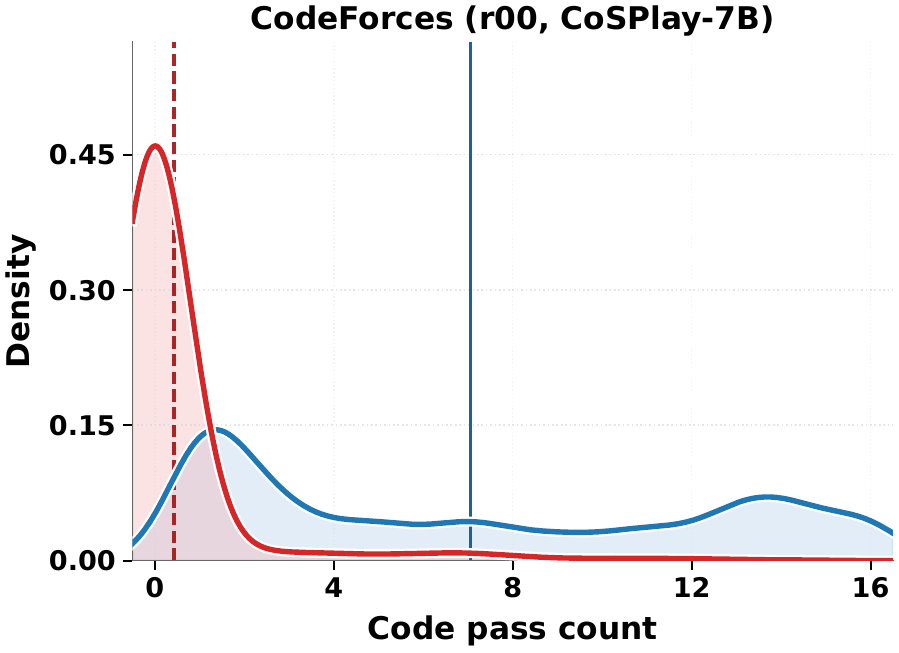}\hfill
    \includegraphics[width=0.28\textwidth]{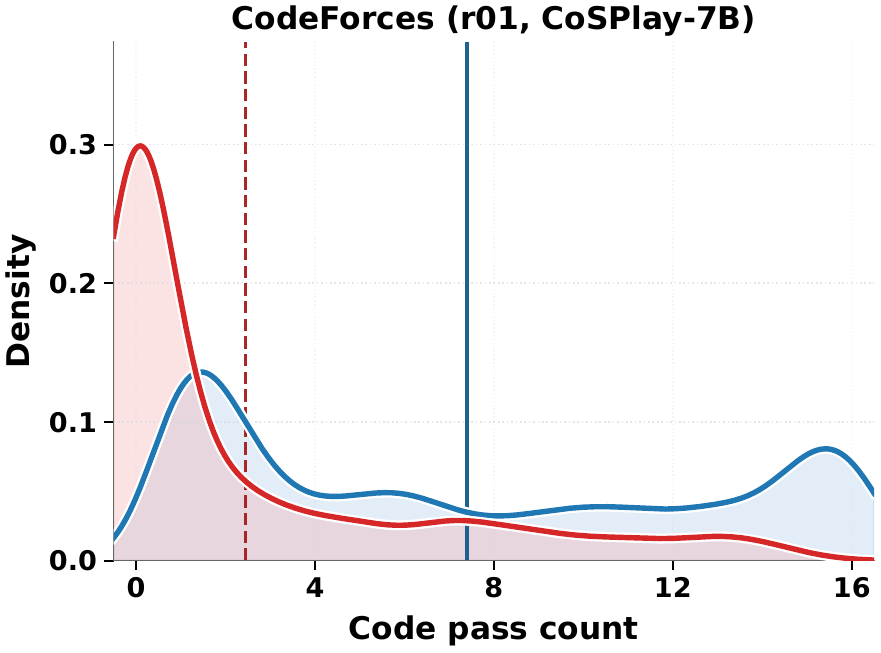}\hfill
    \includegraphics[width=0.28\textwidth]{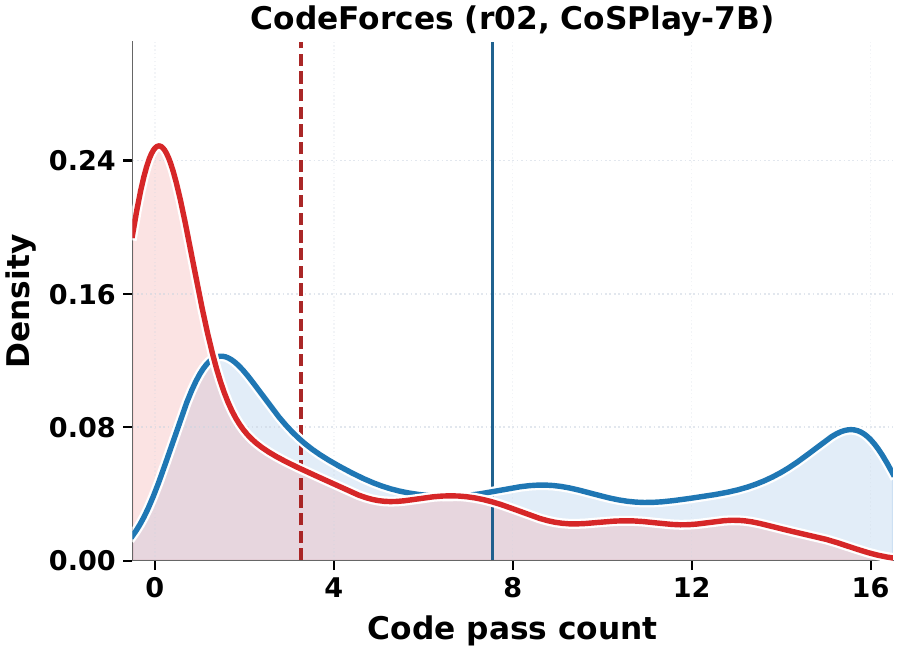}
    \vspace{-1.0em}

    \includegraphics[width=0.28\textwidth]{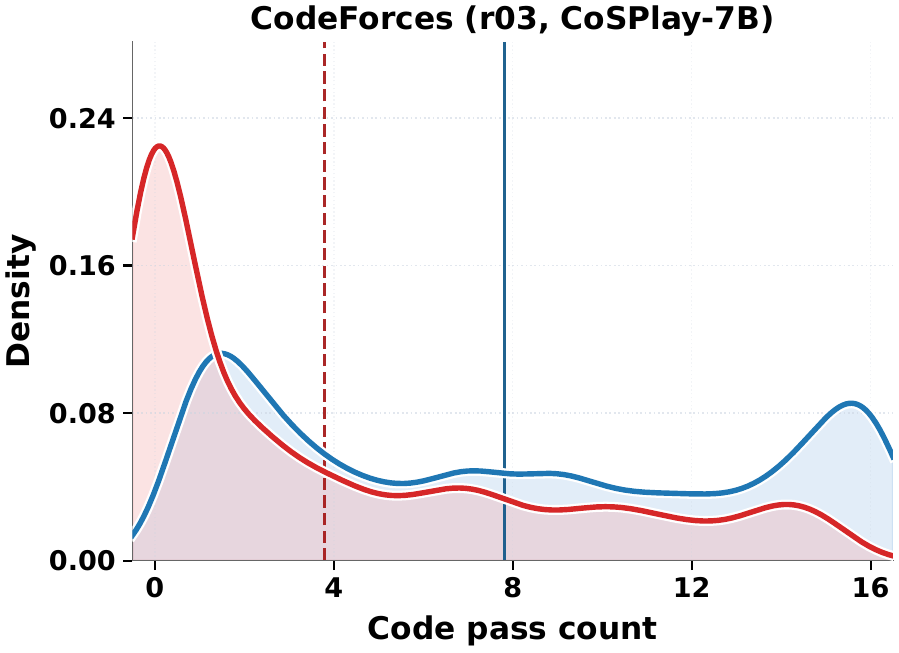}\hfill
    \includegraphics[width=0.28\textwidth]{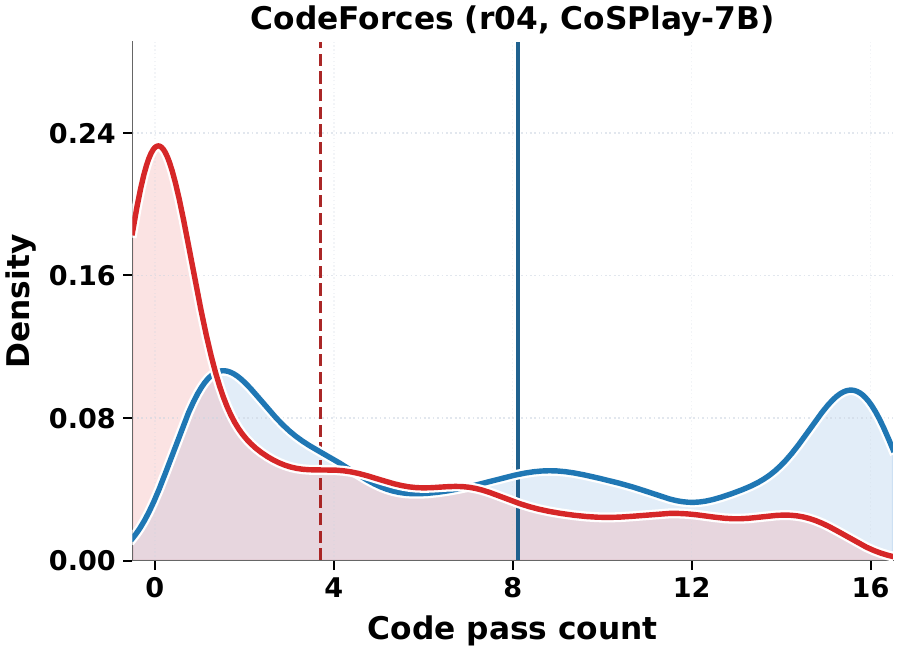}\hfill
    \includegraphics[width=0.28\textwidth]{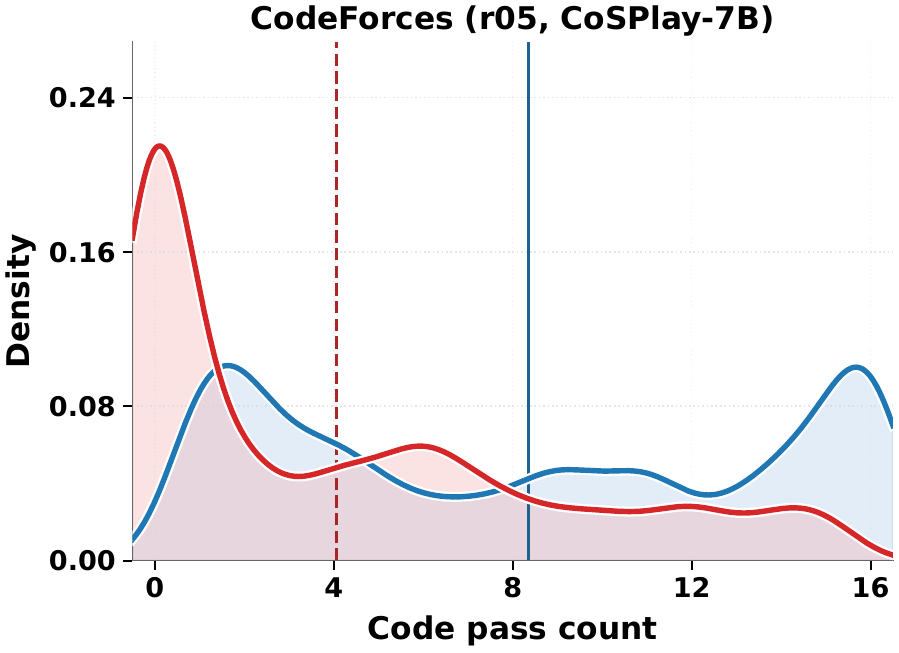}

    \caption{Density distributions of code pass counts for correct (blue) and incorrect (red) code candidates during self-play on \textbf{CodeForces} with the \textbf{7B model}. The top row shows Round 0-2, and the bottom row shows Round 3-5. Vertical lines mean the average values.}
    \label{fig:code_pass_count_density_7b_codeforces}
\end{figure}

\begin{figure}[!t]
    \centering
    \includegraphics[width=0.28\textwidth]{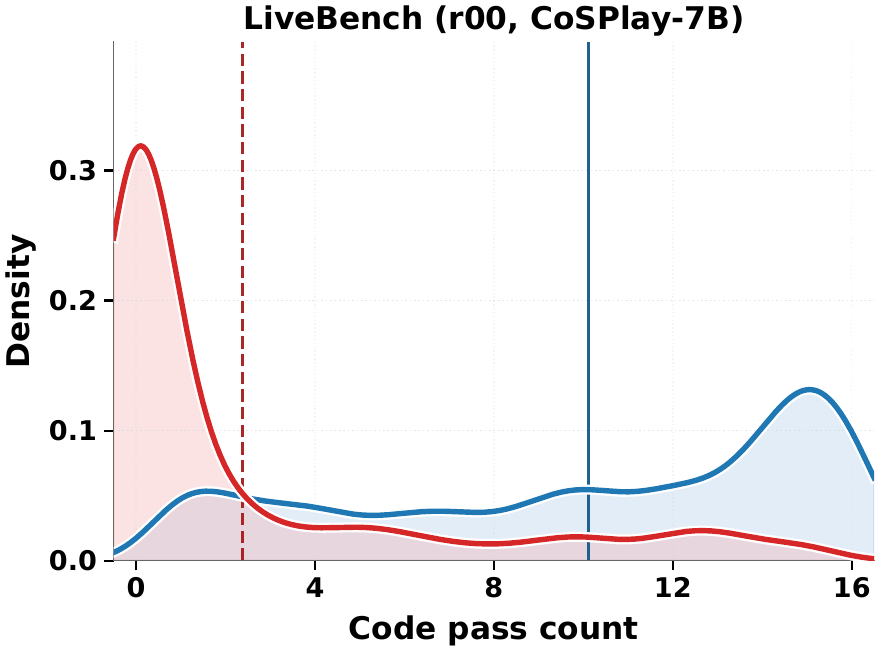}\hfill
    \includegraphics[width=0.28\textwidth]{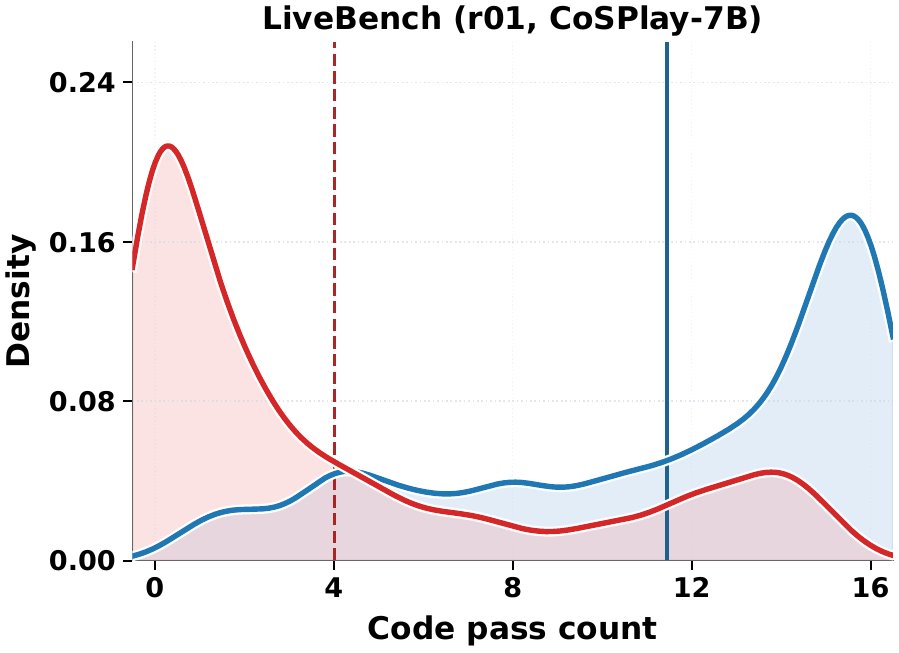}\hfill
    \includegraphics[width=0.28\textwidth]{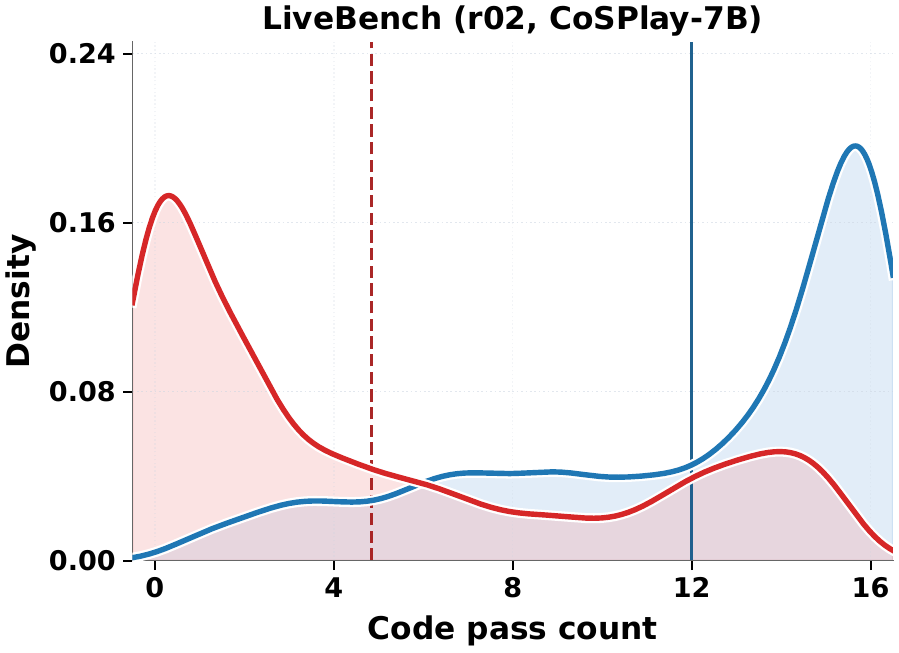}
    \vspace{-1.0em}

    \includegraphics[width=0.28\textwidth]{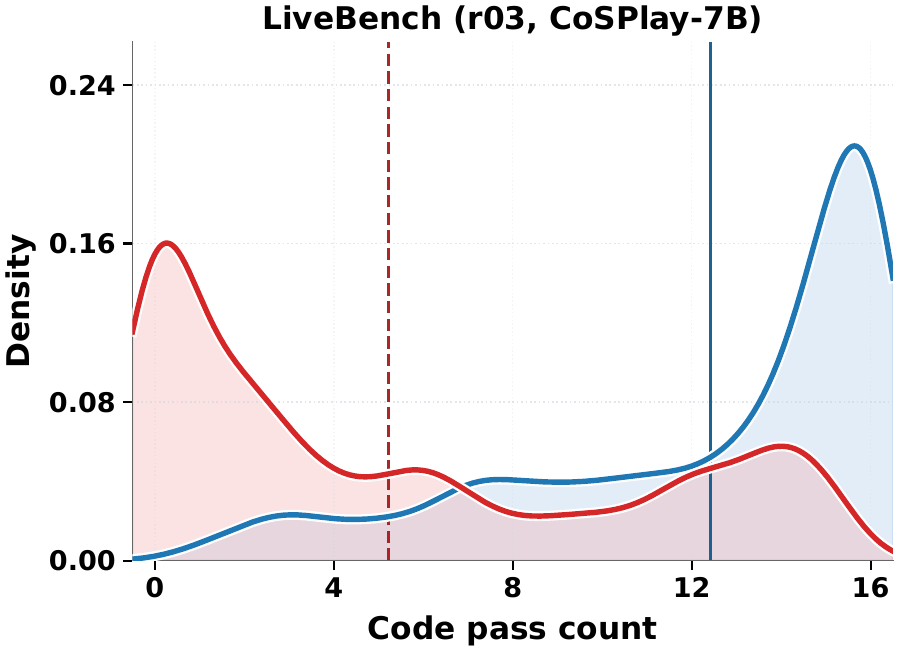}\hfill
    \includegraphics[width=0.28\textwidth]{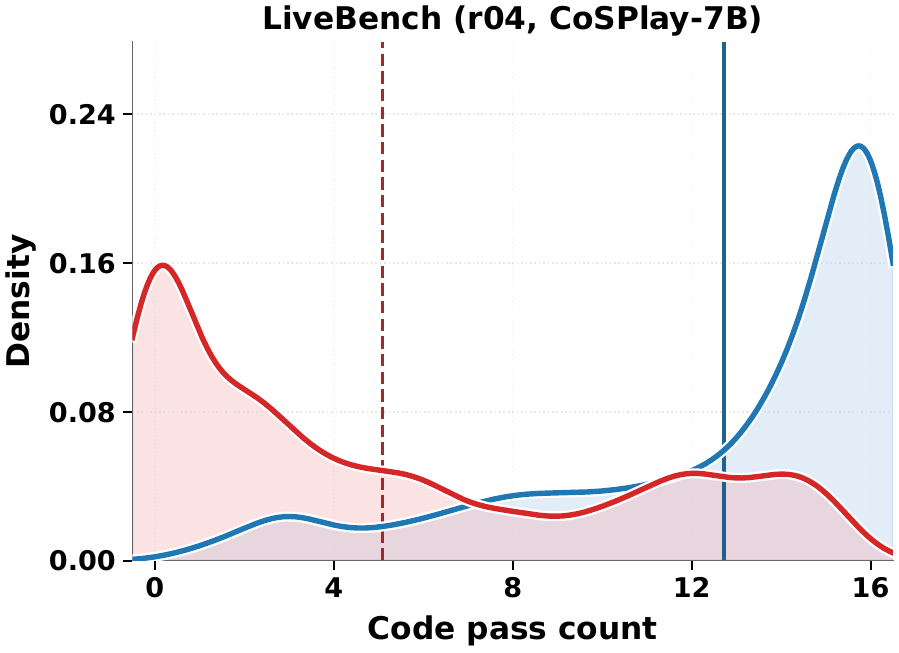}\hfill
    \includegraphics[width=0.28\textwidth]{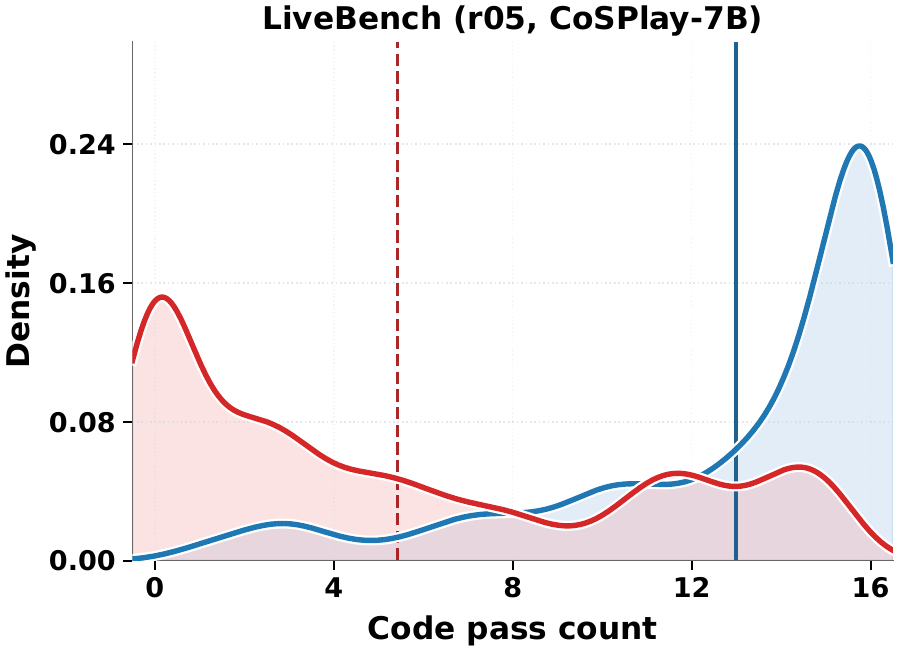}

    \caption{Density distributions of code pass counts for correct (blue) and incorrect (red) code candidates during self-play on \textbf{LiveBench} with the \textbf{7B model}. The top row shows Round 0-2, and the bottom row shows Round 3-5. Vertical lines mean the average values.}
    \label{fig:code_pass_count_density_7b_livebench}
\end{figure}

\begin{figure}[!t]
    \centering
    \includegraphics[width=0.28\textwidth]{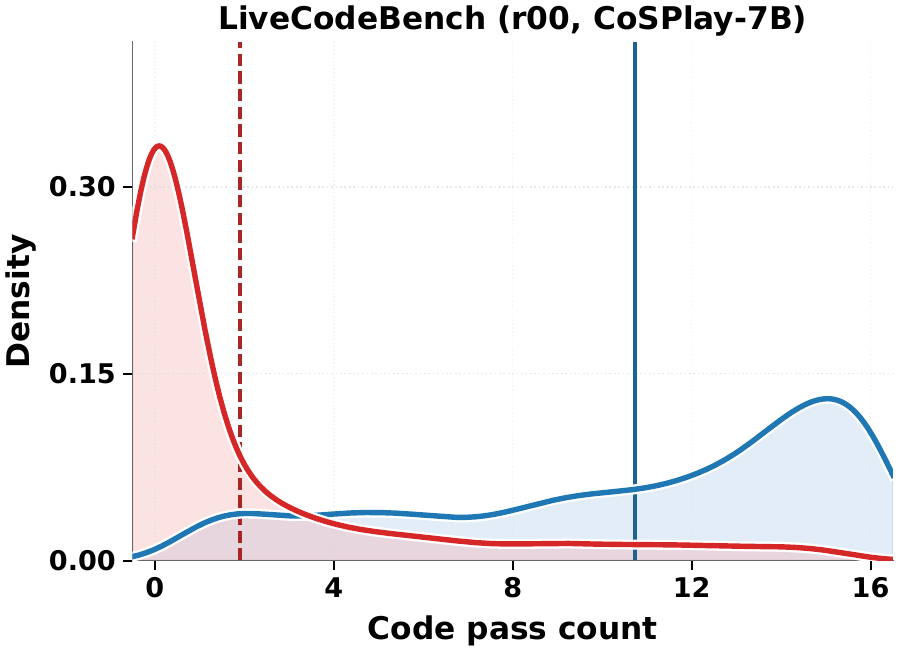}\hfill
    \includegraphics[width=0.28\textwidth]{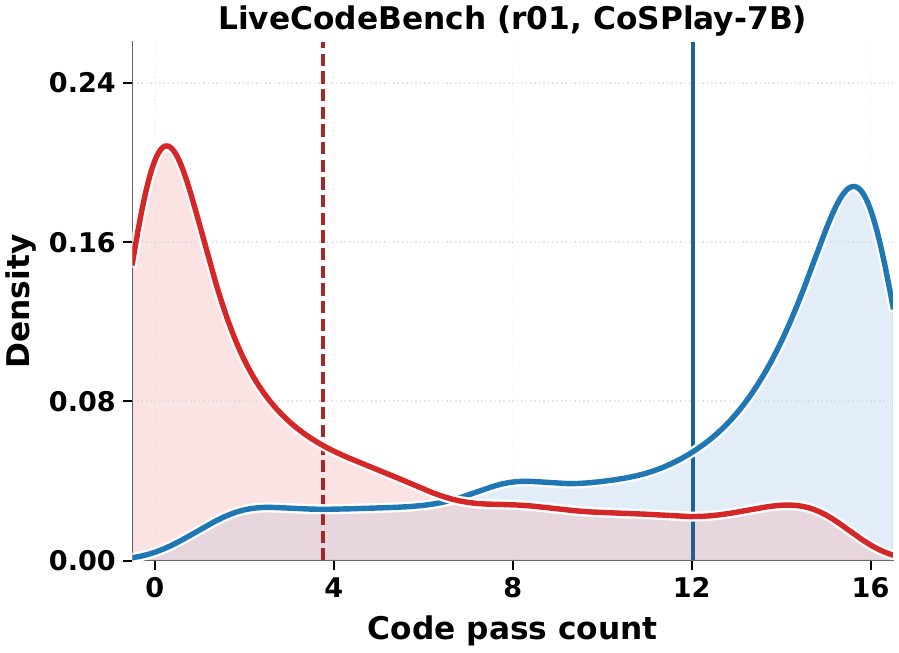}\hfill
    \includegraphics[width=0.28\textwidth]{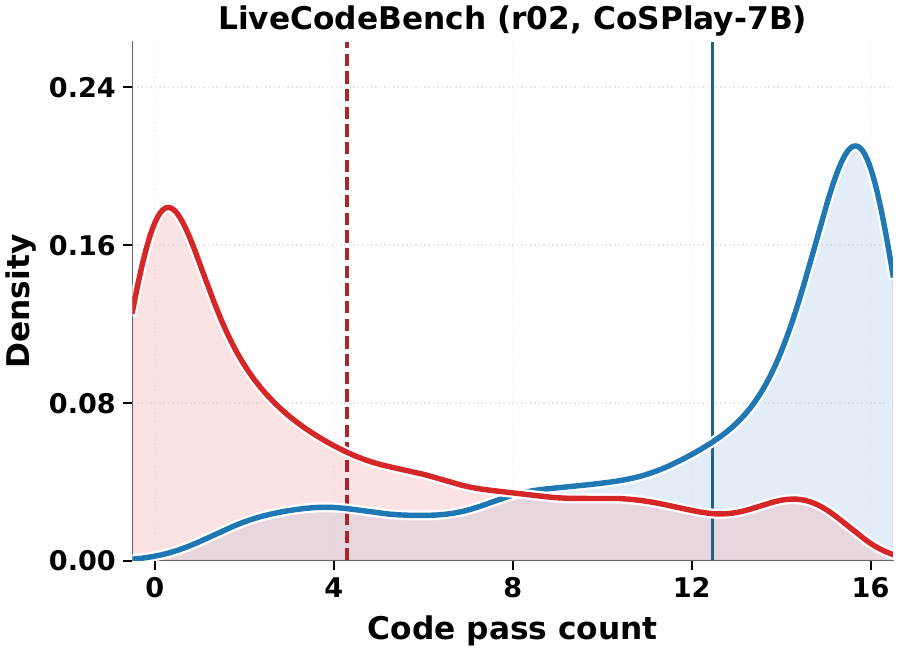}
    \vspace{-1.0em}

    \includegraphics[width=0.28\textwidth]{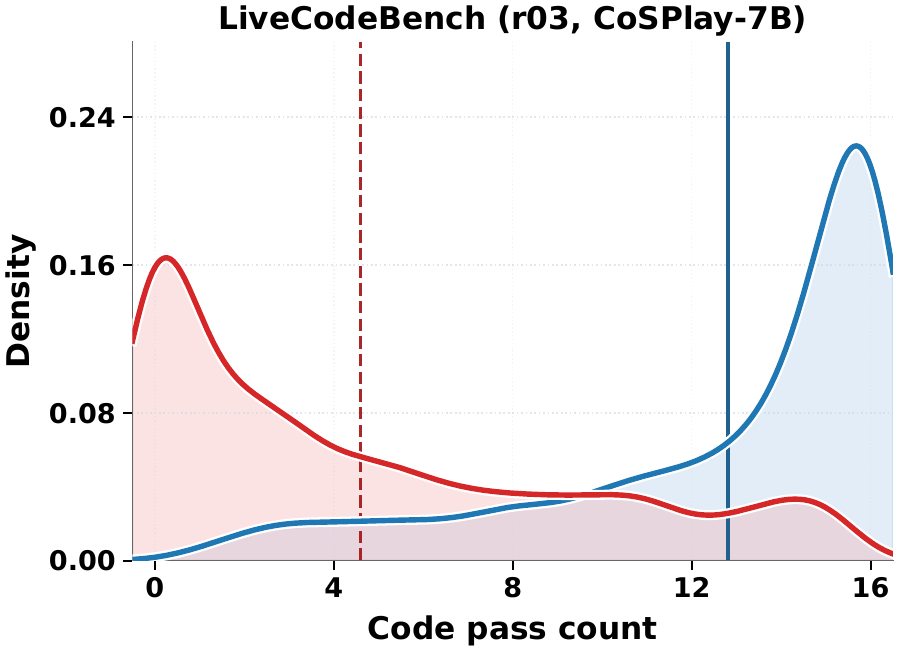}\hfill
    \includegraphics[width=0.28\textwidth]{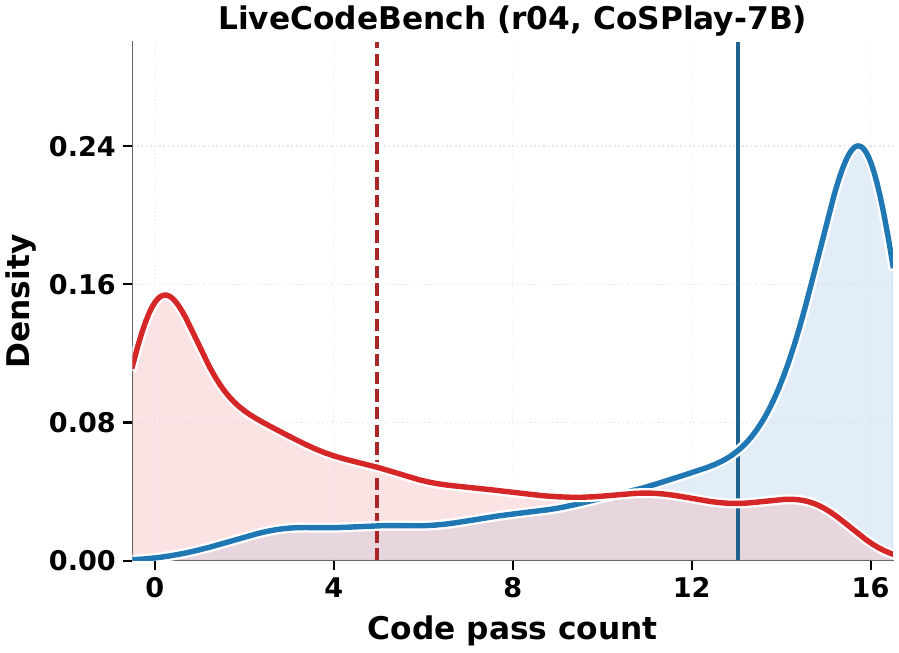}\hfill
    \includegraphics[width=0.28\textwidth]{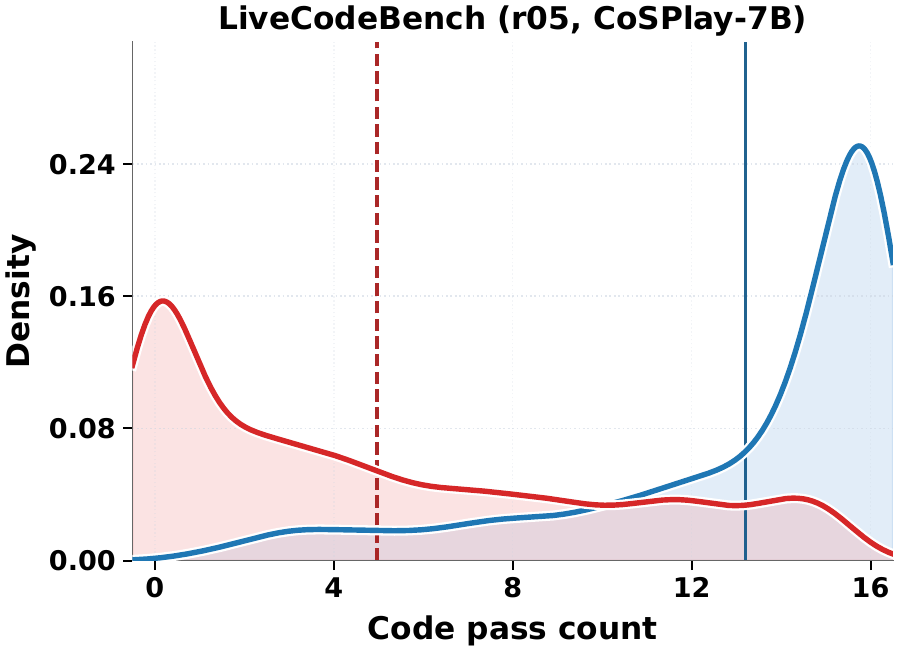}

    \caption{Density distributions of code pass counts for correct (blue) and incorrect (red) code candidates during self-play on \textbf{LiveCodeBench} with the \textbf{7B model}. The top row shows Round 0-2, and the bottom row shows Round 3-5. Vertical lines mean the average values.}
    \label{fig:code_pass_count_density_7b_livecodebench}
\end{figure}

\begin{figure}[!t]
    \centering
    \includegraphics[width=0.28\textwidth]{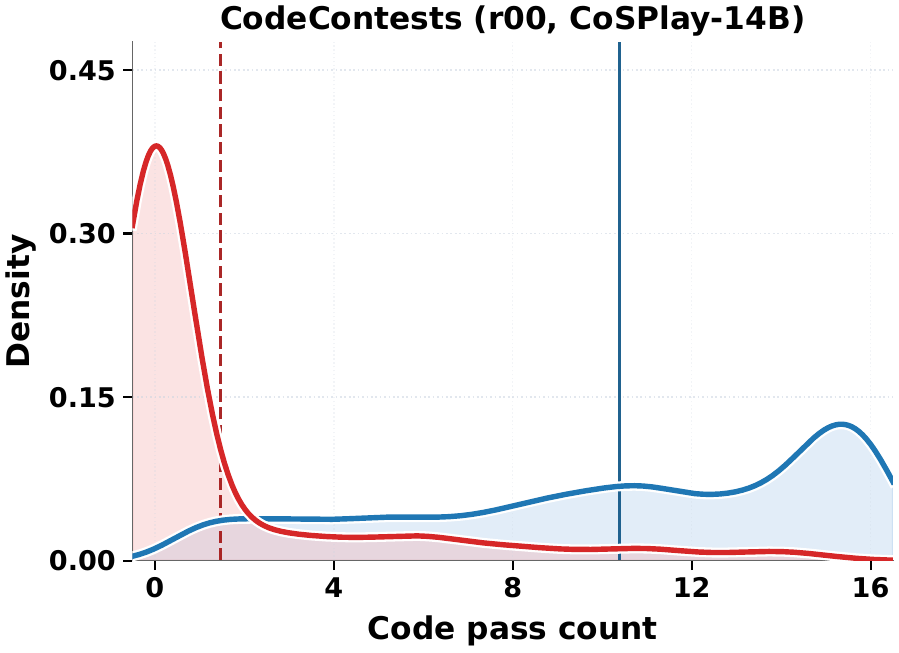}\hfill
    \includegraphics[width=0.28\textwidth]{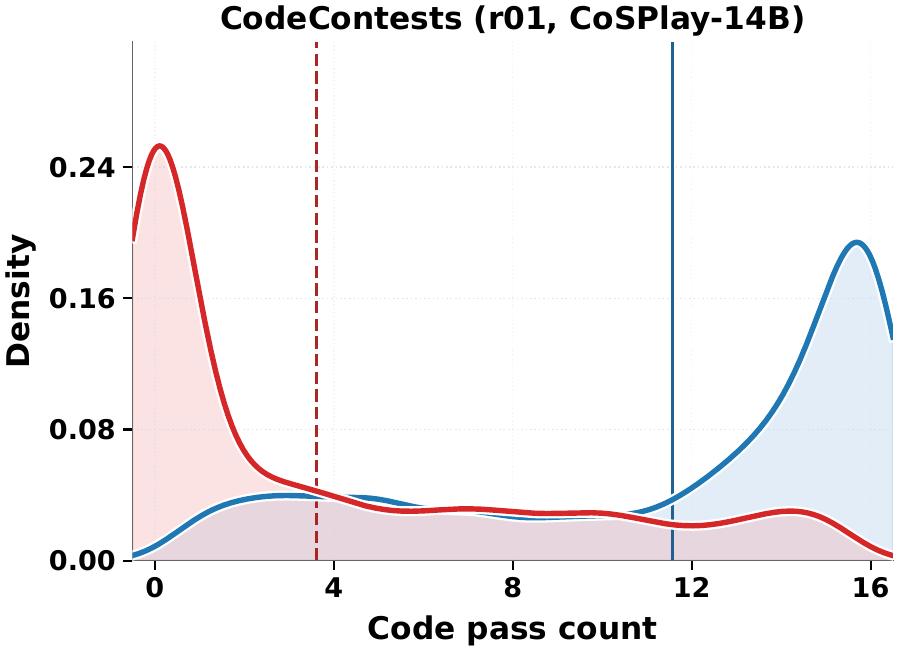}\hfill
    \includegraphics[width=0.28\textwidth]{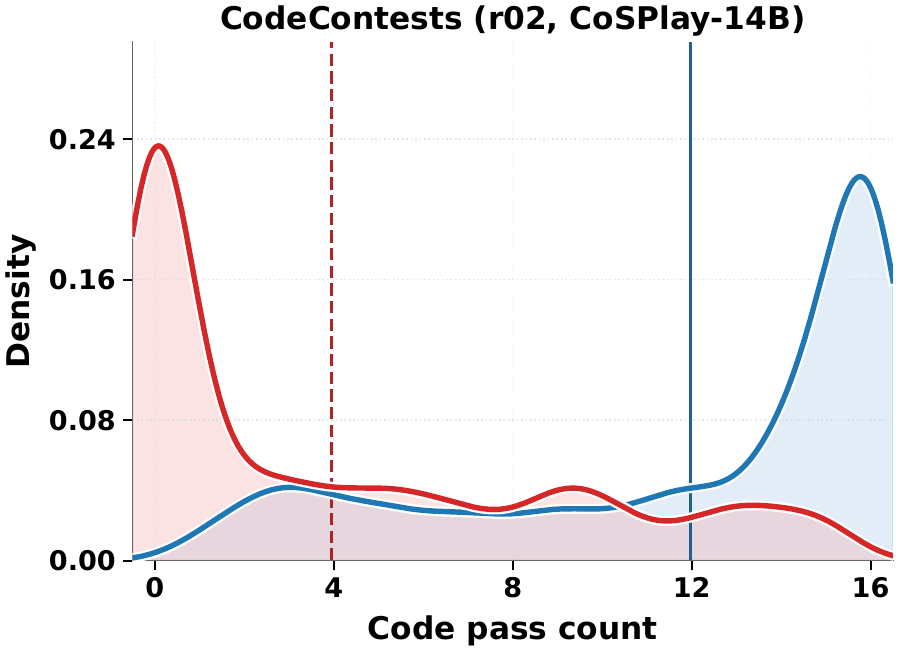}
    \vspace{-1.0em}

    \includegraphics[width=0.28\textwidth]{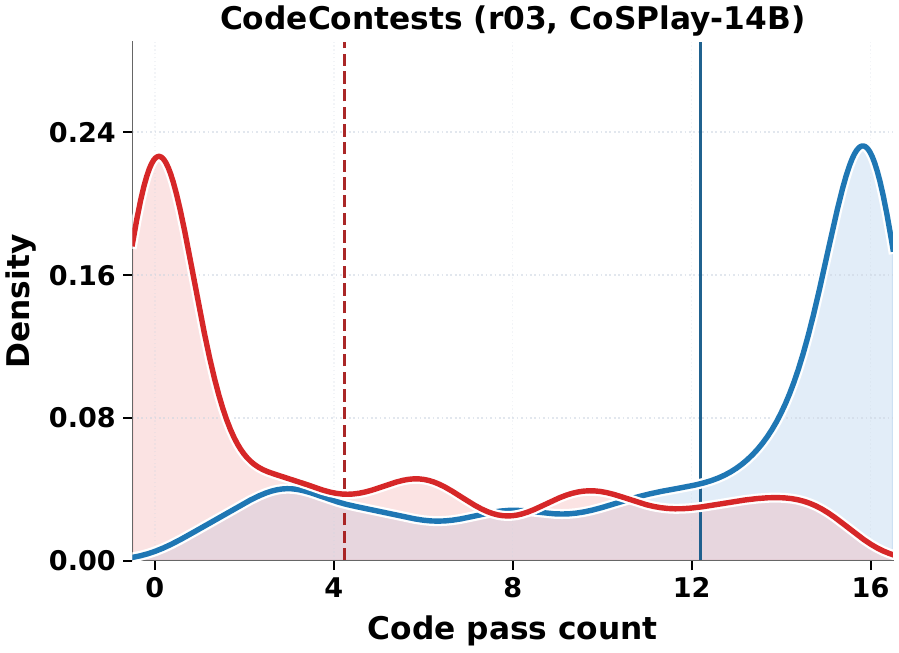}\hfill
    \includegraphics[width=0.28\textwidth]{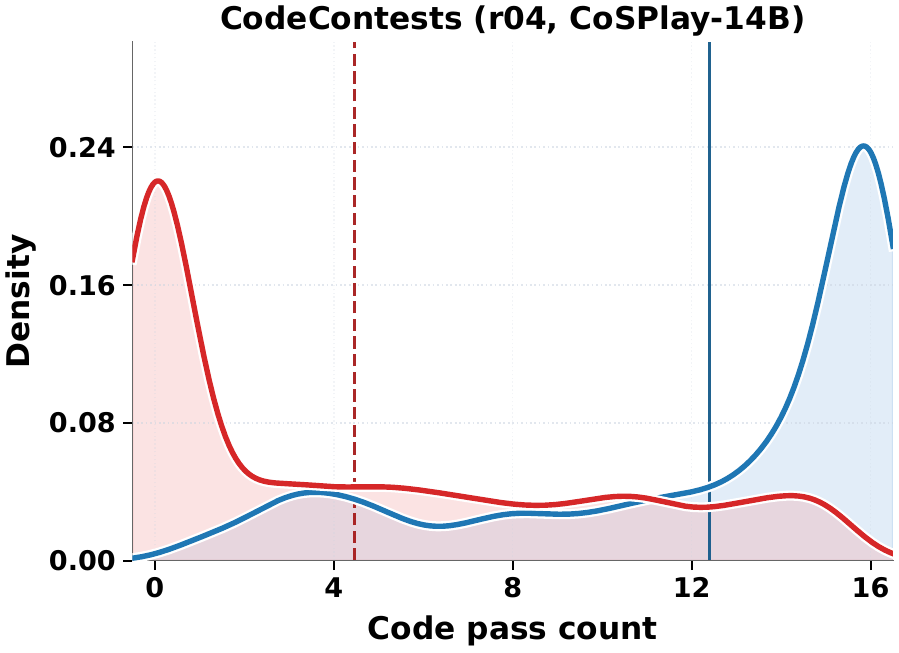}\hfill
    \includegraphics[width=0.28\textwidth]{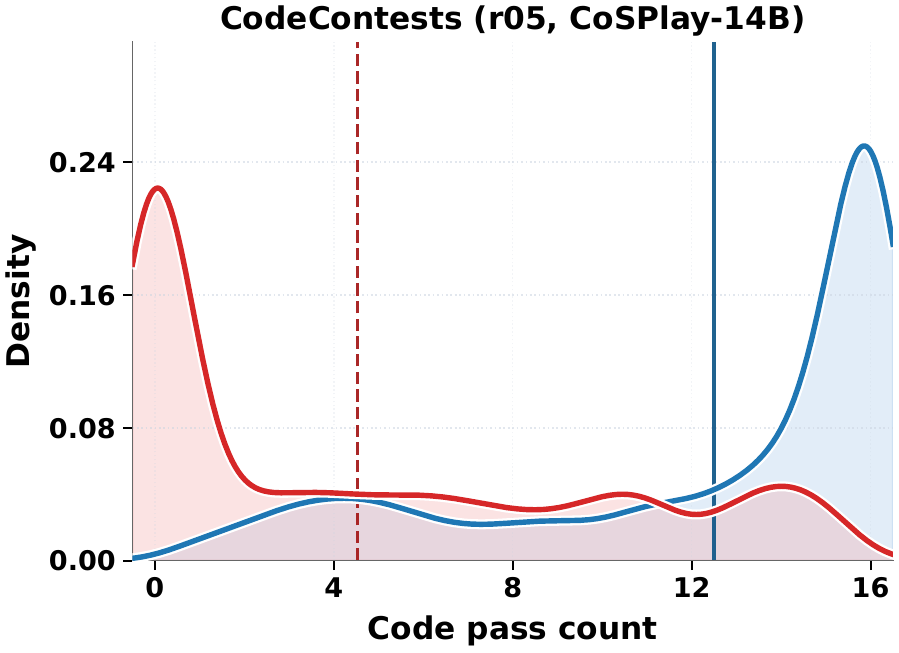}

    \caption{Density distributions of code pass counts for correct (blue) and incorrect (red) code candidates during self-play on \textbf{CodeContests} with the \textbf{14B model}. The top row shows Round 0-2, and the bottom row shows Round 3-5. Vertical lines mean the average values.}
    \label{fig:code_pass_count_density_14b_codecontests}
\end{figure}

\begin{figure}[!t]
    \centering
    \includegraphics[width=0.28\textwidth]{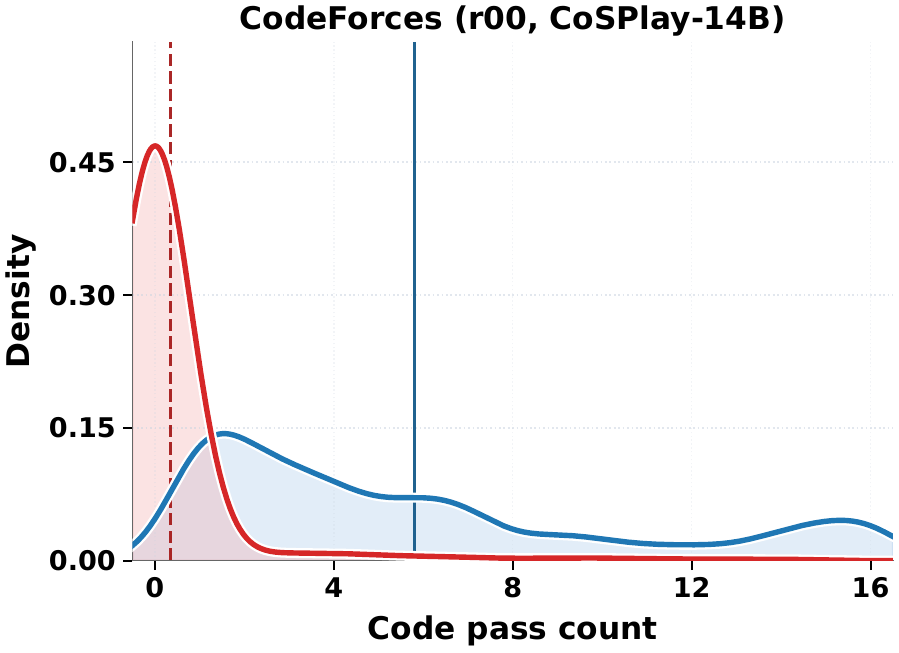}\hfill
    \includegraphics[width=0.28\textwidth]{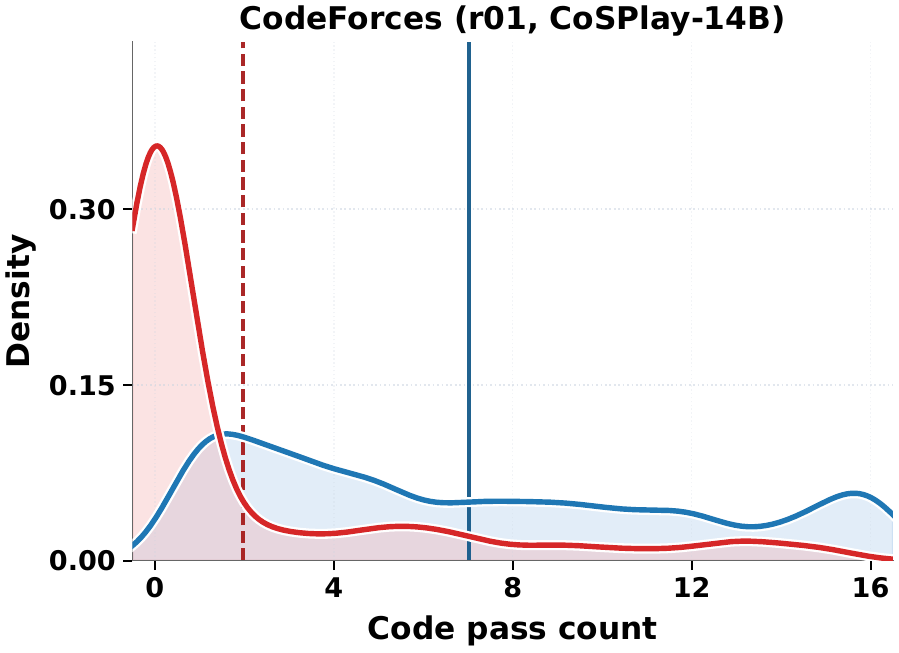}\hfill
    \includegraphics[width=0.28\textwidth]{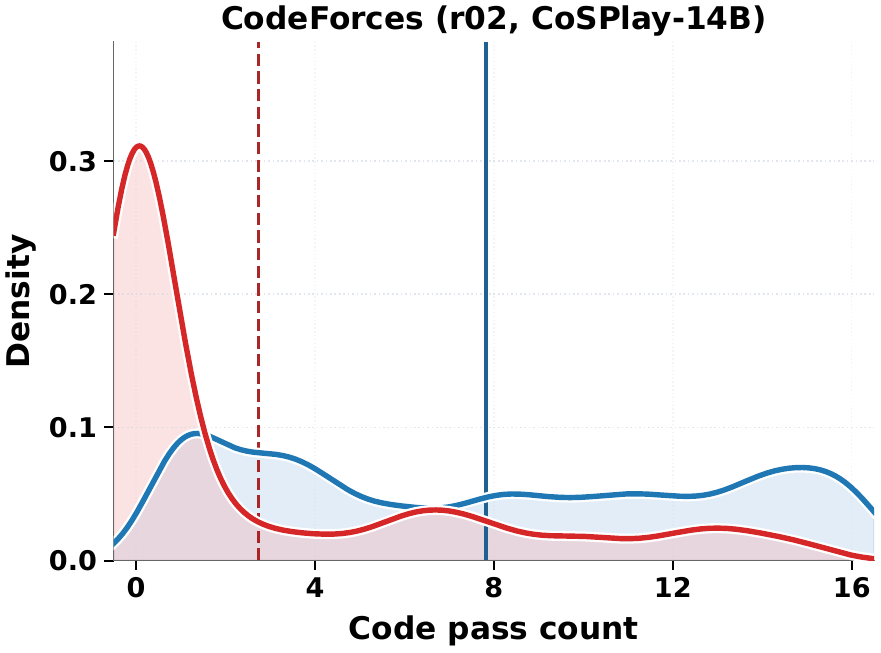}
    \vspace{-1.0em}

    \includegraphics[width=0.28\textwidth]{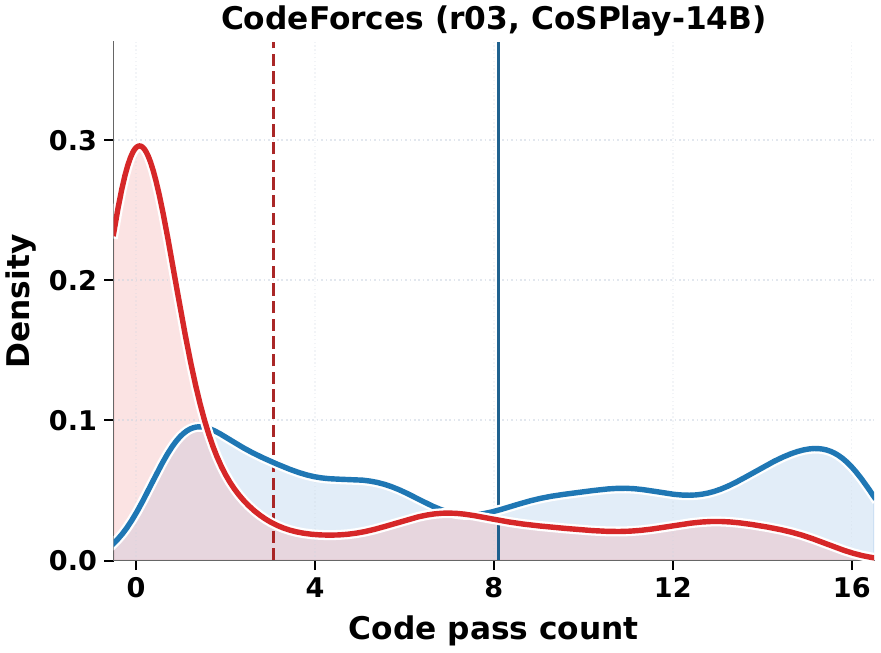}\hfill
    \includegraphics[width=0.28\textwidth]{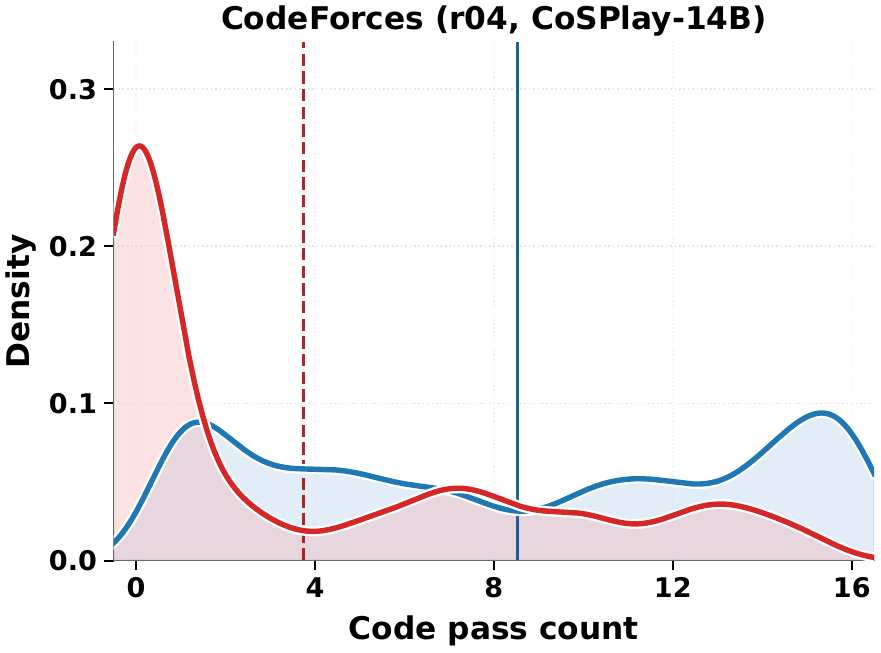}\hfill
    \includegraphics[width=0.28\textwidth]{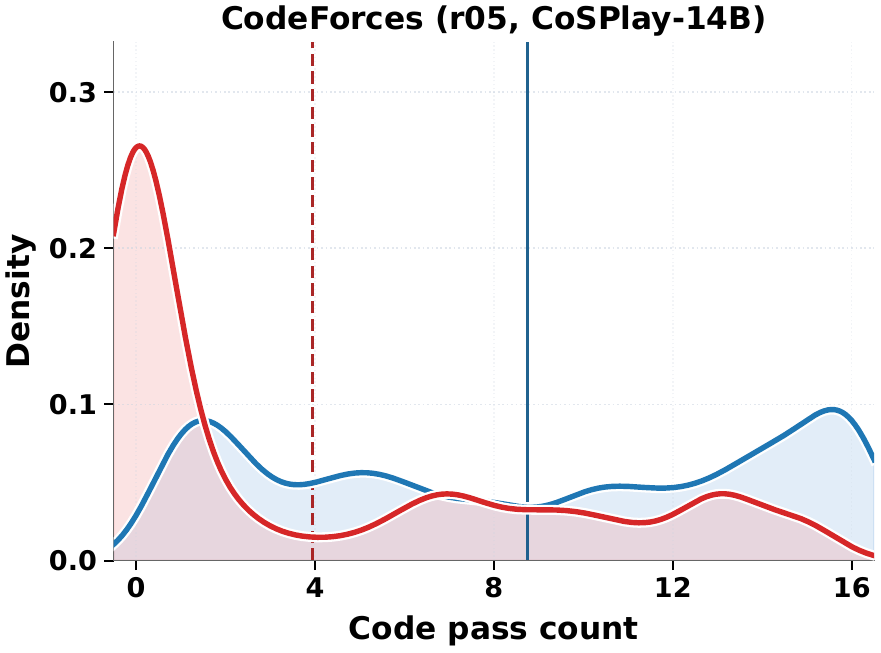}

    \caption{Density distributions of code pass counts for correct (blue) and incorrect (red) code candidates during self-play on \textbf{CodeForces} with the \textbf{14B model}. The top row shows Round 0-2, and the bottom row shows Round 3-5. Vertical lines mean the average values.}
    \label{fig:code_pass_count_density_14b_codeforces}
\end{figure}

\begin{figure}[!t]
    \centering
    \includegraphics[width=0.28\textwidth]{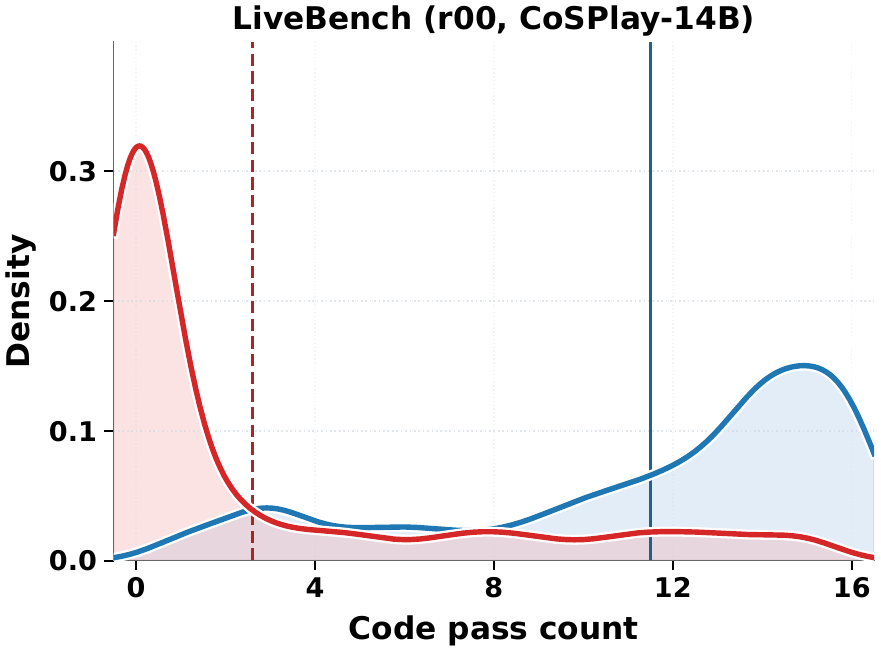}\hfill
    \includegraphics[width=0.28\textwidth]{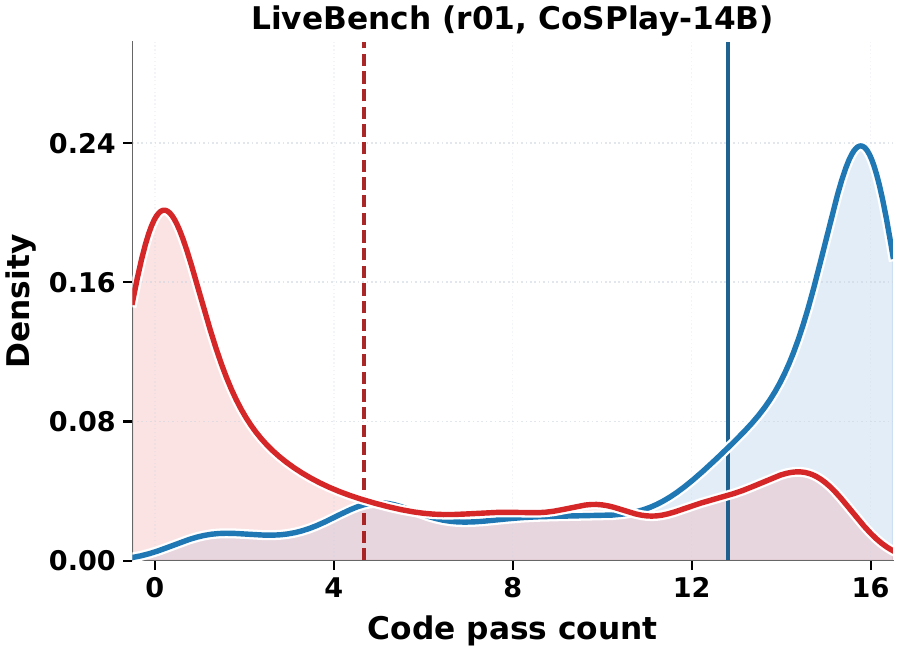}\hfill
    \includegraphics[width=0.28\textwidth]{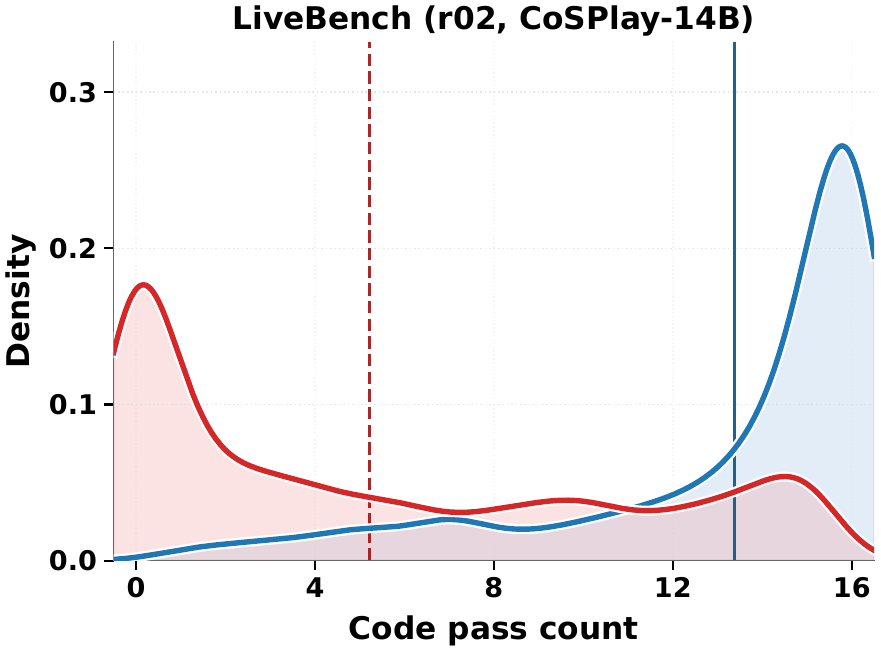}
    \vspace{-1.0em}

    \includegraphics[width=0.28\textwidth]{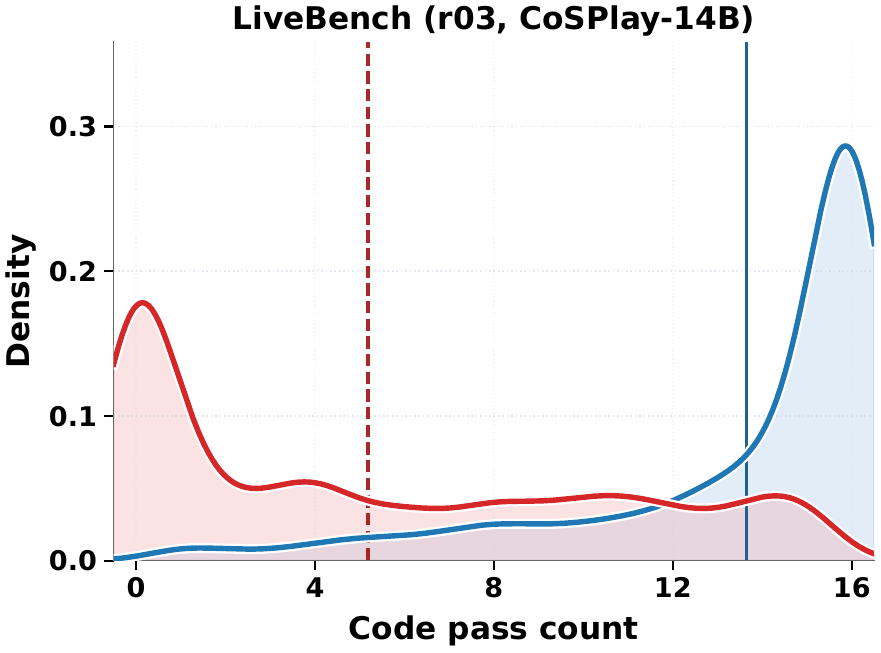}\hfill
    \includegraphics[width=0.28\textwidth]{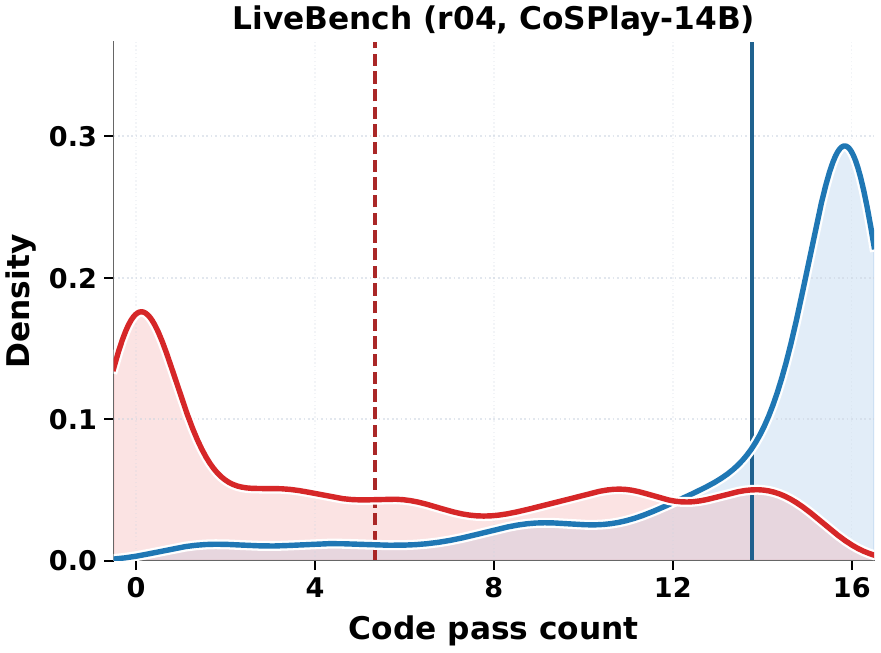}\hfill
    \includegraphics[width=0.28\textwidth]{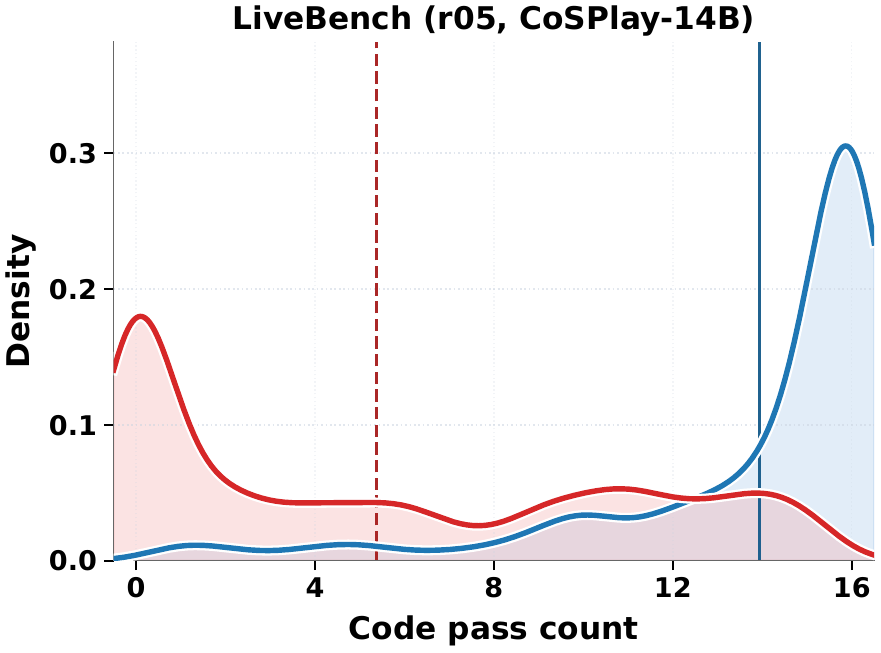}

    \caption{Density distributions of code pass counts for correct (blue) and incorrect (red) code candidates during self-play on \textbf{LiveBench} with the \textbf{14B model}. The top row shows Round 0-2, and the bottom row shows Round 3-5. Vertical lines mean the average values.}
    \label{fig:code_pass_count_density_14b_livebench}
\end{figure}
\begin{figure}[!t]
    \centering
    \includegraphics[width=0.28\textwidth]{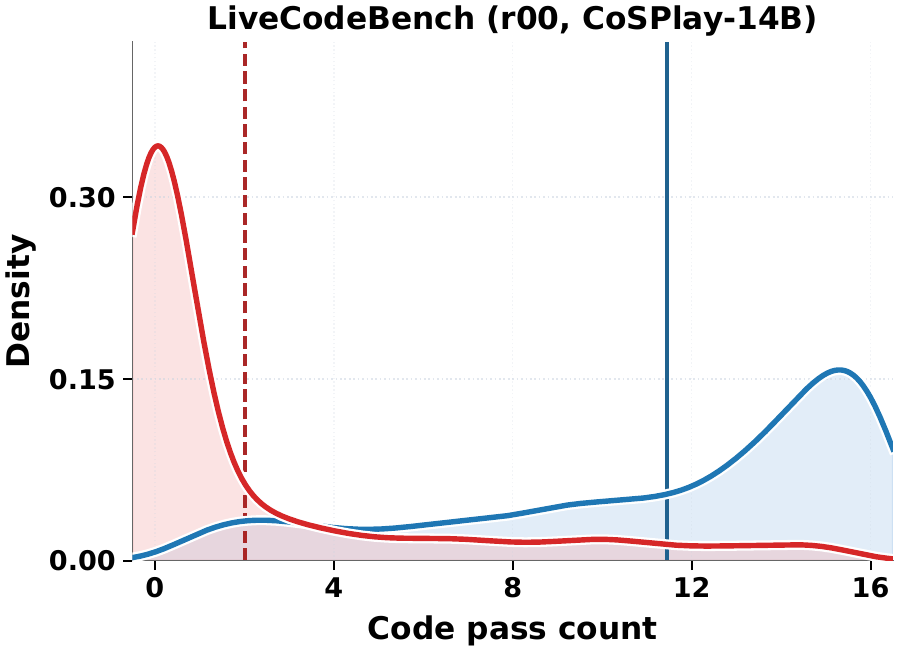}\hfill
    \includegraphics[width=0.28\textwidth]{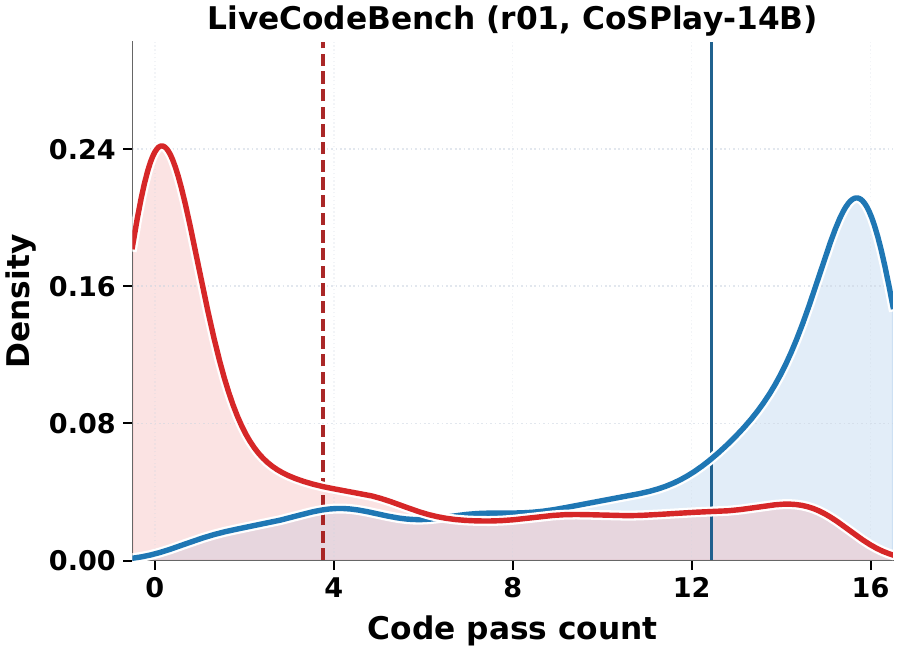}\hfill
    \includegraphics[width=0.28\textwidth]{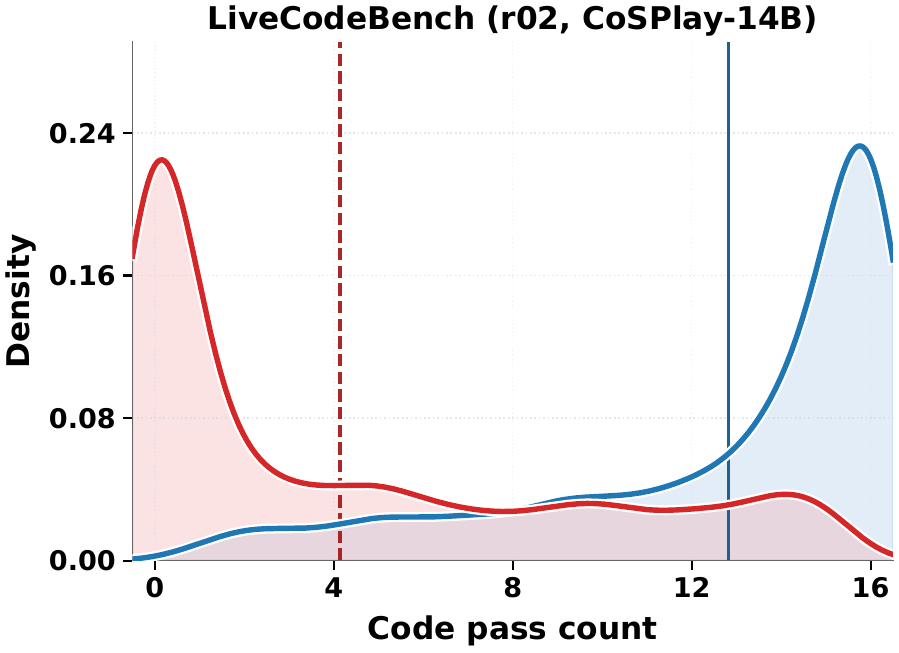}
    \vspace{-1.0em}

    \includegraphics[width=0.28\textwidth]{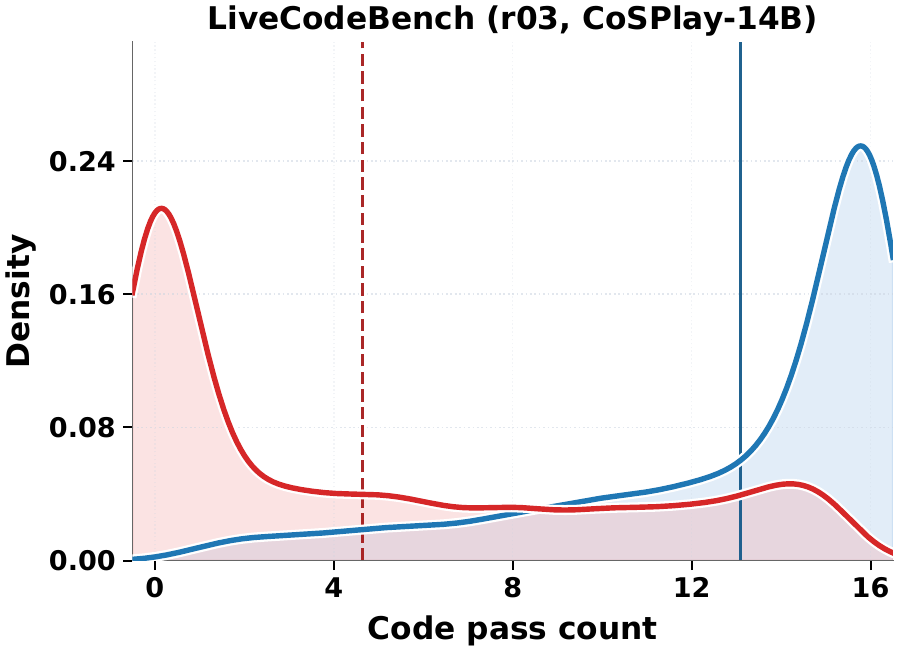}\hfill
    \includegraphics[width=0.28\textwidth]{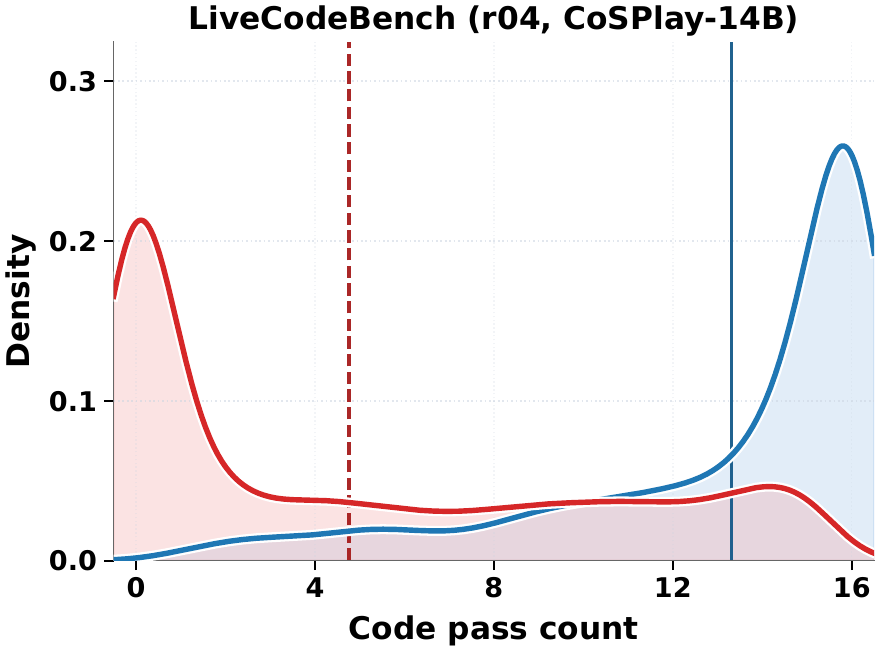}\hfill
    \includegraphics[width=0.28\textwidth]{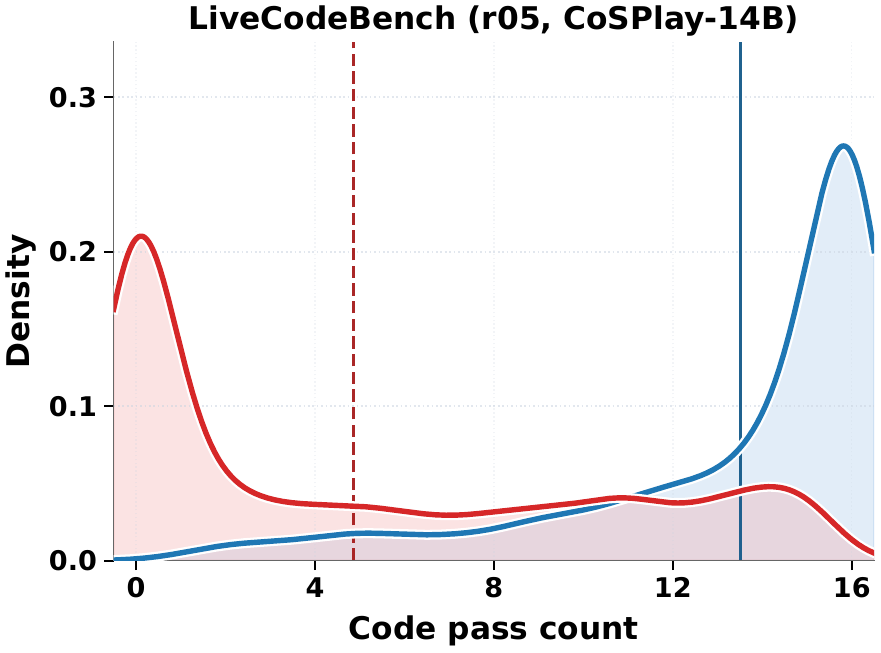}

    \caption{Density distributions of code pass counts for correct (blue) and incorrect (red) code candidates during self-play on \textbf{LiveCodeBench} with the \textbf{14B model}. The top row shows Round 0-2, and the bottom row shows Round 3-5. Vertical lines mean the average values.}
    \label{fig:code_pass_count_density_14b_livecodebench}
\end{figure}

\clearpage

\section{Detailed Evolution of average true accuracy vs. cluster size}
\label{app:cluster size vs average true acc}
Figure~\ref{fig:sub_codecontests_cluster_size_vs_code_true_acc}-\ref{fig:sub_livecodebench_cluster_size_vs_code_true_acc} provide per-dataset and per-round results for the relationship between cluster size and average true code accuracy. Across datasets and model scales, larger clusters consistently correspond to higher average correctness, supporting the assumption that output consensus is a useful GT-free proxy for functional correctness.This trend also remains stable throughout self-play rounds. As the code and UT pools are iteratively refined, high-accuracy candidates increasingly concentrate in larger consensus clusters, while small clusters are more likely to contain wrong or unstable solutions. These results provide fine-grained evidence for our execution-consensus-based cluster selection strategy.

\begin{figure}[H]
    \centering
    \includegraphics[width=0.27\textwidth,trim=0 0 0 0pt,clip]{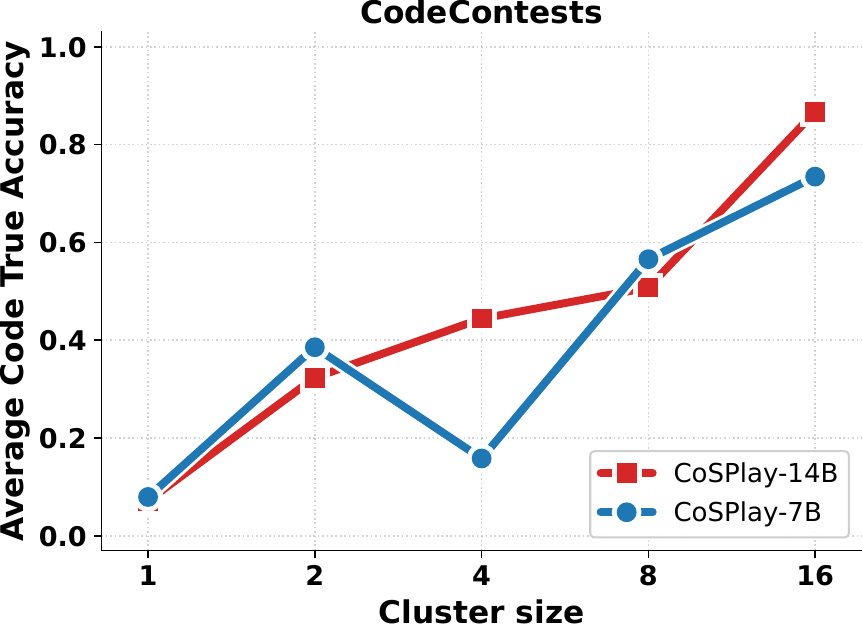}\hfill%
    \includegraphics[width=0.27\textwidth,trim=0 0 0 0pt,clip]{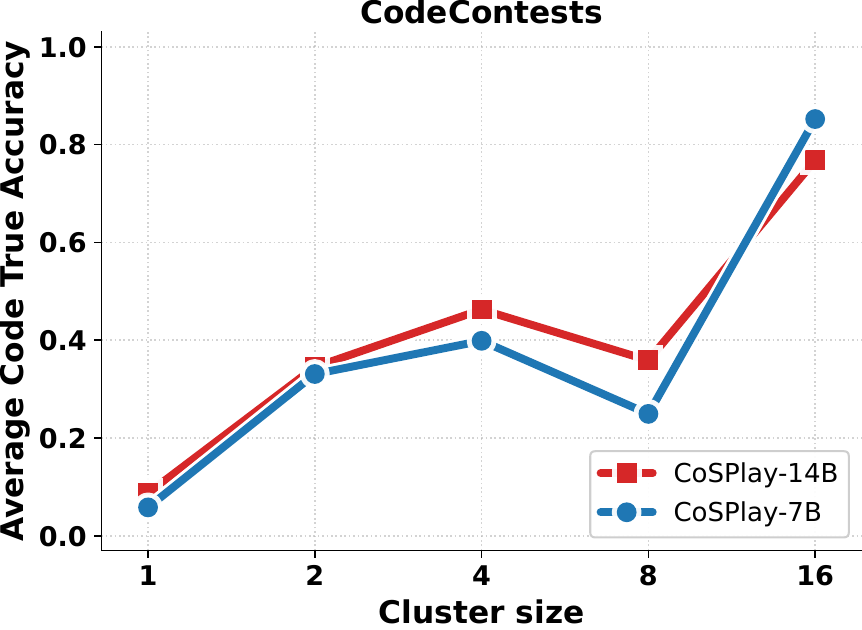}\hfill%
    \includegraphics[width=0.27\textwidth,trim=0 0 0 0pt,clip]{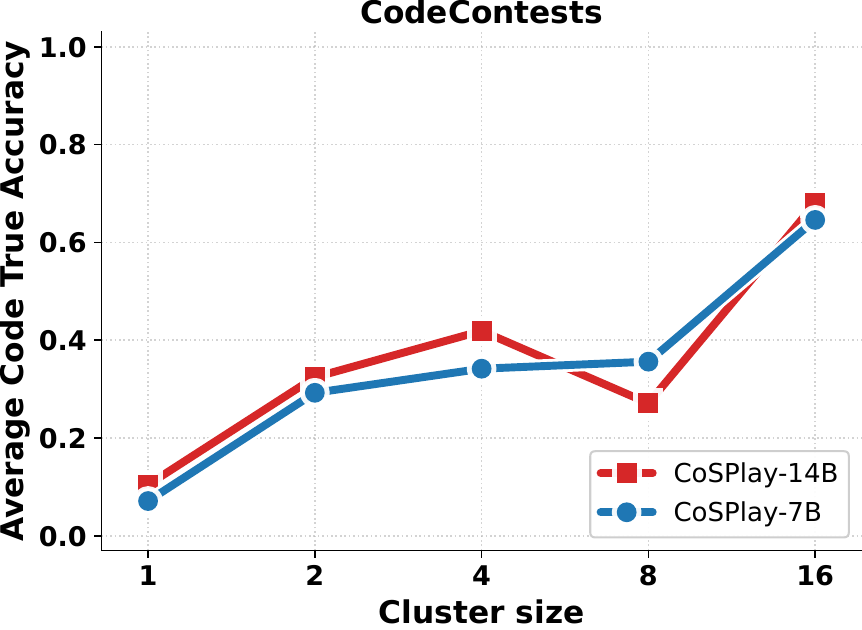}


    \includegraphics[width=0.27\textwidth,trim=0 0 0 0pt,clip]{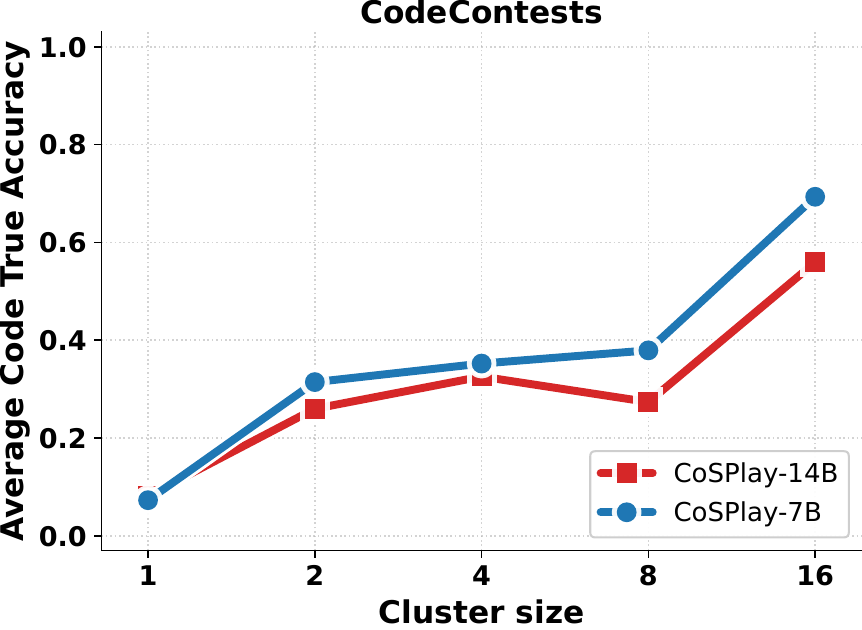}\hfill%
    \includegraphics[width=0.27\textwidth,trim=0 0 0 0pt,clip]{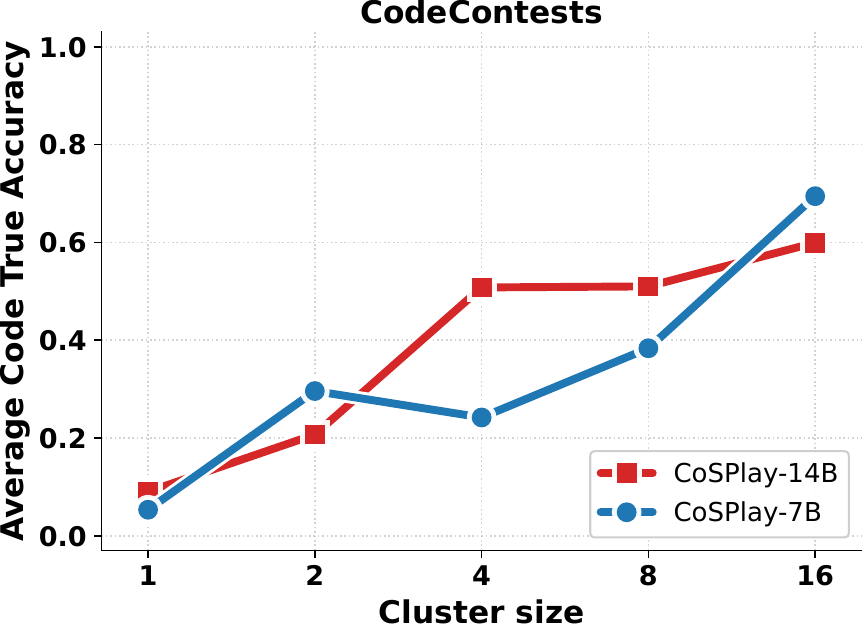}\hfill%
    \includegraphics[width=0.27\textwidth,trim=0 0 0 0pt,clip]{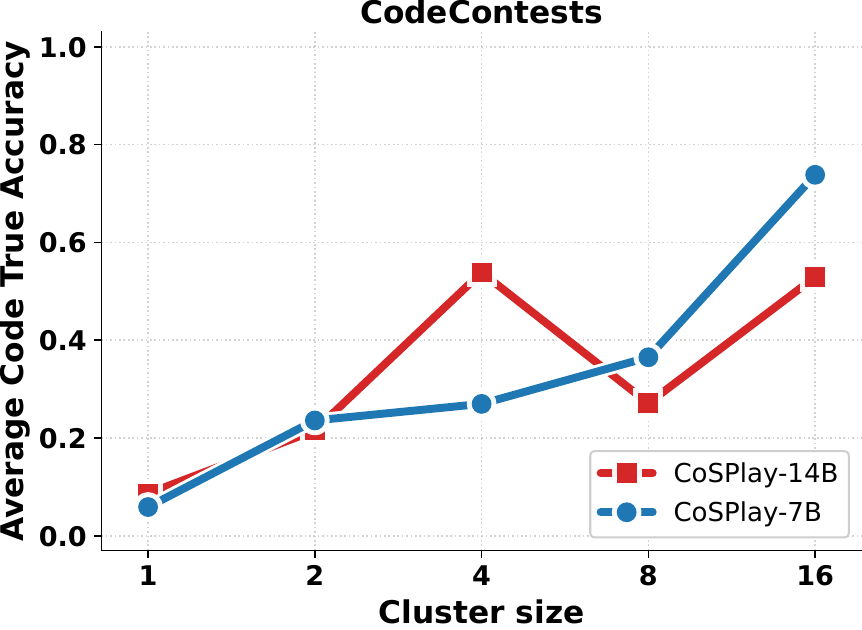}


    \caption{The relationship between cluster size and average code true accuracy during self-play on CodeContests for both 7B and 14B models. The top row shows Round 0-2, and the bottom row shows Round 3-5.}
    \label{fig:sub_codecontests_cluster_size_vs_code_true_acc}
\end{figure}

\vspace{-2.5em}

\begin{figure}[H]
    \centering
    \includegraphics[width=0.27\textwidth,trim=0 0 0 0pt,clip]{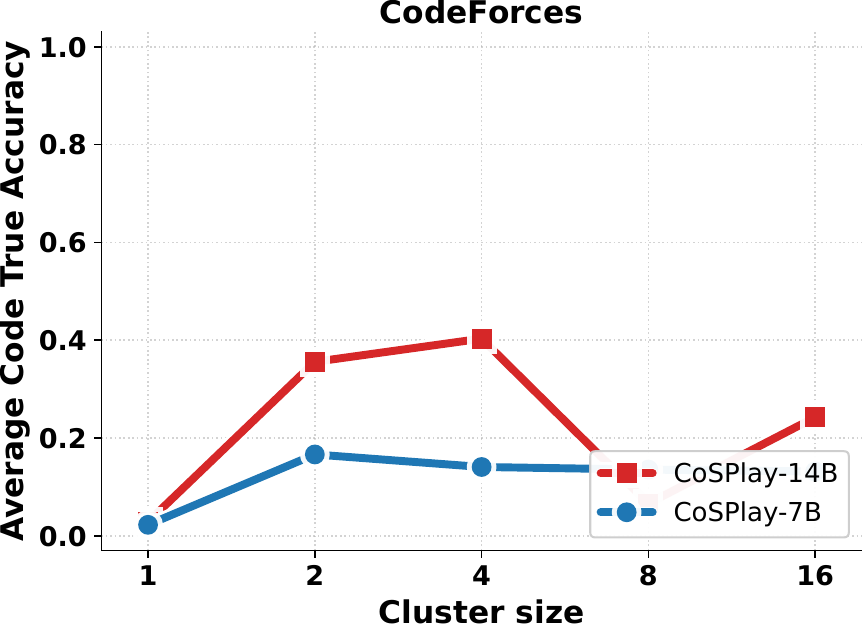}\hfill%
    \includegraphics[width=0.27\textwidth,trim=0 0 0 0pt,clip]{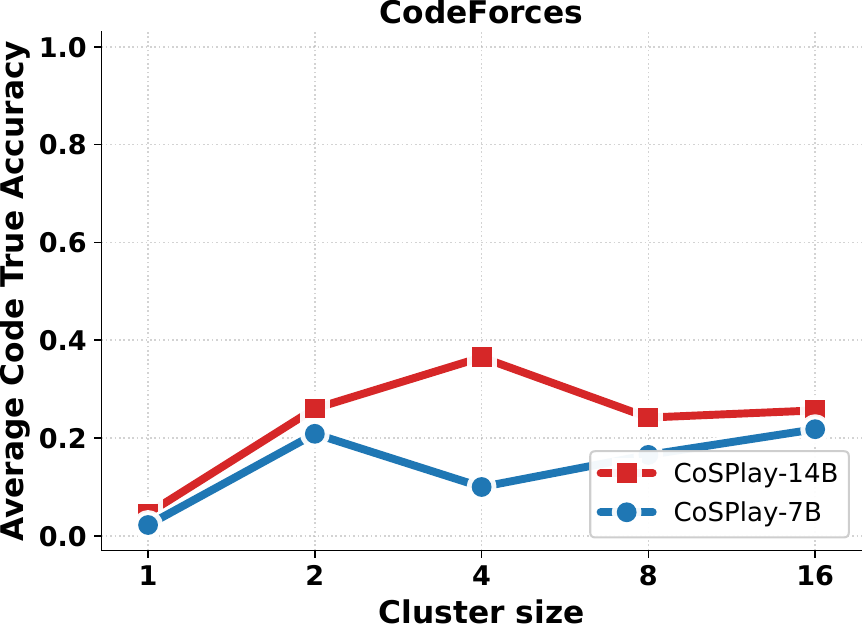}\hfill%
    \includegraphics[width=0.27\textwidth,trim=0 0 0 0pt,clip]{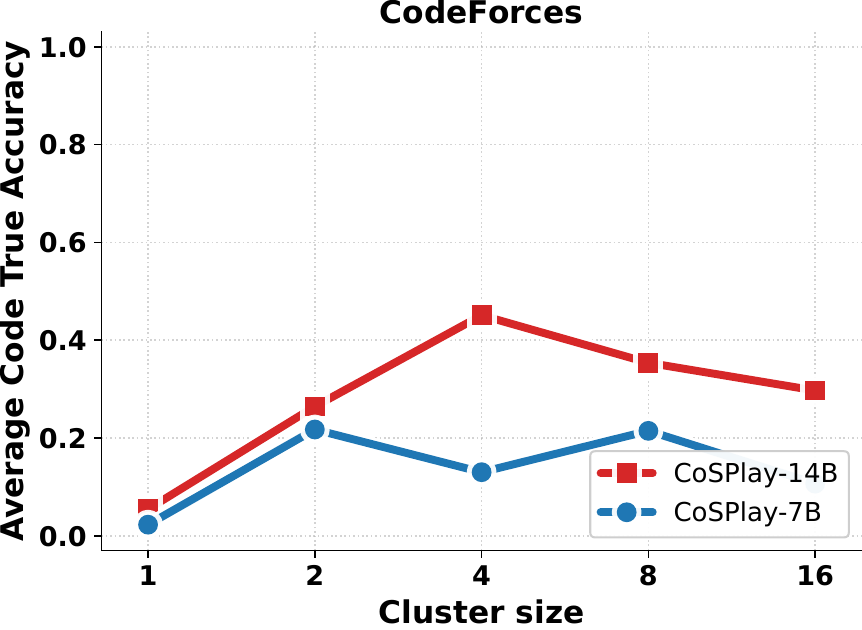}


    \includegraphics[width=0.27\textwidth,trim=0 0 0 0pt,clip]{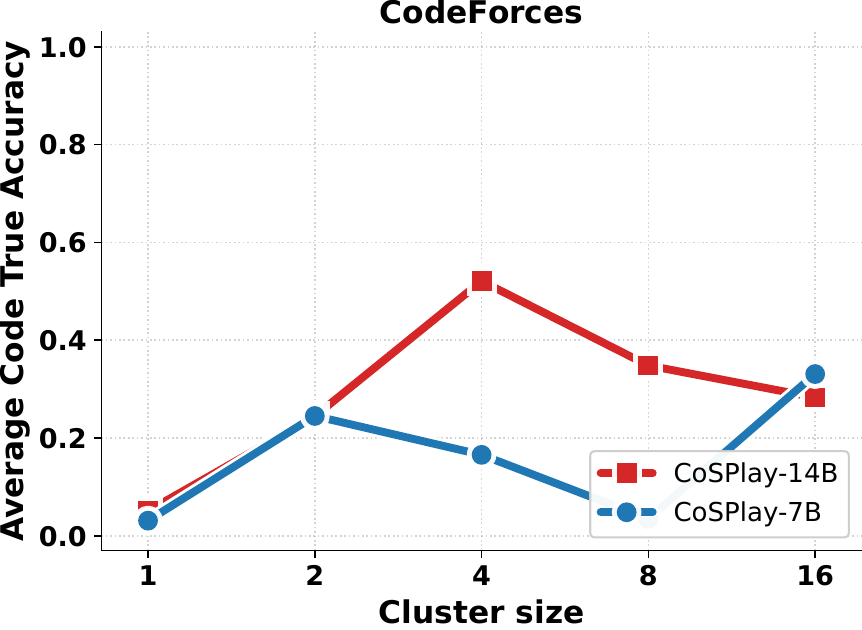}\hfill%
    \includegraphics[width=0.27\textwidth,trim=0 0 0 0pt,clip]{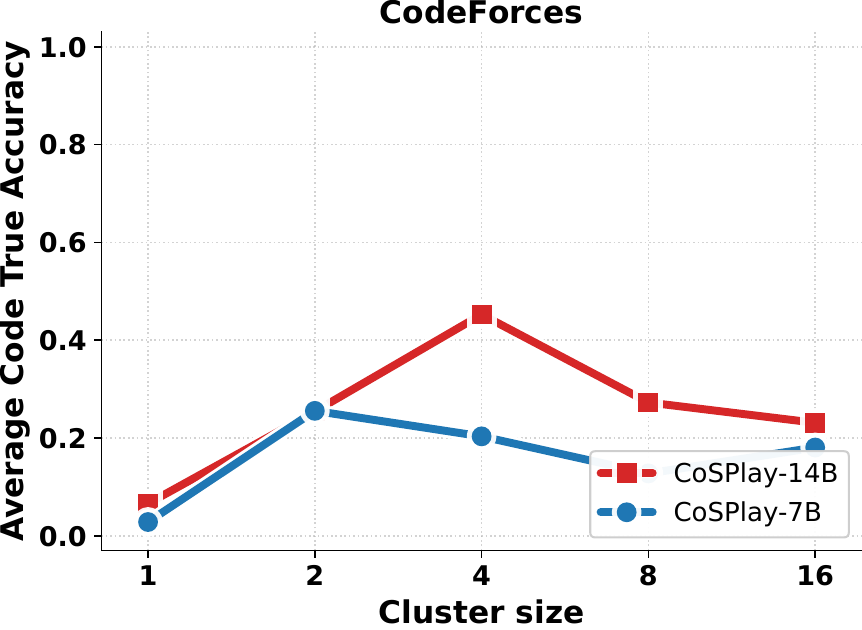}\hfill%
    \includegraphics[width=0.27\textwidth,trim=0 0 0 0pt,clip]{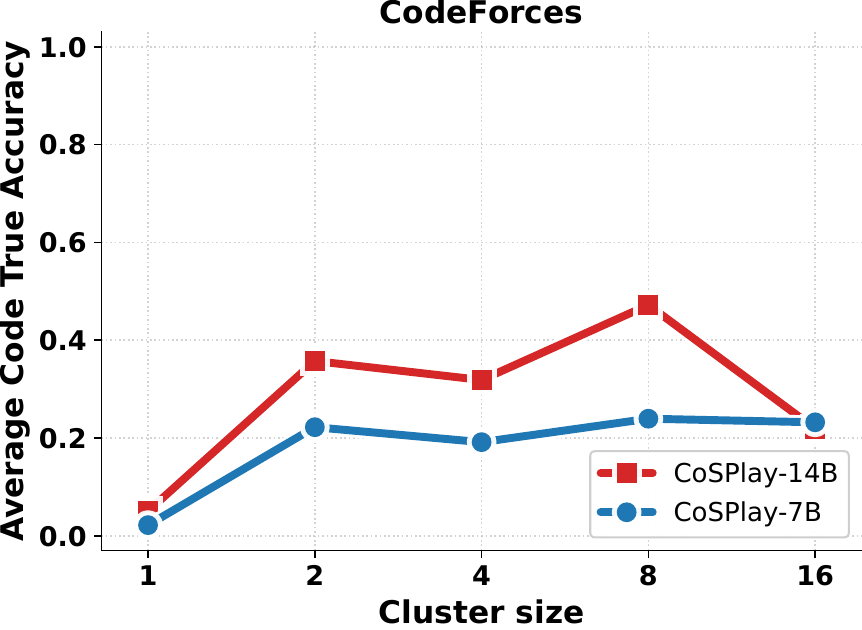}


    \caption{The relationship between cluster size and average code true accuracy during self-play on CodeForces for both 7B and 14B models. The top row shows Round 0-2, and the bottom row shows Round 3-5.}
    \label{fig:sub_codeforces_cluster_size_vs_code_true_acc}
\end{figure}

\vspace{-2.5em}

\begin{figure}[H]
    \centering
    \includegraphics[width=0.27\textwidth,trim=0 0 0 0pt,clip]{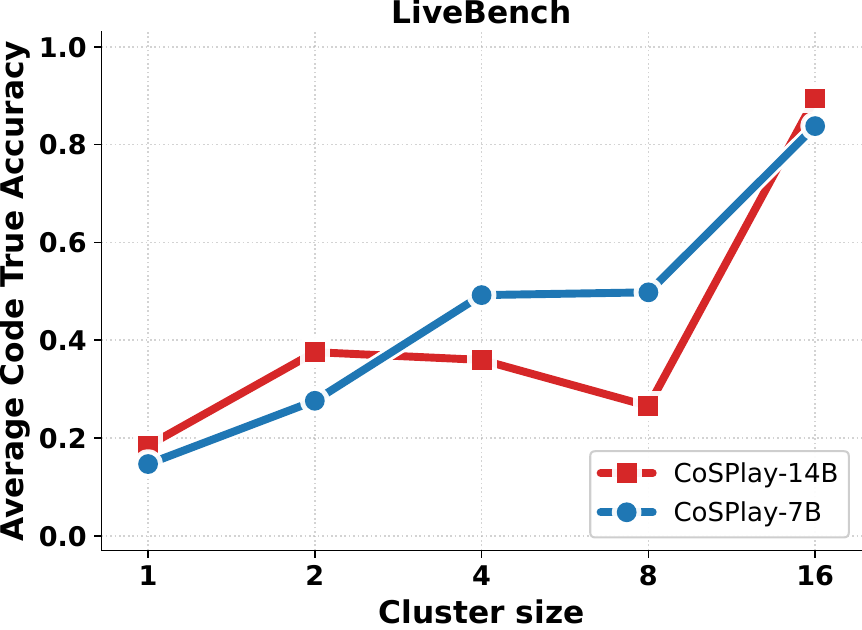}\hfill%
    \includegraphics[width=0.27\textwidth,trim=0 0 0 0pt,clip]{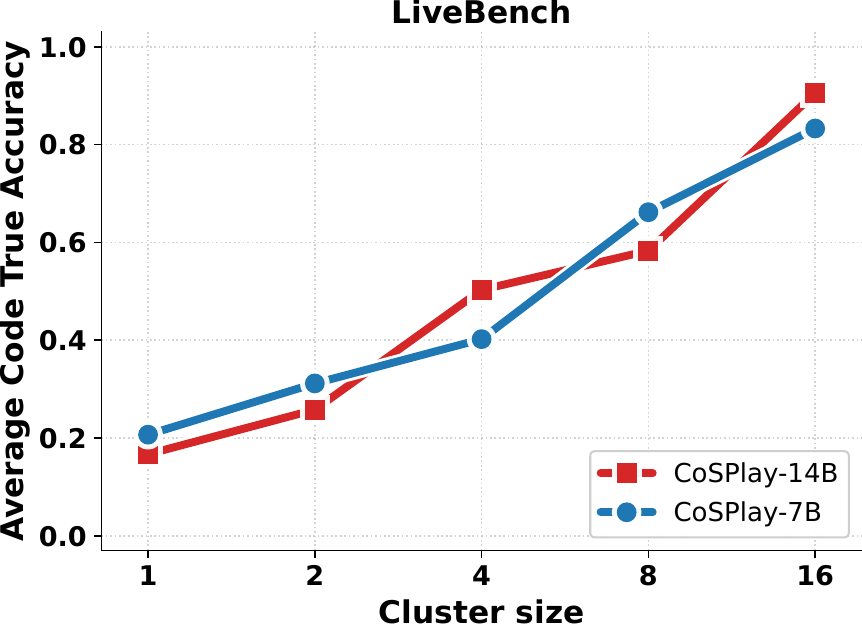}\hfill%
    \includegraphics[width=0.27\textwidth,trim=0 0 0 0pt,clip]{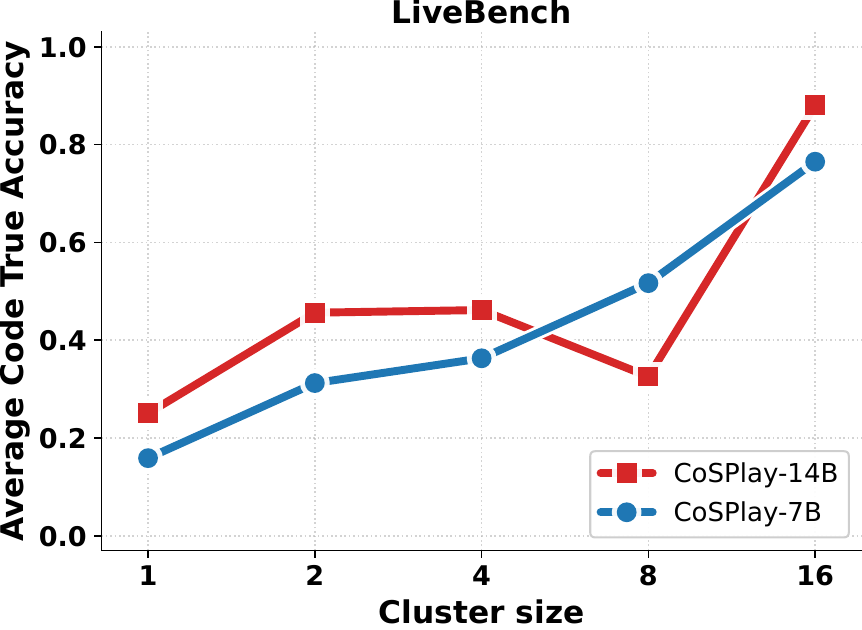}


    \includegraphics[width=0.27\textwidth,trim=0 0 0 0pt,clip]{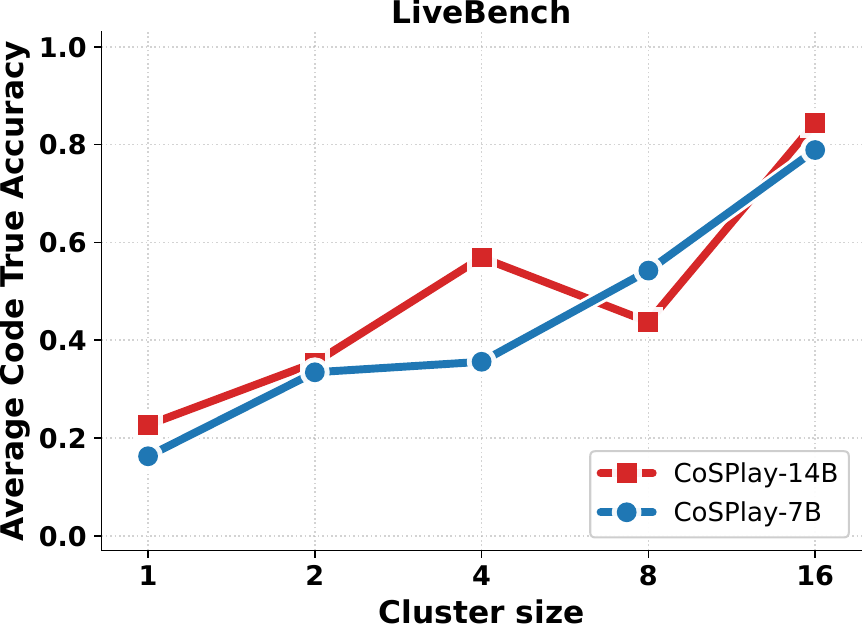}\hfill%
    \includegraphics[width=0.27\textwidth,trim=0 0 0 0pt,clip]{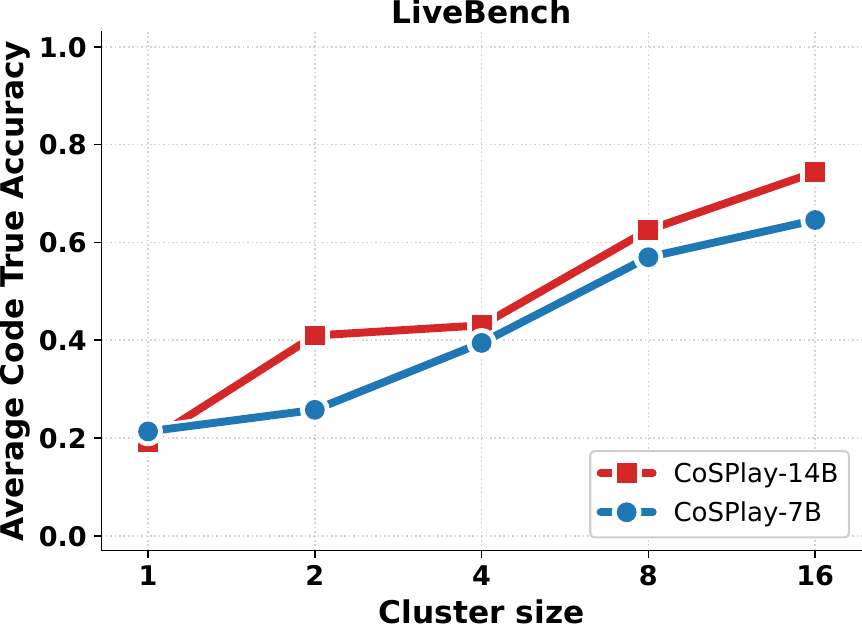}\hfill%
    \includegraphics[width=0.27\textwidth,trim=0 0 0 0pt,clip]{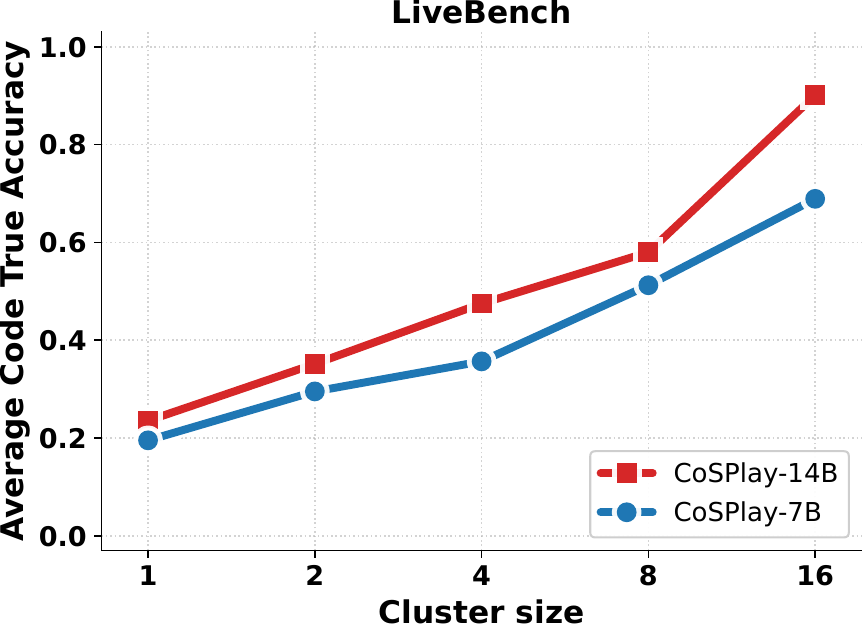}


    \caption{The relationship between cluster size and average code true accuracy during self-play on LiveBench for both 7B and 14B models. The top row shows Round 0-2, and the bottom row shows Round 3-5.}
    \label{fig:sub_livebench_cluster_size_vs_code_true_acc}
\end{figure}

\vspace{-2.5em}

\begin{figure}[H]
    \centering
    \includegraphics[width=0.27\textwidth,trim=0 0 0 0pt,clip]{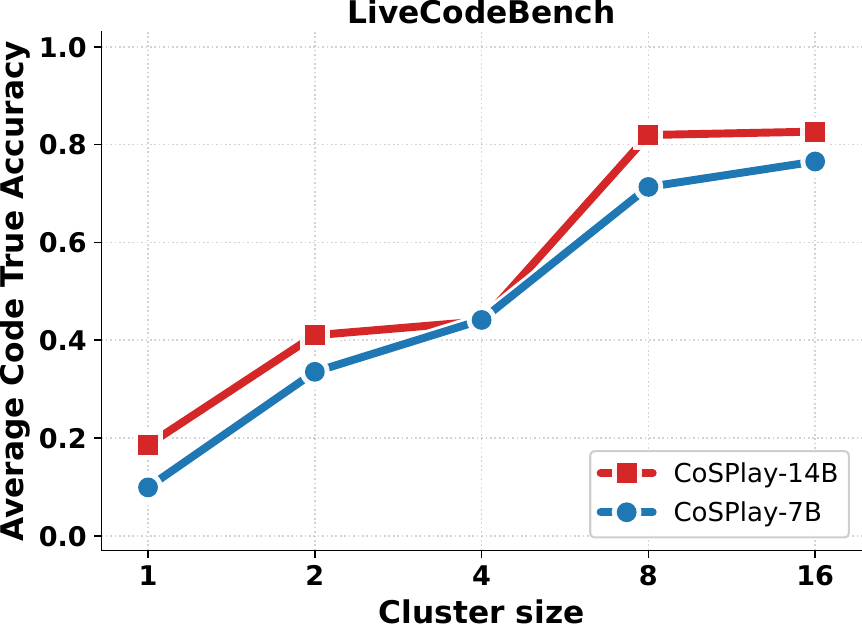}\hfill%
    \includegraphics[width=0.27\textwidth,trim=0 0 0 0pt,clip]{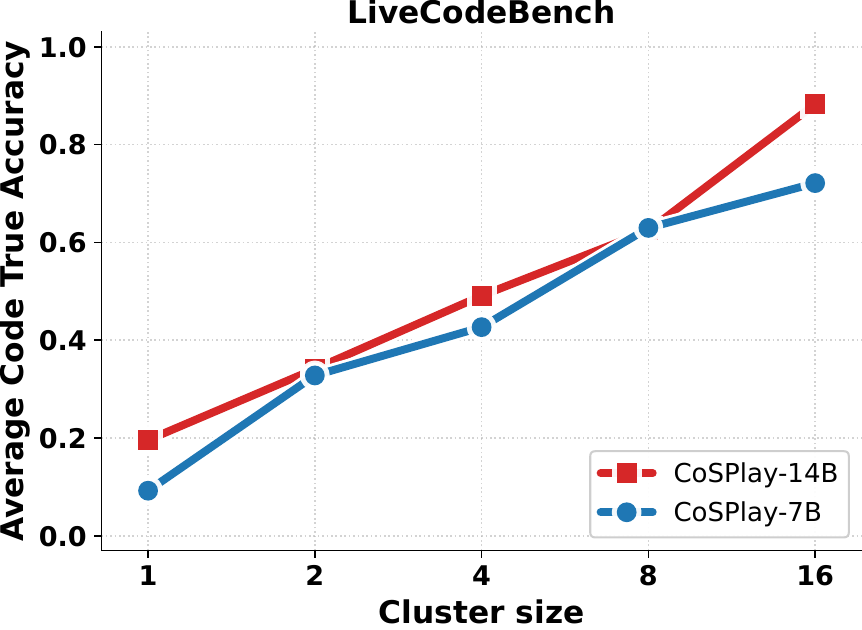}\hfill%
    \includegraphics[width=0.27\textwidth,trim=0 0 0 0pt,clip]{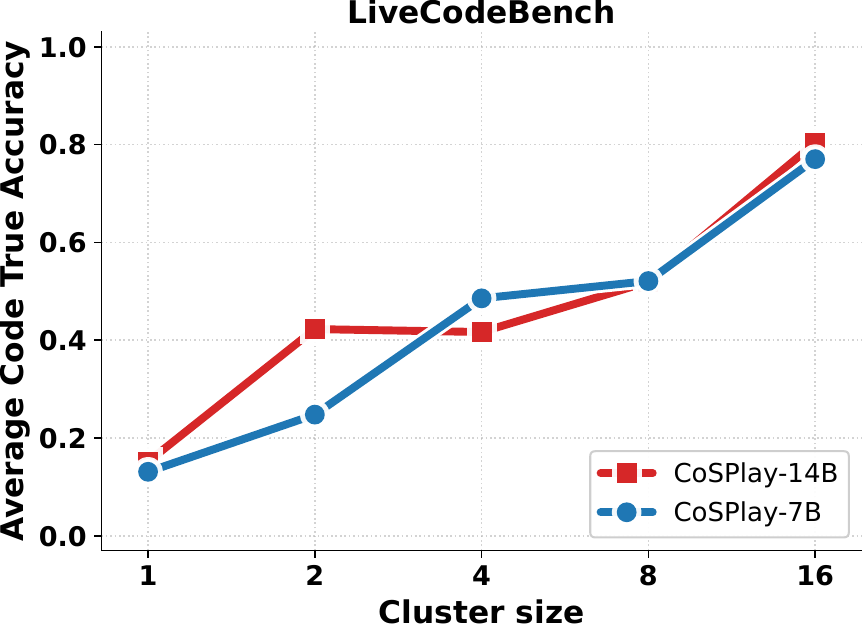}


    \includegraphics[width=0.27\textwidth,trim=0 0 0 0pt,clip]{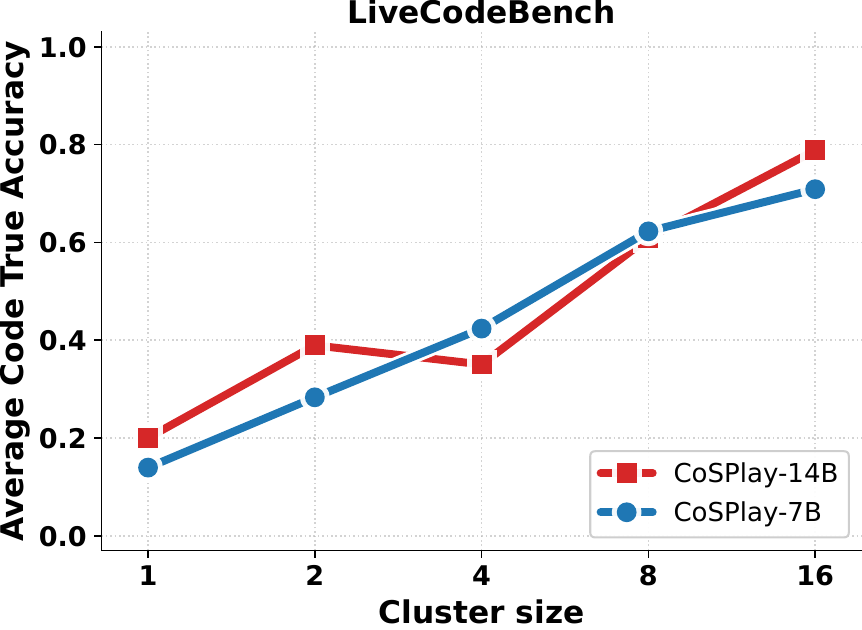}\hfill%
    \includegraphics[width=0.27\textwidth,trim=0 0 0 0pt,clip]{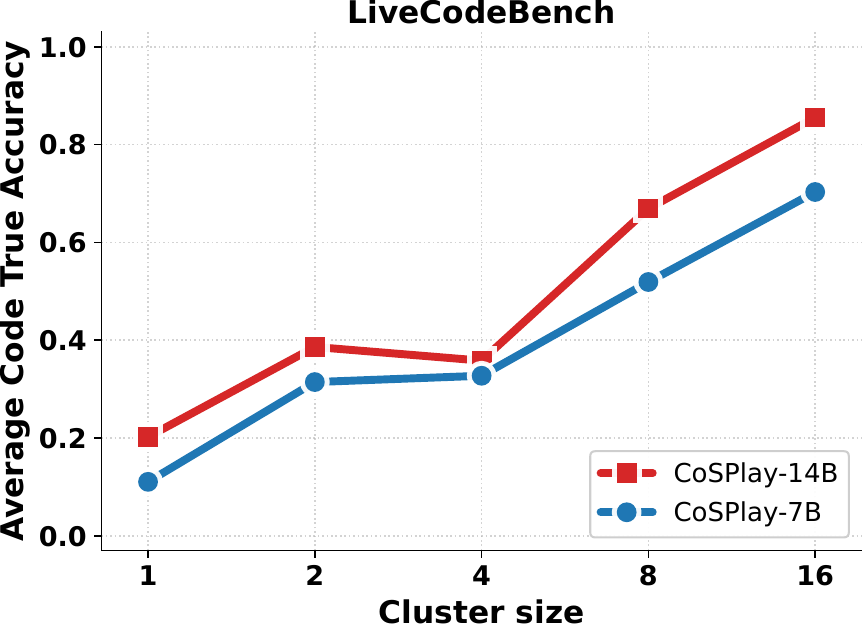}\hfill%
    \includegraphics[width=0.27\textwidth,trim=0 0 0 0pt,clip]{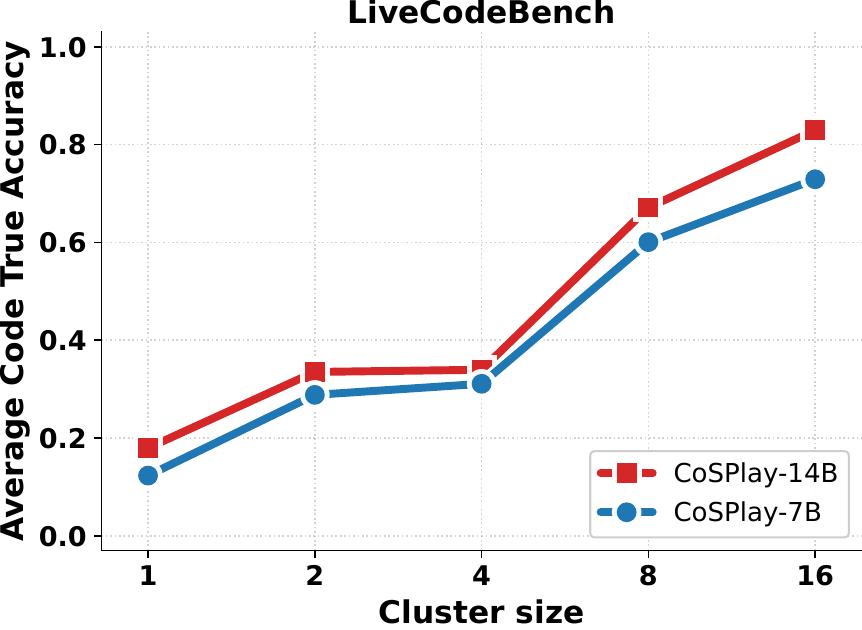}


    \caption{The relationship between cluster size and average code true accuracy during self-play on LiveCodeBench for both 7B and 14B models. The top row shows Round 0-2, and the bottom row shows Round 3-5.}
    \label{fig:sub_livecodebench_cluster_size_vs_code_true_acc}
\end{figure}

\section{Detailed Breakdown of the Relationship between UT Pass Count and True Accuracy}
\label{app: detailed UT pass count on generated codes and true average accuarcy}

This section provides the detailed data supporting the correlation analysis presented in the main text. Figure~\ref{fig:sub_codecontests_acc_14b}-\ref{fig:sub_livecodebench_acc_14b} display the relationship between UT pass counts and their true accuracy for each dataset individually. 

We observe that the strong positive correlation discussed in the main text is not an artifact of aggregation; rather, it holds consistently across all benchmarks. As the pass count increases, the reliability of the generated UTs improves uniformly across all tasks. Additionally, the performance gap between the 7B and 14B models is evident in these detailed plots, where the 14B model consistently achieves higher accuracy rates for the same pass counts. This detailed view reinforces the stability of using pass count as a filtering metric across diverse coding challenges.
\begin{figure}[H] 
    \centering
    \includegraphics[width=0.27\textwidth,trim=0 0 0 0pt,clip]{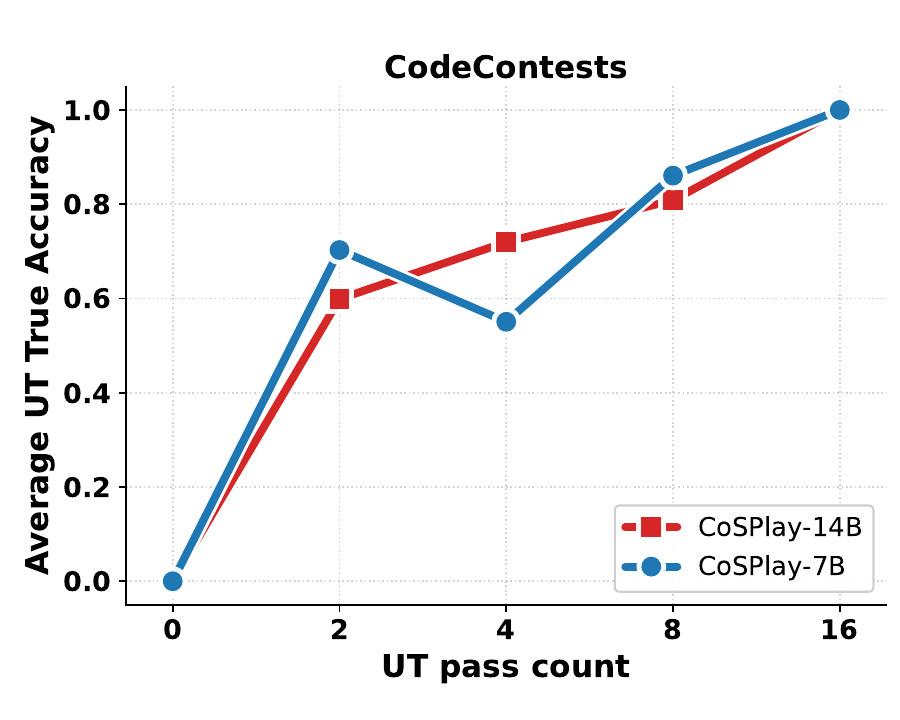}\hfill%
    \includegraphics[width=0.27\textwidth,trim=0 0 0 0pt,clip]{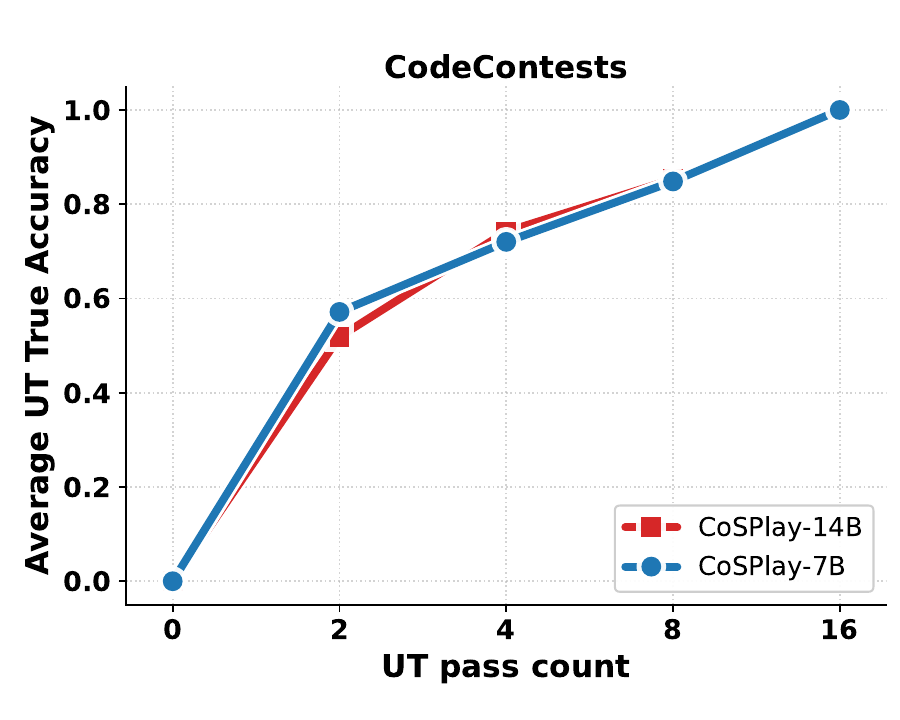}\hfill%
    \includegraphics[width=0.27\textwidth,trim=0 0 0 0pt,clip]{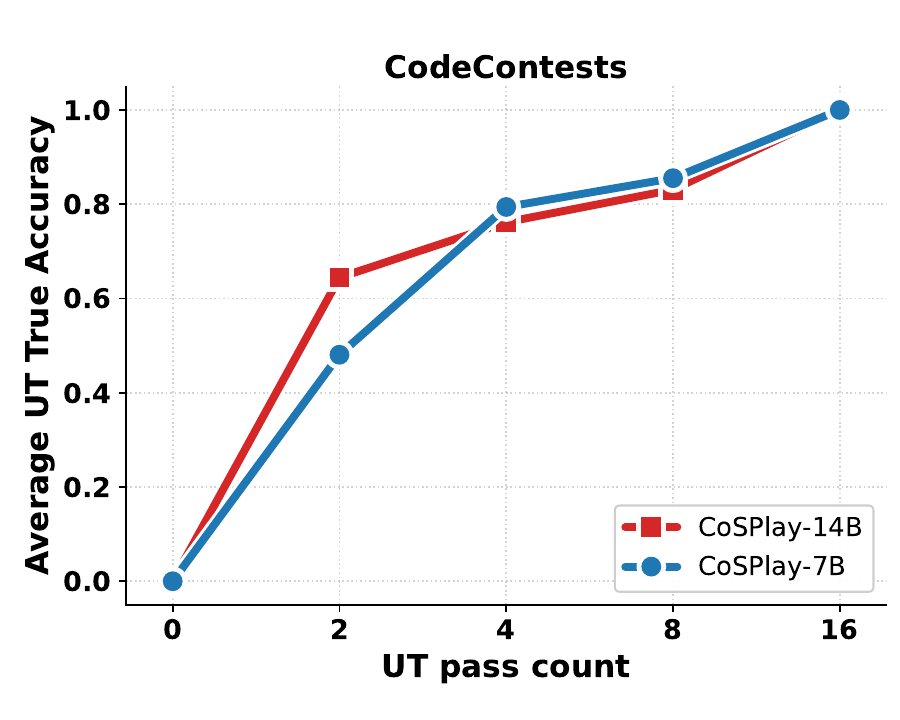}
    
    \vspace{-0.5em} 
    
    \includegraphics[width=0.27\textwidth,trim=0 0 0 0pt,clip]{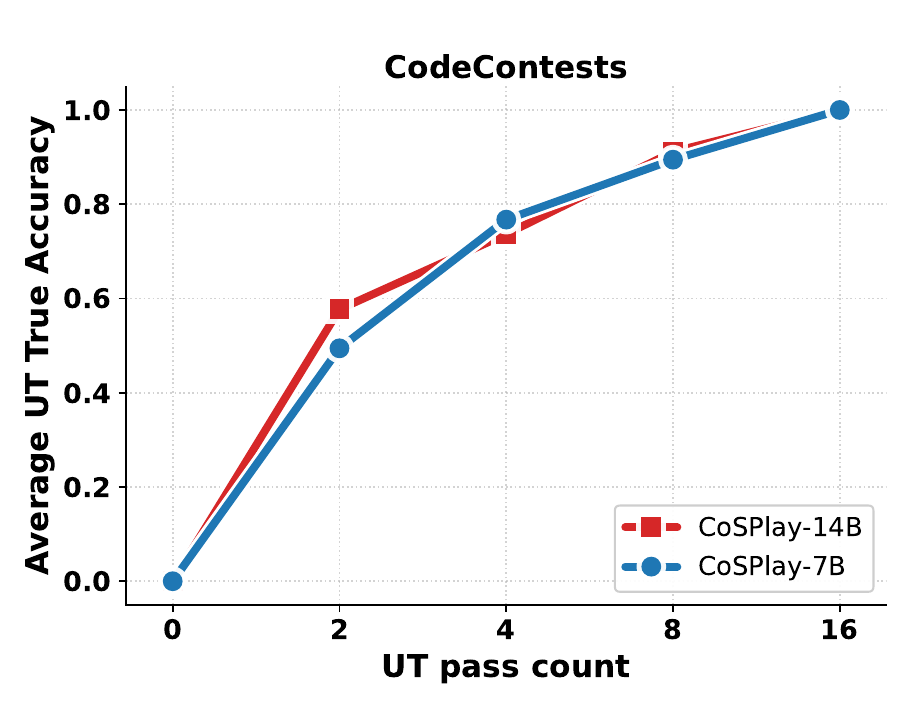}\hfill%
    \includegraphics[width=0.27\textwidth,trim=0 0 0 0pt,clip]{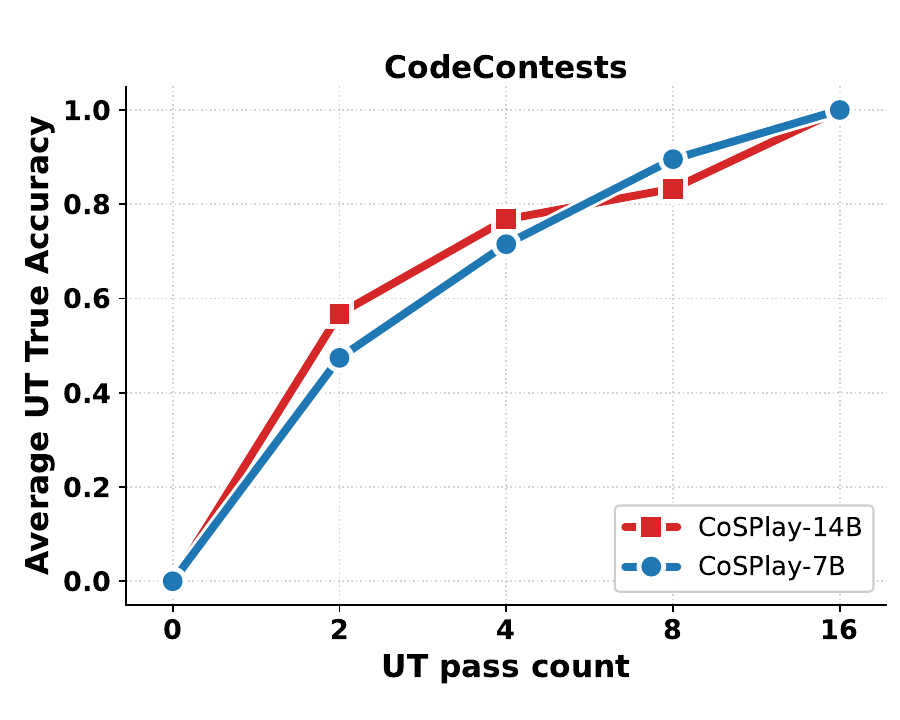}\hfill%
    \includegraphics[width=0.27\textwidth,trim=0 0 0 0pt,clip]{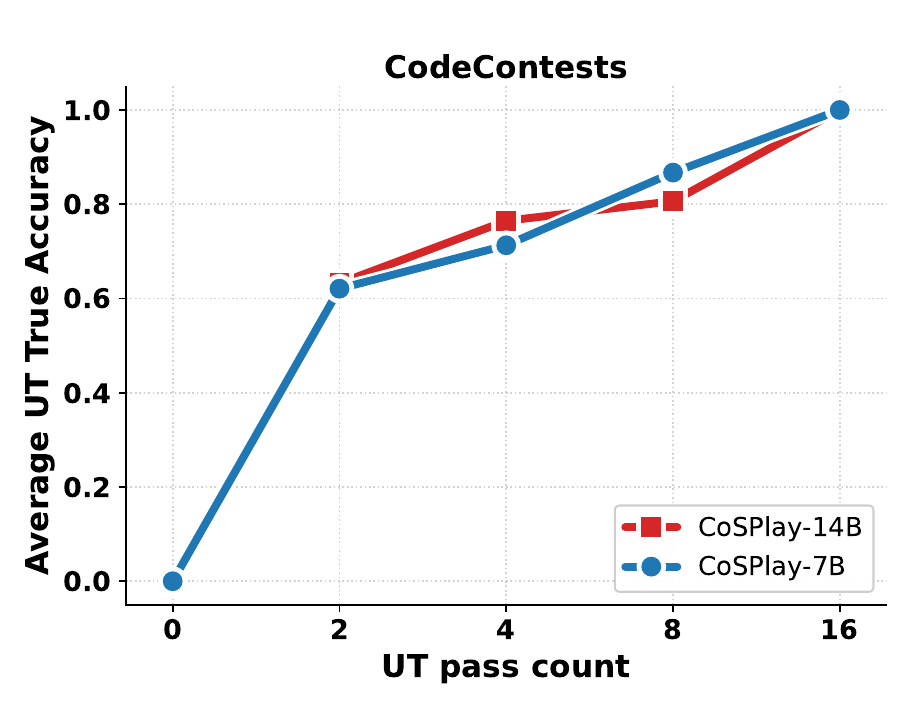}
    \vspace{-0.5em} 
    
    \caption{The relationship between UT pass counts on generated codes and average true accuracy for both 7B and 14B models on CodeContests. The top row shows Round 0-2, and the bottom row shows Round 3-5.}
    \label{fig:sub_codecontests_acc_14b}
\end{figure}

    \vspace{-2.5em} 
\begin{figure}[H] 
    \centering
    \includegraphics[width=0.27\textwidth,trim=0 0 0 0pt,clip]{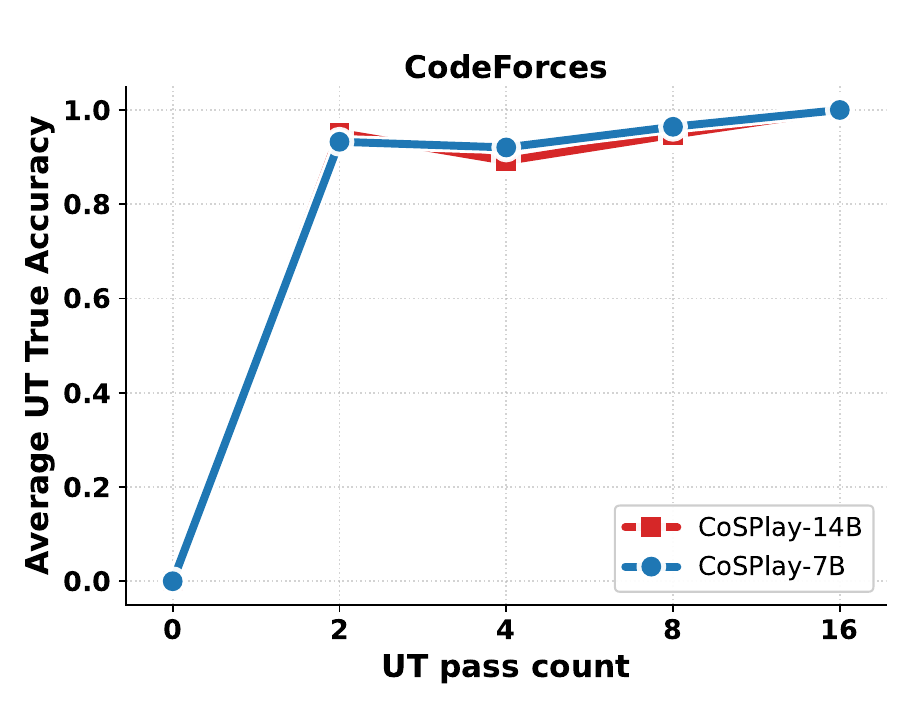}\hfill%
    \includegraphics[width=0.27\textwidth,trim=0 0 0 0pt,clip]{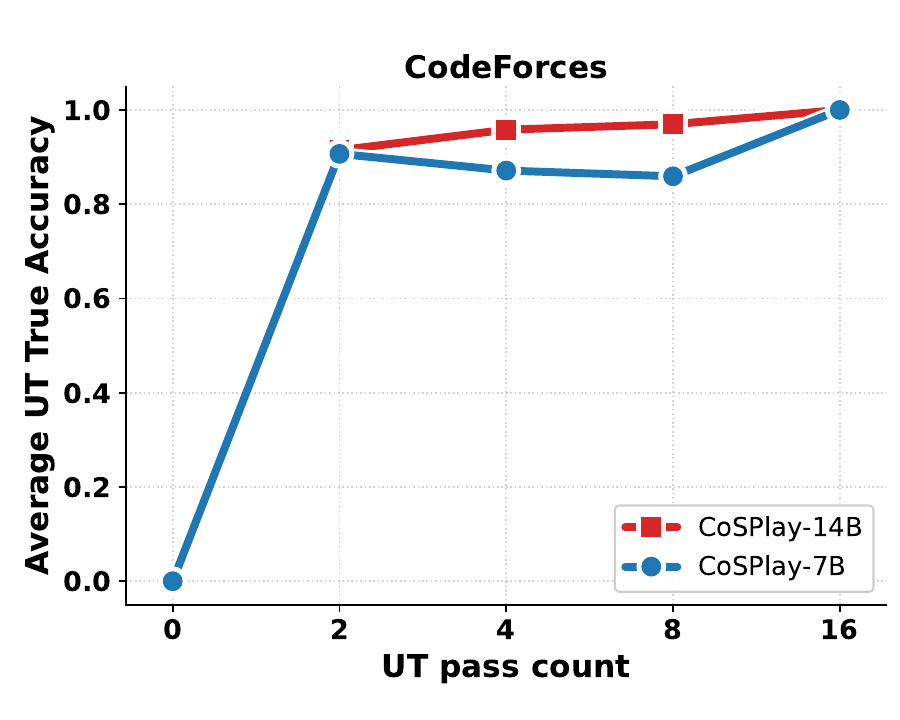}\hfill%
    \includegraphics[width=0.27\textwidth,trim=0 0 0 0pt,clip]{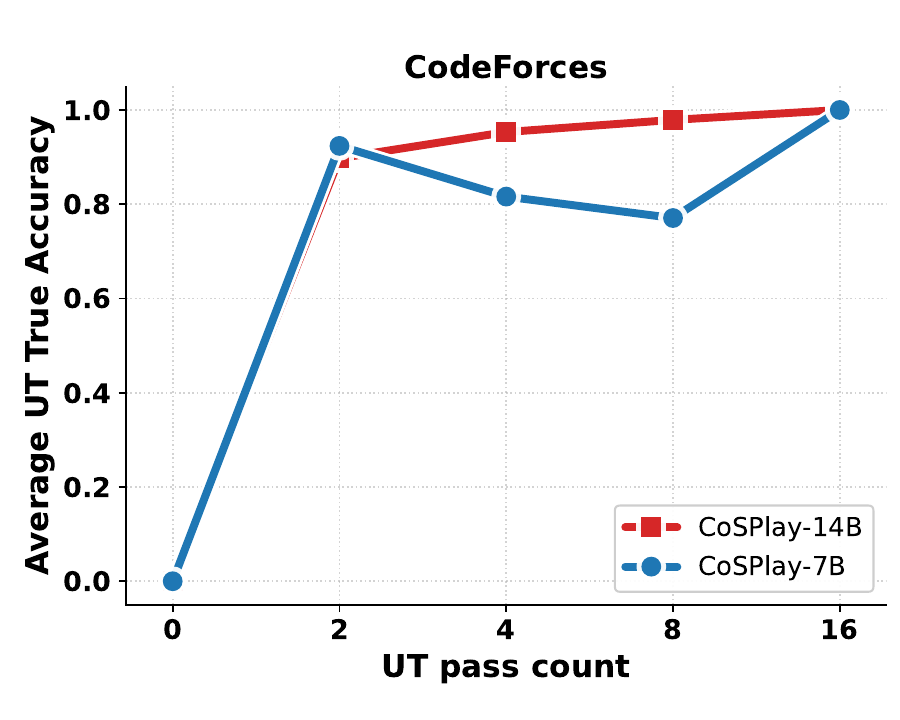}
    
    \vspace{-0.5em} 
    
    \includegraphics[width=0.27\textwidth,trim=0 0 0 0pt,clip]{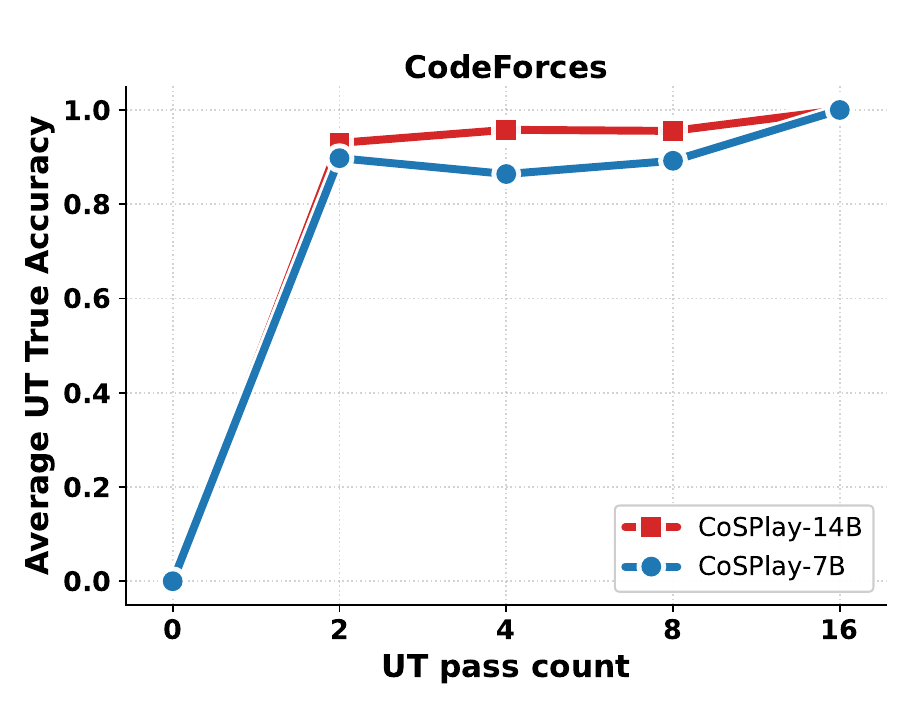}\hfill%
    \includegraphics[width=0.27\textwidth,trim=0 0 0 0pt,clip]{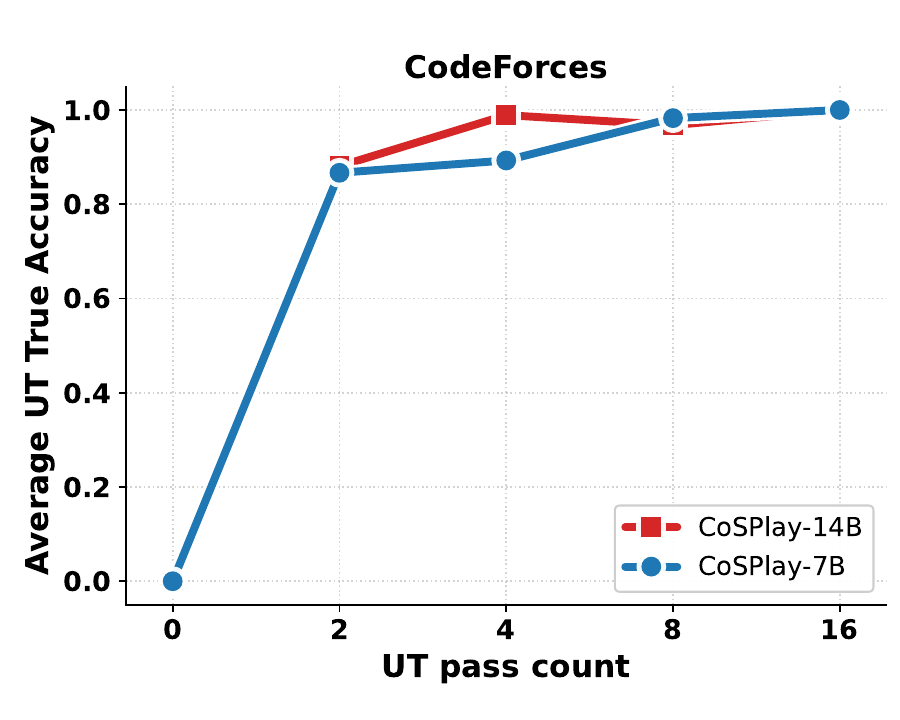}\hfill%
    \includegraphics[width=0.27\textwidth,trim=0 0 0 0pt,clip]{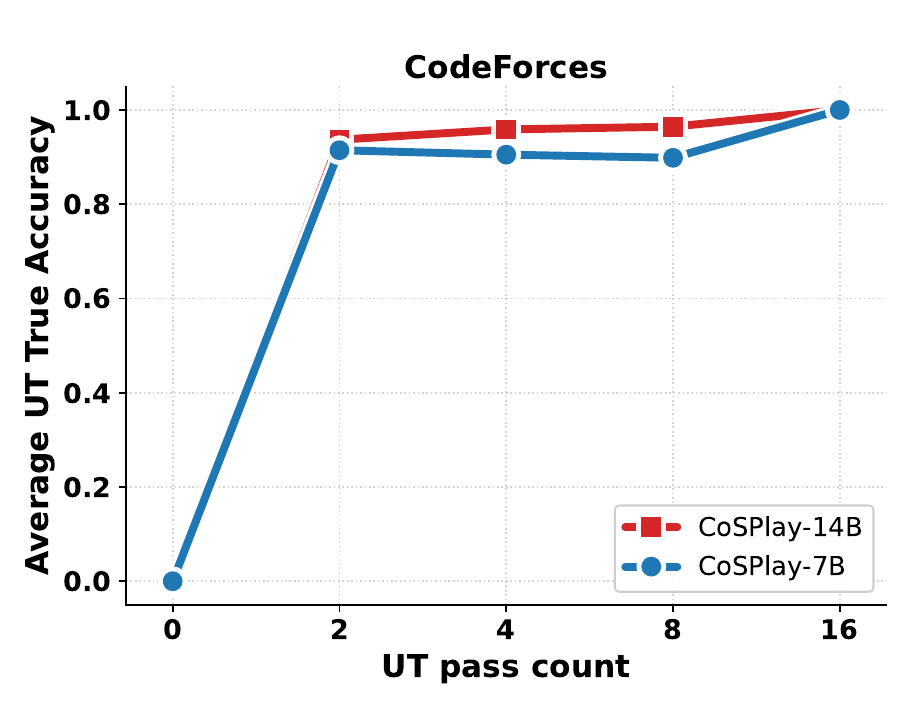}
    \vspace{-0.5em} 
    \caption{The relationship between UT pass counts on generated codes and average true accuracy for both 7B and 14B models on CodeForces. The top row shows Round 0-2, and the bottom row shows Round 3-5.}
    \label{fig:sub_codeforces_acc_14b}
\end{figure}
    \vspace{-2.5em} 

\begin{figure}[H]
    \centering
    \includegraphics[width=0.27\textwidth,trim=0 0 0 0pt,clip]{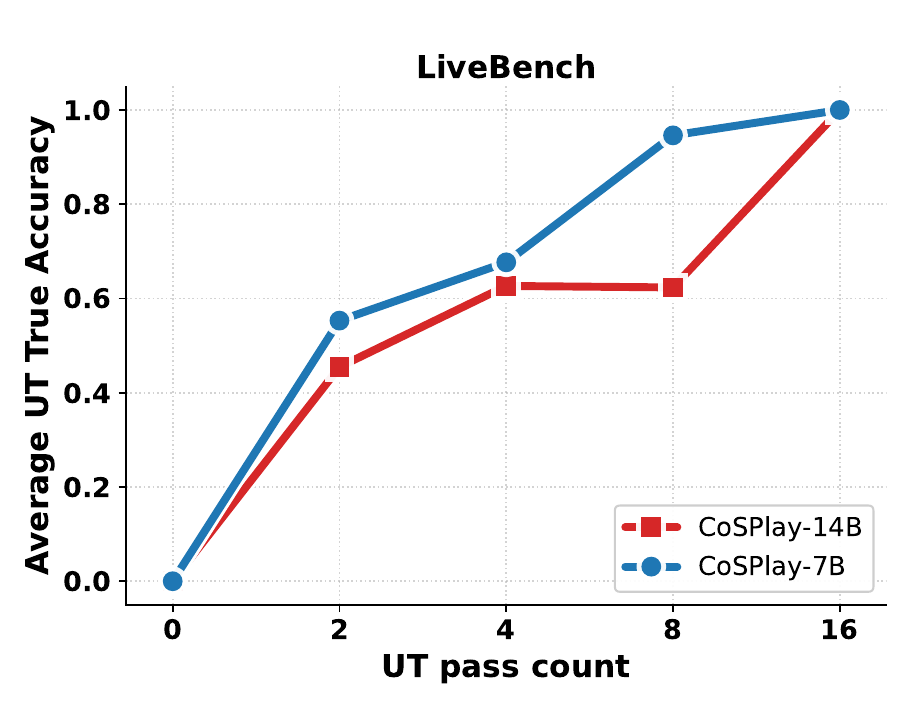}\hfill%
    \includegraphics[width=0.27\textwidth,trim=0 0 0 0pt,clip]{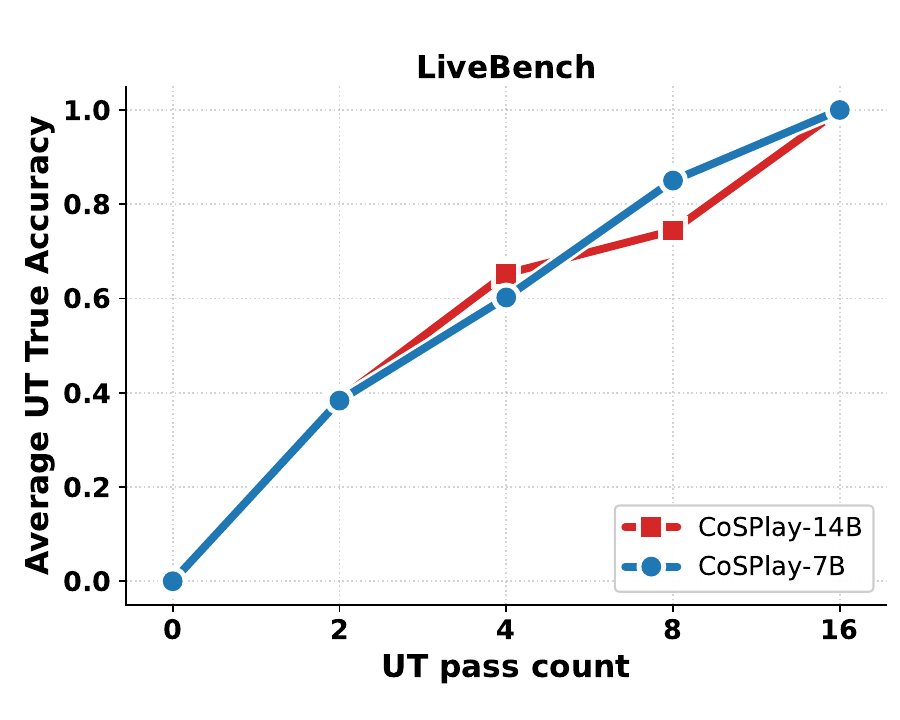}\hfill%
    \includegraphics[width=0.27\textwidth,trim=0 0 0 0pt,clip]{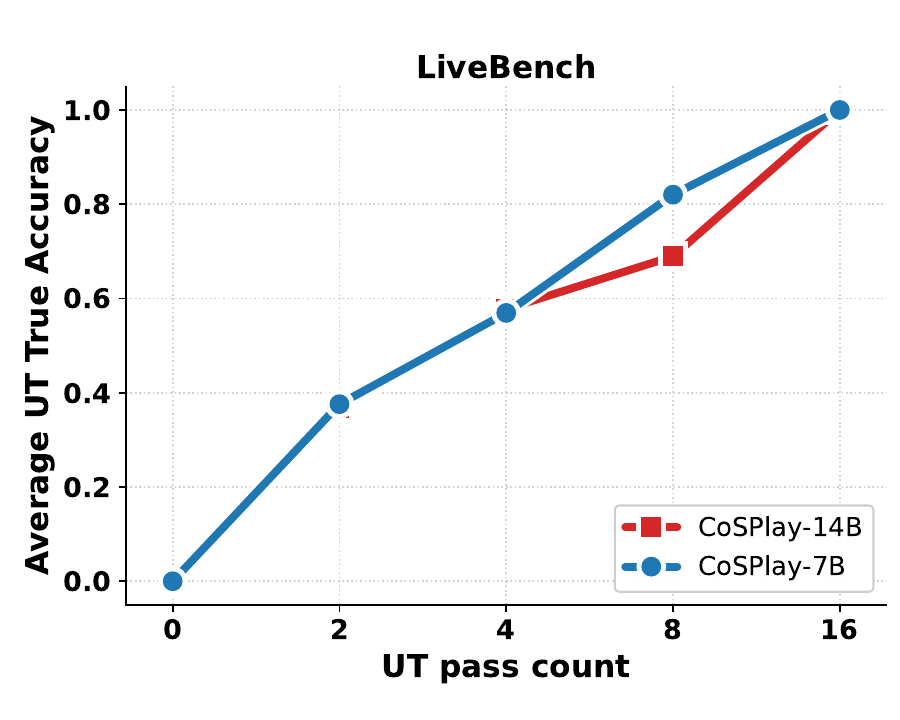}
    
    \vspace{-0.5em} 
    
    \includegraphics[width=0.27\textwidth,trim=0 0 0 0pt,clip]{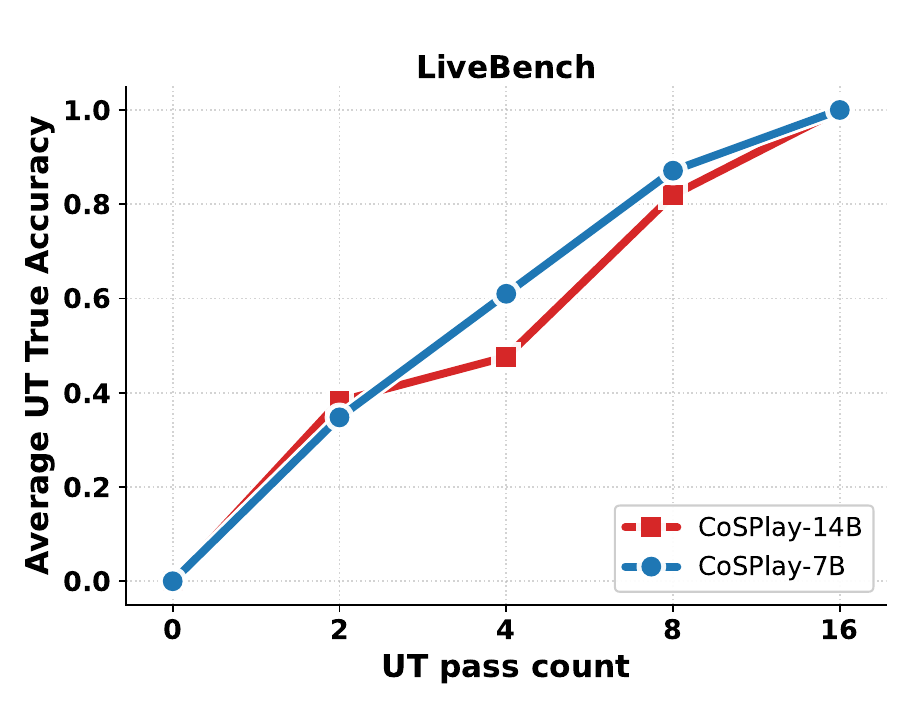}\hfill%
    \includegraphics[width=0.27\textwidth,trim=0 0 0 0pt,clip]{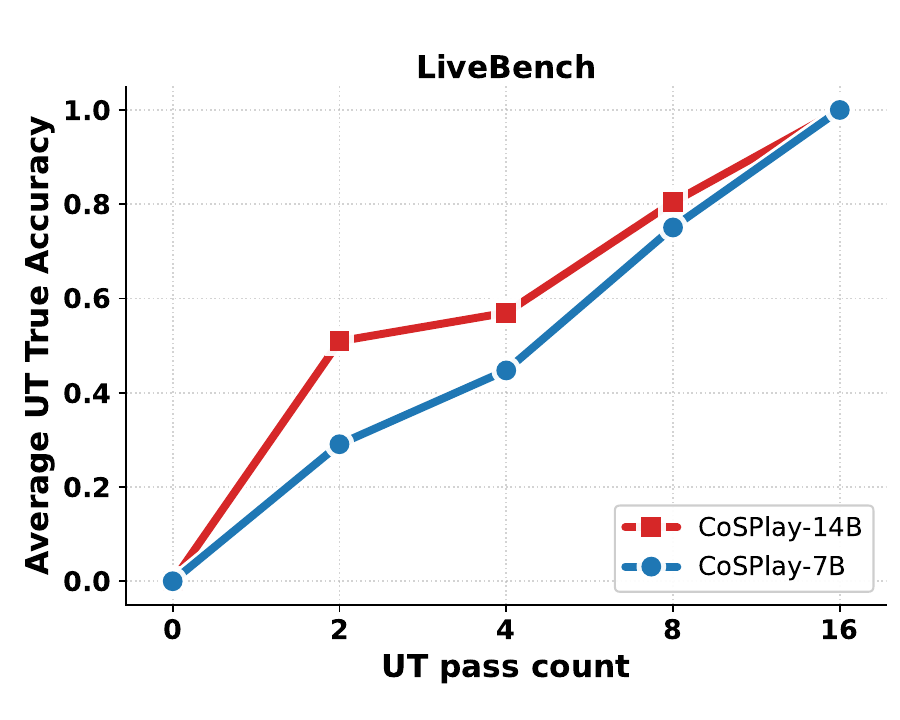}\hfill%
    \includegraphics[width=0.27\textwidth,trim=0 0 0 0pt,clip]{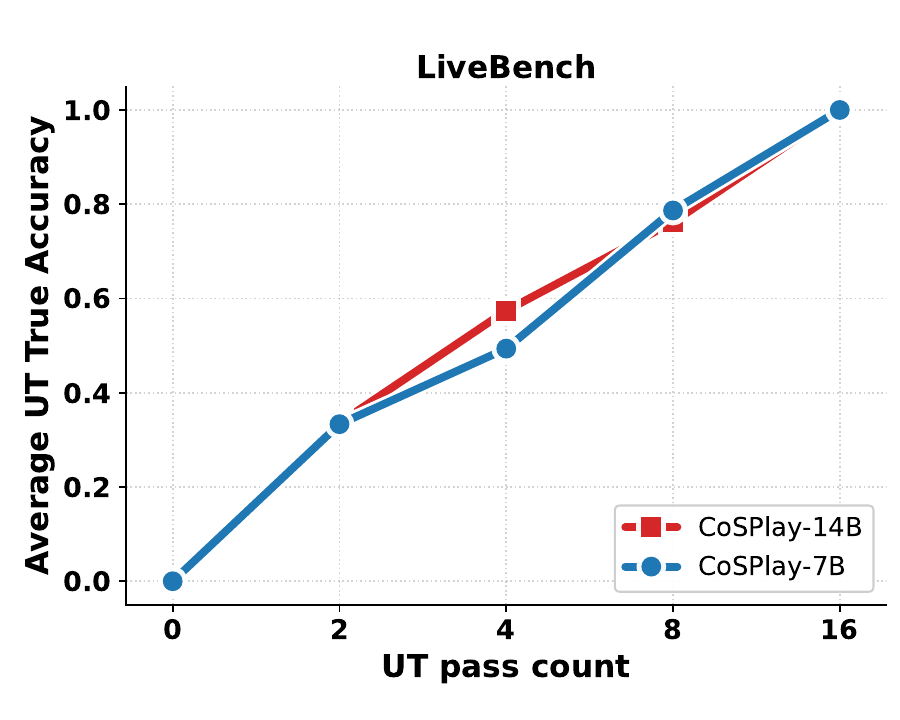}
    \vspace{-0.5em} 

    \caption{The relationship between UT pass counts on generated codes and average true accuracy for both 7B and 14B models on LiveBench. The top row shows Round 0-2, and the bottom row shows Round 3-5.}
    \label{fig:sub_livebench_acc_14b}
\end{figure}

\begin{figure}[H]
    \centering
    \includegraphics[width=0.27\textwidth,trim=0 0 0 0pt,clip]{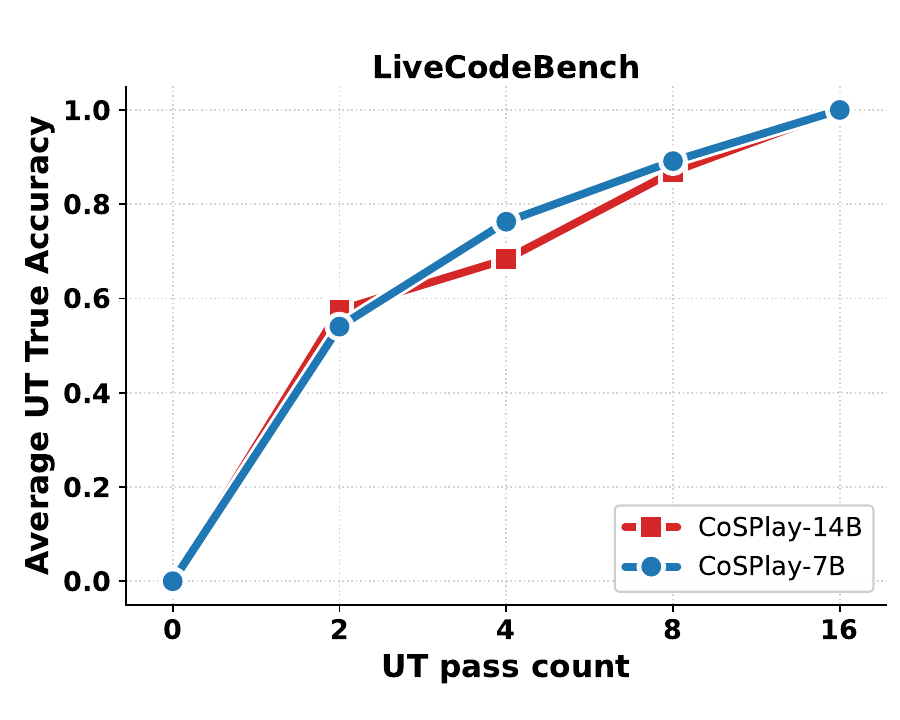}\hfill%
    \includegraphics[width=0.27\textwidth,trim=0 0 0 0pt,clip]{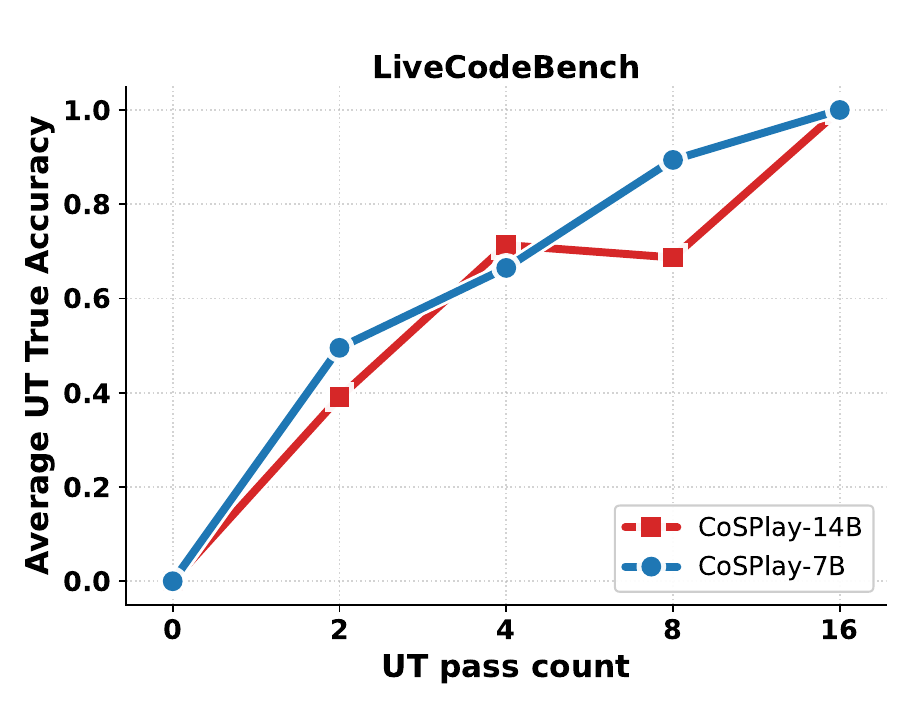}\hfill%
    \includegraphics[width=0.27\textwidth,trim=0 0 0 0pt,clip]{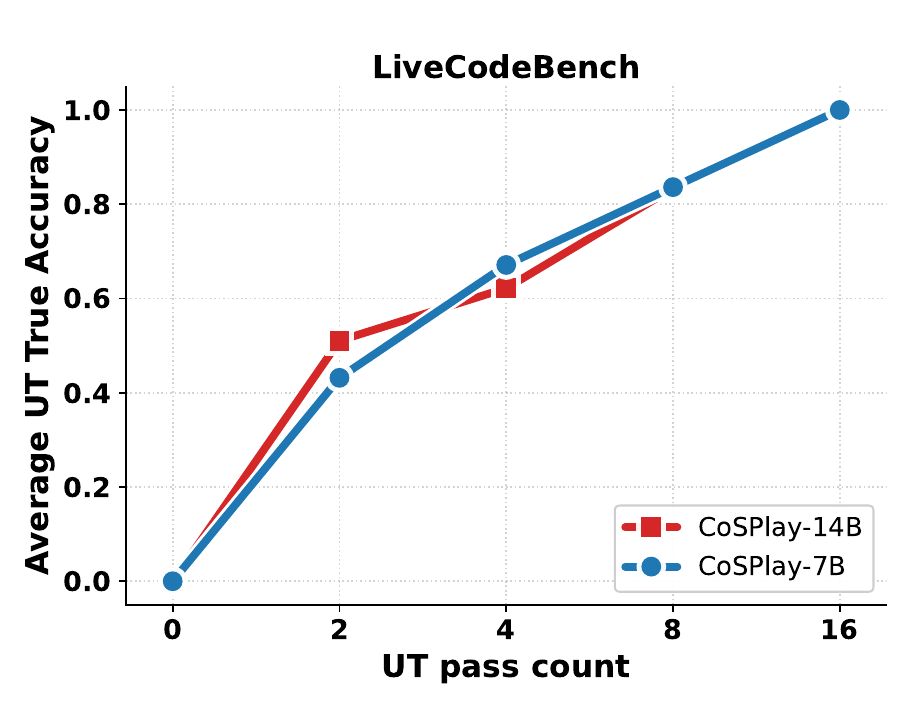}
    
    \vspace{-0.5em} 
    
    \includegraphics[width=0.27\textwidth,trim=0 0 0 0pt,clip]{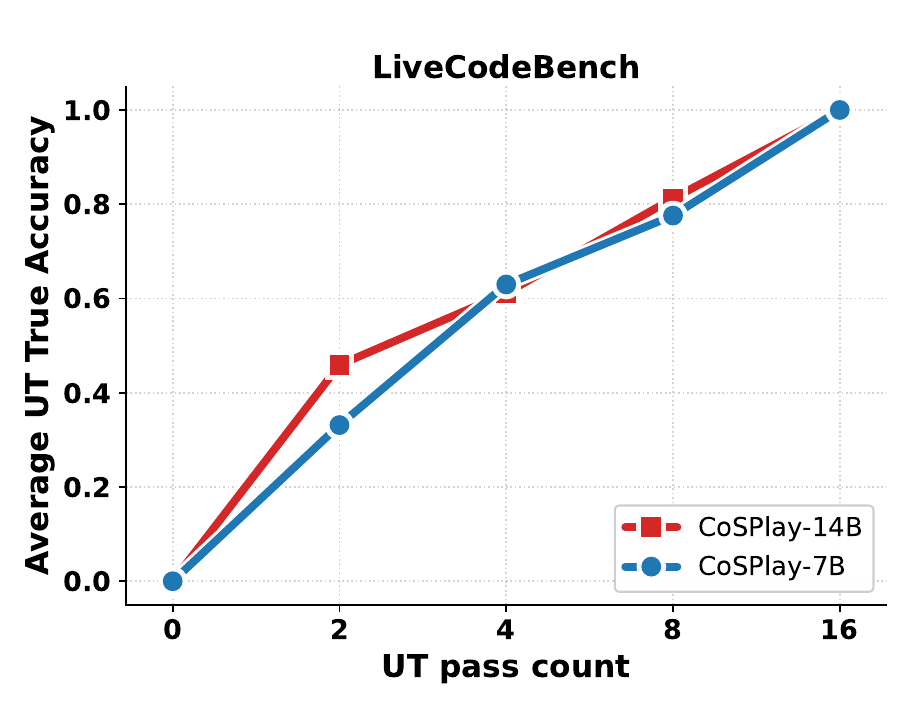}\hfill%
    \includegraphics[width=0.27\textwidth,trim=0 0 0 0pt,clip]{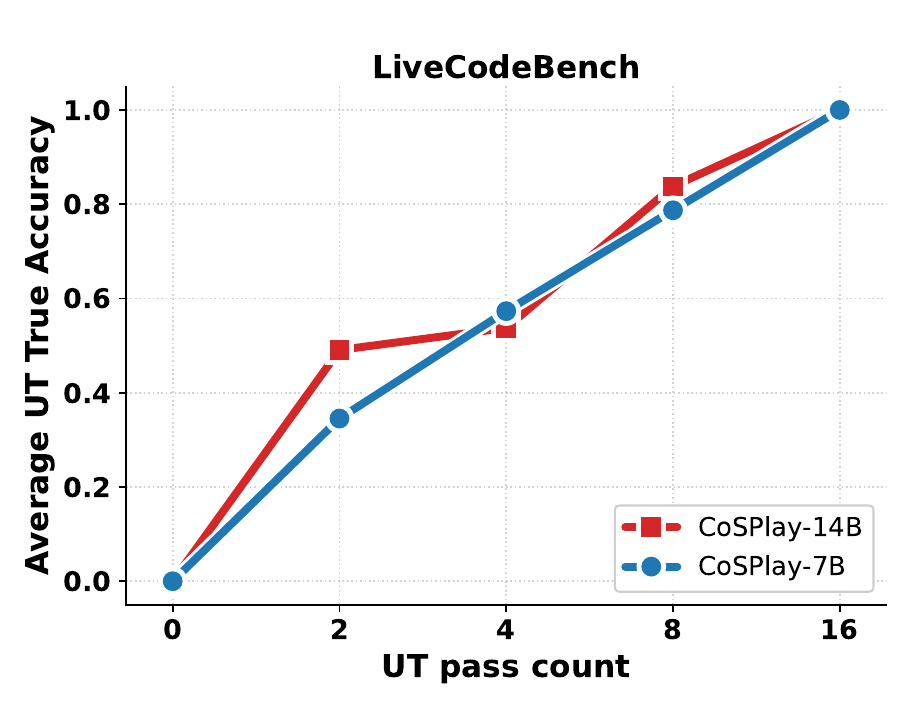}\hfill%
    \includegraphics[width=0.27\textwidth,trim=0 0 0 0pt,clip]{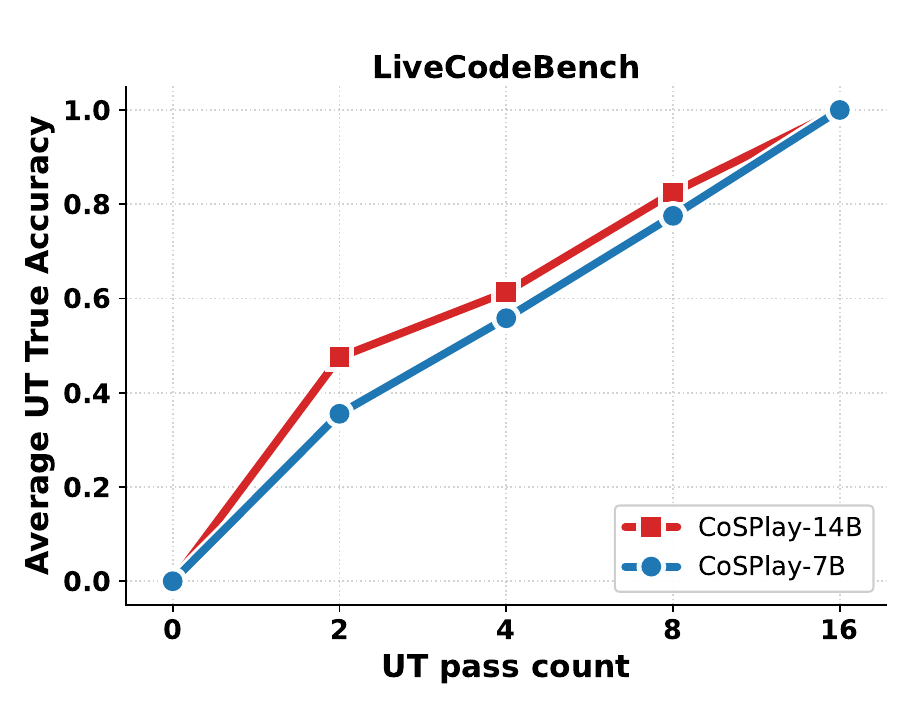}
    
    \caption{The relationship between UT pass counts on generated codes and average true accuracy for both 7B and 14B models on LiveCodeBench. The top row shows Round 0-2, and the bottom row shows Round 3-5.}
    \vspace{-0.5em} 
    \label{fig:sub_livecodebench_acc_14b}
\end{figure}

\section{Detailed Breakdown of the Relationship between Code Pass Count and True Accuracy}
\label{app: detailed Code pass count on generated codes and true average accuarcy}

This section provides the detailed data supporting the correlation analysis presented in the main text. Figure~\ref{fig:sub_code_codecontests_acc_14b}-\ref{fig:sub_code_livecodebench_acc_14b} display the relationship between code pass counts and their true accuracy for each dataset individually. 

We observe that the strong positive correlation discussed in the main text is not an artifact of aggregation; rather, it holds consistently across all benchmarks. As the pass count increases, the reliability of the generated UTs improves uniformly across all tasks. Additionally, the performance gap between the 7B and 14B models is evident in these detailed plots, where the 14B model consistently achieves higher accuracy rates for the same pass counts. This detailed view reinforces the stability of using pass count as a filtering metric across diverse coding challenges.

\begin{figure}[H] 
    \centering
    \includegraphics[width=0.27\textwidth,trim=0 10pt 0 10pt,clip]{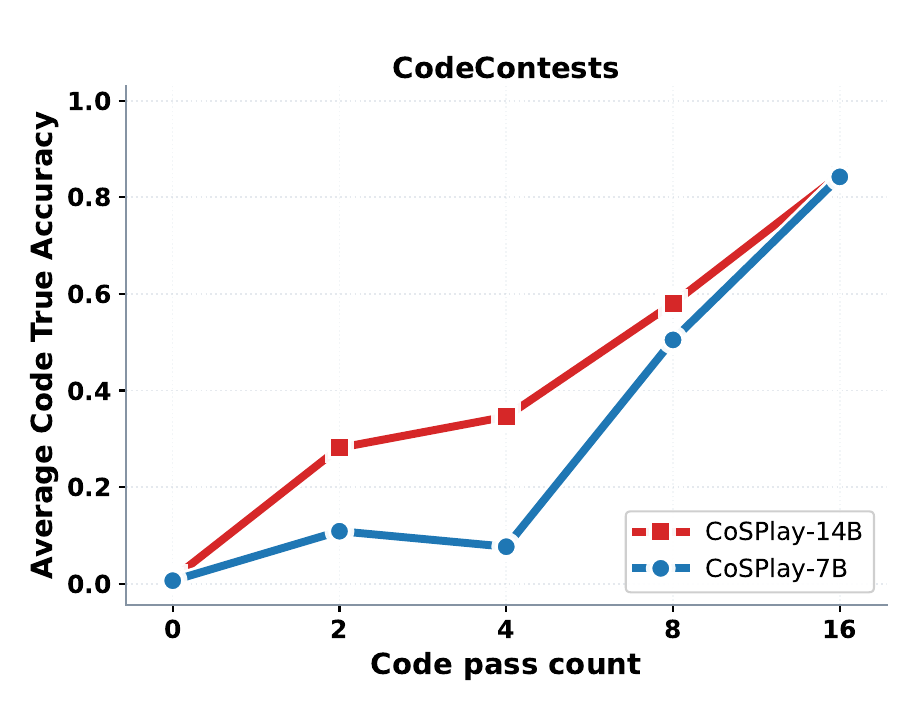}\hfill%
    \includegraphics[width=0.27\textwidth,trim=0 10pt 0 10pt,clip]{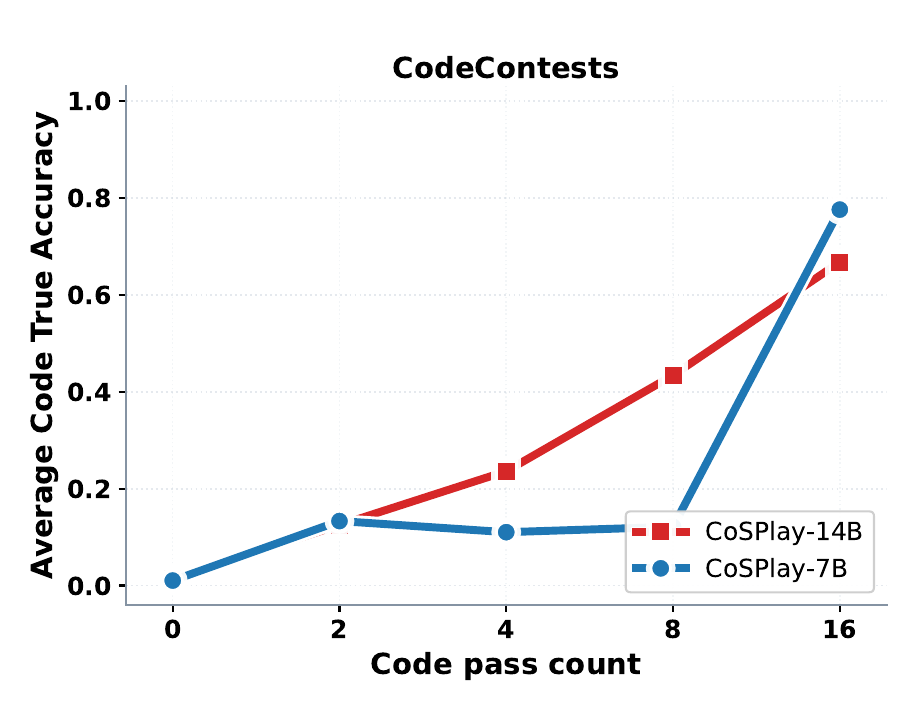}\hfill%
    \includegraphics[width=0.27\textwidth,trim=0 10pt 0 10pt,clip]{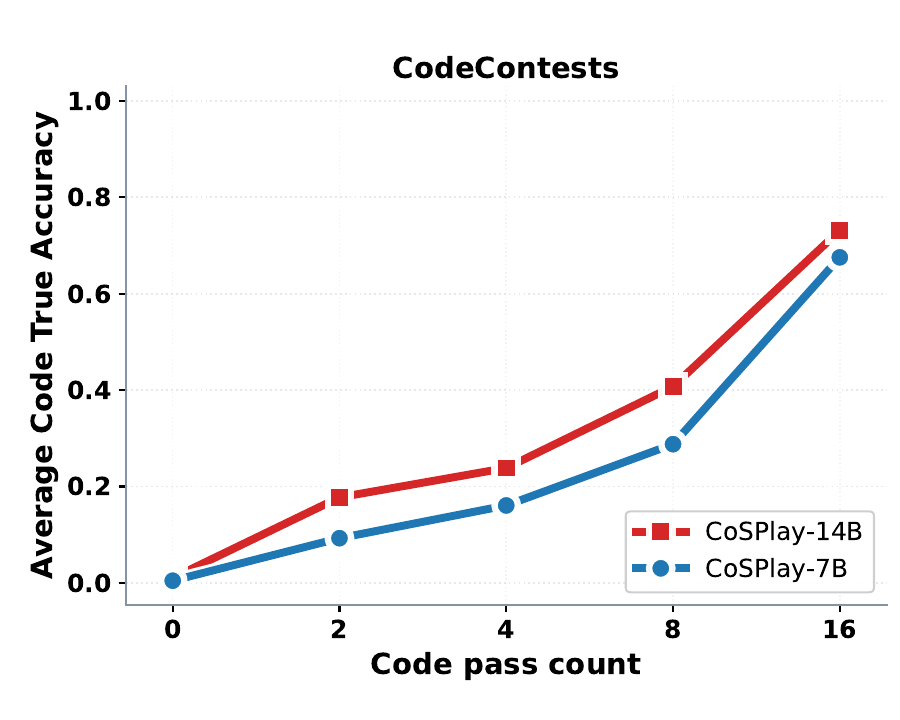}
    
    \vspace{-1.0em}
    
    \includegraphics[width=0.27\textwidth,trim=0 10pt 0 10pt,clip]{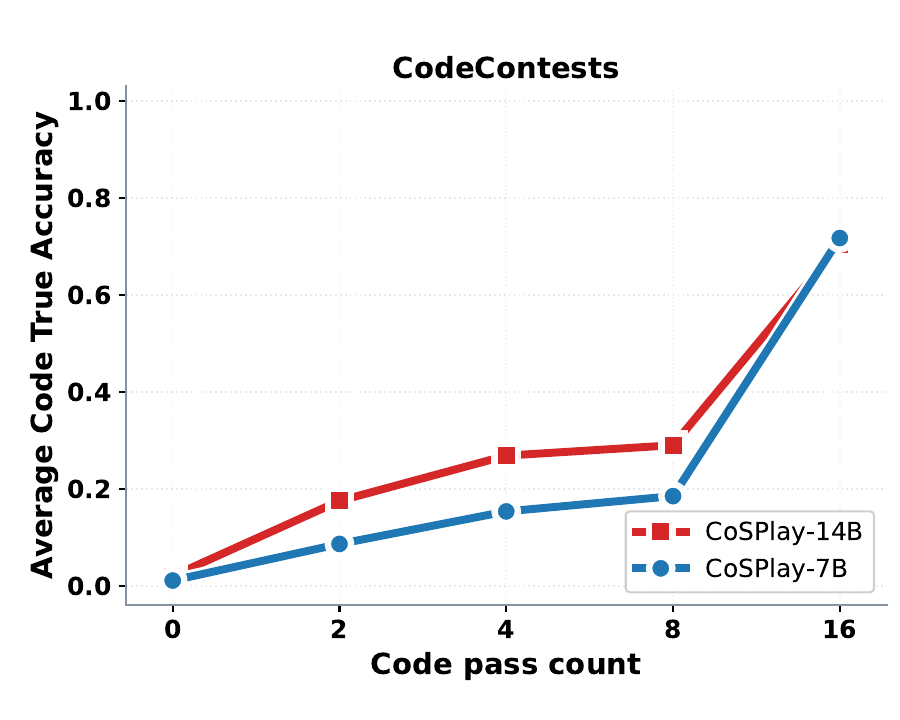}\hfill%
    \includegraphics[width=0.27\textwidth,trim=0 10pt 0 10pt,clip]{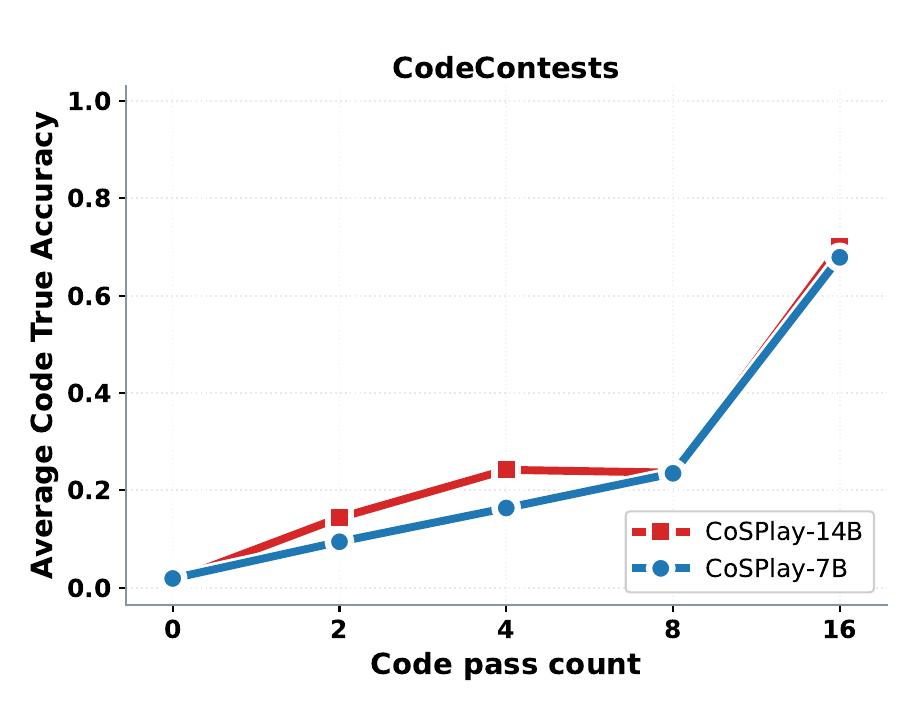}\hfill%
    \includegraphics[width=0.27\textwidth,trim=0 10pt 0 10pt,clip]{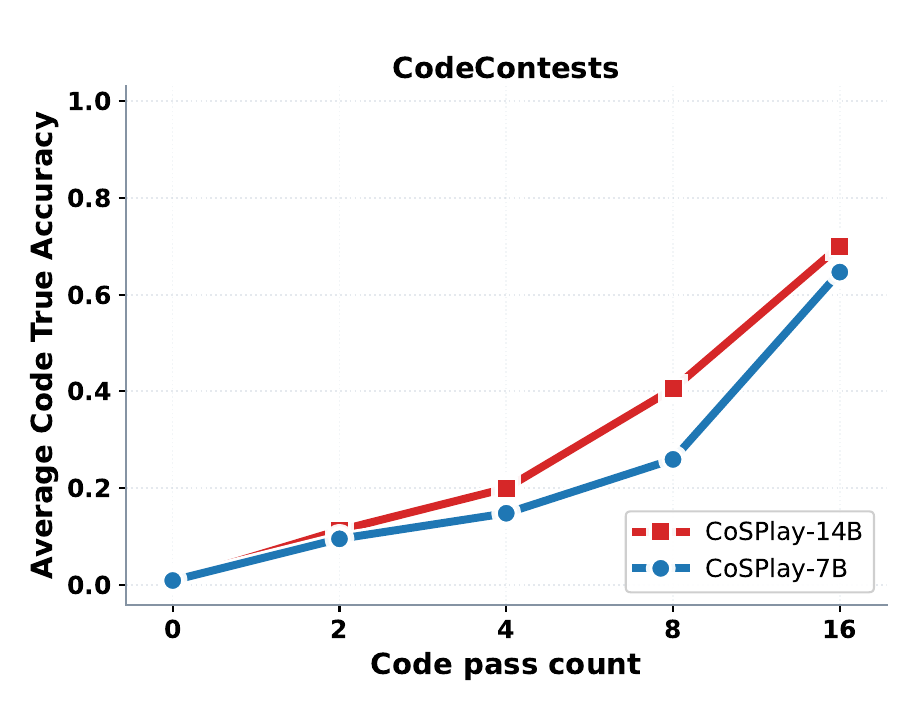}
    \vspace{-0.5em}
    
    \caption{The relationship between code pass counts and average true accuracy for both 7B and 14B models on CodeContests. The top row shows Round 0-2, and the bottom row shows Round 3-5.}
    \label{fig:sub_code_codecontests_acc_14b}
\end{figure}
\vspace{-1.5em}

\begin{figure}[H] 
    \centering
    \includegraphics[width=0.27\textwidth,trim=0 10pt 0 10pt,clip]{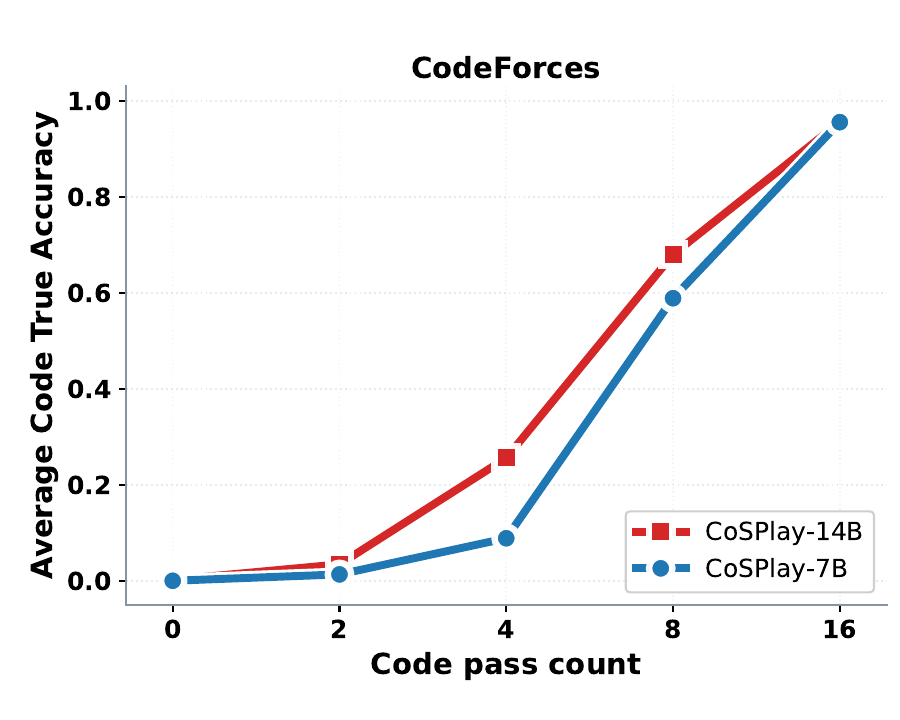}\hfill%
    \includegraphics[width=0.27\textwidth,trim=0 10pt 0 10pt,clip]{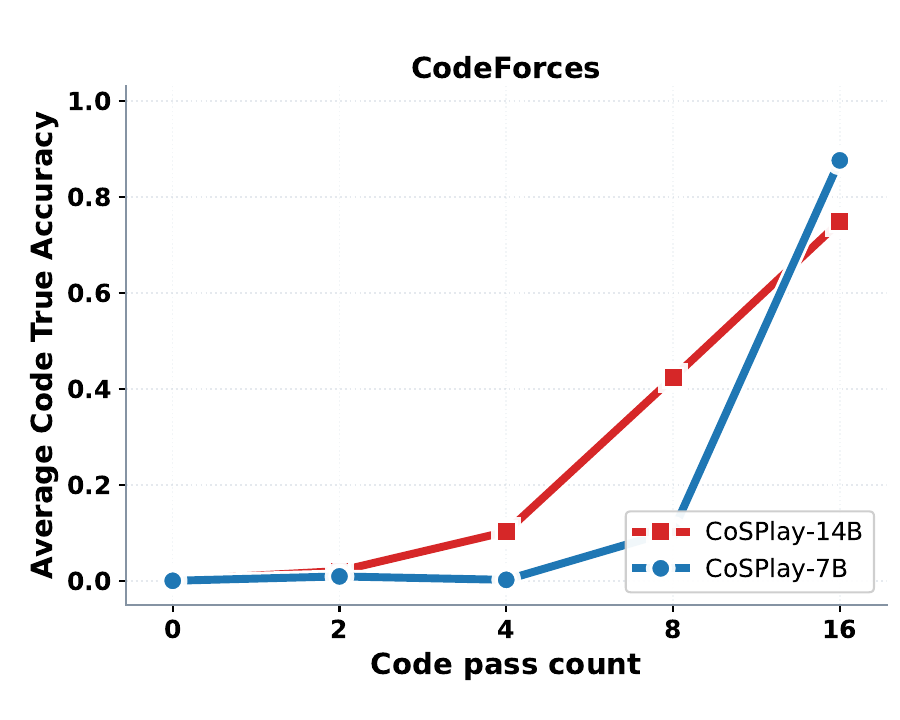}\hfill%
    \includegraphics[width=0.27\textwidth,trim=0 10pt 0 10pt,clip]{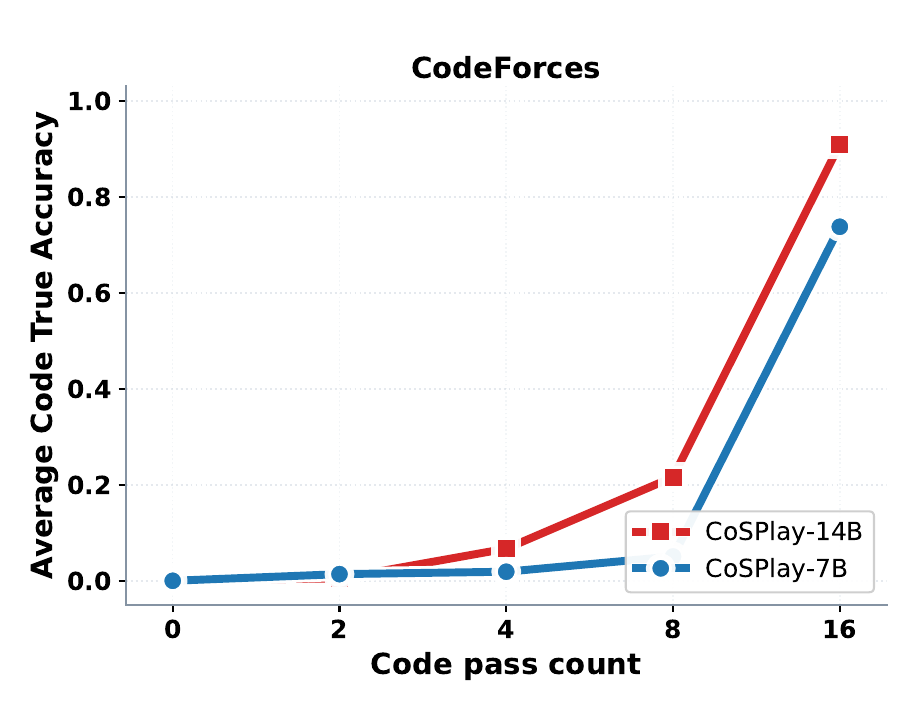}
    
    \vspace{-1.0em}
    
    \includegraphics[width=0.27\textwidth,trim=0 10pt 0 10pt,clip]{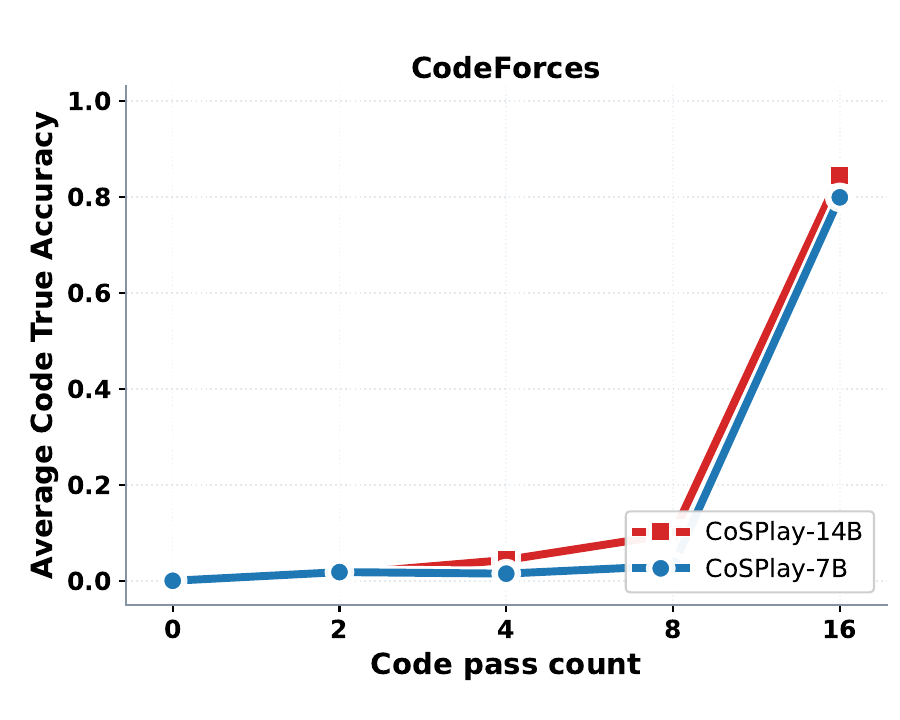}\hfill%
    \includegraphics[width=0.27\textwidth,trim=0 10pt 0 10pt,clip]{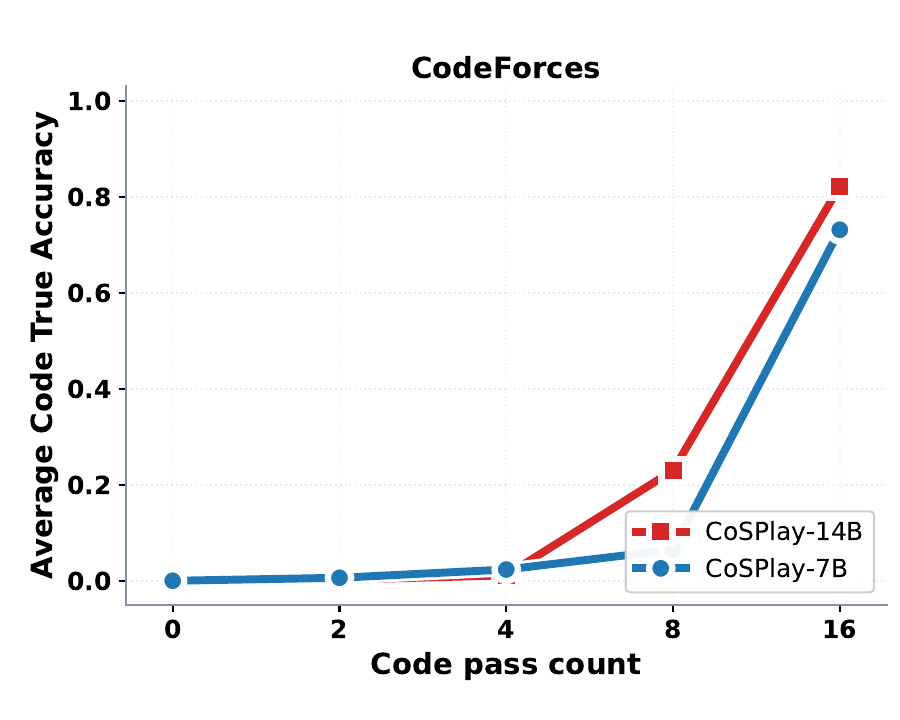}\hfill%
    \includegraphics[width=0.27\textwidth,trim=0 10pt 0 10pt,clip]{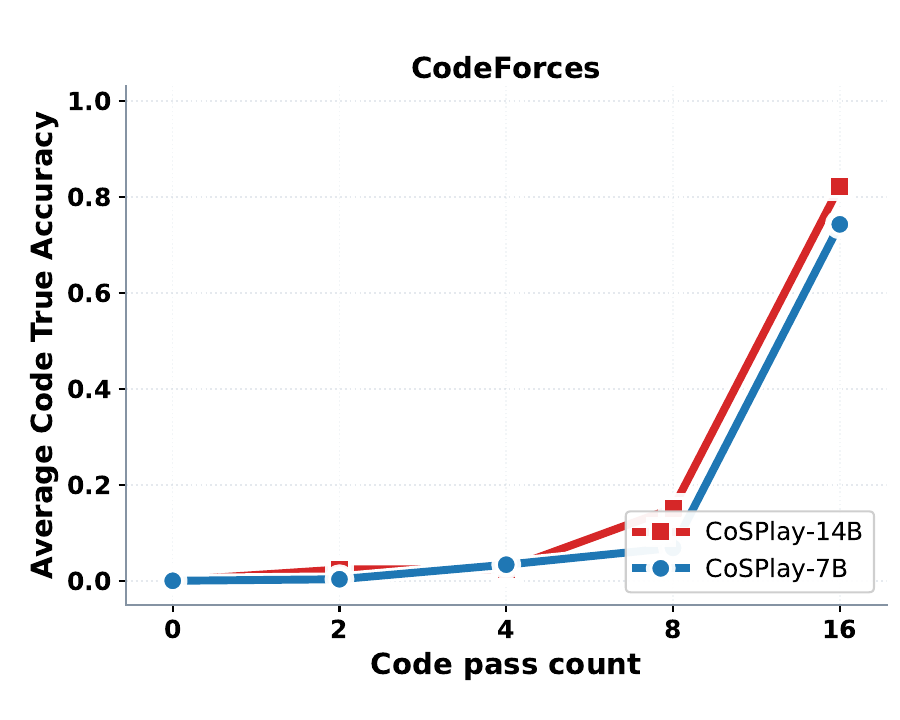}
    \vspace{-0.5em}
    
    \caption{The relationship between code pass counts and average true accuracy for both 7B and 14B models on CodeForces. The top row shows Round 0-2, and the bottom row shows Round 3-5.}
    \label{fig:sub_code_codeforces_acc_14b}
\end{figure}
\vspace{-1.5em}

\begin{figure}[H]
    \centering
    \includegraphics[width=0.27\textwidth,trim=0 10pt 0 10pt,clip]{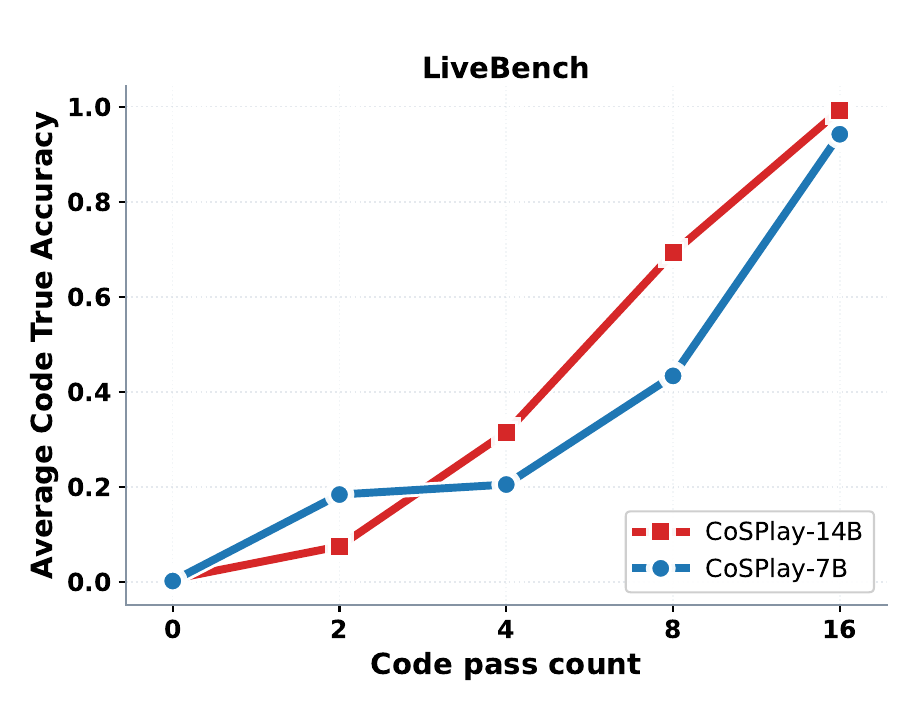}\hfill%
    \includegraphics[width=0.27\textwidth,trim=0 10pt 0 10pt,clip]{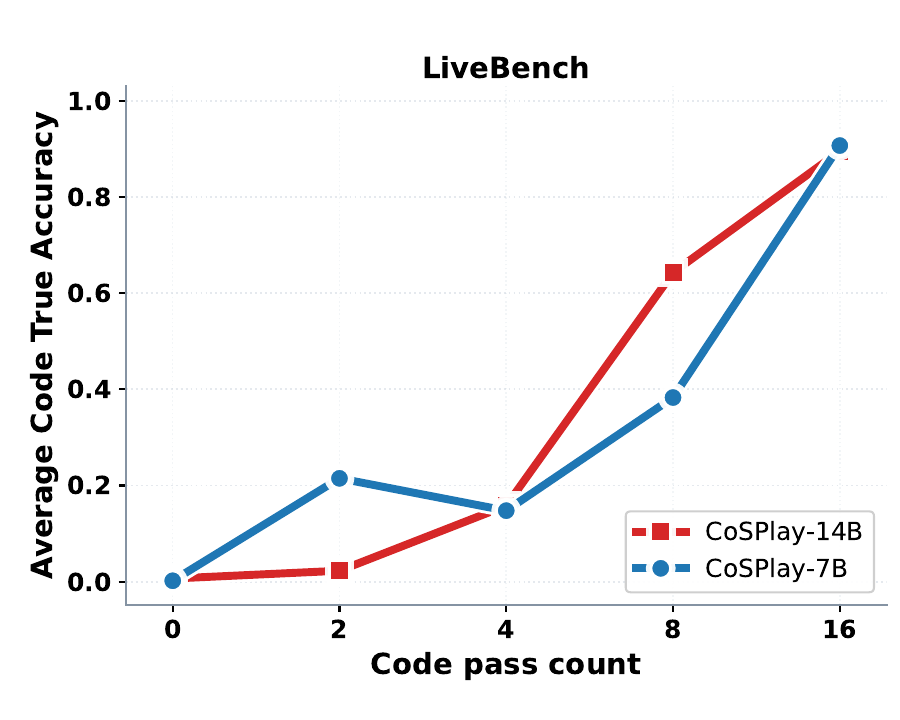}\hfill%
    \includegraphics[width=0.27\textwidth,trim=0 10pt 0 10pt,clip]{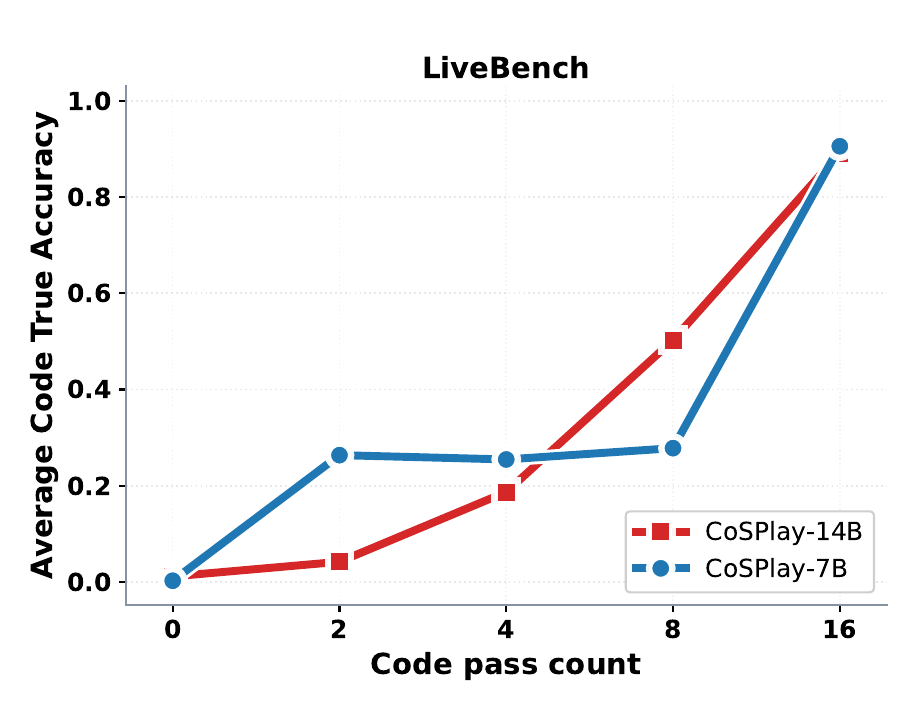}
    
    \vspace{-1.0em}
    
    \includegraphics[width=0.27\textwidth,trim=0 10pt 0 10pt,clip]{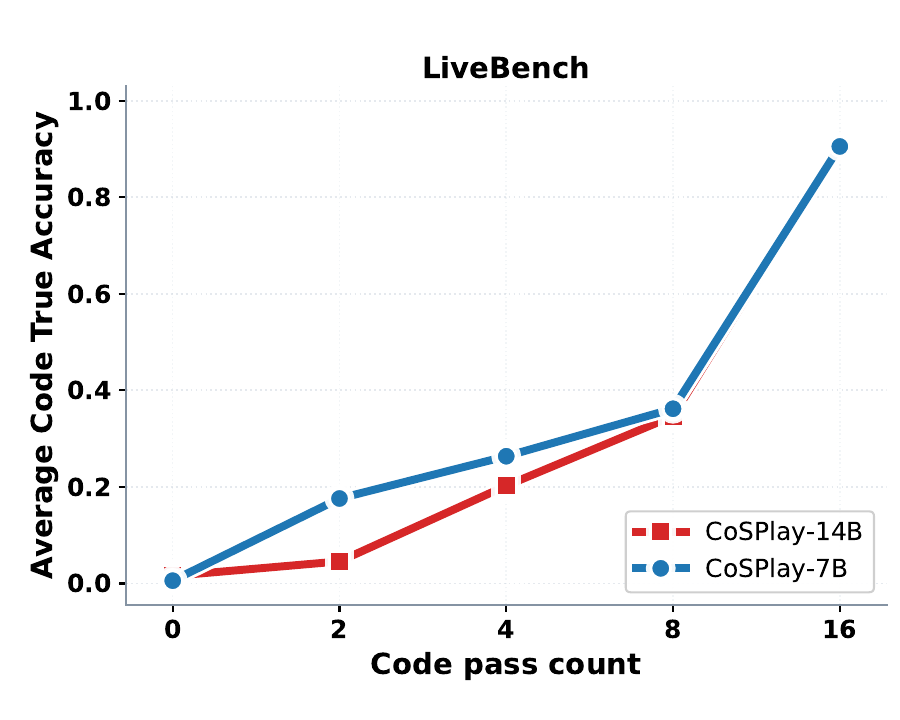}\hfill%
    \includegraphics[width=0.27\textwidth,trim=0 10pt 0 10pt,clip]{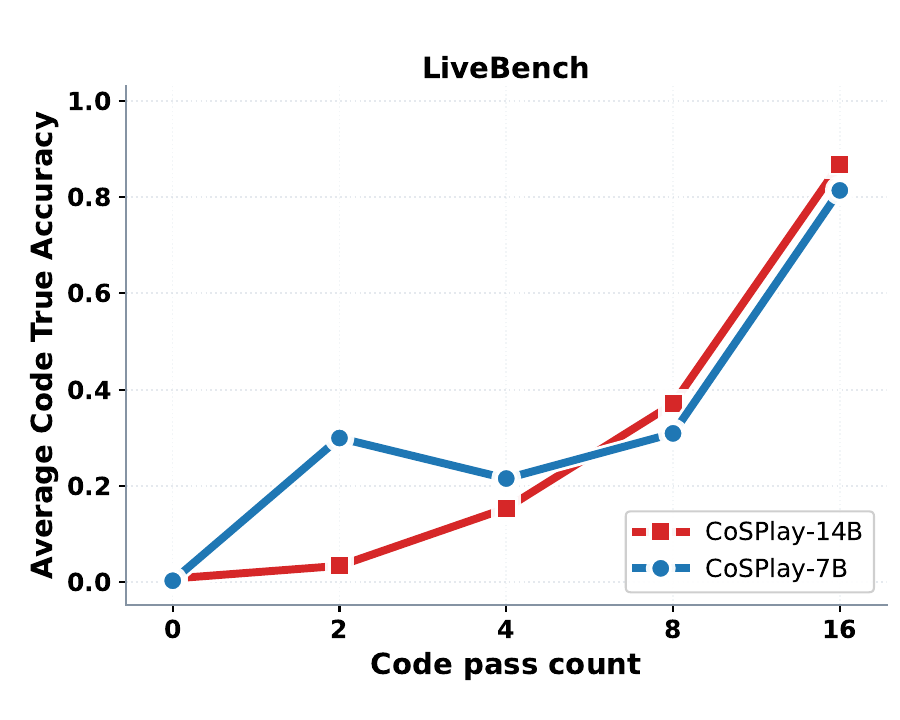}\hfill%
    \includegraphics[width=0.27\textwidth,trim=0 10pt 0 10pt,clip]{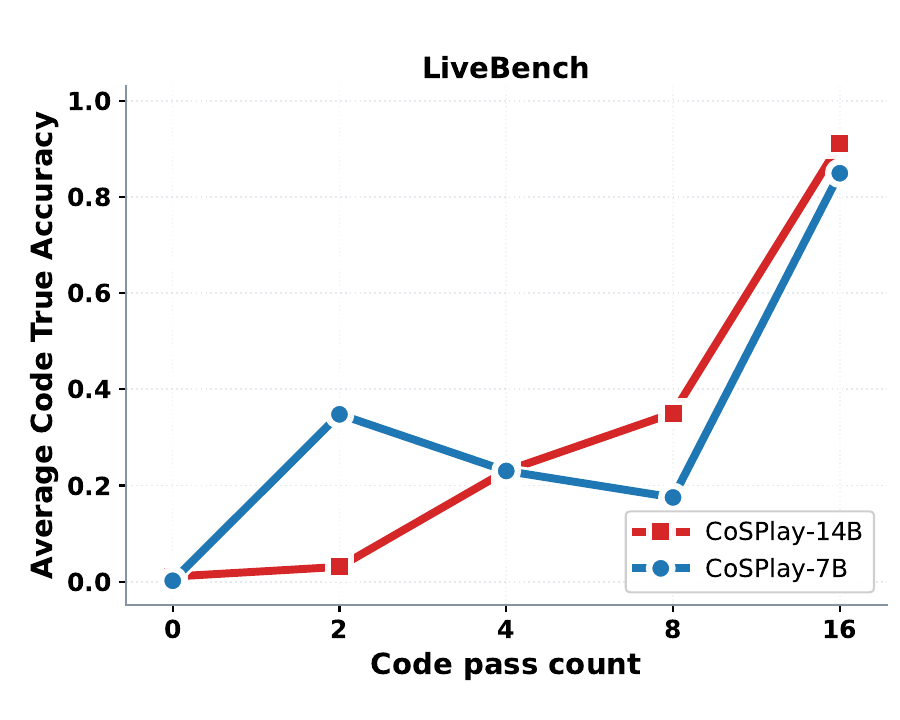}
    \vspace{-0.5em}
    
    \caption{The relationship between code pass counts and average true accuracy for both 7B and 14B models on LiveBench. The top row shows Round 0-2, and the bottom row shows Round 3-5.}
    \label{fig:sub_code_livebench_acc_14b}
\end{figure}
\vspace{-1.5em}

\begin{figure}[H]
    \centering
    \includegraphics[width=0.27\textwidth,trim=0 10pt 0 10pt,clip]{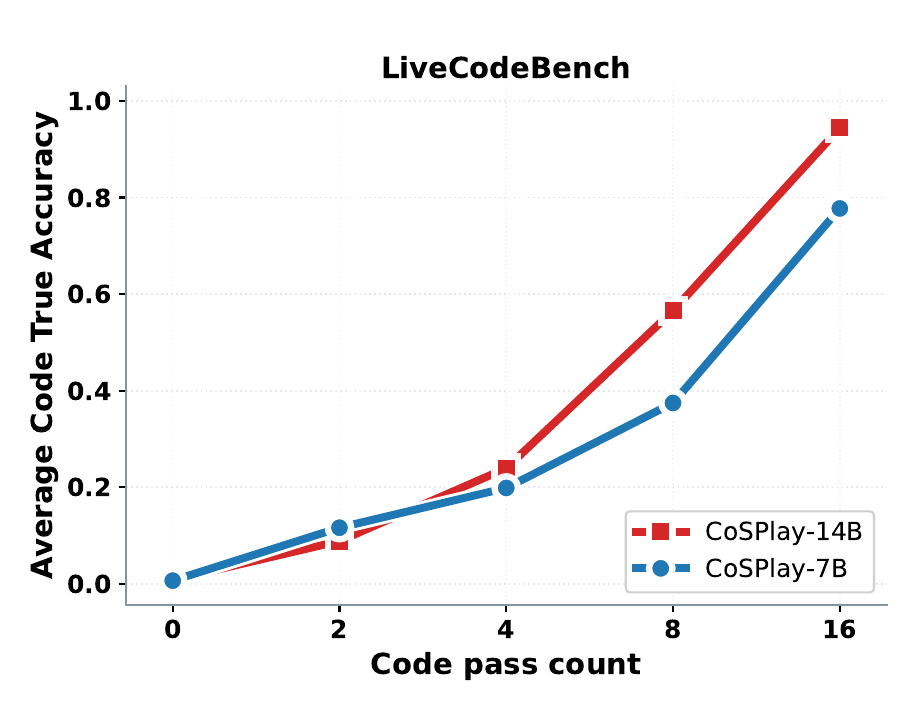}\hfill%
    \includegraphics[width=0.27\textwidth,trim=0 10pt 0 10pt,clip]{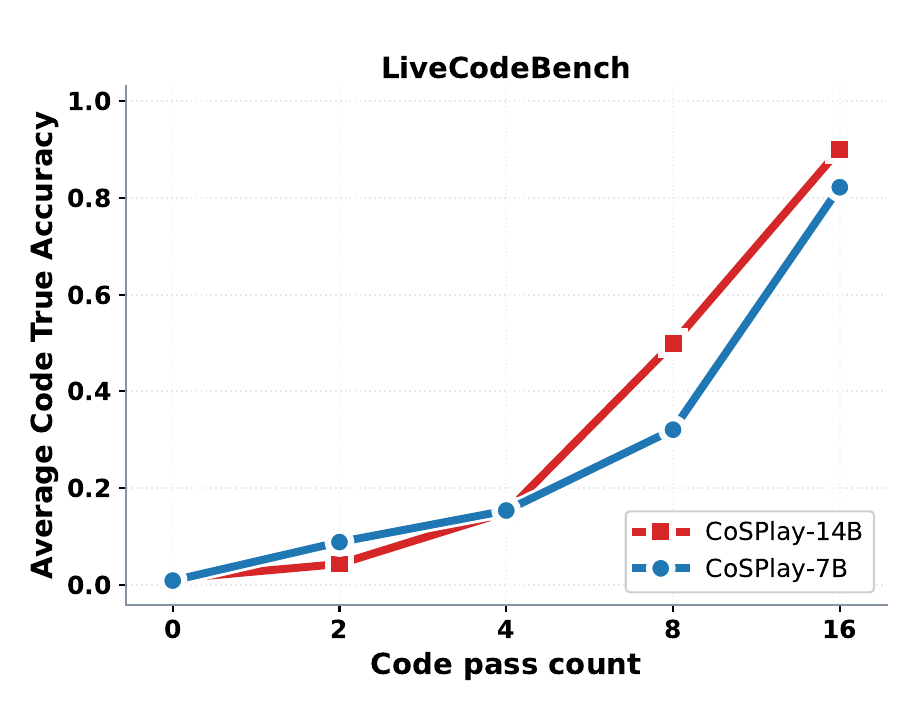}\hfill%
    \includegraphics[width=0.27\textwidth,trim=0 10pt 0 10pt,clip]{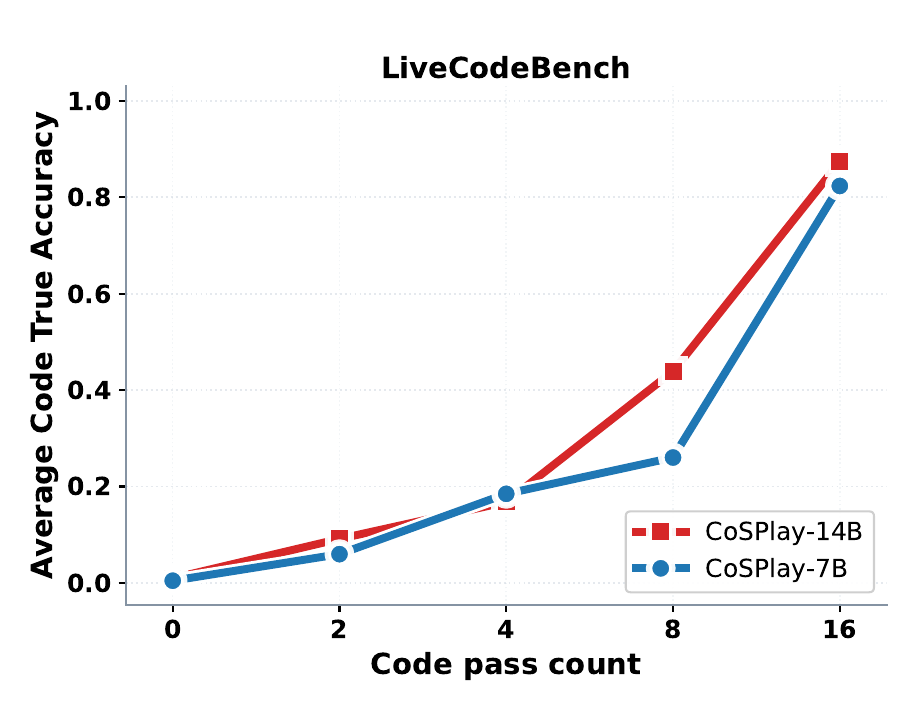}
    
    \vspace{-1.0em}
    
    \includegraphics[width=0.27\textwidth,trim=0 10pt 0 10pt,clip]{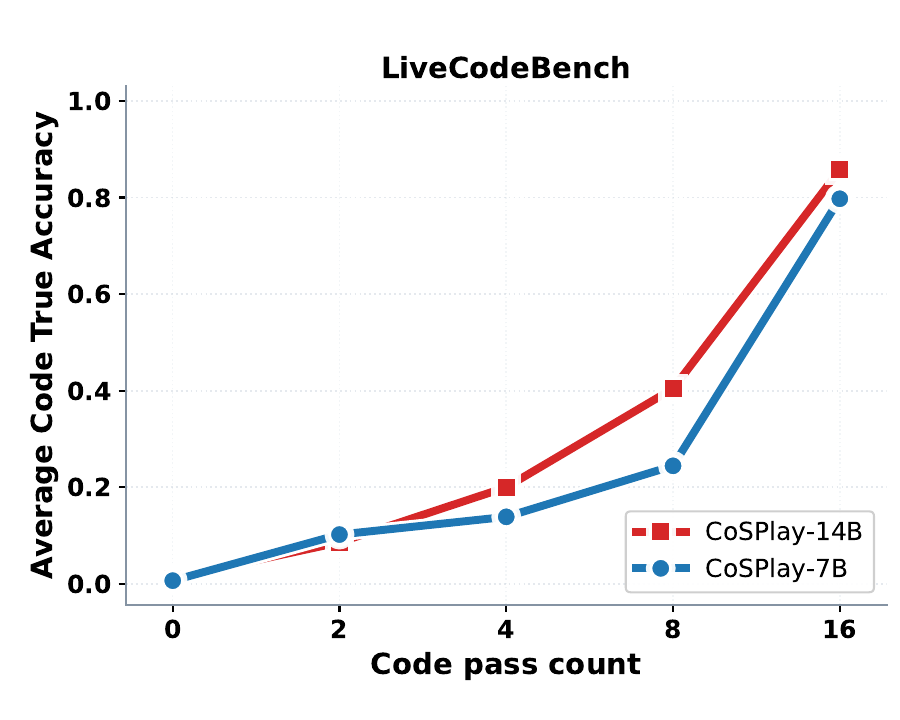}\hfill%
    \includegraphics[width=0.27\textwidth,trim=0 10pt 0 10pt,clip]{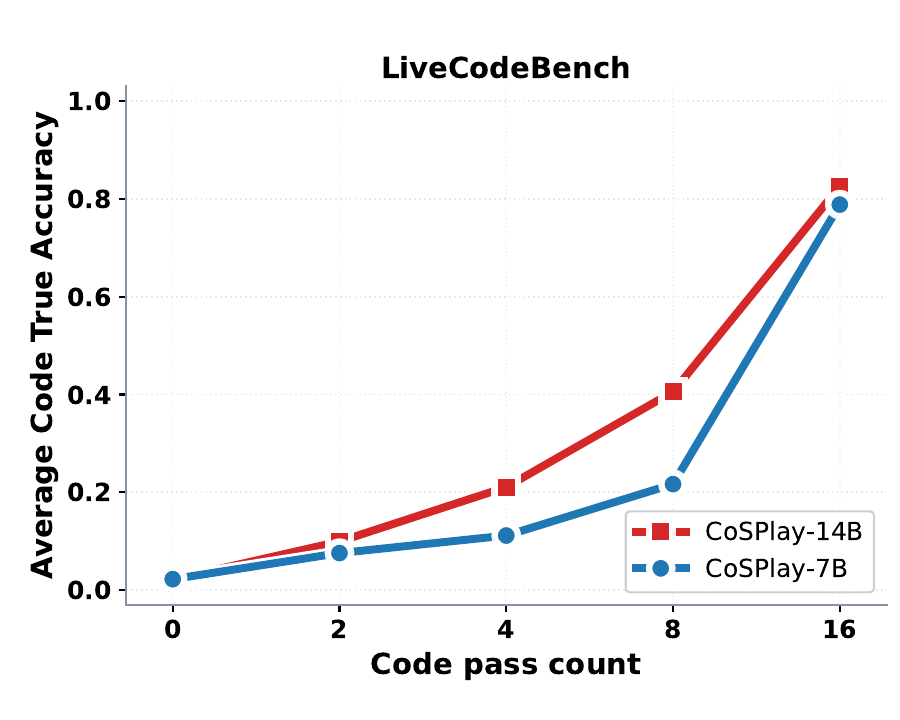}\hfill%
    \includegraphics[width=0.27\textwidth,trim=0 10pt 0 10pt,clip]{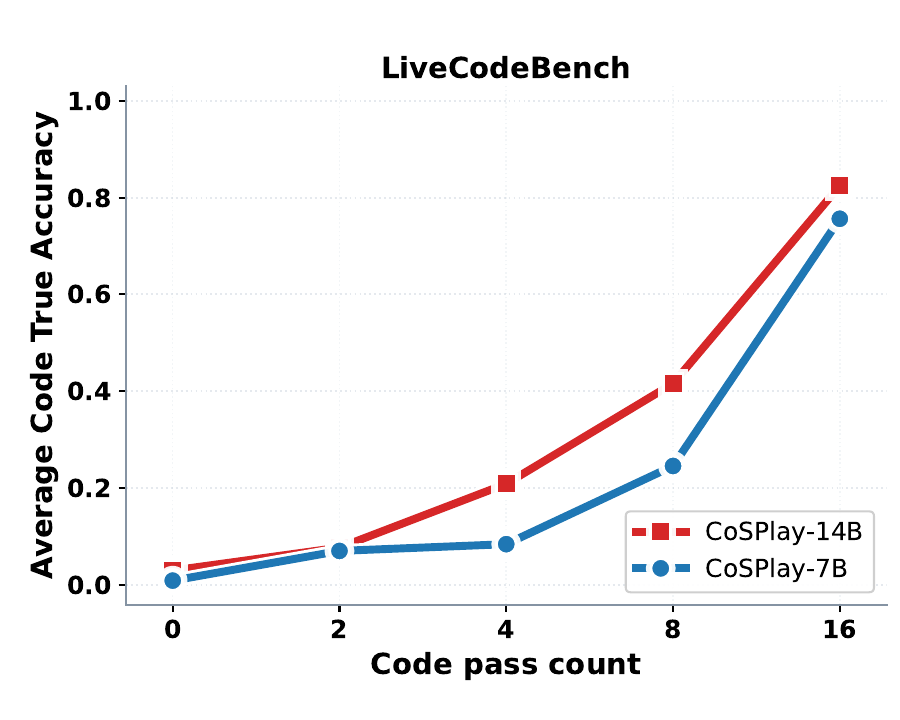}
    \vspace{-0.5em}
    
    \caption{The relationship between code pass counts and average true accuracy for both 7B and 14B models on LiveCodeBench. The top row shows Round 0-2, and the bottom row shows Round 3-5.}
    \label{fig:sub_code_livecodebench_acc_14b}
\end{figure}






\section{Average evolution of the UT and code pass-count distributions across iterative self-play rounds}
\label{app:pass_count_distribution_evolution}

Figures~\ref{fig:average_ut_passrate_ratio}-\ref{fig:average_code_passrate_ratio_14B} provide per-dataset results for the evolution of UT and code pass-count distributions at the 7B and 14B scales. Consistent with the averaged trends in the main text, self-play gradually shifts probability mass from low pass-count regions to high pass-count regions, indicating that the execution matrix becomes more structured over iterations. For UTs, the reduction of low pass-count density suggests that self-play removes or refreshes overly hard, invalid, or spuriously coupled tests that provide little reliable supervision. The increased density in high pass-count regions shows that more UTs become broadly satisfiable by the evolving code pool, making them more useful for verification and selection. For codes, the rightward shift indicates that more candidates pass a larger number of UTs after iterative refinement and replacement. This reflects a positive feedback loop between the two pools: better UTs provide more informative execution feedback, while stronger code candidates help validate and refine the UT pool. The consistent trend across datasets supports that CoSPlay improves both candidate quality and selection reliability during self-play.







\begin{figure}[!htbp]
    \centering
    \includegraphics[width=0.94\textwidth]{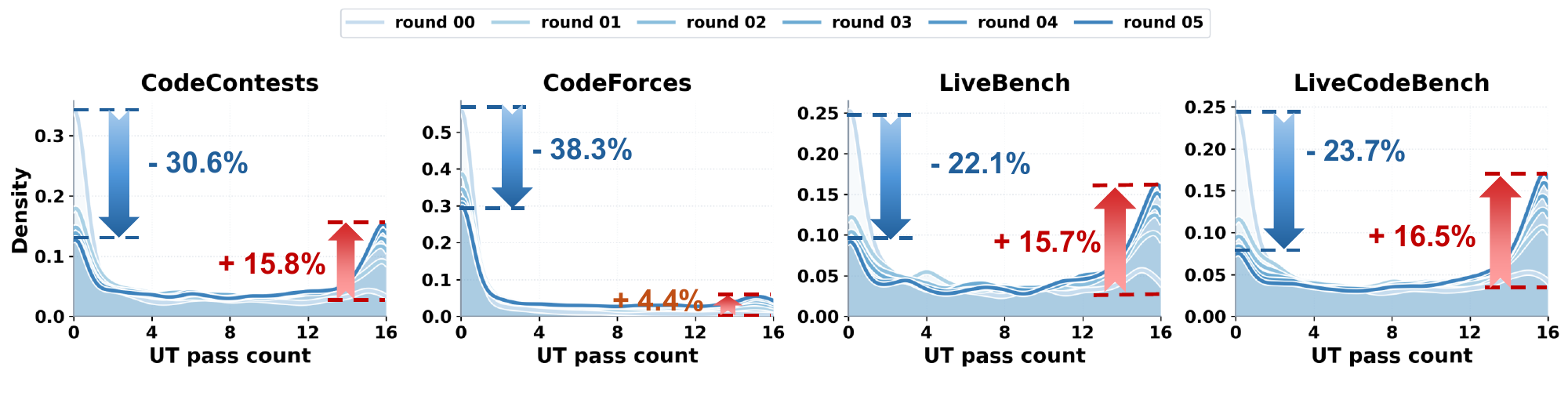}
    \vspace{-0.5em}
    \caption{Evolution of UT pass-count distributions during self-play with the 7B model. Curves show per-round density changes across four benchmarks.}
    \vspace{-1.5em}
    \label{fig:average_ut_passrate_ratio}
\end{figure}

\begin{figure}[!htbp]
    \centering
    \includegraphics[width=0.94\textwidth]{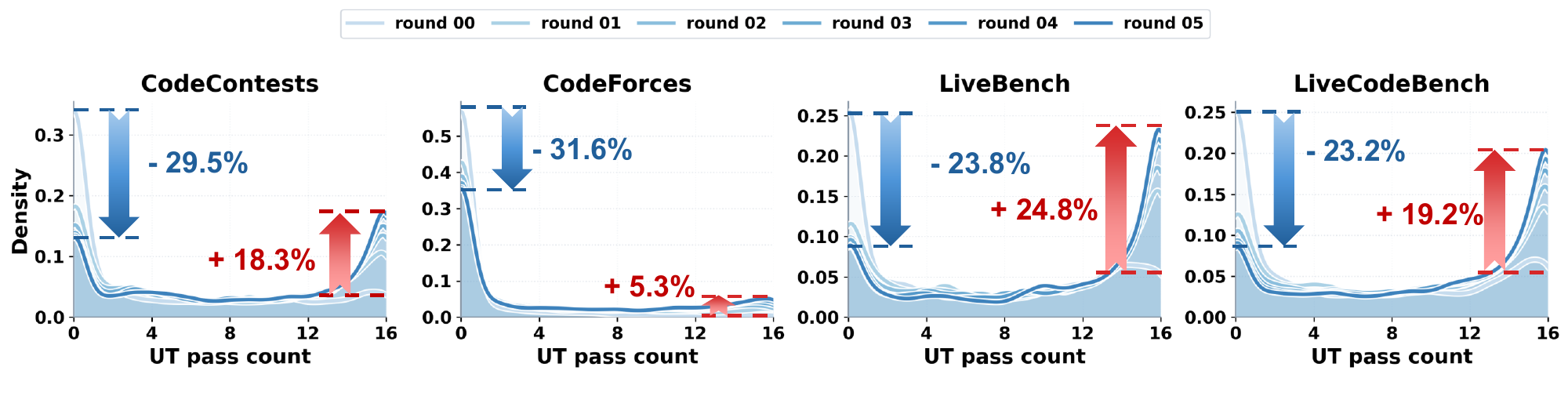}
    \vspace{-0.5em}
    \caption{Evolution of UT pass-count distributions during self-play with the 14B model. Curves show per-round density changes across four benchmarks.}
    \vspace{-1.5em}
    \label{fig:average_ut_passrate_ratio_14B}
\end{figure}

\begin{figure}[!htbp]
    \centering
    \includegraphics[width=0.94\textwidth]{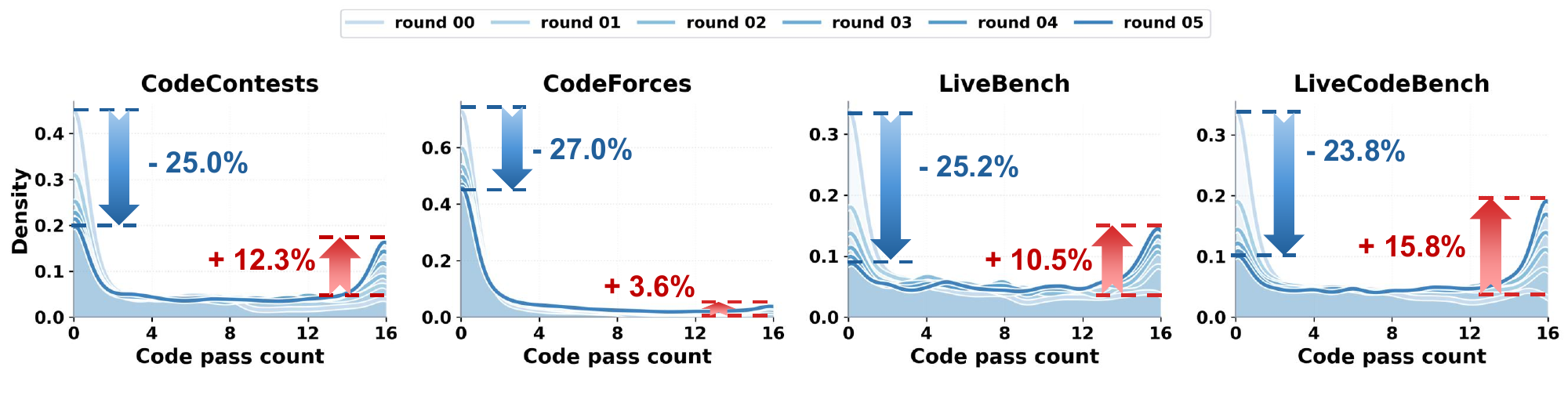}
    \vspace{-0.5em}
    \caption{Evolution of code pass-count distributions during self-play with the 7B model. Curves show per-round density changes across four benchmarks.}
    \vspace{-1.5em}
    \label{fig:average_code_passrate_ratio}
\end{figure}

\begin{figure}[!htbp]
    \centering
    \includegraphics[width=0.94\textwidth]{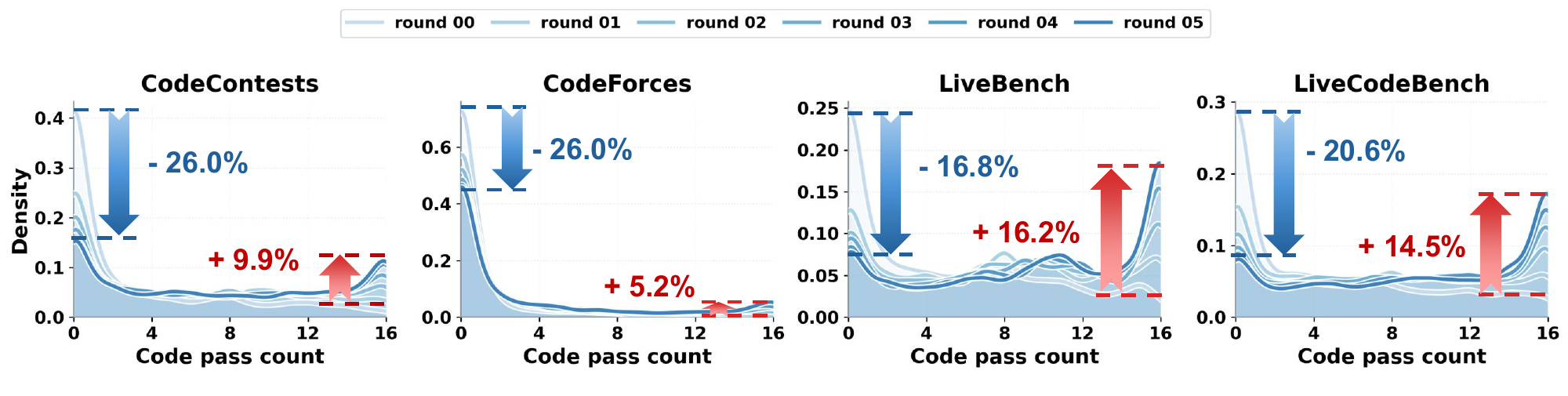}
    \vspace{-0.5em}
    \caption{Evolution of code pass-count distributions during self-play with the 14B model. Curves show per-round density changes across four benchmarks.}
    \vspace{-1.5em}
    \label{fig:average_code_passrate_ratio_14B}
\end{figure}
\section{Detailed metrics evolution during self-play stage}
\label{app:Detailed metrics evolution during self-play stage}



\subsection{Detailed Signal accuracy evolution during self-play rounds}
\begin{figure}[t]
    \centering
    \includegraphics[width=1.0\textwidth]{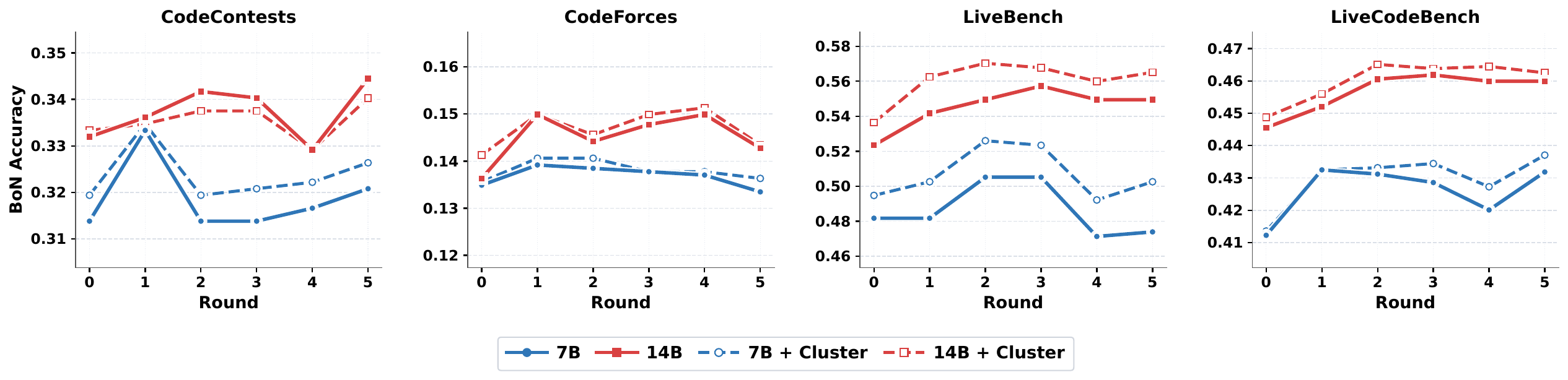}       
    \caption{Evolution of \textbf{Signal Accuracy} across iterative self-play rounds.}
    \label{fig:metric_signal_acc}
\end{figure}
\begin{figure*}[!t]
    \centering

        \centering
        \includegraphics[width=\textwidth]{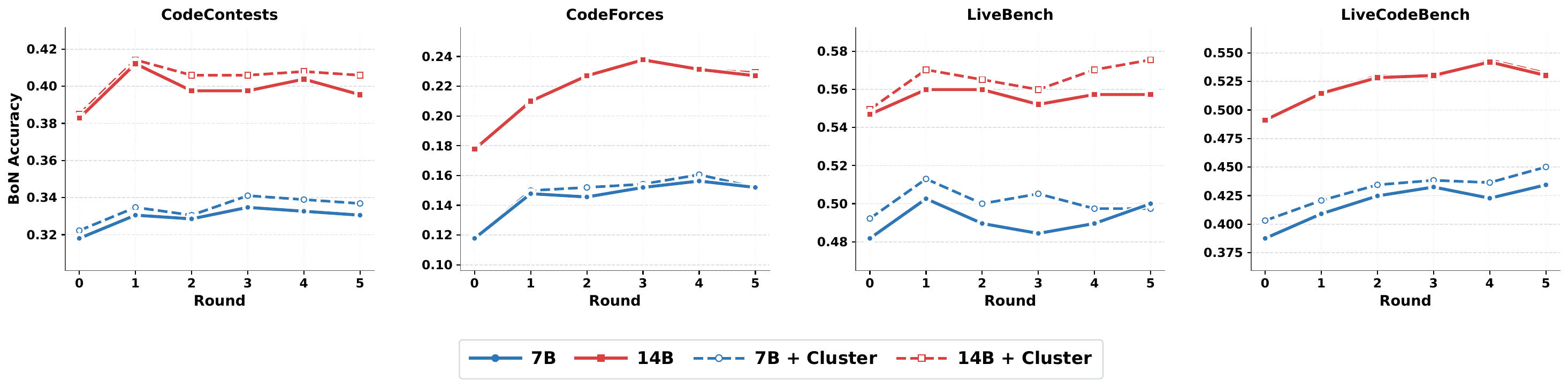}
        \label{fig:metric_bon_7b}



    \caption{Evolution of \textbf{Best-of-N (BoN) accuracy} evaluated on four benchmarks during self-play rounds.}
    \label{fig:metric_bon}
\end{figure*}
\begin{figure}[!t]
    \centering
    \includegraphics[width=1.0\textwidth]{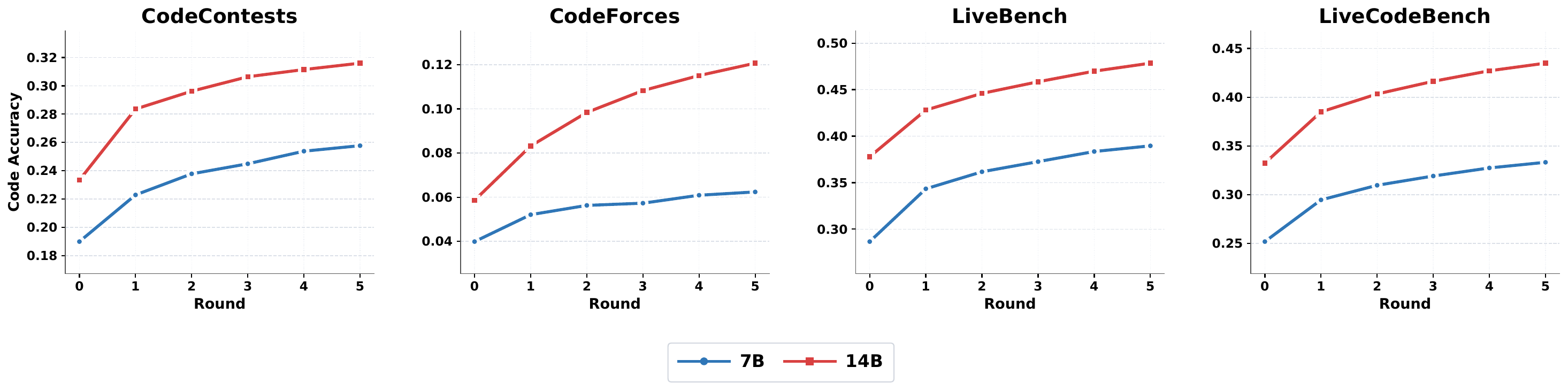}   
    \caption{Evolution of \textbf{Code Accuracy} across iterative self-play rounds.}
    \label{fig:metric_code_acc}
\end{figure}
\begin{figure}[!t]
    \centering
    \includegraphics[width=1.0\textwidth]{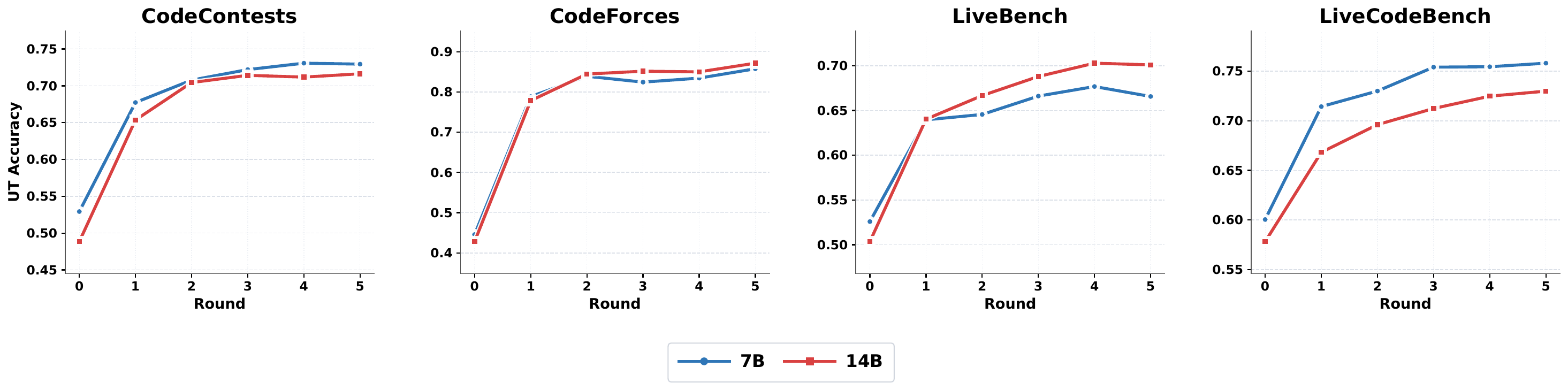}       
    \caption{Evolution of \textbf{Unit Test (UT) Accuracy} across iterative self-play rounds}
    \label{fig:metric_ut_acc}
\end{figure}
Figure~\ref{fig:metric_signal_acc} illustrates the evolution of Signal Accuracy, defined as the BoN accuracy of CoSPlay-generated UTs when used to select Qwen2.5-7B-Instruct-generated code candidates. Both the 7B and 14B exhibit a generally upward trend across all datasets, confirming that iterative self-play progressively enhances the discriminative power of the evolved UT pool. Introducing cluster-based selection (+ Cluster) consistently yields higher Signal Accuracy than the BoN-only variant, demonstrating that cluster selection further improves the reliability of UT-guided candidate selection. The 14B model achieves higher Signal Accuracy than the 7B model across all datasets and rounds, reflecting the benefit of greater model capacity in generating more accurate and discriminative UTs. Among datasets, \textit{CodeForces} remains the most challenging, showing the lowest absolute Signal values throughout.


\subsection{Detailed BoN evolution during self-play rounds}
\label{app:detailed_code_bon_change}

Figure~\ref{fig:metric_bon} illustrates the evolution of BoN Accuracy across self-play rounds. Both the 7B and 14B models show a consistent upward trend across all four datasets, confirming that iterative self-play steadily improves the quality of the code candidate pool. The most notable gains occur in the early rounds, after which performance continues to grow at a more gradual pace. Applying cluster-based selection (+ Cluster) consistently improves BoN Accuracy over the plain BoN variant, validating the effectiveness of execution-consensus-based selection as a complementary mechanism. As expected, the 14B model outperforms the 7B model across all datasets and rounds. \textit{CodeForces} remains the most challenging benchmark with the lowest absolute BoN scores, while \textit{LiveBench} and \textit{LiveCodeBench} yield comparatively higher accuracy for both models.

\subsection{Detailed Code accuracy evolution during self-play rounds}
\label{app:detailed_code_acc_change}
Figure~\ref{fig:metric_code_acc} illustrates the trajectory of Code Accuracy. Both the 7B and 14B models show a consistent upward trend across all datasets, confirming the effectiveness of iterative self-play. The most substantial gain occurs between Round 0 and Round 1, followed by steady, gradual growth. Notably, both models achieve their highest accuracy on \textit{LiveBench} and \textit{LiveCodeBench}, while \textit{CodeForces} remains the most challenging with the lowest overall scores. As expected, the 14B model consistently outperforms the 7B model across all stages.

\subsection{Detailed UT accuracy evolution during self-play rounds}
\label{app:detailed_ut_acc_change}
Figure~\ref{fig:metric_ut_acc} presents the evolution of Unit Test (UT) Accuracy. The defining characteristic across all datasets is a dramatic surge from Round 0 to Round 1, indicating that our self-play stage can filter wrong UT and inject correct UT into the UT pool. This leap is most prominent on \textit{CodeForces}, which jumps from the lowest to the highest accuracy. Following this initial surge, performance largely plateaus with only marginal fluctuations. The overall trajectories are remarkably similar for both models, though the 14B model achieves slightly higher peak values.

\newpage


\clearpage

\section{Case study}
\label{app:cosplay-case}
\subsection{Case Study of successful code fixing}

\subsubsection{CoSPlay-7B}

\tcbinputlisting{
styleA,
title={Question for CoSPlay-7B},
listing file={case_study/7B_second_high/problem.txt},
label={lst:7b_problem}
}


\tcbinputlisting{
styleA,
title={Buggy code from CoSPlay-7B},
listing file={case_study/7B_second_high/original_code.txt},
label={lst:7b_buggy_code}
}


\tcbinputlisting{
styleA,
title={Generated UT from CoSPlay-7B},
listing file={case_study/7B_second_high/case_original_idea.txt},
label={lst:7b_ut}
}


\tcbinputlisting{
styleA,
title={Fixed Code from CoSPlay-7B},
listing file={case_study/7B_second_high/fix_code.txt},
label={lst:7b_fixed_code}
}


\subsubsection{CoSPlay-14B}
\label{subsubsec: case_study_second_high_content_14B}

\tcbinputlisting{
styleA,
title={Question for CoSPlay-14B},
listing file={case_study/14B_second_high/problem.txt},
label={lst:problem}
}


\tcbinputlisting{
styleA,
title={Buggy Code from CoSPlay-14B},
listing file={case_study/14B_second_high/original_code.txt},
label={lst:buggy_code}
}


\tcbinputlisting{
styleA,
title={Generated UT from CoSPlay-14B},
listing file={case_study/14B_second_high/case_original_idea.txt},
label={lst:ut}
}


\tcbinputlisting{
styleA,
title={Fixed Code from CoSPlay-14B},
listing file={case_study/14B_second_high/fix_code.txt},
label={lst:fixed_code}
}

\label{appendix:case_study_step3}

Figure~\ref{fig:case_study_second_high} presents two concrete examples of successful code refinement driven by the non-trivial best UT selected in Step~3.

\textbf{Example 1: Balance Scale Comparison.}
The buggy code attempts to solve a simple balance-scale problem, where the correct decision should be based on comparing the total weights on the two sides, i.e., \texttt{A + B} versus \texttt{C + D}. However, the implementation mistakenly compares the sums of reciprocals, \texttt{1/A + 1/B} and \texttt{1/C + 1/D}, which can lead to an incorrect ordering for many inputs. The generated UT directly targets this semantic error with \texttt{A = 10}, \texttt{B = 10}, \texttt{C = 1}, and \texttt{D = 1}. The expected output is \texttt{Left}, since the left side has total weight \texttt{20} while the right side has total weight \texttt{2}. Nevertheless, the buggy code outputs \texttt{Right}, because its reciprocal-based comparison gives \texttt{0.2} on the left and \texttt{2} on the right. The refinement guidance correctly localizes this erroneous comparison and replaces the reciprocal sums with direct weight sums, producing a semantically correct fixed code.

\textbf{Example 2: Two-Row Monkey Seating.}
The buggy code correctly caps the row-preference groups by their row capacities, but then miscomputes the remaining capacity as \texttt{m - (a + b)}, effectively treating the classroom as if it had only one row. The generated UT exposes this bug with \texttt{m = 10}, \texttt{a = 4}, \texttt{b = 4}, and \texttt{c = 11}, where the expected answer is \texttt{19} because there are \texttt{20} total seats and all \texttt{11} no-preference monkeys can still be seated after placing the \texttt{4} row-1 and \texttt{4} row-2 monkeys. However, the buggy code outputs \texttt{10}, since it incorrectly believes only \texttt{2} seats remain. The refinement guidance fixes the capacity calculation by replacing \texttt{m - (a + b)} with \texttt{2 * m - (a + b)}, producing a semantically correct fixed code.

\textbf{Summary.}
Both examples show that Step~3 can provide precise refinement signals when the selected UT exposes a concrete semantic mismatch between the buggy implementation and the problem requirement. In the balance-scale example, the UT reveals that the code compares reciprocal values instead of total weights; in the monkey-seating example, the UT reveals that the code computes remaining capacity using only one row instead of two. In both cases, the failure is not caused by formatting or invalid inputs, but by a targeted test that activates the exact erroneous logic. This allows the LLM to perform a localized correction---replacing the wrong comparison formula in the first case and the wrong capacity formula in the second---rather than rewriting the entire solution from scratch, demonstrating the effectiveness of Step~3's targeted refinement strategy.

\begin{figure}[H]
    \centering
    \includegraphics[width=1.0\textwidth]{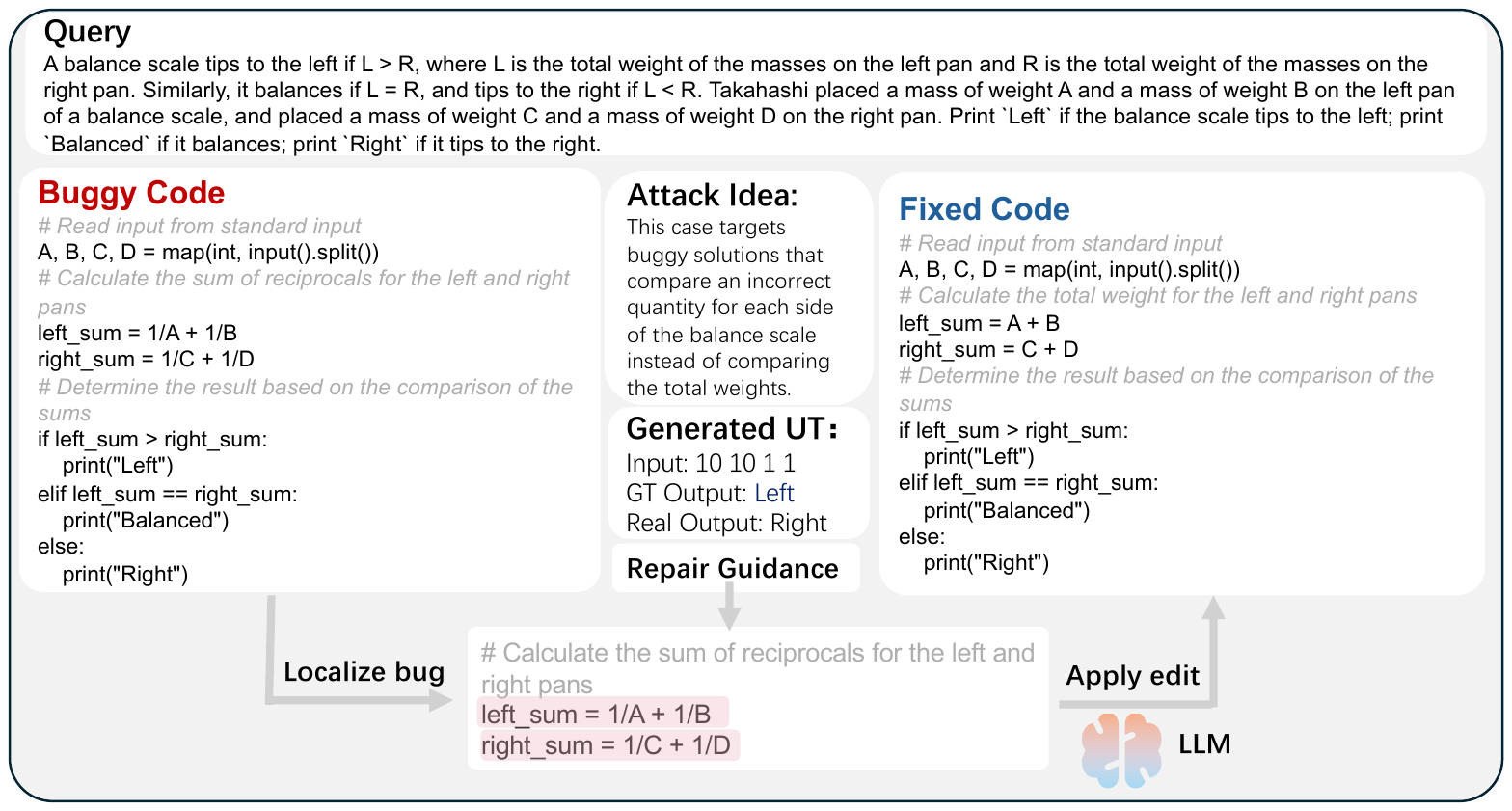}
    \hfill
    \includegraphics[width=1.0\textwidth] {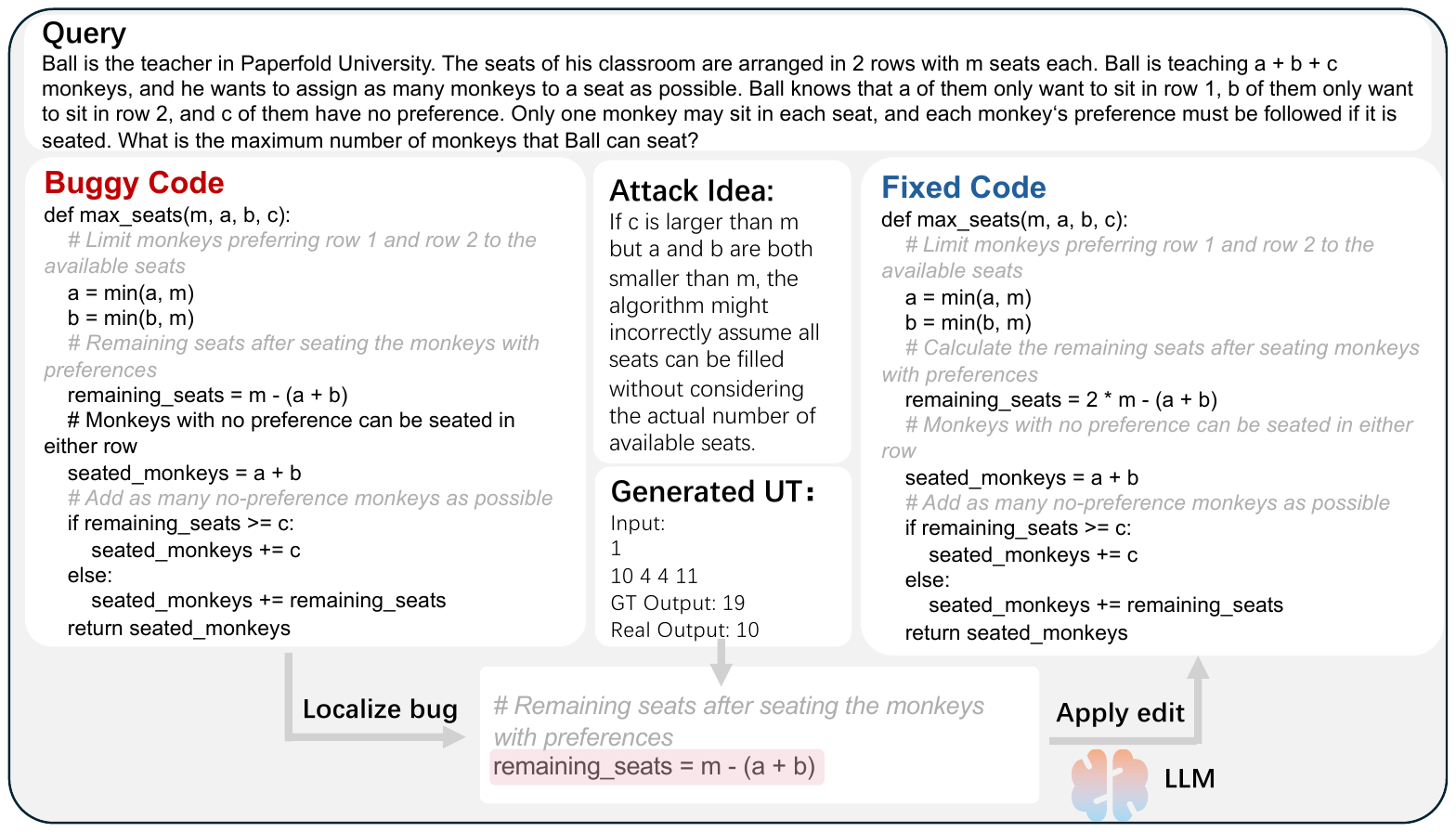}

    \caption{Case study of successful code fix.}
    \label{fig:case_study_second_high}
\end{figure}

\subsection{Case study of successful locating Code–UT Coupling}

\subsubsection{CoSPlay-7B}

\tcbinputlisting{
styleA,
title={Question for CoSPlay-7B},
listing file={case_study/7B_second_low/question.txt},
label={lst:question_cosplay}
}

\tcbinputlisting{
styleA,
title={Wrong Code for CoSPlay-7B},
listing file={case_study/7B_second_low/wrong_code.txt},
label={lst:wrong_code_cosplay}
}

\tcbinputlisting{
styleA,
title={Erroneously coupled UT for CoSPlay-7B},
listing file={case_study/7B_second_low/wrong_ut.txt},
label={lst:wrong_ut_cosplay}
}

\tcbinputlisting{
styleA,
title={New generated UT for CoSPlay-7B},
listing file={case_study/7B_second_low/right_ut.txt},
label={lst:right_ut_cosplay}
}

\subsubsection{CoSPlay-14B}

\tcbinputlisting{
styleA,
title={Question for CoSPlay-14B},
listing file={case_study/14B_second_low/question.txt},
label={lst:question_cosplay_14b}
}

\tcbinputlisting{
styleA,
title={Wrong Code for CoSPlay-14B},
listing file={case_study/14B_second_low/wrong_code.txt},
label={lst:wrong_code_cosplay_14b}
}
\tcbinputlisting{
styleA,
title={Erroneously coupled UT for CoSPlay-14B},
listing file={case_study/14B_second_low/wrong_ut.txt},
label={lst:wrong_ut_cosplay_14b}
}

\tcbinputlisting{
styleA,
title={New generated UT for CoSPlay-14B},
listing file={case_study/14B_second_low/right_ut.txt},
label={lst:right_ut_cosplay_14b}
}

\subsubsection{Analysis for case study of UT-code coupling}
\label{appendix:case_study_step2}
\begin{figure*}[!t]
    \centering

    \begin{subfigure}[t]{0.98\textwidth}
        \centering
        \includegraphics[width=\textwidth]{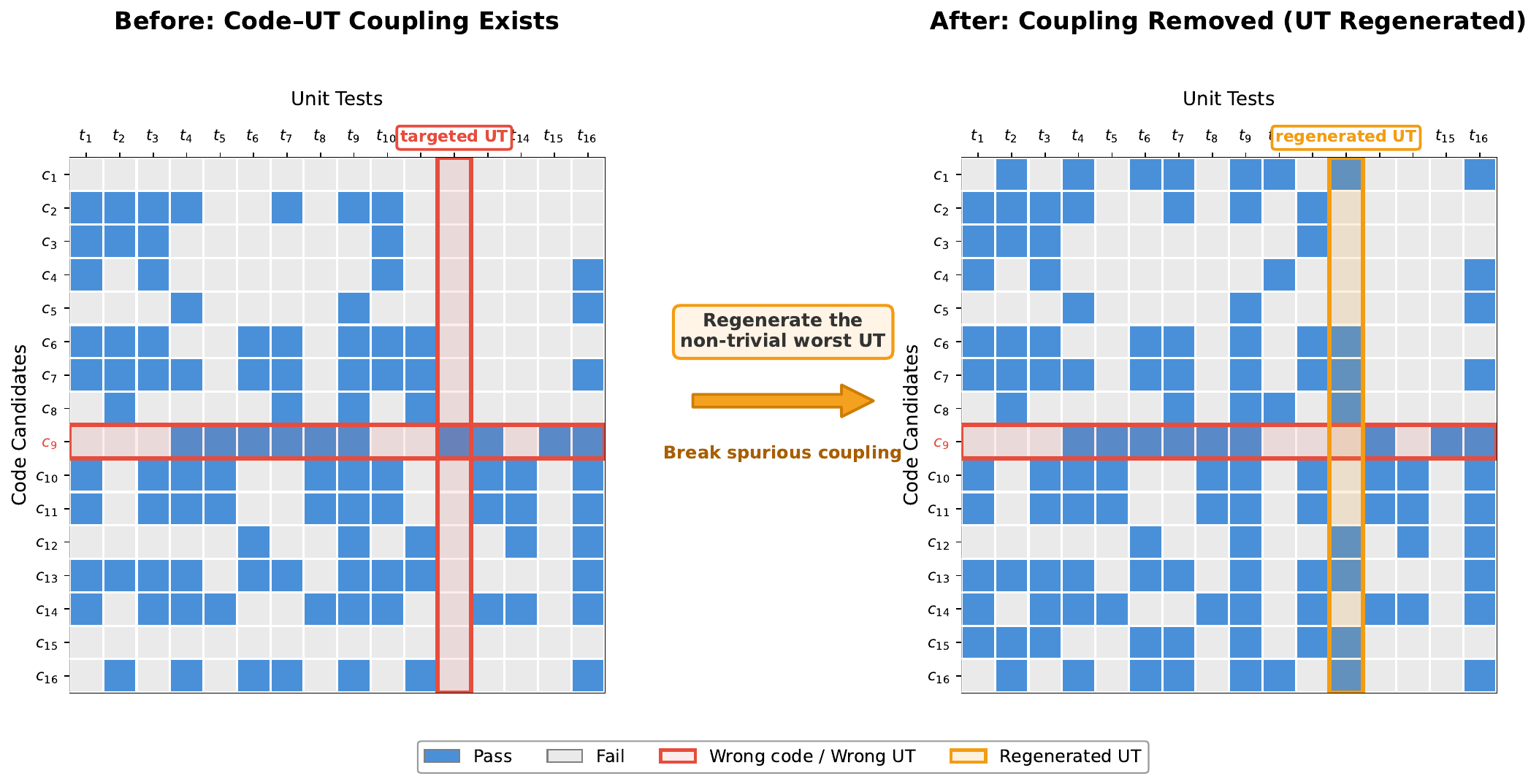}
        \caption{CoSPlay-7B}
        \label{fig:case_study_second_low_7B}
    \end{subfigure}

    \vspace{0.5em}

    \begin{subfigure}[t]{0.98\textwidth}
        \centering
        \includegraphics[width=\textwidth]{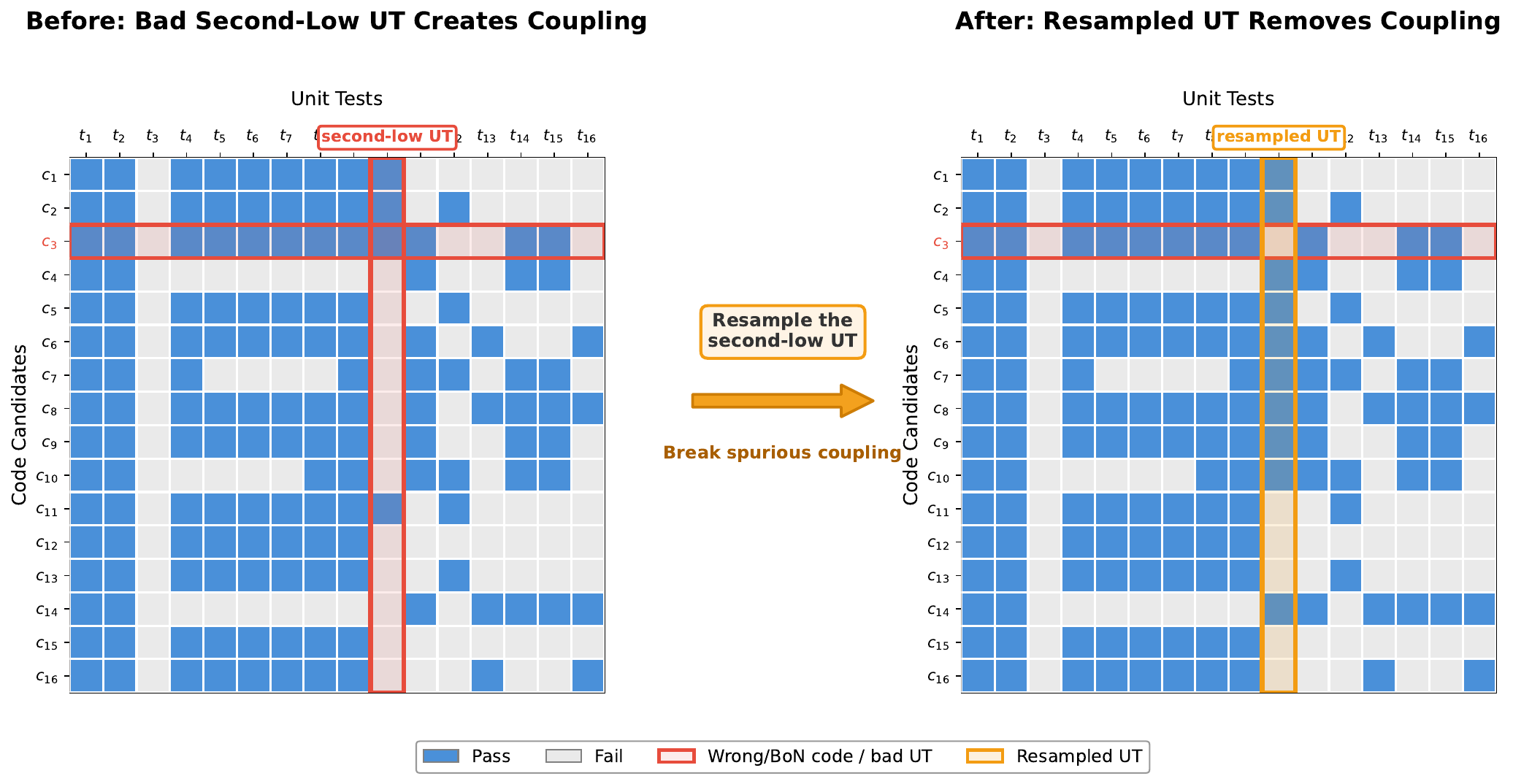}
        \caption{CoSPlay-14B}
        \label{fig:case_study_second_low_14B}
    \end{subfigure}

    \caption{
Execution matrices demonstrating the resolution of Code--UT coupling in CoSPlay.
Each row represents a generated code candidate, and each column represents a generated unit test (UT). 
A blue cell indicates that the code passes the UT, while a grey cell indicates failure. 
In each subfigure, the left panel shows the execution matrix before Step~2, where the highlighted red row marks a suspicious wrong or BoN-selected code and the highlighted red column marks a suspicious low-but-nonzero-pass-rate UT. 
Their highlighted intersection corresponds to a false positive: the flawed code passes the suspicious UT, creating spurious Code--UT coupling. 
The right panel shows the matrix after replacing this suspicious UT with a regenerated or resampled UT, highlighted in orange. 
After replacement, the same highlighted flawed code no longer passes the new UT, indicating that the accidental agreement between the code and the previous UT has been removed. 
Subfigure~(a) shows the CoSPlay-7B case, where regenerating the non-trivial worst UT breaks the coupling between a wrong code and a wrong UT. 
Subfigure~(b) shows the CoSPlay-14B case, where resampling the bad second-low UT removes misleading support for the highlighted wrong/BoN-selected code. 
Overall, Step~2 performs a targeted intervention by replacing a suspicious UT column while keeping the remaining UT pool unchanged, thereby eliminating the highlighted false positive and improving the discriminative signal for subsequent repair and selection.
}
    \label{fig:case_study_second_low_bool}
\end{figure*}

\begin{figure*}[!t]
    \centering
    \includegraphics[width=1.0\textwidth]{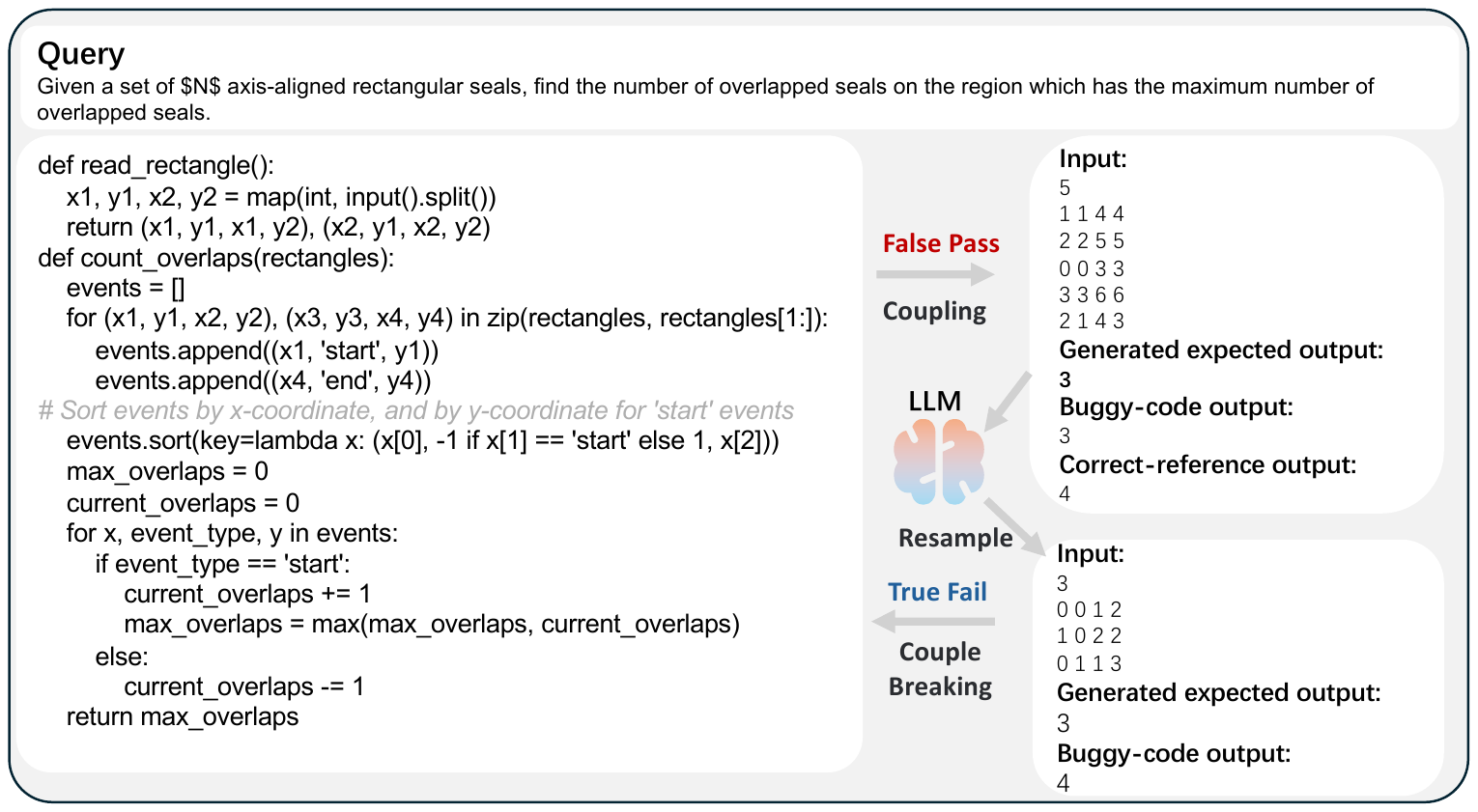}
    \hfill
    \includegraphics[width=1.0\textwidth]{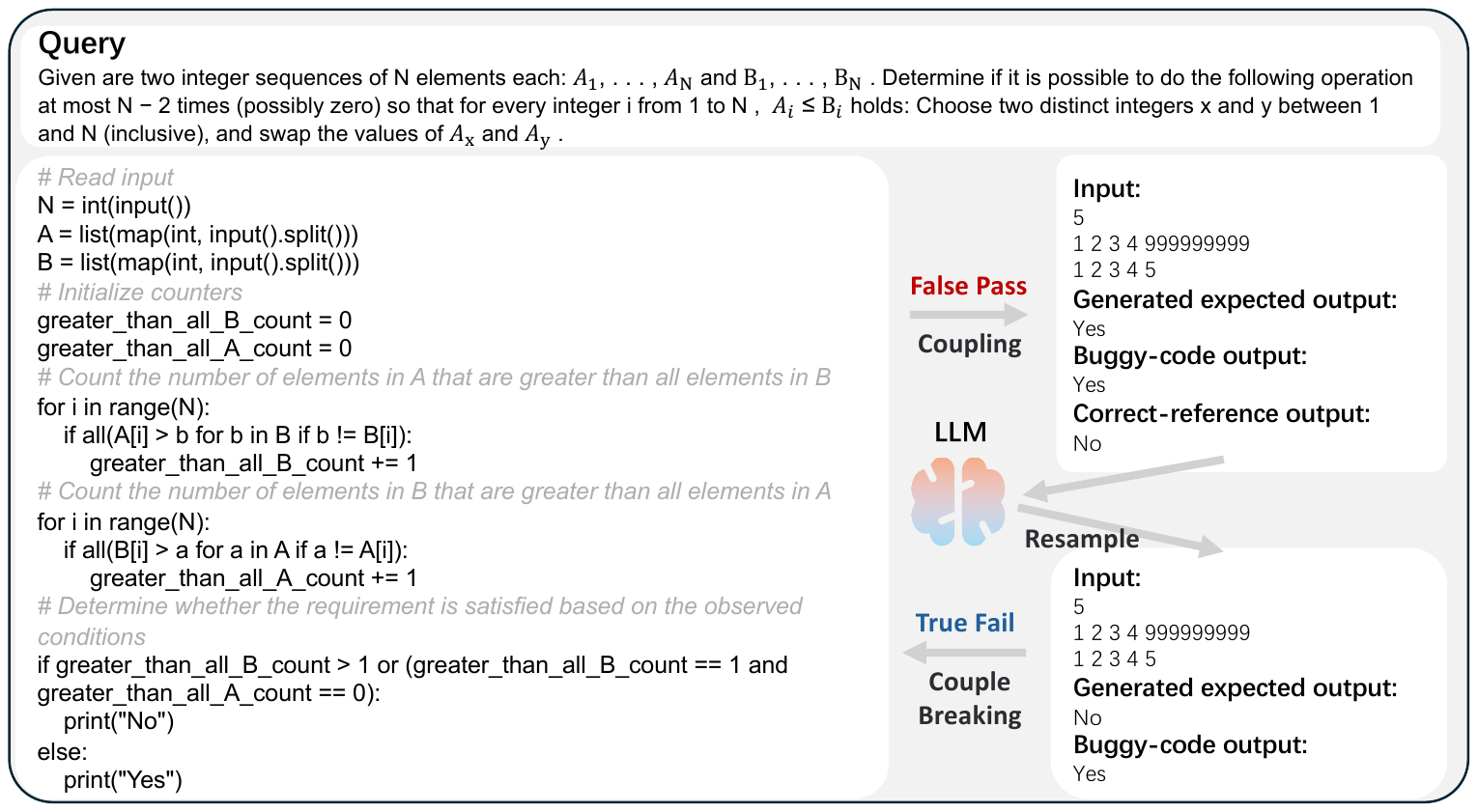}
    
    \caption{Case study of Code-UT coupling.}
    \label{fig:case_study_second_low}
\end{figure*}
Figure~\ref{fig:case_study_second_low} presents two concrete examples illustrating how spurious Code-UT coupling arises in practice and how Step~2 of our iterative self-play framework detects and breaks it. Figure~\ref{fig:case_study_second_low_bool} further visualizes the same process through the lens of the binary execution matrix, offering a complementary macro-level view of how the UT pool's discriminative structure changes before and after regeneration.

\textbf{Execution Matrix Analysis (Figure~\ref{fig:case_study_second_low_bool}).}
Each row of the matrix corresponds to a code candidate $c_i$ and each column to a unit test $t_j$; a blue cell indicates a pass ($\mathbf{M}_{ij}=1$) and a grey cell indicates a failure ($\mathbf{M}_{ij}=0$). The highlighted red row marks the suspicious wrong or BoN-selected code, while the highlighted red column marks the suspicious UT before regeneration. The highlighted orange column marks the regenerated or resampled UT after Step~2.

In the 7B case, the before panel shows a clean Code-UT coupling pattern: the highlighted wrong code passes the highlighted low-pass UT, creating a false positive that can inflate the pass count of an wrong solution. After regeneration, the corresponding orange UT column no longer accepts the same wrong code, thereby removing this spurious agreement.

In the 14B case, the highlighted second-low UT also accepts the highlighted wrong/BoN-selected code before resampling, indicating that this UT can still provide misleading support for an wrong candidate. Unlike the 7B case, this column is not a unique-pass column; rather, it is a low-pass suspicious UT whose agreement with the highlighted wrong code contributes to the coupling phenomenon. After resampling, the highlighted wrong code fails the new UT, showing that the accidental agreement between the code and the previous UT has been broken.

Overall, the execution matrices show that Step~2 operates in a targeted manner: it replaces a suspicious low-but-nonzero-pass-rate UT column, while leaving the rest of the UT pool unchanged. This intervention removes the highlighted false positive and restores a more useful discriminative signal for subsequent refinement and selection.

\textbf{Example 1: Rectangle Overlap Counting.}
The problem asks to count the maximum number of overlapping axis-aligned rectangles over any region. The shown candidate code is semantically flawed because it reduces the two-dimensional overlap problem to a one-dimensional event-style count and fails to correctly model overlaps along both axes. Nevertheless, it passes the initial \emph{False Pass} UT because the UT is spuriously aligned with the same error. Specifically, the expected answer of the initial UT is \texttt{3}, while the true maximum overlap is \texttt{4}. The wrong code also outputs \texttt{3}, which leads to a false positive. CoSPlay then resamples the suspicious UT using the spuriously passed code as a negative reference. The regenerated UT is still not oracle-correct, since its expected output remains imperfect. However, it successfully breaks the previous code and UT coupling because the same buggy implementation no longer agrees with the regenerated UT and is therefore filtered out. This example shows that the resampling step is not meant to guarantee an immediately perfect UT. Instead, it serves as a decoupling mechanism that disrupts accidental agreement between wrong code and wrong tests and recovers useful discriminative signal for selection.

\textbf{Example 2: Swap-Based Array Dominance.}
The second example considers whether the sequence $A$ can be permuted by swapping elements within $A$ at most $N-2$ times so that $A_i \leq B_i$ holds for every position $i$. The generated \emph{False Pass} UT is a valid input, but its oracle is incorrect. Specifically, for the input \texttt{A = [1,2,3,4,999999999]} and \texttt{B = [1,2,3,4,5]}, the generated expected output is \texttt{Yes}. However, the true answer is \texttt{No}, because the element \texttt{999999999} in $A$ is larger than every element in $B$. Since the allowed operation only permutes elements of $A$ and cannot change their values, this element must occupy some position $i$ and will necessarily violate $A_i \leq B_i$. Thus, the instance is impossible even with unlimited swaps, and therefore also impossible under the stricter $N-2$ swap limit.

The buggy code does not directly check whether the multiset of $A$ can be matched to $B$ under the coordinate-wise inequality. Instead, it uses an indirect heuristic based on counting elements that are larger than almost all elements in the other sequence. On this UT, the heuristic incorrectly prints \texttt{Yes}, matching the wrong generated oracle. In contrast, the semantically correct reference code sorts both sequences and correctly detects that the largest element of $A$ still exceeds the largest element of $B$, outputting \texttt{No}. As a result, this wrong UT creates a clean Code-UT coupling: the wrong code passes the wrong test, while the correct code fails it. This is precisely the type of false positive that can inflate the pass count of an incorrect solution and mislead final selection.

\textbf{Summary.}
Taken together, Figure~\ref{fig:case_study_second_low} and Figure~\ref{fig:case_study_second_low_bool} illustrate the Code-UT coupling problem at two complementary levels of granularity. The former exposes the concrete input-output misalignment that creates a false positive, while the latter reveals its global footprint in the execution matrix as a low-pass-rate UT column spuriously aligned with a wrong code. The rectangle-overlap example shows how targeted UT regeneration can disrupt an accidental agreement between a flawed implementation and an imperfect test. The swap-based array example further isolates an even cleaner failure mode: the generated UT has a valid input but an incorrect oracle, causing the wrong code to pass while a correct reference solution fails.

These cases demonstrate that self-generated UTs can become misleading not only because their inputs are insufficiently diverse, but also because their expected outputs may accidentally agree with a specific erroneous implementation. Step~2 addresses this issue by identifying suspicious low-but-nonzero-pass-rate UTs and regenerating them with the incorrectly passing code as a negative reference. This targeted regeneration discourages repeated agreement with the same flawed logic, breaks spurious Code-UT coupling, and helps recover useful discriminative signal for refinement and selection without relying on ground-truth supervision.

\section{Prompts used in our method}
\label{app:prompt}
\tcbinputlisting{
styleA,                                    
title={Prompt for solution hints generation},       
listing file={prompts/first_order_obs.txt} 
}

\tcbinputlisting{
styleA,                                    
title={Prompt for specific solution idea generation},      
listing file={prompts/second_order_obs.txt}
}

\tcbinputlisting{
styleA,                                    
title={Prompt for code generation from solution idea},         
listing file={prompts/idea_to_code.txt}    
}

\tcbinputlisting{
styleA,                                    
title={Prompt for UT attack idea generation},            
listing file={prompts/attack_idea.txt}     
}

\tcbinputlisting{
styleA,                                    
title={Prompt for UT input generation from UT attack idea},               
listing file={prompts/ut_input.txt}        
}

\tcbinputlisting{
styleA,                                    
title={Prompt for UT output generation from specific input},              
listing file={prompts/ut_output.txt}       
}

\tcbinputlisting{
styleA,                                    
title={Prompt for generating non-coupling UT},                       
listing file={prompts/refine.txt}        
}

\tcbinputlisting{
styleA,                                    
title={Prompt for code fixing},                       
listing file={prompts/fix_code.txt}        
}


\tcbinputlisting{
styleA,                                    
title={Prompt for random UT input generation},        
listing file={prompts/random_input.txt}    
}

\tcbinputlisting{
styleA,                                    
title={Prompt for direct code generation},            
listing file={prompts/resample_code.txt}   
}

\tcbinputlisting{
styleA,                                    
title={Prompt for direct UT generation},              
listing file={prompts/reample_UT.txt}      
}


\section{Acknowledgement}

This work was supported by the Guangdong Basic and Applied Basic Research Foundation (Grant No. 2026A1515011579), the High-performance Computing Platform of The Hong Kong University of Science and Technology (Guangzhou), the HKUST-HKUST(GZ) 1+1+1 Joint Funding Program (Grant No. C\_2025\_031), and the Guangzhou-HKUST(GZ) Joint Funding Program (Grant No. 2023A03J0008), Education Bureau of Guangzhou Municipality. This work was also supported by Jiangsu Industrial Technology Research Institute (JITRI) and Wuxi National High-Tech District (WND).

\end{document}